\documentclass[10pt]{article} 
\usepackage[left=1in,top=1in,right=1in,bottom=1in,letterpaper]{geometry} 
\usepackage{ upgreek }
\usepackage[utf8]{inputenc}

\usepackage{amsmath,amsthm,amsfonts,amssymb}
\usepackage{mathtools}
\usepackage{bm}
\usepackage{xcolor}
\usepackage{bbm,dsfont}
\usepackage{enumitem}
\usepackage{subcaption}
\usepackage[most]{tcolorbox}
\usepackage{hyperref}
\usepackage{natbib}
\usepackage{adjustbox}
\usepackage{nicefrac}
\usepackage{thmtools}
\usepackage{enumitem}
\usepackage{wrapfig}

\makeatletter
\@addtoreset{equation}{section}
\makeatother

\hypersetup{
  colorlinks,
  linkcolor={red!50!black},
  citecolor={blue!50!black},
  urlcolor={blue!80!black}
}

\def\R{{\mathbb R}}
\def\N{{\mathbb N}}

\DeclarePairedDelimiter\floor{\lfloor}{\rfloor}

\DeclareMathOperator{\diag}{diag}

\DeclareMathOperator{\E}{\mathbb{E}}

\declaretheoremstyle[
  headfont=\color{blue}\normalfont\bfseries,
  bodyfont=\color{blue}\normalfont\itshape,
]{colored}

\declaretheoremstyle[
  headfont=\color{magenta}\normalfont\bfseries,
  bodyfont=\color{magenta}\normalfont\itshape,
]{mcolored}

\declaretheoremstyle[
  headfont=\color{purple}\normalfont\bfseries,
  bodyfont=\color{purple}\normalfont\itshape,
]{pcolored}

\declaretheoremstyle[
  headfont=\color{red}\normalfont\bfseries,
  bodyfont=\color{red}\normalfont\itshape,
]{rcolored}

\declaretheorem[
  name=Theorem,
]{theorem}

\declaretheorem[
  name=Corollary,
]{corollary}

\declaretheorem[
  name=Lemma,
]{lemma}

\declaretheorem[
  name=Proposition,
]{proposition}

\declaretheorem[
  name= Assumption,
]{ass}

\definecolor{change}{HTML}{C40000}

\newcommand{\bs}[1]{\boldsymbol{#1}}
\newcommand{\eq}[1]{\begin{align}#1\end{align}}

\def\cN{{\mathcal{N}}}
\newcommand{\mpr}{\mathbb{P}}
\DeclareMathOperator*{\argmax}{arg\,max}

\newcommand{\grad}{\nabla}
\newcommand*{\inner}[2]{\left  \langle #1,  #2 \right \rangle}
\DeclarePairedDelimiter\abs{\lvert}{\rvert}%
\DeclarePairedDelimiter\norm{\lVert}{\rVert}%

\def\Zin{\bs{Z}_{\mathrm{in}}}
\def\Zout{\bs{Z}_{\mathrm{out}}}
\def\sZin{\mathsf{Z}_{\mathrm{in}}}

\def\ztrig{\mathsf{z}_{\mathrm{trig}}}

\def\diag{\mathrm{diag}}
\def\tr{\mathrm{tr}}
\def\sym{\mathrm{sym}}
\def\V{\bs{V}}
\def\cF{\mathcal{F}}
\newcommand*{\indic}[1]{\mathbbm{1}_{  #1  }   }
\newcommand\independent{\protect\mathpalette{\protect\independenT}{\perp}}
\def\independenT#1#2{\mathrel{\rlap{$#1#2$}\mkern2mu{#1#2}}}

\def\be{\bs{e}}
\def\ba{\bs{\alpha}}
\def\bw{\bs{w}}
\def\bWkq{\bs{W}_{\mathrm{KQ}}}
\def\bWin{\bs{W}_{\mathrm{in}}}
\def\bN{\bs{N}}
\def\bX{\bs{X}}
\def\bx{\bs{x}}
\def\sX{\mathsf{X}}
\def\bztrig{\bs{z}_{\mathrm{trig}}}

\def\bZin{\bs{Z}_{\mathrm{in}}}
\def\FW{\mathrm{FW}}

\newtheorem*{theorem*}{Theorem}
\newtheorem*{proposition*}{Proposition}

\newcounter{relctr} 
\newcommand\labelrel[2]{%
  \begingroup
    \refstepcounter{relctr}%
    \stackrel{\tiny{(\alph{relctr})}}{\mathstrut{#1}}%
    \originallabel{#2}%
  \endgroup
}
\AtBeginDocument{\let\originallabel\label} 
\AtBeginEnvironment{proof}{\setcounter{relctr}{0}}

\newcommand{\bbt}{\bar \beta}

\mathtoolsset{showonlyrefs}
\newlength\TopY
\newlength\BotY
\newlength\TotalHeight

\usepackage{titlesec}
\titlespacing*{\paragraph}{0pt}{2pt plus 0.85pt minus 0.85pt}{1em}

\title{Learning to Recall with Transformers  Beyond Orthogonal Embeddings}
\author{
Nuri Mert Vural\thanks{University of Toronto and Vector Institute. \texttt{vural@cs.toronto.edu}. Work done while interning at the Flatiron Institute.},\,\,
Alberto Bietti\thanks{Flatiron Institute. \texttt{abietti@flatironinstitute.org}. },\,\,
Mahdi Soltanolkotabi\thanks{University of Southern California. \texttt{soltanol@usc.edu}. },\,\,
Denny Wu\thanks{New York University and Flatiron Institute. \texttt{dennywu@nyu.edu}.}
\vspace{-3mm}
}
\date{\today}

\begin{document}

\maketitle

\begin{abstract}

Modern large language models (LLMs) excel at tasks that require storing and retrieving knowledge, such as factual recall and question answering. Transformers are central to this capability because they can encode information during training and retrieve it at inference. Existing theoretical analyses typically study transformers under idealized assumptions such as infinite data or orthogonal embeddings. In realistic settings, however, models are trained on finite datasets with non-orthogonal (random) embeddings. We address this gap by analyzing a single-layer transformer with random embeddings trained with (empirical) gradient descent on a simple token-retrieval task, where the model must identify an informative token within a length-$L$ sequence and learn a one-to-one mapping from tokens to labels. Our analysis tracks the ``early phase'' of gradient descent and yields explicit formulas for the model’s storage capacity---revealing a multiplicative dependence between sample size $N$, embedding dimension $d$, and sequence length $L$. We validate these scalings numerically and further complement them with a lower bound for the underlying statistical problem, demonstrating that this multiplicative scaling is intrinsic under non-orthogonal embeddings. Code to reproduce all experiments is publicly available.\footnote{Code available at {\url{https://github.com/nurimertvural/learning-to-recall-experiments}}.
\vspace{-2mm}}
 
\end{abstract}

\allowdisplaybreaks

\section{Introduction}

\vspace{-1mm}
Large language models (LLMs) routinely answer knowledge questions with little or no external context, indicating that substantial factual information is stored in parameters and can be retrieved by suitable prompts \cite{petroni2019language,jiang2020can,roberts2020much}.
A deeper theoretical understanding of how such parametric memories are learned and accessed is increasingly important: it can guide scaling choices (e.g., trading off memory capacity against compute budgets,~\cite{carlini2022quantifying,allen2024physics}) and clarify failure modes (e.g., hallucination,~\cite{zucchet2025language,huang2025generalization}).
Motivated by empirical results documenting the prevalence of parametric factual recall and its scaling with model size \cite{allen2024physics,morris2025much}, recent theoretical works have begun to analyze the capacity and learning dynamics of transformers on controlled factual-recall tasks \cite{cabannes2024scaling,nichani2025factual}.

Many theoretical studies of transformer optimization work in population-dynamics settings and adopt simplifying assumptions, such as treating token embeddings as orthogonal or one-hot vectors (see, e.g., \cite{tian2023joma,chen2024unveiling,ghosal2024understanding}). While these choices do not always reflect practical applications, they make the mathematics, particularly gradient calculations, more tractable, and population analyses of this kind do not characterize the statistical or computational complexity of gradient-based learning. In factual-recall setups, strictly orthogonal embeddings are known not to be capacity-optimal, whereas random or non-orthogonal embeddings (i.e., \emph{superposition}) enable near-optimal factual storage \cite{nichani2025factual}. At the same time, removing the orthogonality assumption introduces token interference that leads to intricate optimization behavior (e.g., oscillatory trajectories \cite{cabannes2024learning}), and in practice, superposition-based, memory-efficient solutions can also be more difficult to train \cite{elhage2022toy}, which highlight a fundamental trade-off between optimization and statistical efficiency versus storage capacity.

Motivated by the above gaps, we aim to address the following question.

\begin{center}
\textit{Can we characterize the optimization and sample complexity of a transformer with non-orthogonal embeddings trained by gradient descent in the learning of a factual recall task?}
\end{center}

\subsection{Our Contributions}

In this paper, we analyze gradient-based learning of a single-layer transformer with an attention+MLP block and random embeddings on a 
synthetic task inspired by \cite{nichani2025factual}: the model must retrieve an informative token from a context containing many noisy tokens via attention, then map it to the correct label via factual recall.
To mitigate the complex optimization dynamics arising from non-orthogonal embeddings, we follow \cite{bietti2023birth,oymak2023role} and consider a simplified training regime involving only a few gradient steps with finite samples on the attention and value matrices. This perspective  effectively zooms in the “early phase’’ of the training as commonly studied in the 
feature-learning literature \cite{ba2022high,damian2022neural,dandi2023learning,Vural2024PruningIO,wang2025learning}.

Our analysis provides a fine-grained characterization of how vocabulary size $V$, sample size $N$, embedding dimension $d$, sequence length $L$, and MLP width $m$ interact to permit successful gradient-based learning of the recall mechanism.
Our main result states that
\begin{itemize}[leftmargin=*,topsep=0.5mm,itemsep=0.75mm]
\item The success of learning depends on $(V,N,d,L,m)$ in a \emph{multiplicative} manner: learning becomes easier as $(N,d,m)$ increase, which reflects the benefits of more data, higher-dimensional (and thus more orthogonal) embeddings, and larger MLP width; whereas learning becomes harder as $(V,L)$ increase; that is, the task becomes more difficult with a larger vocabulary or longer sequences. This multiplicative relation is visualized in Figure~\ref{fig:introtradeoff}, where we examine how the parameter size $m\times d$ depends on the vocabulary size $V$ for different sequence lengths $L$. The full phase diagram corresponding to this relation, which formalizes Figure~\ref{fig:introtradeoff}, is shown in Figure~\ref{fig:introphase}.
\item Consequently, while optimal capacity and sample complexity can be achieved jointly for short sequences, successful learning on long sequences requires either a larger embedding dimension (thus sacrificing capacity) or larger sample sizes (worsening statistical complexity).
\end{itemize}

The multiplicative rate above formalizes the ``tradeoff'' intuition that smaller embedding dimension~$d$ — which increases superposition and thereby improves storage capacity — simultaneously yields a harder learning problem, as reflected in the required sample size.
We complement this with a statistical lower bound showing that the trade-off is inherent for any estimator that accesses only gradient information from the initialized transformer.
Finally, although our theory is derived for a specific three-step training algorithm, we empirically observe qualitatively similar multiplicative scaling when the transformer is optimized by gradient descent to low empirical risk.

\begin{figure}[t]
\centering
\vspace{-6mm}
\begin{subfigure}[b]{0.45\textwidth}
    \vspace{0pt}
    \centering
    \includegraphics[width=0.85\linewidth]{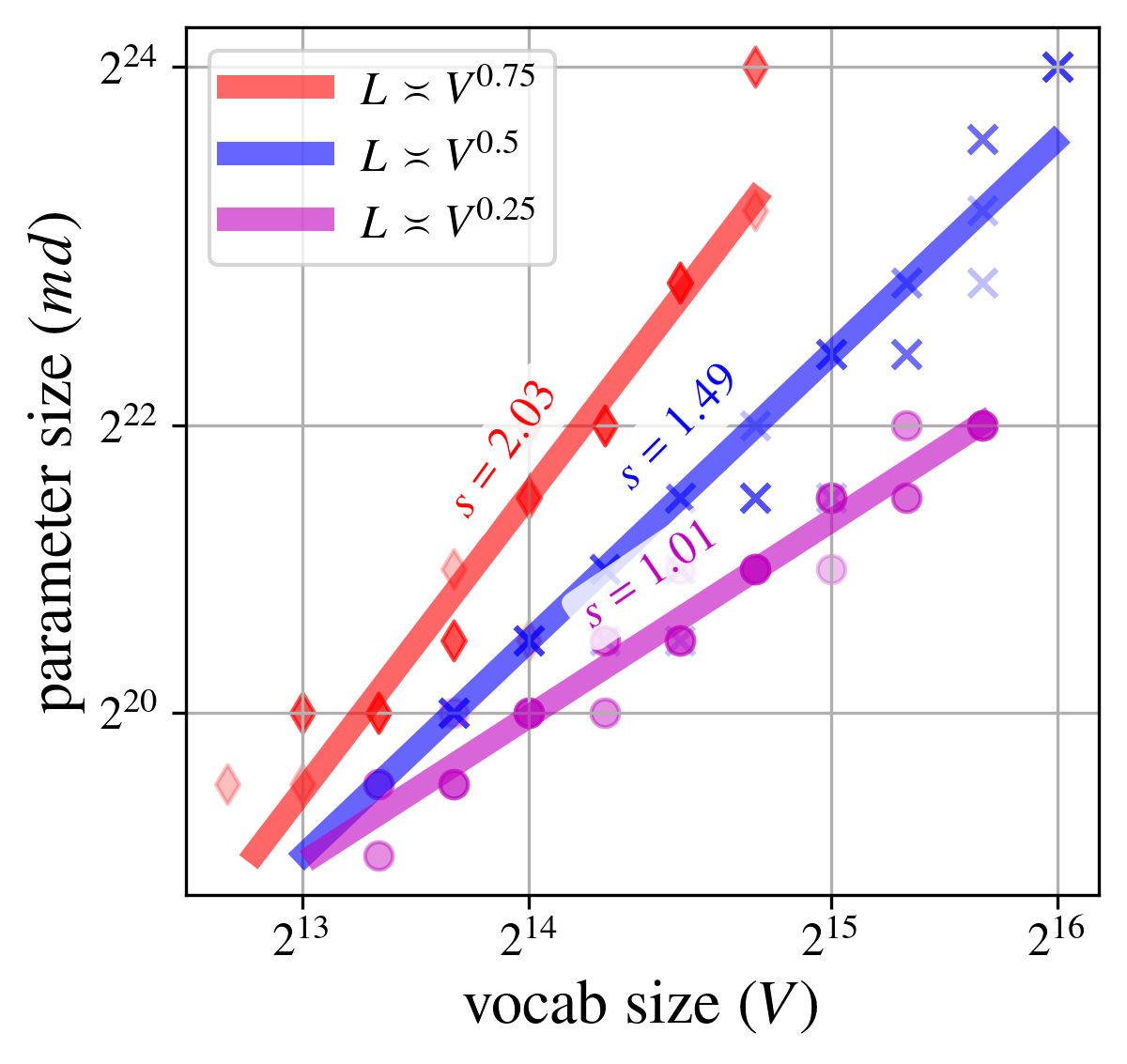}
    \vspace{-2mm}
    \caption{\small}
    \label{fig:introtradeoff}
\end{subfigure}%
\begin{subfigure}[b]{0.48\textwidth}
     \vspace{0pt}
    \centering
    \raisebox{1mm}{
    \begin{tikzpicture}[scale = 0.85, transform shape, font=\normalsize, remember picture] 
             
        \coordinate (O) at (0,0);
        \coordinate (XMax) at (7.5, 0);
        \coordinate (YMax) at (0, 6.0);
         
        \coordinate (Triple) at (2.3, 4.8); 
        \coordinate (CapEnd) at (7.5, 2.55); 

        \fill[gray!20] (O) -- (CapEnd) -- (7.5, 0) -- cycle;
        \node[align=center, gray!60!black, font=\scriptsize] at (5.5, 1.0) {No Learning \\ ($md \ll V$)};

        \fill[red!10] (O) -- (Triple) -- (0, 4.8) -- (0,0) -- cycle;
        \fill[red!10] (0, 4.8) rectangle (2.3, 6.0);
        \fill[green!10] (2.3, 4.8) rectangle (7.5, 6.0);
        \fill[blue!10] (O) -- (Triple) -- (7.5, 4.8) -- (CapEnd) -- cycle;

        \coordinate (MLPSource) at (6.0, 3.0);

        \draw[->, thick] (0,0) -- (XMax) node[below left] {vocab size $(V)$};
        \draw[->, thick] (0,0) -- (YMax) node[above right] {width $(m)$};

        \draw[thick, red!80!black] (O) -- (Triple) 
            node[midway, sloped, above, yshift=1pt, font=\scriptsize] {Slope $\asymp d$};
        \draw[thick, green!60!black] (Triple) -- (7.5, 4.8);
        \draw[thick, dashed, black] (O) -- (CapEnd) 
            node[midway, sloped, above, font=\scriptsize] {Slope $\asymp 1/d$};

        \node[align=center, red!60!black, font=\footnotesize] at (1.15, 5.45) {\textbf{Mean Bias} \\[0.1em] \footnotesize $md \gtrsim \frac{m L^{ \scalebox{0.8}{$\scriptscriptstyle \frac{4}{3}$} }}{V^{ \scalebox{0.8}{$\scriptscriptstyle \frac{2}{3}$} } N^{ \scalebox{0.8}{$\scriptscriptstyle \frac{2}{3}$} } }$};
        \node[align=center, green!40!black, font=\small] at (4.9, 5.35) {\textbf{Gradient Noise} \\[0.1em] \footnotesize $md \gtrsim \tfrac{m L^{ \scalebox{0.8}{$\scriptscriptstyle \frac{1}{4}$} }}{\sqrt{N}}$};
        \node[align=center, blue!60!black, font=\large] at (4.5, 3.15) {
            \textbf{MLP Noise} \\[0.2em]
            \normalsize $md \gtrsim  V   \tfrac{\sqrt{m} L^2}{N}  $
        };
        
        \fill[black] (Triple) circle (2pt);
        \draw[dotted] (Triple) -- (2.3, 0) node[below, font=\small] {$dL$};
        \draw[dotted] (Triple) -- (0, 4.8) node[left, font=\small] {$d^2 L$};

\end{tikzpicture} }
\caption{\small }
\label{fig:introphase}
\end{subfigure}
 \vspace{-2mm}
  \caption{\small (a) Empirical scaling of the parameter size required for a GD-trained one-layer transformer to learn factual recall, where we use $m \asymp d^2$ (see Section~\ref{sec:attnmlp} for details). For small $L$, the trained model achieves the optimal capacity $V \asymp md$ (purple line). As the sequence length $L$ increases, the scaling changes, suggesting a multiplicative rate (blue and red lines). (b) Phase diagram for the theoretical scaling of the parameter count given in Corollary~\ref{cor:attentionmlp}. Each region corresponds to a regime where a particular noise term in Theorem~\ref{thm:main} is dominant. The parameter-size condition ($md$) in each region is given in Corollary~\ref{cor:attentionmlp}. }
\label{fig:intro}
\vspace{-2mm}
\end{figure}

\subsection{Related Work}

\paragraph{Learning dynamics of transformers.}
A growing line of work analyzes how transformers acquire specific behaviors from gradient-based training. Much of this literature imposes population-level assumptions and orthogonal/one-hot embeddings to make gradients tractable, often on discrete synthetic tasks~\cite{li2023transformers,bietti2023birth,tian2023scan,nichani2024transformers,chen2024unveiling,ghosal2024understanding,chen2025distributional,wang2025learning}. Several works study few-step training regimes as a lens on the “early phase’’ of feature learning~\cite{bietti2023birth,wang2025learning}. Beyond discrete settings, related analyses investigate attention learning for continuous inputs and sparse-signal retrieval~\cite{oymak2023role,marion2025attention, duranthon2026singlelocation}. A complementary thread focuses on the emergence of in-context learning and induction mechanisms: single- and two-layer attention trained on linear-regression or Markov data provably implements gradient-descent-like updates and generalized induction heads~\cite{von2023transformers,zhang2024trained,chen2024unveiling,nichani2024transformers}. These results typically rely on simplified settings and do not address storage capacity. In contrast, our work analyzes finite-sample training with  non-orthogonal embeddings in an attention+MLP architecture with a particular focus
on factual recall.

\paragraph{Associative memories and storage capacity.}
Classical associative memories (Hopfield-type models) study recall of vector patterns and established foundational capacity results~\cite{hopfield1982neural,amit1985storing,mceliece1988capacity,krotov2016dense,demircigil2017model,ramsauer2020hopfield,schlag2021linear}. Recent works adapt associative-memory viewpoints to transformers, modeling inner weights as superpositions of outer products and deriving scaling laws and optimization behaviors~\cite{bietti2023birth,cabannes2024scaling,cabannes2024learning}. In factual recall specifically, random (non-orthogonal) embeddings enable near-parameter-count storage, whereas strictly orthogonal embeddings are not capacity-optimal~\cite{nichani2025factual}. Various empirical works have studied the mechanisms and scaling behaviors of LLMs in factual association tasks~\cite{petroni2019language,jiang2020can,geva2020transformer,allen2024physics}. 
We provide a theoretical analysis of such mechanisms and quantify how vocabulary size, sequence length, embedding dimension, and MLP width jointly govern learning efficiency. Our work operates in a setting similar to~\cite{nichani2025factual} but allows finite samples and explicitly considers gradient descent dynamics. Our result is similar to the finite-sample results in~\cite{oymak2023role}, where the required sample size grows with the dimensionality and sparsity level of informative tokens, while we allow non-orthogonal embeddings and show optimal capacity as in~\cite{nichani2025factual} under certain conditions.

\vspace{-2mm} 
\section{Problem Setting}
\vspace{-2mm}

Our goal is to understand the capacity of transformers trained on finite data with non-orthogonal embeddings, in a setting where the relevant information is hidden in a potentially large sequence of non-informative noisy tokens. The attention operation should then identify the relevant token, while the subsequent linear or MLP block can then recall the correct label via an associative memory mechanism. This is similar to the factual recall task studied by~\cite{nichani2025factual}, with simplifications that make the analysis more tractable, as detailed below.

\smallskip
\textbf{Notation. } $\sigma$ denotes the softmax function. $\mathbbm{1}_V\coloneqq(1,\dots,1)^\top\in\R^V$ is the $V$-dimensional all-ones vector; $\bs{e}_i$ is the one-hot vector with a $1$ in the $i$-th position (dimension understood from context). We use $\gtrsim$ (resp.\ $\lesssim$) to mean “$\ge$” (resp.\ “$\le$”) up to polylogarithmic factors in $V$: $f_V\gtrsim g_V \iff f_V\ge \mathrm{poly}(\log V) g_V$ and $f_V\lesssim g_V \iff f_V\le \mathrm{poly}(\log V) g_V$, for some fixed polynomial. Lastly, $\lVert\cdot\rVert_2$ denotes the Euclidean norm for vectors and the operator (spectral) norm for matrices.

\smallskip 
\textbf{Problem setup.} Let the input/output tokens take values from a finite alphabet $[V] \coloneqq \{ 1, \cdots, V \}$. For notational convenience, we represent the alphabet by the one-hot vocabulary $\mathcal{V} = \{ \bs{e}_1, \cdots, \bs{e}_V \}$. Each example in the data consists of a length-$L$ input sequence $\bs{X} = [\bs{x}_1, \dots, \bs{x}_L] \in \mathcal{V}^L$ and a label $\bs{p} \in \mathcal{V}$ generated as follows:
\begin{itemize}[leftmargin = *,itemsep = 0.2em]
\item \emph{Input} tokens are sampled independently and uniformly: $[\bs{x}_1, \dots, \bs{x}_{L}] \sim \mathrm{Unif}(\mathcal{V}^{L})$.
\item \emph{Informative position} is a random index $\ell \sim \mathrm{Unif}([L])$ independent of $\bs{X}$. 
\item \emph{Ground-truth function} is a permutation matrix $\bs{\Pi}_* \in \{0,1\}^{V \times V}$. Labels are generated as the permuted informative token, $\bs{p} = \bs{\Pi}_{*} \bs{x}_{\ell}$, while the remaining tokens are non-informative.
\end{itemize}

The goal is to identify the correct token position $\ell$ and learn the target function (permutation) $\bs{\Pi}_{*}$.

\smallskip
\textbf{Transformer architecture.} We consider a basic transformer block which first maps input tokens into a $d$-dimensional embedding space where $d<V$. The embedding layer is parameterized by $(\Zin, \Zout, \bs{z}_{\mathrm{trig}}, \bs{z}_{\mathrm{EOS}}) \in \R^{d \times V} \times \R^{d \times V} \times \R^{d} \times \R^{d}$, where
\begin{itemize}[leftmargin = *,itemsep = 0.2em]
\item The input tokens are embedded by the columns of the matrix $\Zin \in \R^{d \times {V}}$.
\item Output tokens are associated with unembedding vectors, which are collected in $\Zout \in \R^{d \times V}$.
\item $\bs{z}_{\mathrm{trig}}$ is a trigger vector that marks the informative token. 
\item $\bs{z}_{\mathrm{EOS}}$ is the special embedding vector that marks the end-of-sequence.
\end{itemize}
Given the embedding parameters, we define the self-attention head, parameterized by the key-query matrix $\bs{W}_{\mathrm{KQ}} \in \R^{d \times d}$, which operates on the embedded sequence of inputs $\Zin \bs{X} \in \R^{d \times L}$:
    \eq{
    \mathrm{attn}(\bs{X}; \bs{W}_{\mathrm{KQ}}  ) \coloneqq   \Zin \bs{X}   \sigma \Big( (  \bs{z}_{\mathrm{trig}} \bs{e}_{\ell}^\top +   \Zin \bs{X}  )^\top \bs{W}_{\mathrm{KQ}} \bs{z}_{\mathrm{EOS}} \Big). \label{eq:attention}
    }
The trigger embedding~$\bs{z}_{\mathrm{trig}}$ is used to ``mark'' the informative token with a special direction, mimicking the behavior of previous transformer layers that may learn to flag particular tokens by adding to its residual stream\footnote{The ``trigger'' terminology is borrowed from~\cite{bietti2023birth}, where a special previous token ``triggers'' a retrieval operation in the context of induction heads. Our setup resembles learning only the ``induction head'' layer assuming the first ``previous token head'' layer is already in place. The triggers often appear to be single directions in interpretability literature, see, e.g., the ``X in opposite of X'' feature in~\cite{kamath2025tracing}. \vspace{-3mm}} (note that the number of trainable parameters inside softmax can be reduced to~$d$ by collapsing $\bs{W}_{\mathrm{KQ}} \bs{z}_{\mathrm{EOS}}$ into a vector).  
We consider two different learning models: an \emph{Attention-only} model and a width-$m$, two-layer neural network model \emph{Attention-MLP}, defined as:
\eq{
\hat{\bs{p}}(\bs{X}; \bs{V}, \bs{W}_{\mathrm{KQ}}) = \begin{cases}
\sigma \Big( \Zout^\top \bs{V}    \mathrm{attn}(\bs{X}; \bs{W}_{\mathrm{KQ}}  ) \Big), & \text{Attention only} \\[0.6em]
\sigma\Big( \Zout^\top \bs{V} \phi(\bs{W}_{\mathrm{in}}     \mathrm{attn}(\bs{X}; \bs{W}_{\mathrm{KQ}}  ) \Big), & \text{Attention-MLP}
\end{cases} \label{eq:networkoutput}
}
where $\bs{V} \! \in \!  \R^{d \times d}$ for the \emph{Attention-only} and $\bs{V}\! \in \! \R^{d \times m}$,  $\bs{W}_{\mathrm{in}} \! \in \! \R^{m \times d}$ for the \emph{Attention-MLP} model. 
Note that compared with \emph{Attention-only} model, the \emph{Attention-MLP} model contains an additional set of trainable parameters and nonlinear activation function $\phi$ before the value matrix. 
Similar to in~\cite{nichani2025factual}, the MLP allows using a smaller embedding dimension~$d$ while keeping the capacity large by increasing width~$m$.

For the \emph{Attention-MLP}, we keep $\bs{W}_{\mathrm{in}}$ fixed at its random initialization. The trainable parameters for both of our models are $(\bs{V}, \bs{W}_{\mathrm{KQ}})$. We use cross-entropy loss to train our model:
\eq{
\mathcal{L} \big( (\bs{V}, \bs{W}_{\mathrm{KQ}} ), (\bs{X}, \bs{p}) \big) = \textstyle- \sum_{i = 1}^V p_i \log \hat{p}_i.
} 
\smallskip
\textbf{Training algorithm.}
Following \cite{oymak2023role}, we consider a 3-step gradient-based algorithm with dataset $\{ (\bs{X}_i, \bs{p}_i) \}_{i = 1}^N$ with a sample size of $N$. We initialize our parameters as $\V^{(0)} =0$,  $\bs{W}_{\mathrm{KQ}}^{(0)} = 0$ and use the learning rates $\eta, \gamma > 0$: 
\eq{
 \V^{(1)} & = \V^{(0)} - \textstyle\eta\cdot\frac{1}{N} \sum_{i = 1}^N \grad_{\V} \mathcal{L} \big( (  \V^{(0)} ,   \bs{W}_{\mathrm{KQ}}^{(0)}  );  (\bs{X}_i, \bs{p}_i)  \big)  \label{eq:gdfirststep} \\
\bs{W}_{\mathrm{KQ}}^{(1)}  & = \bs{W}_{\mathrm{KQ}}^{(0)}  -   \textstyle\gamma\cdot\frac{1}{N} \sum_{i = 1}^N \grad_{\bs{W}_{\mathrm{KQ}}} \mathcal{L} \big( (   \V^{(1)} ,    \bs{W}_{\mathrm{KQ}}^{(0)} );  (\bs{X}_i, \bs{p}_i)  \big) \label{eq:gdsecondstep} \\
 \V^{(2)} & = \V^{(1)}  -   \textstyle\gamma\cdot\frac{1}{N} \sum_{i = 1}^N  \grad_{\V} \mathcal{L} \big( (   \V^{(1)} ,   \bs{W}_{\mathrm{KQ}}^{(1)}  );  (\bs{X}_i, \bs{p}_i)  \big) .   \label{eq:gdthirdstep}
}
\smallskip
\textbf{Network prediction and storage.} Given our model and training method, we use argmax decoding at inference and define the test accuracy as
\eq{
\mathrm{Accuracy} \coloneqq \mpr_{(\bs{X}, \bs{p})}\big[ \bs{p} =  \bs{e}_{\mathrm{pred}(\bs{X})}   \big], \quad \text{where}  \quad\mathrm{pred}(\bs{X}) \coloneqq  \argmax_{j \in [V]} \hat{p}_j(\bs{X}; \V^{(2)}, \bs{W}_{\mathrm{KQ}}^{(1)}),
}
where $\hat{\bs{p}}(\bs{X}; \V^{(2)}, \bs{W}_{\mathrm{KQ}}^{(1)})$ is the network output defined in \eqref{eq:networkoutput}. In what follows, we characterize conditions under which the model stores the informative tokens asymptotically, i.e., $\mathrm{Accuracy} \to 1$ as $V \to \infty$, in terms of the relevant parameters $(V, N, d, L, m)$.

\vspace{-2mm}
\section{Main Results}
\label{sec:main-result}
\vspace{-1.5mm}
 
We first present our general theorem on learnability via gradient descent, and then specialize into different regimes to derive more interpretable scaling behaviors in Section~\ref{sec:consequences}. We provide a proof sketch in Section~\ref{sec:overview}, and defer the full proof to Appendix \ref{sec:thmmain}.

\vspace{-1.5mm} 
\subsection{Technical Assumptions} 
 
We first state generic assumptions that apply to both the \emph{Attention-only} and \emph{Attention-MLP} models. 
\begin{ass}~
   \label{ass:condsgeneral}
   \vspace{-1mm}
   \begin{itemize}[leftmargin = *]
    \item \textbf{Parameter range:} Let $L = V^{c}$ for $c \in (0,1)$, $\Omega(V \log V) \leq N = o(VL)$, and $V \geq \Omega(1)$.
     \item \textbf{Learning rate:} We use 
    a sufficiently small learning rate $\eta$ for the initial step \eqref{eq:gdfirststep}, and sufficiently large learning rate $\gamma$ for the remaining steps \eqref{eq:gdsecondstep}-\eqref{eq:gdthirdstep} that satisfy Assumption~\ref{ass:conditions}.
     \item  \textbf{Embeddings:} Let $\Zin ,  \Zout \in \R^{d \times V}$ be independent Gaussian matrices, and let $\bs{z}_{\mathrm{trig}}, \bs{z}_{\mathrm{EOS}} \in \R^d$ be independent Gaussian vectors, all with i.i.d.\ entries distributed as $\cN(0,\nicefrac{1}{d})$.
   \end{itemize}
\end{ass}

We assume $c \in (0,1)$ since  in many practical pretraining setups, the context length is smaller than the vocabulary size, and   the condition $L \ll V$ simplifies several terms in the proofs. The lower bound  $N \gtrsim V \log V$ is required so that each element from the alphabet of size~$V$ is seen at least once with high probability. The learning rates follow  prior analyses~\citep{oymak2023role,nichani2024transformers}: a small $\eta$ ensures that the network’s predictions remain close to uniform after the first step, whereas a large $\gamma$ is needed to push the attention scores and predictions toward one-hot vectors.

In addition to the above assumptions, we require the transformer model to have sufficient capacity to reach perfect test accuracy. Such conditions are characterized by~\cite{nichani2025factual}. For the \emph{Attention-only} model, we have the following condition (see \cite[Theorem 3]{nichani2025factual}). 
\begin{ass}[Attention-only]
\label{ass:condsattentiononly} 
For the \emph{Attention-only} model, we require $d^2 \gtrsim V$.
\end{ass}
With a nonlinear MLP layer, a smaller embedding dimension can suffice if the width is large enough. Hence for \emph{Attention-MLP} we require the following condition.
\begin{ass}[Attention-MLP]
\label{ass:condsattentionmlp} 
For the \emph{Attention-MLP} model, we assume that
\begin{itemize}[leftmargin = *]
\vspace{-1mm}
\item \textbf{Polynomial activation:} $\phi:\R\to\R$ satisfies $\phi(0), \phi^\prime(0), \phi^{\prime \prime}(0) \neq 0$.
\item \textbf{MLP width:} $md \gtrsim V$ and $d \gtrsim V^{\frac{1}{k_\star + 1}}$, where $k_{\star}$ denotes the smallest nonzero Hermite mode of $\phi$, i.e., $k_\star \coloneqq \min \{ k > 0 : \E_{Z \sim \cN(0,1)}[ \phi(Z) h_k(Z) ] \neq 0 \}$ where $h_k$ is the k\textsuperscript{th} Hermite polynomial.
\item \textbf{Initialization:} $\bs{W}_{\mathrm{in}} \in \R^{m \times d}$ are fixed with entries i.i.d.\ distributed as $\cN(0,1)$.
\end{itemize}
\end{ass}
The nonlinear MLP layer allows us to compensate for the embedding dimension and go beyond the $d^2 \gtrsim V$ lower bound required by the \emph{Attention-only} model (Assumption~\ref{ass:condsattentiononly}). Note that $m d \gtrsim V$ is a necessary condition for capacity as shown in \citep{nichani2025factual}. The additional requirements imposed on the polynomial activation function appear to be artifacts of our three-step GD analysis, and we conjecture that they could be relaxed when considering a longer training horizon.

\vspace{-3.5mm}   
\subsection{Learnability Statement}

\vspace{-1.5mm}
Now we are ready to present our main theorem on the complexity of learning the factual recall task. Specifically, the transformer learns the desired mechanism when the signal term dominates the noise and bias terms as stated below. 
\begin{theorem}
\label{thm:main}
Let Assumptions \ref{ass:condsgeneral} and \ref{ass:condsattentionmlp} hold for \emph{Attention-MLP}, and \ref{ass:condsgeneral} and \ref{ass:condsattentiononly} hold for \emph{Attention-only}. The \emph{Attention-MLP} model achieves $\mathrm{Accuracy} = 1 - o_V(1)$ with probability $1 - o_V(1)$ whenever
\eq{
\underbrace{\frac{1}{V L^2}}_{\text{Signal}} \gtrsim   \underbrace{ \frac{1}{N \sqrt{L}  d  (d \wedge L)} }_{\text{Gradient noise}} + \underbrace{ \frac{1}{N \sqrt{V d}  (d \wedge L)} }_{\text{Mean bias}} + \underbrace{ \frac{1}{N d \sqrt{m}} }_{\text{MLP noise}}. \label{eq:mainresult}
}
For the \emph{Attention-only} model, the same holds with the last MLP noise term removed.
\end{theorem}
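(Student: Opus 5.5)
We follow the three gradient steps explicitly and, at each step, split the update into a population (signal) part, a finite-sample fluctuation, and a bias coming from non-orthogonality of the random embeddings. Write $\bs{h}(\bs{X}) = \mathrm{attn}(\bs{X};\bWkq)$, or $\phi(\bWin \mathrm{attn})$ for Attention-MLP.

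\textbf{Step 1: the value update.} At $\V^{(0)}=0$ and $\bWkq^{(0)}=0$, the prediction is uniform and the attention is uniform. Hence
\[
\V^{(1)} = \tfrac{\eta}{N}\sum_i \Zout(\bs{p}_i - \tfrac1V\mathbbm{1}_V)\,\bs{h}_i^\top ,
\qquad \bs{h}_i=\tfrac1L\sum_{j}\Zin\bs{x}_{ij}
\]
(passed through $\phi(\bWin\cdot)$ in the MLP case). Since $\eta$ is small, predictions stay near uniform. So $\V^{(1)}$ is an associative memory $\approx \tfrac{\eta}{L}\Zout\bs{\Pi}_*\Zin^\top$, corrupted by three contributions:
\begin{itemize}
\item the $L-1$ non-informative tokens in each sample, which give gradient noise of order $1/\sqrt{N}$ relative to the signal after averaging;
\item the mean term $\tfrac1V\mathbbm{1}_V$ together with the shared mean of $\bs{h}$, which gives a rank-one bias;
\item in the MLP case, the random features, which we expand in Hermite modes.
\end{itemize}

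\textbf{Step 2: the key--query update.} We compute $\nabla_{\bWkq}$ at $(\V^{(1)},0)$. Only the direction $\bWkq\bs{z}_{\mathrm{EOS}}$ matters, so we reduce to the vector $\bs{u}=\bWkq^{(1)}\bs{z}_{\mathrm{EOS}}$. The population part of $\bs{u}$ is proportional to $\bs{z}_{\mathrm{trig}}$. Its coefficient is the average alignment $\langle \Zout\bs{p}, \V^{(1)}(\Zin\bs{x}_\ell - \bar{\bs{h}})\rangle$, which is of order $\eta/(L)$ times the per-token signal, further reduced by a factor $1/L$ from the softmax Jacobian. With $\gamma$ large, the attention score on position $\ell$ becomes $\gamma\langle \bztrig,\bs{u}\rangle$. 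The noise components of $\bs{u}$ along $\Zin\bs{x}_j$ then yield spurious scores. We need the trigger score to exceed $\log L$ plus the maximal spurious score. This is where the factor $(d\wedge L)$ enters: projections of the $d$-dimensional noise onto $L$ random token embeddings concentrate differently when $d<L$ than when $d>L$. It is also where the gradient-noise and mean-bias terms first appear. We will show that the attention concentrates on $\ell$ with probability $1-o(1)$ under \eqref{eq:mainresult}.

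\textbf{Step 3: the second value update and decoding.} With near one-hot attention, $\bs{h}(\bs{X})\approx \Zin\bs{x}_\ell$ (or $\phi(\bWin\Zin\bs{x}_\ell)$). We show that $\V^{(2)}$ contains the memory term $\gamma\frac1N\sum_i \Zout\bs{p}_i\bs{h}_i^\top$, whose per-token coefficient is of order $\gamma/V$ after accounting for counts of order $N/V$. This term dominates $\V^{(1)}$ for large $\gamma$.

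The margin for a test input is
\[
\langle \Zout(\bs{e}_{\bs{\Pi}_*x}-\bs{e}_{k}), \V^{(2)}\bs{h}\rangle .
\]
We lower bound the signal and bound the crosstalk uniformly over $k\in[V]$ using Gaussian concentration with a union bound. The crosstalk bounds use the capacity Assumptions~\ref{ass:condsattentiononly} or~\ref{ass:condsattentionmlp}: $d^2\gtrsim V$, or $md\gtrsim V$ with the Hermite condition, which controls the random-feature kernel off-diagonal so that it decays with $d^{k_\star+1}$. The MLP noise term $1/(Nd\sqrt m)$ arises here from the random-feature fluctuations.

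\textbf{Collecting terms and main obstacle.} Combining the three steps, the signal $1/(VL^2)$ must dominate each of the three noise terms, which is exactly \eqref{eq:mainresult}. We expect the main obstacle to be Step 2. The gradient there is a fourth-order polynomial in dependent Gaussian embeddings, coupled with empirical token counts. We will handle it by conditioning on $(\Zin,\Zout)$, using leave-one-out decoupling between the sample noise and the embeddings, and applying Hanson--Wright inequalities to the quadratic forms.
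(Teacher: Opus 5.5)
Your skeleton matches the paper's. The crux is the attention scores on a fresh sequence after the key--query step: the trigger score is of order $1/(VL^2)$ and must beat the spurious scores uniformly over positions. Decoding is then a separable recall problem governed by the capacity assumptions; the paper in fact stops at the score bounds of Theorem~\ref{thm:appendixmain} and treats that last step as standard.

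The genuine gap is where you place the MLP-noise term. In the Attention--MLP model the fixed layer enters the key--query gradient through
\[
\frac{1}{m}\sum_{k=1}^m \bs{w}_k\,\phi'(\bs{w}_k^\top\bs{h}_i)\,\phi(\bs{w}_k^\top\bs{h}_j),\qquad \bs{h}_i=\frac1L\Zin\bX_i\mathbbm{1}_L ,
\]
which couples sample $i$ (backward pass) with sample $j$ (through $\V^{(1)}$). Its $\bWin$-expectation is, by Gaussian integration by parts, $\alpha_{ij}\bs{h}_j+\beta_{ij}\bs{h}_i$. These give $\bs{s}_1$ and $\bs{s}_2$, and the latter has no Attention-only analogue. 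The fluctuation $\FW(\bWin;\Zin,\bX_i,\bX_j)$ gives $\bs{s}_3$. Since $\lVert\bs{h}_i\rVert_2\approx L^{-1/2}\to 0$, this fluctuation is essentially $\phi(0)\phi'(0)\frac1m\sum_k\bs{w}_k$, a random direction unrelated to $\bztrig$. It passes through $\frac1L\Zin\bX_i\bX_i^\top\Zin^\top\approx\frac1d\bs{I}_d$ (cf.\ \eqref{eq:largeLapproximation}), is contracted with a test token embedding, and is multiplied by $\lVert\Zout\frac1N\sum_i(\bs{x}_{i,\ell}-\frac1V\mathbbm{1}_V)\rVert_2^2\asymp 1/N$. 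The result is a spurious score of order $1/(Nd\sqrt m)$ at every position, which is why this term is compared with the attention Signal $1/(VL^2)$.

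Your Step~2 lists only gradient noise and mean bias there. So for the MLP model, your claim that attention concentrates on $\ell$ omits a competitor of exactly this size. Your Step~3 cannot supply it. At decoding the preactivations are $O(1)$, and random-feature crosstalk is measured against the memory term of $\V^{(2)}$, not against $1/(VL^2)$. What it produces are the capacity-type requirements ($md\gtrsim V$ and the Hermite condition), which the theorem assumes separately. As written, your ``collecting terms'' step compares quantities in different units.

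A secondary inaccuracy: the non-informative tokens do not perturb $\V^{(1)}$ only at relative order $1/\sqrt N$. In the decomposition \eqref{eq:outputiteratedecomp}, the mean part has operator norm $1/(VL)$ while the fluctuation has norm $\asymp 1/\sqrt{LVN}$. The ratio is $\sqrt{VL/N}\to\infty$ in the regime $N=o(VL)$ of Assumption~\ref{ass:condsgeneral}; the rank-one bias, of norm about $1/\sqrt{VN}$, also exceeds the mean. So $\V^{(1)}$ is not a slightly corrupted associative memory. The Signal emerges only through the specific contraction that produces the trigger score, where the mean part adds coherently across samples. The dominant fluctuation has to be tracked into the non-informative scores, and that is exactly what generates the gradient-noise term.
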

Theorem~\ref{thm:main} characterizes learnability as a function of $(V, N, d, L, m)$ and identifies the following terms that impact the gradient signal-to-noise ratio:
\begin{enumerate}[leftmargin=*,topsep=0.2em,itemsep=0.2em]
\item \emph{Signal} measures the alignment between the key–query weights $\bWkq^{(1)}$ and the trigger $\bztrig$.
\item \emph{Gradient noise} is due to the concentration error in the update of $\bWkq^{(1)}$.
\item \emph{Mean bias} arises from the nonzero mean of token vectors $\{\bX_i\}_{i=1}^N$.
\item \emph{MLP noise} reflects the randomness in the MLP weight matrix $\bWin$ in \emph{Attention–MLP}.
\end{enumerate}

We make the following observations. 
\begin{itemize}[leftmargin=*,itemsep=0.2em]
    \item \textbf{Multiplicative scaling.} Note that the parameters $(V, N, d, L, m)$ interact in a multiplicative fashion. For example, the noise and bias terms in \eqref{eq:mainresult} all decay with $(N\times d)$, suggesting that increasing the embedding dimensions $d$ can lower the statistical complexity of learning the correct recall mechanism. While the full 5-parameter trade-off can be opaque, in Section~\ref{sec:consequences} we focus on specific regimes that lead to simplification of the scaling relationship and validate the rate empirically. 
    \item \textbf{Optimal storage \& sample complexity.} Recall that the capacity-optimal construction for the factual recall task requires $md\gtrsim V$ parameters (or $d^2 \gtrsim V$ for \emph{Attention–only}); and as discussed earlier, a sample size $N \asymp V\log V$ is necessary to observe all distinct tokens. \eqref{eq:mainresult} implies that in the small-$L$ regime, the optimized transformer achieve optimal capacity and sample complexity simultaneously. 
    For longer sequences, however, these two conditions may not be achieved at the same time, i.e., one must increase either the network width or sample size beyond optimality to learn the task --- this confirms the empirical observation in Figure~\ref{fig:intro}. 
\end{itemize}

\vspace{-3.5mm}
\subsection{Statistical Lower Bound}

\vspace{-1mm}
Theorem~\ref{thm:main} provides an upper bound (i.e., sufficient condition) on the model and sample size for learning factual recall under a 3-gradient-step optimization procedure. 
We complement this sufficient condition with a lower bound indicating that the multiplicative dependence on the problem
parameters is partly statistical; that is, the scaling behavior will be observed in any model satisfying the broader conditions stated below. Our lower bound applies to statistical methods that can query the dataset through the attention outputs at initialization, $\bs{h}_i \coloneqq \mathrm{attn}(\bs{X}_i, \bs{W}_{\mathrm{KQ}}^{(0)})$. In particular, we consider queries of the form $\bs{h}_i$ as the gradient with respect to the key–query matrix $\bs{W}_{\mathrm{KQ}}$ depends on  $\{ \bs{h}_i, \bs{h}_i \bs{h}_i^\top \}_{i = 1}^N$ (see  \eqref{eq:kqgradient}). The statement is given below: \vspace{-3.5mm}
\begin{theorem}[Informal]
\label{thm:lowerbound-informal}
Any method that relies on the noisy version of the queries
$\{ \bs{h}_i, ~ \bs{h}_i \bs{h}_i^\top \}_{i = 1}^N$ fails, i.e., $ \mathrm{Accuracy} \not \to 1$ with finite probability, if  $N \lesssim V \min\{ 1, L/d^2 \}$. 
\end{theorem}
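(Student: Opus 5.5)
The plan is to reduce the informal statement to a testing problem and apply a Le Cam / information-theoretic argument. At initialization $\bWkq^{(0)}=0$, so the softmax is uniform and $\bs{h}_i = \frac{1}{L}\Zin \bX_i \mathbbm{1}_L = \frac1L\sum_{j} \Zin \bx_{i,j}$. This quantity does not depend on $\ell_i$ or on the trigger. All information about the informative token therefore has to come from the correlation between $\bs{h}_i$ (or $\bs{h}_i\bs{h}_i^\top$) and the label $\bs{p}_i$. I would formalize ``noisy queries'' as a statistical-query-type oracle. The learner receives $\frac1N\sum_i q(\bs{h}_i,\bs{p}_i)$ for query functions $q$ that are linear in $\bs{h}_i$ or in $\bs{h}_i\bs{h}_i^\top$, corrupted by noise at the natural concentration scale. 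Equivalently, I would treat the learner as observing the sufficient statistics $\bs{S}_1 = \frac1N\sum_i \bs{h}_i \bs{p}_i^\top\in\R^{d\times V}$ and $\bs{S}_2=\frac1N\sum_i \bs{h}_i\bs{h}_i^\top\otimes \bs{p}_i$, plus Gaussian noise whose variance matches their fluctuations. The goal is to show that, below the stated threshold, these statistics cannot identify $\bs\Pi_*$ well enough to give accuracy $\to1$.

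For the first-order query, I would fix a label $v$ and consider column $\bs{S}_1\be_v$. Its mean is $\frac{1}{NL}\cdot \frac{N}{V}\,\Zin\bs\Pi_*^{-1}\be_v$ plus a label-independent term $\frac{1}{V}\bar{\bs z}$, where $\bar{\bs z}$ is the mean embedding. Its fluctuation has covariance of order $\frac{1}{N V}\cdot\frac{1}{L}\cdot\frac{1}{d}\bs I_d$ per coordinate, since $\bs{h}_i$ averages $L$ nearly independent unit-norm vectors. The signal $\frac{1}{VL}\Zin\be_{\pi^{-1}(v)}$ must be distinguished from the $V$ candidate columns of $\Zin$. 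Using Fano's inequality over the $V$ hypotheses for $\pi^{-1}(v)$, the KL divergence between two hypotheses is $\approx \|\text{mean difference}\|^2/\text{variance} \asymp \frac{(1/VL)^2}{1/(NVLd)}\cdot d = \frac{N}{VL}\cdot\ldots$. Carrying the computation out carefully, with the signal living in the $d$-dimensional space and the noise spread over all $d$ coordinates, should give mutual information $\lesssim N/V$ per label from $\bs S_1$. This is below $\log V$ when $N\lesssim V$, so Fano forces a constant fraction of labels to be misidentified. That yields the ``$1$'' branch of the minimum.

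For the second-order query, which is the $L/d^2$ branch, the relevant statistic is $\bs{S}_2$ restricted to label $v$. Its informative part is $\frac{1}{VL^2}\Zin\be_k\be_k^\top\Zin^\top$ for $k=\pi^{-1}(v)$. The noise is the fluctuation of the empirical covariance of $\bs h_i$ over the $\approx N/V$ samples with label $v$. Each $\bs h_i\bs h_i^\top$ has entries of size $1/(Ld)$ with $d^2$ roughly independent entries. I would compute the KL divergence between Gaussian surrogates. The squared Frobenius signal is $\asymp (1/L^2)^2$ per sample-normalized unit, and the per-entry noise variance is $\asymp \frac{V}{N}\cdot\frac{1}{L^2d^2}$. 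Summing over the $d^2$ entries gives an information of order $\frac{N}{V}\cdot\frac{d^2}{L}\cdot\frac{1}{d^2}\cdot\ldots$, which should simplify to $\frac{N d^2}{VL}$. Then Fano gives failure if $N\lesssim VL/d^2$, up to logs. A learner with access to both query types fails if both mutual informations are $o(\log V)$, which is $N\lesssim V\min\{1,L/d^2\}$. The mean-bias components are label-independent and so carry no information; this makes it legitimate to condition on them.

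The main obstacle will be making the Gaussian-surrogate/KL step rigorous. The $\bs h_i$ are not Gaussian: they are averages of $L$ columns from a fixed random $\Zin$, with repeated tokens and shared embeddings. The columns of $\Zin$ are also only nearly orthogonal. I would first condition on $\Zin$ being well conditioned, i.e. a high-probability event on which $\|\Zin\Zin^\top - \frac{V}{d}\bs I\|$ is small. I would then define the oracle noise explicitly as Gaussian with the matched covariance, so that the lower bound is proved for that noisy oracle as the informal statement allows. The remaining work is to verify that the KL divergence between hypotheses differing in a single $\pi^{-1}(v)$ is controlled by the signal-to-noise ratios computed above, and to apply Fano across the $V$ independent label-wise subproblems to obtain a constant error probability.
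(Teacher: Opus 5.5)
There is a genuine gap: the information bounds that carry the whole argument are asserted (``should give'', ``should simplify to'') rather than derived, and the numbers you write down do not produce the rates you claim.

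\textbf{The two branches.} In your own model the column $\bs{S}_1\be_v$ has hypothesis-dependent mean shift $\tfrac{1}{VL}(\bs{z}_k-\bs{z}_j)$ and per-coordinate variance $\asymp\tfrac{1}{NVLd}$. The KL is therefore $\asymp\tfrac{Nd}{VL}$, not $\tfrac{N}{V}$, and it is larger whenever $d>L$.

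So the ``$1$'' branch does not come from first-order queries. It cannot come from any moment-matched Gaussian surrogate at all, because the surrogate puts the mean $\tfrac{1}{VL}\bs{z}_{\pi^{-1}(v)}$ on every label, including labels that never appear in the sample. In the paper that branch is the entropy cap $I(\bs{P}_*;\bX_{\ell_t,t}\mid \bs{p}_t,\ell_t)\le \log V$, i.e.\ $I(\bs{P}_*;S)\le N\log V$ against $\log|\mathcal{H}|=\log V!\approx V\log V$.

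Likewise, your second-order inputs (Frobenius signal $\asymp L^{-4}$, per-entry variance $\asymp \tfrac{V}{NL^2d^2}$) give $\tfrac{Nd^2}{VL^2}$, not $\tfrac{Nd^2}{VL}$. For $\bs{h}_i\bs{h}_i^\top$ you also omit the hypothesis-dependent cross term $\tfrac{1}{L^2}\big((\bs{z}_k-\bs{z}_j)\bs{a}_i^\top+\bs{a}_i(\bs{z}_k-\bs{z}_j)^\top\big)$, where $\bs{a}_i$ is the sum of the non-informative embeddings. Its Frobenius norm $\asymp L^{-3/2}$ dominates the $L^{-2}$ term you kept. Since $N\lesssim VL/d^2$ already forces $\tfrac{Nd}{VL}\lesssim\tfrac1d$, the conjunction you need for the $\min$ version survives, but not for the reason you give.

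\textbf{The deferred rigor step.} The step you postpone as ``remaining work'' is where the proof actually lives, and your plan for it does not work. Conditioning on a well-conditioned $\Zin$ does not make $\bs{h}_i$ Gaussian given $\Zin$. Replacing the statistics by moment-matched Gaussians changes the experiment rather than bounding it.

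The paper instead keeps the true data and adds independent Gaussian noise to each sample's queries. It uses the additive second-moment statistic $\tfrac1L\Zin\bX_t^\top\bX_t\Zin^\top$ rather than $\bs{h}_t\bs{h}_t^\top$. Given the informative token $\be_k$, the observation is then $(\tfrac1L\mathsf{z}_k,\tfrac1L\mathsf{z}_k\mathsf{z}_k^\top)+A+\text{noise}$, with $A$ a mixture over the non-informative tokens that is common to all hypotheses. Convexity of KL gives $D_{\mathrm{KL}}(\mathcal{N}(\bs{\mu}+A,\bs{\Sigma})\,\Vert\,\mathcal{N}(\bs{\mu}'+A,\bs{\Sigma}))\le\tfrac12(\bs{\mu}-\bs{\mu}')^\top\bs{\Sigma}^{-1}(\bs{\mu}-\bs{\mu}')$ with no Gaussian approximation. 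The chain rule with conditional independence across samples then gives $I(\bs{P}_*;S)\le N\big(\log V\wedge(\tfrac{Cd}{\sigma_{\bs{\mu}}^2L}+\tfrac{Cd^2}{\sigma_{\bs{\Sigma}}^2L})\big)$, followed by a single Fano bound over $\log V!$.

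This per-sample model also covers arbitrary nonlinear processing of the noisy queries. Your oracle covers only methods that use the two aggregated linear statistics.

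\textbf{Independence of label-wise problems.} Under a permutation prior the label-wise problems are not independent: $\pi^{-1}(v)$ is determined by the other values. Per-label Fano must therefore control $I(\pi^{-1}(v);\text{all observations})$, not just the information in column $v$. This is repairable via $\sum_v I(\pi^{-1}(v);Y)\le I(\pi;Y)+\sum_v H(\pi^{-1}(v))-H(\pi)\le I(\pi;Y)+V$. A per-label conclusion would even speak to accuracy more directly than the paper's exact-recovery bound. As stated, though, the independence claim is false.
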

The complete statement of Theorem~\ref{thm:lowerbound-informal} is deferred to Theorem~\ref{thm:lowerbound} in Appendix~\ref{sec:lowerbound}. We observe that the lower bound does not exactly match our upper bound in Theorem~\ref{thm:main}, as $\emph{Signal} \lesssim \emph{Gradient Noise}$ in~\eqref{eq:mainresult} is stronger than the stated lower bound. This being said, Theorem~\ref{thm:lowerbound-informal} also confirms the multiplicative scaling, hence suggesting the trade-off between capacity and sample efficiency is present in a boarder class of learning algorithms. A stronger computational lower bound for transformers and gradient-based optimization is an interesting problem we leave for future work.

\vspace{-2mm}
\section{Implications and Empirical Verifications}
\label{sec:consequences}

\vspace{-1mm}
In this section, we leverage our main theorem to obtain more concrete scalings between parameters, and present empirical evidence on the derived multiplicative rate. 

\begin{figure}[t]
    \vspace{-5mm}
    \centering
    \begin{minipage}[b]{.42\textwidth}
    \vspace{0pt}
    \begin{subfigure}[b]{.49\linewidth}
        \centering
        \includegraphics[width=\linewidth]{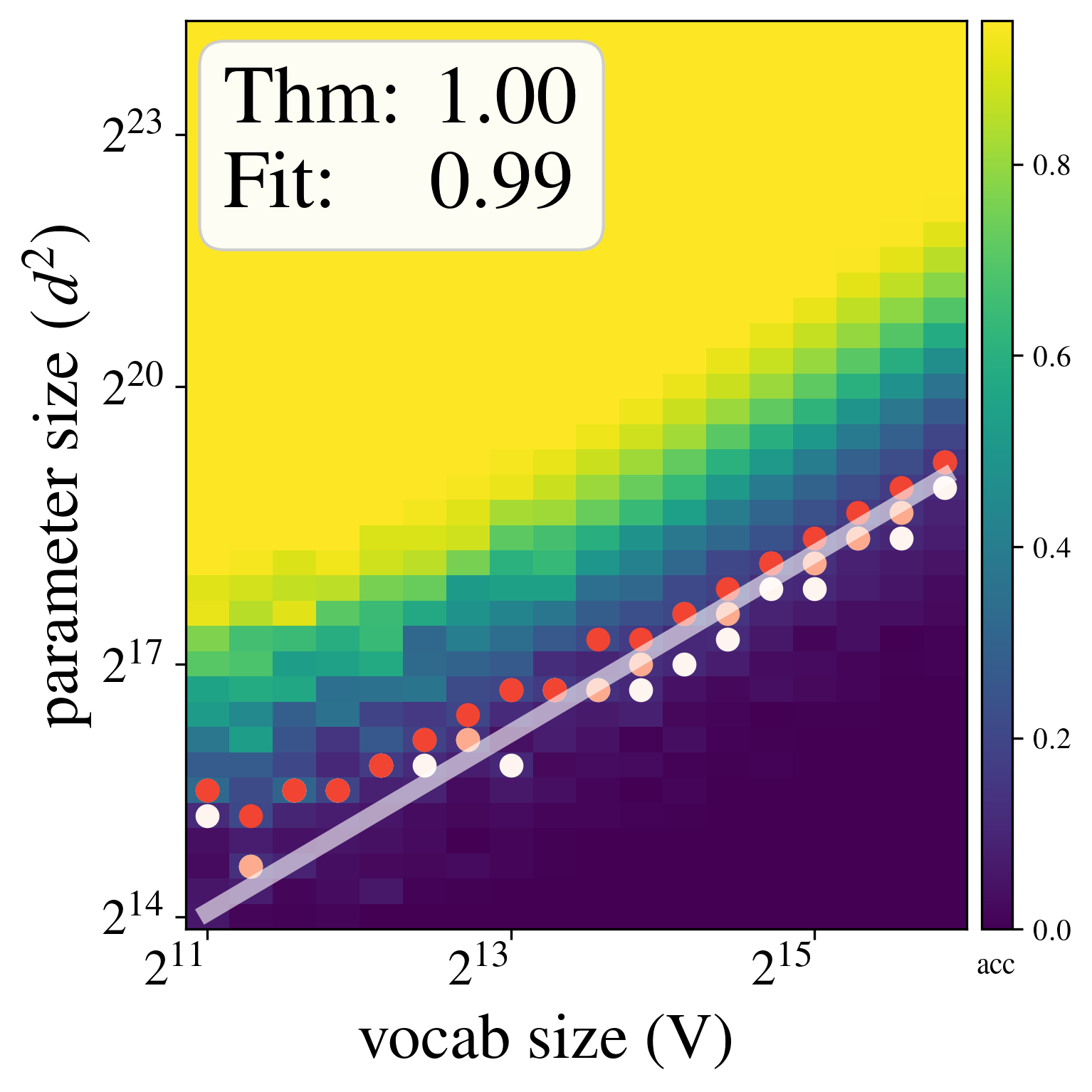}
    \end{subfigure} 
     \begin{subfigure}[b]{.49\linewidth}
        \centering
        \includegraphics[width=\linewidth]{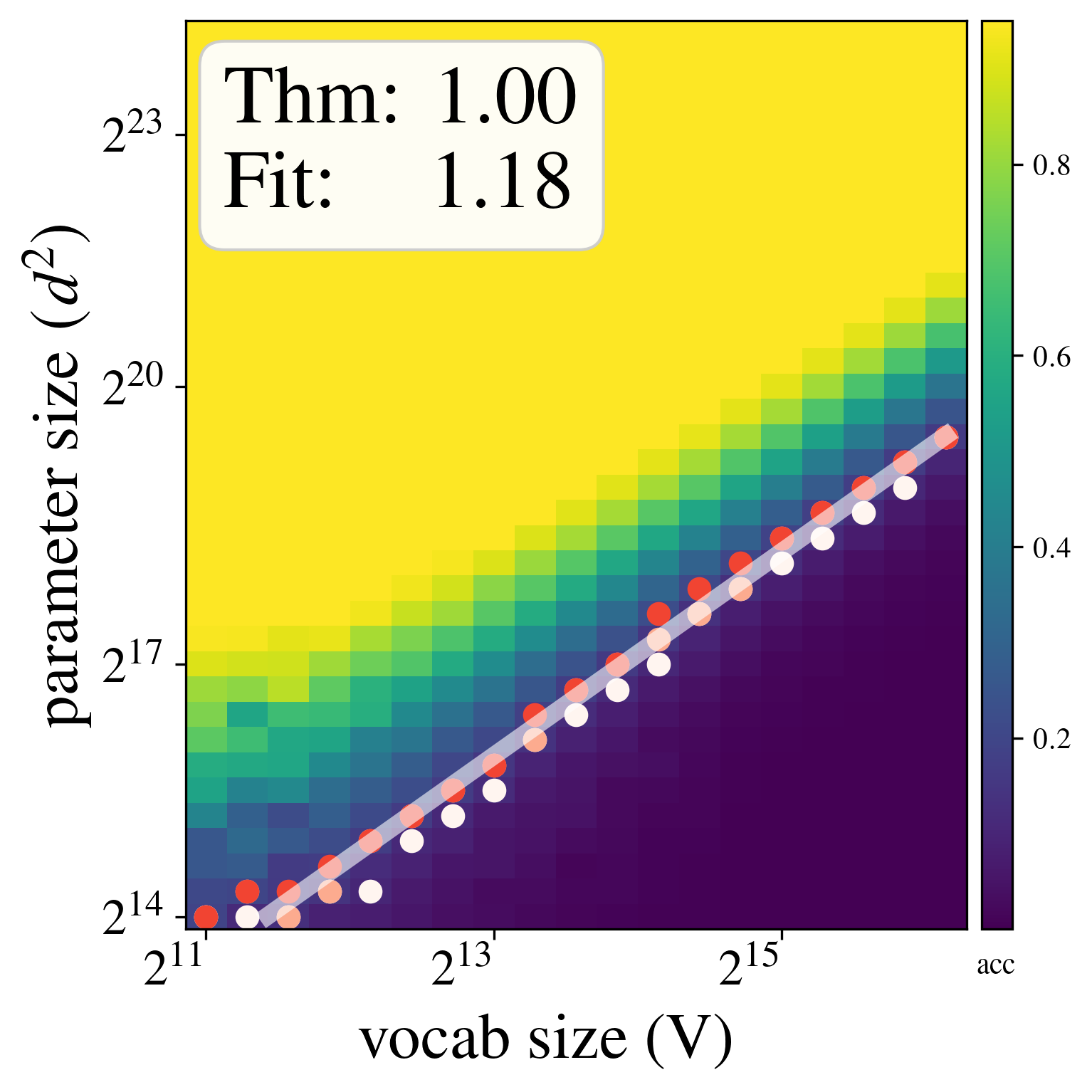}
    \end{subfigure}  \\
      \begin{subfigure}[b]{.49\linewidth}
        \centering
        \includegraphics[width=\linewidth]{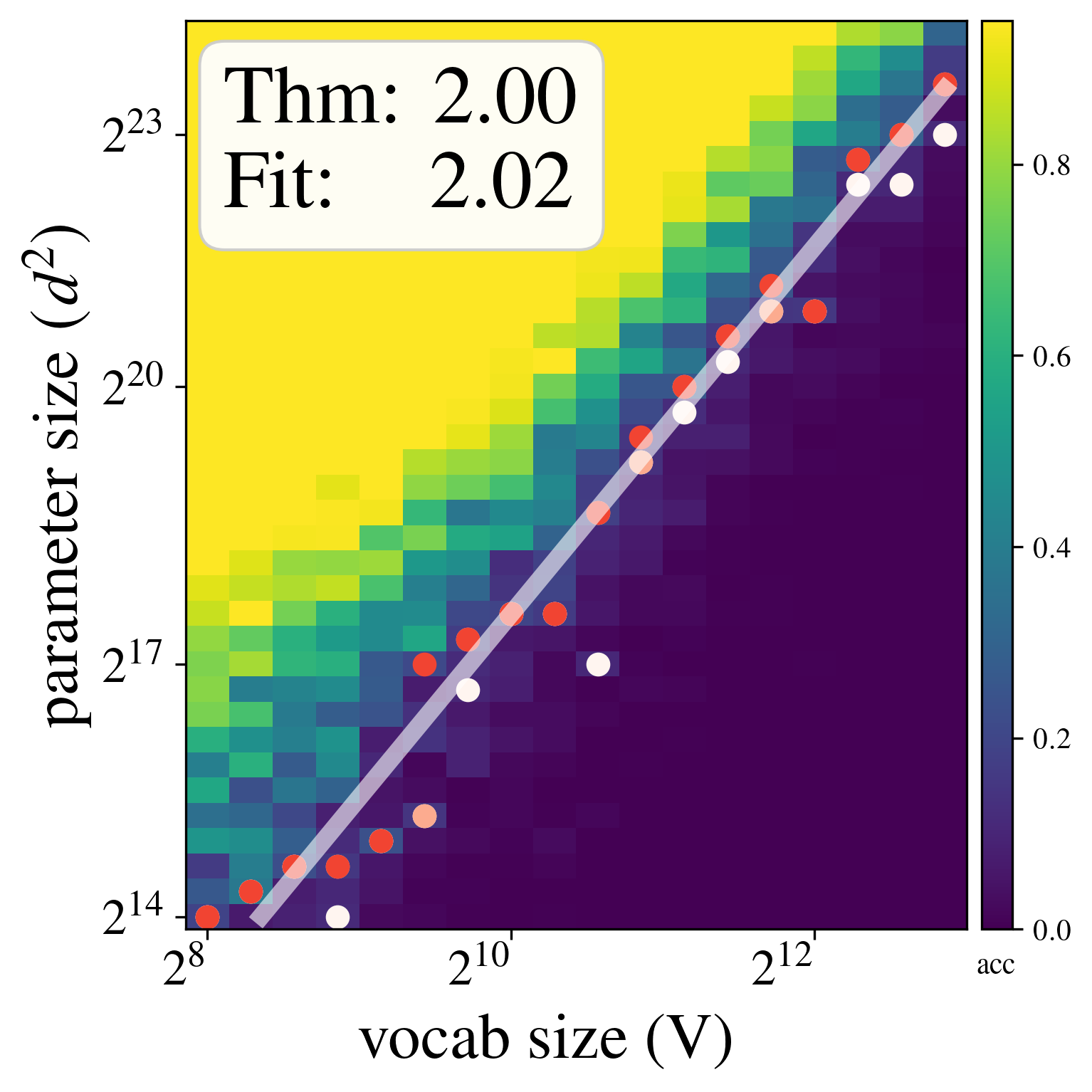}
        \caption{ \small $N \asymp V \log V$ }
         \label{fig:attentiononlyplotsmallsample}
    \end{subfigure}    
    \begin{subfigure}[b]{.49\linewidth}
        \centering
        \includegraphics[width=\linewidth]{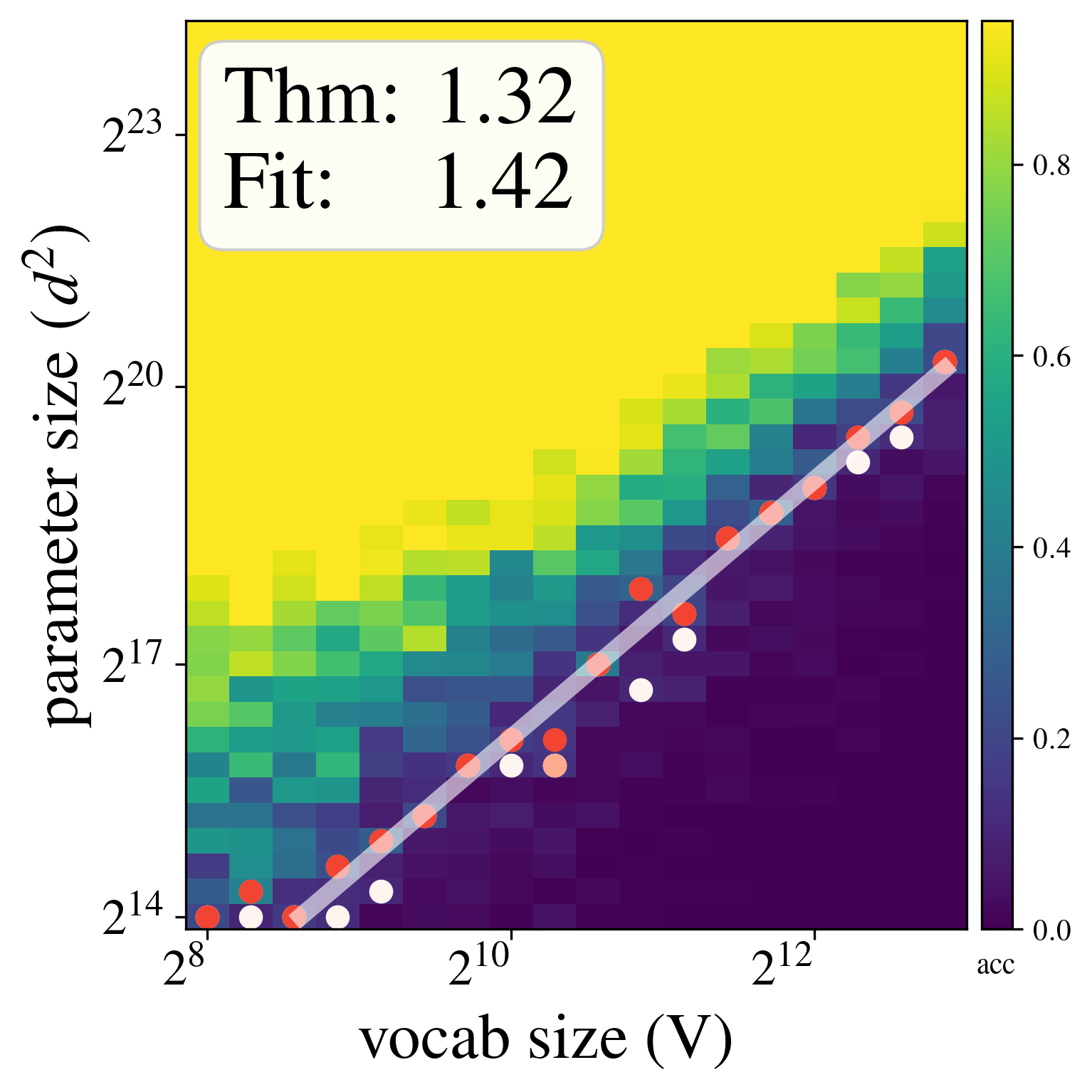}
        \caption{ \small    $N \asymp V^{1.5}$}
         \label{fig:attentiononlyplotlargesample}
    \end{subfigure} 
     \end{minipage}  ~~~~
    \begin{minipage}[b]{.45\textwidth}
    \vspace{0pt}
       \begin{subfigure}[b]{.85\linewidth}
        \centering
        \includegraphics[width=\linewidth]{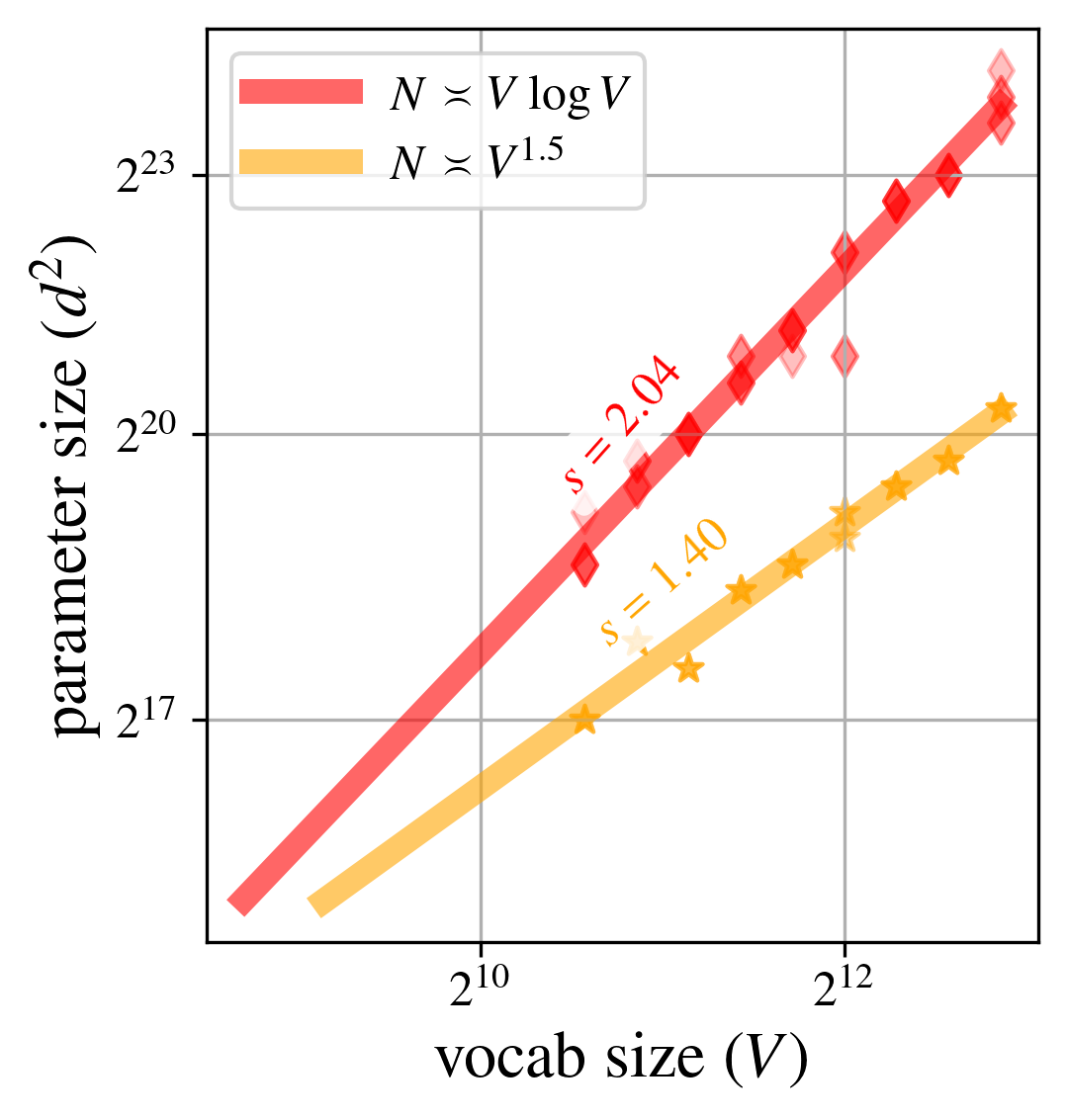}
        \caption{ Attention only, $L \asymp V$}
        \label{fig:attentiononlyplotb}
    \end{subfigure} 
    \end{minipage}
    \vspace{-1.5mm}
\caption{\small Empirical scaling of parameter size via three-step GD for the \emph{Attention-only} model. In (a) and (b), top-left and top-right use $L \asymp V^{0.5}$; bottom-left and bottom-right use $L \asymp V$. In (c), we compare the parameter counts from (a) and (b) for the $L \asymp V$ case under two sample-size regimes, $N \asymp V \log V$ and $N \asymp V^{1.5}$. \emph{Line fitting:} We identify in the heatmaps the smallest embedding dimension that achieves accuracies $\{0.1, 0.125, 0.15\}$ and perform a least squares fit. The slopes of the fitted lines and their theoretical counterparts are reported on the heatmaps.  
}
    \label{fig:attentiononlyplot}
    \vspace{-2mm}
\end{figure}

\vspace{-1.5mm}
\subsection{Attention-only Model}

\vspace{-1.5mm}
We start with the \emph{Attention-only model} which gives a simpler phase diagram. 
\begin{corollary}
\label{cor:attentiononly}
For the \emph{Attention-only} model, the bottleneck term in \eqref{eq:mainresult} is the \emph{Mean bias} term. Therefore,   Theorem \ref{thm:main} is equivalent to the parameter size requirement $d^2 \gtrsim \max \{ V, V^{2/3} L^{8/3} / N^{4/3} \}$.
\end{corollary}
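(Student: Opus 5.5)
The plan is to specialize Theorem~\ref{thm:main} to the \emph{Attention-only} model, where the MLP noise term is absent, and simplify the remaining condition
\[
\frac{1}{VL^2} \gtrsim \frac{1}{N\sqrt{L}\,d\,(d\wedge L)} + \frac{1}{N\sqrt{Vd}\,(d\wedge L)}
\]
using the standing constraints. These are $d^2\gtrsim V$ (Assumption~\ref{ass:condsattentiononly}), $V\log V\lesssim N = o(VL)$ and $L=V^c$ with $c<1$ (Assumption~\ref{ass:condsgeneral}). Since the right-hand side is a sum of two terms, up to constants the condition is the conjunction of the two separate inequalities ``Signal $\gtrsim$ Gradient noise'' and ``Signal $\gtrsim$ Mean bias''. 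I would show that the first is implied by the second together with $d^2\gtrsim V$, and then rewrite the second in terms of $d^2$.

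\textbf{Step 1 (comparing the two noise terms).} The ratio is $\text{Gradient noise}/\text{Mean bias} = \sqrt{V/(Ld)}$, so I split according to whether $Ld\ge V$ or $Ld<V$.
\begin{itemize}
\item If $Ld\ge V$, the mean bias term dominates, and nothing needs to be shown.
\item If $Ld<V$, then $d\gtrsim\sqrt V$ forces $L\lesssim\sqrt V\lesssim d$, so $d\wedge L=L$. The gradient noise condition becomes $Nd\gtrsim V\sqrt L$. This already holds, because $Nd\gtrsim V\log V\cdot\sqrt V\ge V\sqrt L$. Similarly, the mean bias condition becomes $N\sqrt d\gtrsim \sqrt V L$, which holds since $N\sqrt d\gtrsim V^{5/4}\ge\sqrt V L$.
\end{itemize}
Hence in all cases the gradient noise requirement is redundant given the mean bias requirement. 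This justifies calling the mean bias term the bottleneck.

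\textbf{Step 2 (rewriting the mean bias condition).} The condition is $N\sqrt{Vd}\,(d\wedge L)\gtrsim VL^2$, and I treat the two cases for $d\wedge L$ separately.
\begin{itemize}
\item If $d\le L$, it reads $d^{3/2}\gtrsim \sqrt V L^2/N$, which is exactly $d^2\gtrsim V^{2/3}L^{8/3}/N^{4/3}$.
\item If $d\ge L$, it reads $d\gtrsim VL^2/N^2$. This is automatic, since $VL^2/N^2\le L^2/V\le dL/V\le d$. The claimed condition $d^2\gtrsim V^{2/3}L^{8/3}/N^{4/3}$ is also automatic in this case, since the right-hand side is at most $L^{8/3}/V^{2/3}=L^2(L/V)^{2/3}\le d^2$, using $N\gtrsim V$.
\end{itemize}
So in both cases the mean bias condition is equivalent to $d^2\gtrsim V^{2/3}L^{8/3}/N^{4/3}$. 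Combining this with the capacity requirement $d^2\gtrsim V$ gives $d^2\gtrsim\max\{V, V^{2/3}L^{8/3}/N^{4/3}\}$.

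\textbf{Main obstacle.} The delicate point is Step 1 in the regime $Ld<V$, where gradient noise nominally exceeds mean bias. The argument there depends on using $d^2\gtrsim V$ to force $d\wedge L=L$ and on the lower bound $N\gtrsim V\log V$. I would also check that the polylogarithmic slack hidden in $\gtrsim$ is preserved consistently through each of these implications.
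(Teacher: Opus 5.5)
Your proposal is correct and follows essentially the route the paper intends. The paper gives no separate proof of this corollary: it only remarks that $d^2\gtrsim V$ is the capacity condition of Assumption~\ref{ass:condsattentiononly} and that the second term encodes \emph{Signal} $\gtrsim$ \emph{Mean bias}. Your case analysis supplies exactly the omitted algebra under Assumptions~\ref{ass:condsgeneral} and~\ref{ass:condsattentiononly}: the gradient-noise condition is redundant because either $Ld\ge V$ or both conditions hold trivially, and the mean-bias condition is automatic when $d\ge L$ and reduces to $d^2\gtrsim V^{2/3}L^{8/3}/N^{4/3}$ when $d\le L$.
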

We make the following observations:
\begin{itemize}[leftmargin=*,itemsep=0.2em]
\item The condition in Corollary \ref{cor:attentiononly} is the maximum of two terms, where $d^2 \gtrsim V$ is due to the capacity requirement in Assumption \ref{ass:condsattentiononly}, whereas the second term ensures $\emph{Signal} \gtrsim \emph{Mean bias}$ and implies a multiplicative scaling between the sample size $N$ and embedding dimension $d$ (i.e., increasing one of the parameters can compensate for the other).
\item Note that the \emph{Mean bias} term arises from a nonzero token mean, which can potentially be alleviated by centering the tokens, for instance through an appropriate normalization layer. Exploring the effect of applying normalization in this model is an interesting direction for future work.
\end{itemize}

\paragraph{Empirical Findings.}
We run the three-step gradient descent algorithm  on an \emph{Attention-only} model over varying $V$ and $d$, and report the accuracies in the heatmaps (Figure \ref{fig:attentiononlyplot}). The plots are in log-log scale; therefore, the slopes give the exponent $s$ in $d \asymp V^s$. As shown in the top row of Figures \ref{fig:attentiononlyplotsmallsample}-\ref{fig:attentiononlyplotlargesample}, the slope for relatively small $L$ (where $L \asymp V^{0.5}$) matches the optimal capacity condition $d^2 \asymp V$. 
By contrast, when the context window is larger ($L \asymp V$), the requirement becomes $d\asymp V$, which is also reflected in the experimental results, as observed in the bottom panel of Figure \ref{fig:attentiononlyplotsmallsample}.

In Figure \ref{fig:attentiononlyplotlargesample} we run experiments with increasing sample size to observe the multiplicative trade-off. As seen in the bottom figure of Figure \ref{fig:attentiononlyplotlargesample}, increasing the sample size from $V \log V$ to $V^{1.5}$ reduces the exponent of the parameter size  from $2.02$ to $1.42$ (the theoretical value is $s = 1.32$). 
Finally, the learnability thresholds for $L \asymp V$ in Figures \ref{fig:attentiononlyplotsmallsample} and \ref{fig:attentiononlyplotlargesample} are plotted together in Figure \ref{fig:attentiononlyplotb}, to illustrate that increasing the sample size can compensate for the number of parameters in the network.
 
\vfill
\begin{figure}[!hbt]
    \centering 
        \begin{minipage}[b]{.43\textwidth}
      \vspace{0em}
      \centering
    \begin{subfigure}[b]{.49\linewidth}
        \centering
        \includegraphics[width=\linewidth]{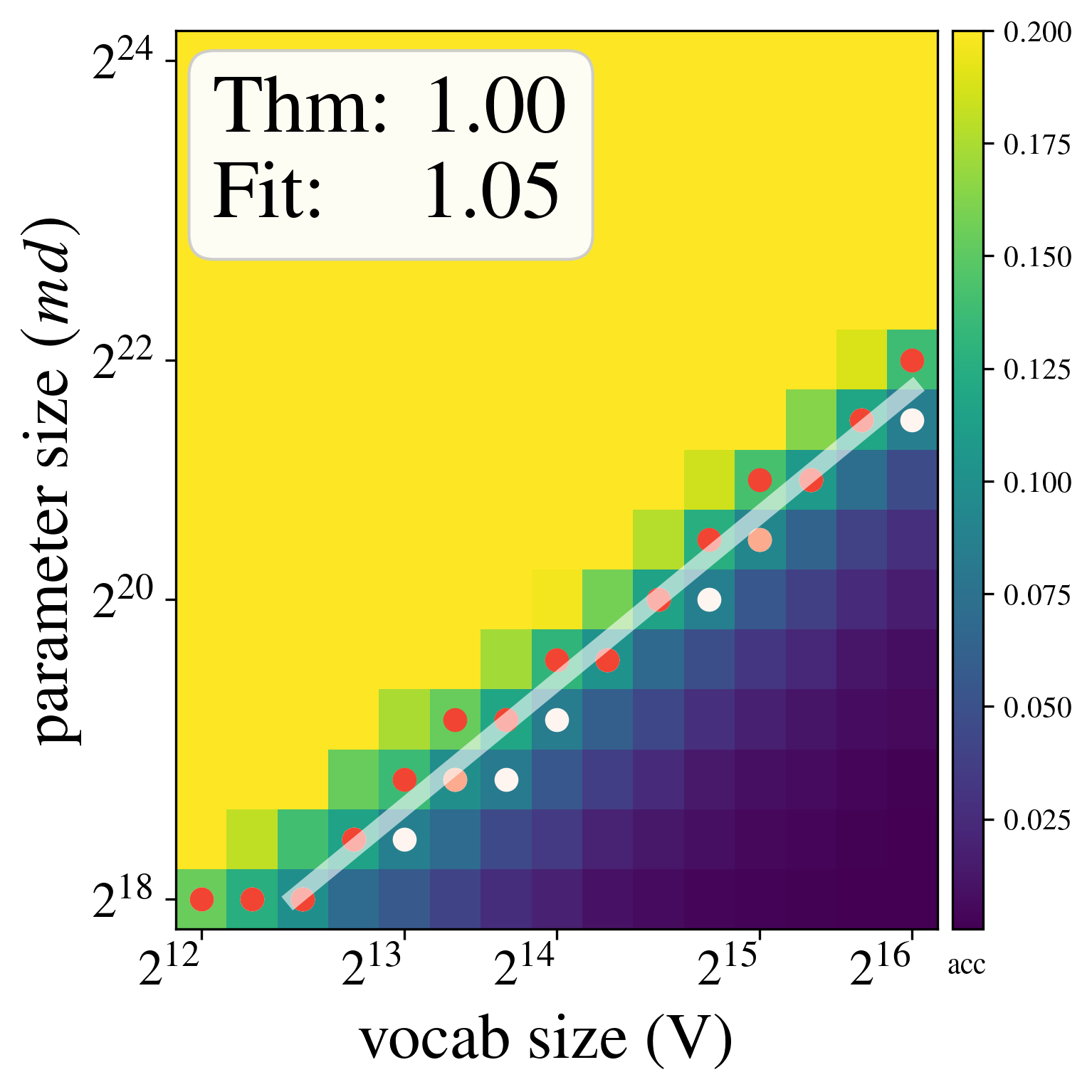}
    \end{subfigure} 
     \begin{subfigure}[b]{.49\linewidth}
        \centering
        \includegraphics[width=\linewidth]{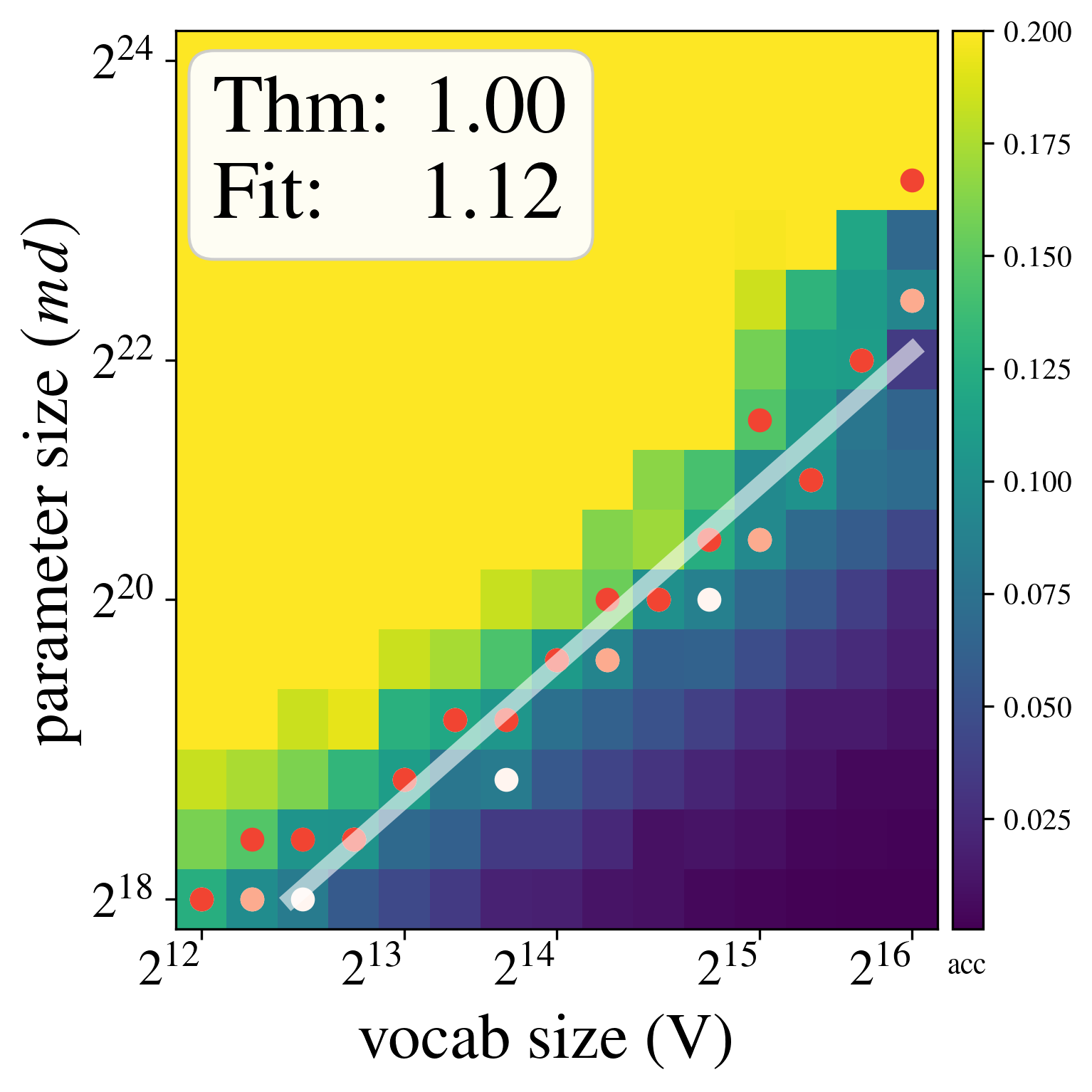}
    \end{subfigure}  \\
      \begin{subfigure}[b]{.49\linewidth}
        \centering
        \includegraphics[width=\linewidth]{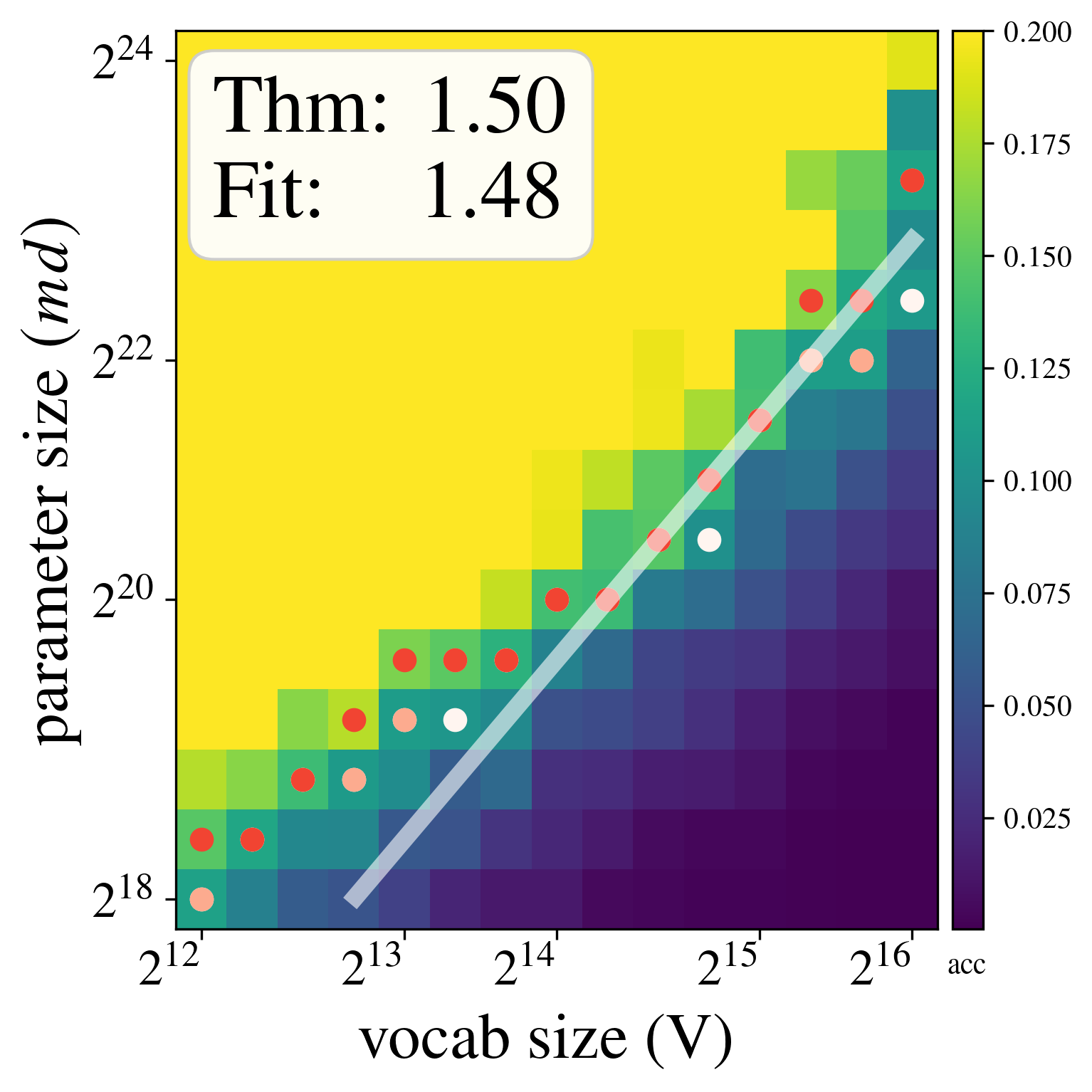}
        \caption{ $N \asymp V \log V$}
        \label{fig:attentionmlps2} 
    \end{subfigure}    
    \begin{subfigure}[b]{.49\linewidth}
        \centering
        \includegraphics[width=\linewidth]{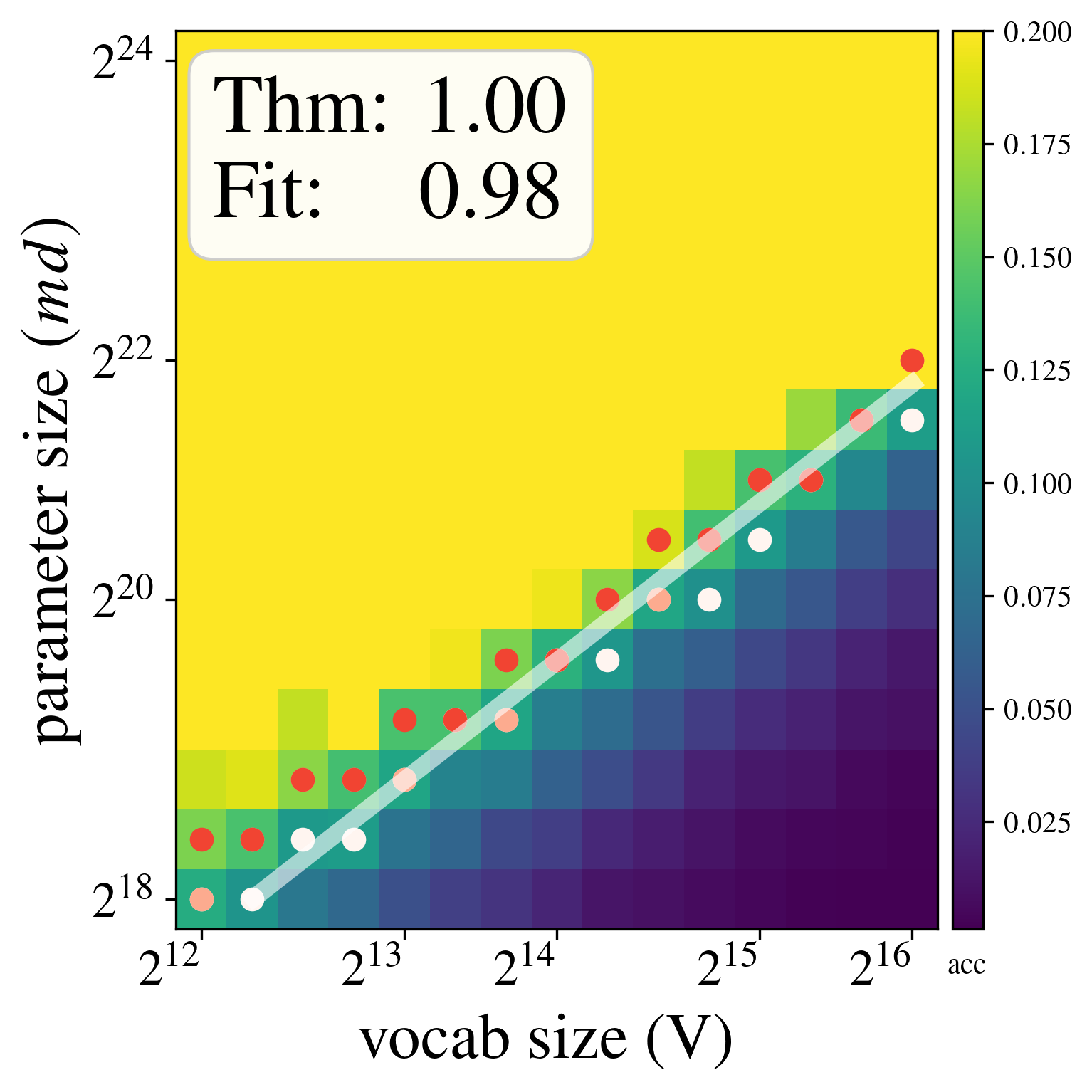}
        \caption{ \small    $N \asymp V^{1.5}$}
        \label{fig:attentionmlps3} 
    \end{subfigure} 
     \end{minipage}%
          \begin{minipage}[b]{.45\textwidth}
    \vspace{0pt}
    \centering
       \begin{subfigure}[b]{.9\linewidth}
        \centering
        \includegraphics[width=\textwidth]{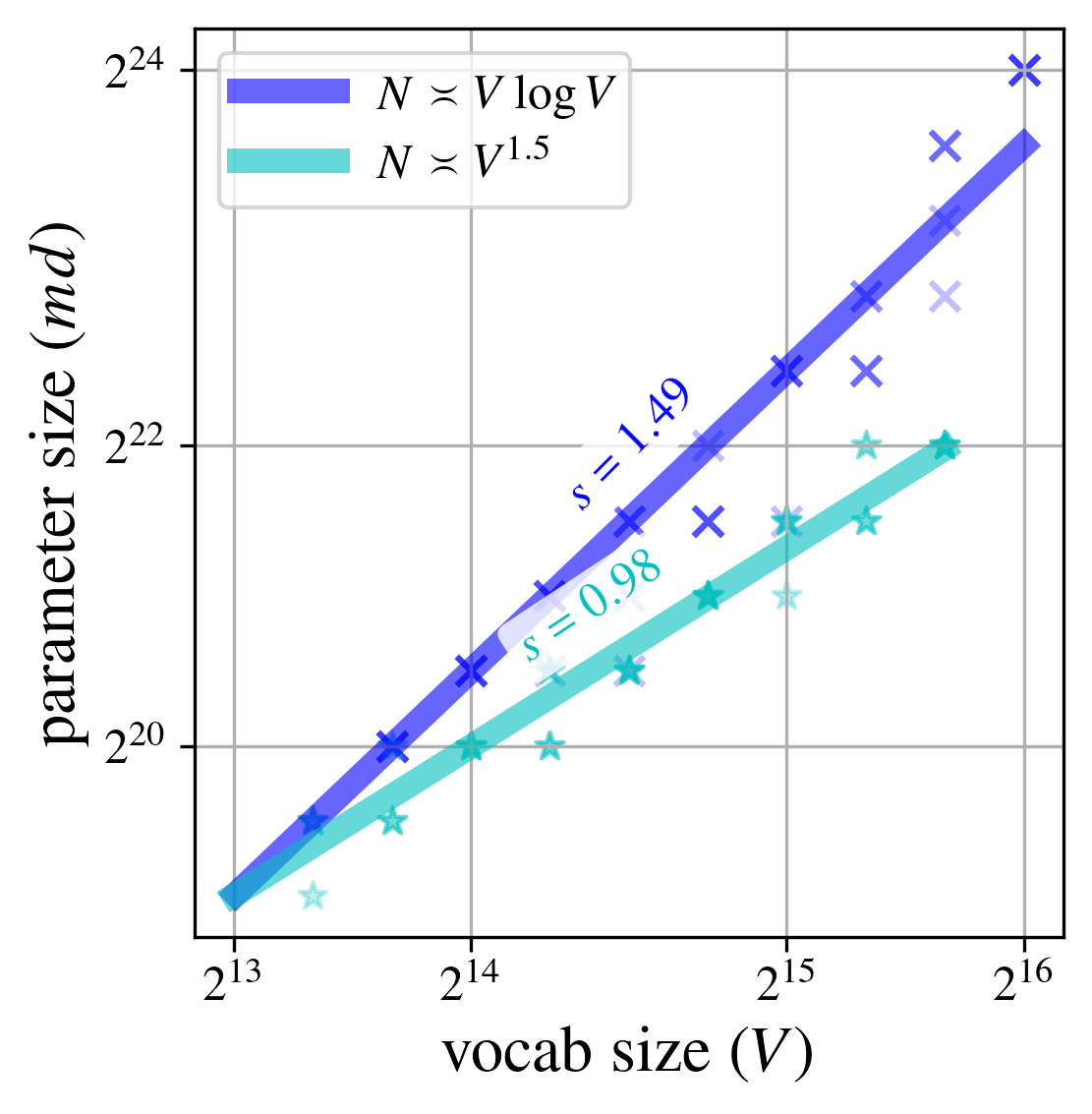}
        \caption{$L \asymp V^{0.5}$ and $m \asymp d^2$}
        \label{fig:attentionmlps1} 
    \end{subfigure} 
    \end{minipage}
        \vspace{-1mm}
  \caption{\small Empirical scaling of  parameter size for the \emph{Attention-MLP} model under two sample size regimes, $N \asymp V \log V$ and $N \asymp V^{1.5}$. In (a) and (b), top-row uses $L \asymp V^{0.25}$; bottom-row uses $L \asymp V^{0.5}$. In (c), we compare the parameter counts from (a) and (b) for the $L \asymp V^{0.5}$ case under both sample-size regimes.}
  \label{fig:attentionmlplargesample}
\end{figure}

\begin{figure}[!hbt]
 \vspace{2mm}
    \centering 
         \begin{minipage}[b]{.43\textwidth}
      \vspace{0pt}
      \centering
    \begin{subfigure}[b]{.49\linewidth}
        \centering
        \includegraphics[width=\linewidth]{_figures/attention_mlp_small_sample_LV025_v2.png}
    \end{subfigure} 
     \begin{subfigure}[b]{.49\linewidth}
        \centering
        \includegraphics[width=\linewidth]{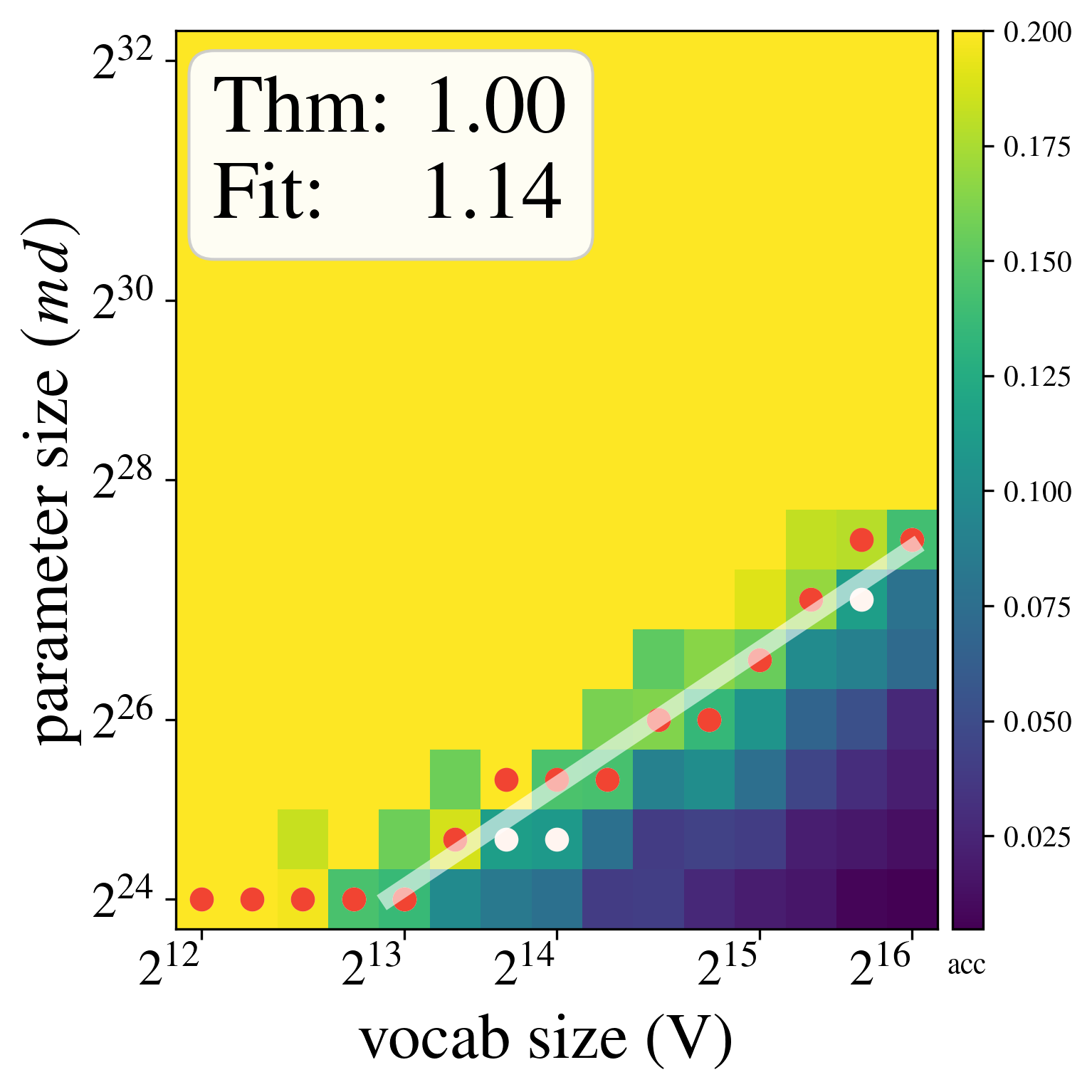}
    \end{subfigure}  \\
      \begin{subfigure}[b]{.48\linewidth}
        \centering
        \includegraphics[width=\linewidth]{_figures/attention_mlp_small_sample_LV05_v2.png}
        \caption{$m \asymp d^2$}
        \label{fig:attentionmlpw2} 
    \end{subfigure}    
    \begin{subfigure}[b]{.49\linewidth}
        \centering
        \includegraphics[width=\linewidth]{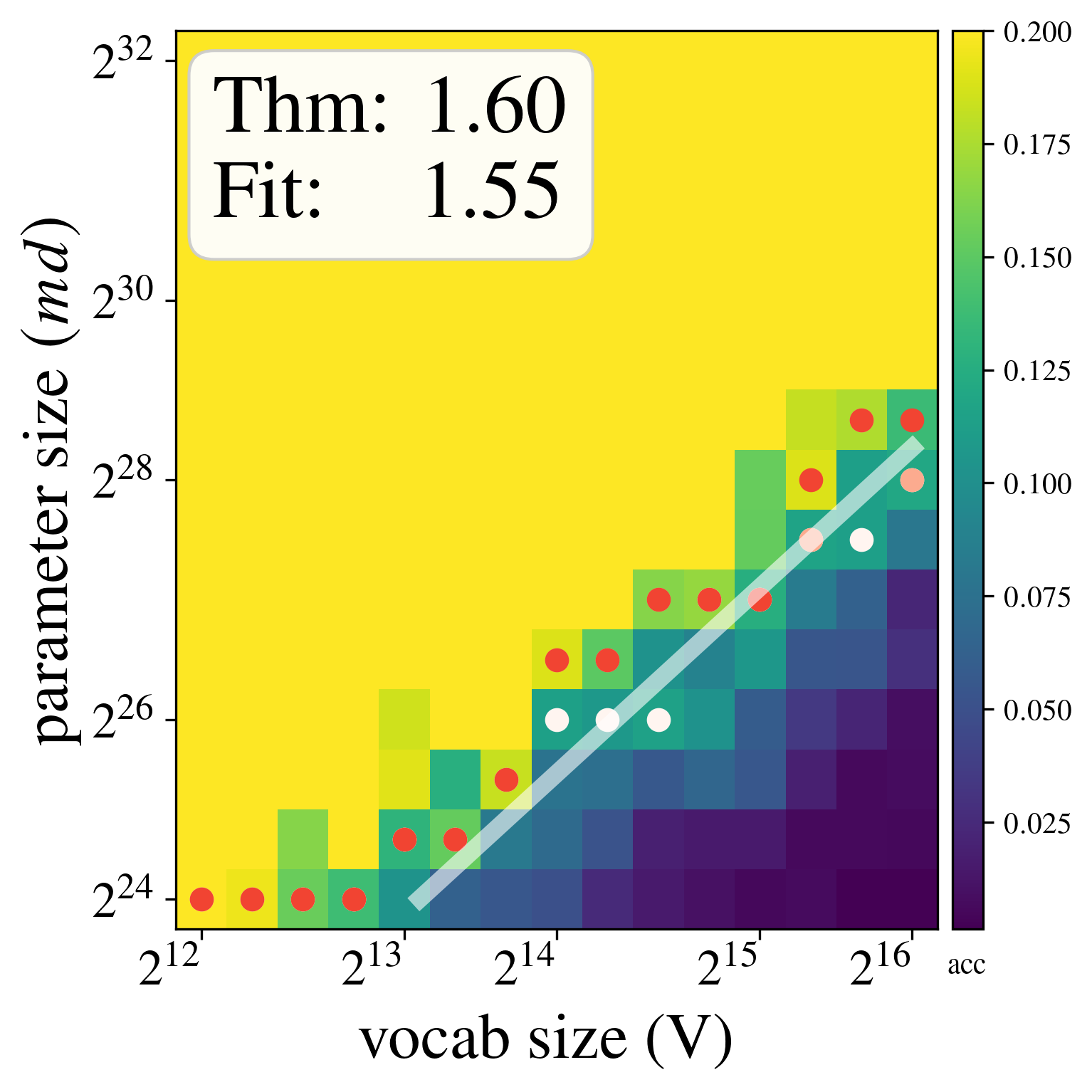}
        \caption{   $m \asymp d^{3}$}
        \label{fig:attentionmlpw3} 
    \end{subfigure} 
     \end{minipage} 
        \begin{minipage}[b]{.45\textwidth}
    \vspace{0pt}
    \centering
       \begin{subfigure}[b]{.9\linewidth}
        \centering
        \includegraphics[width=\textwidth]{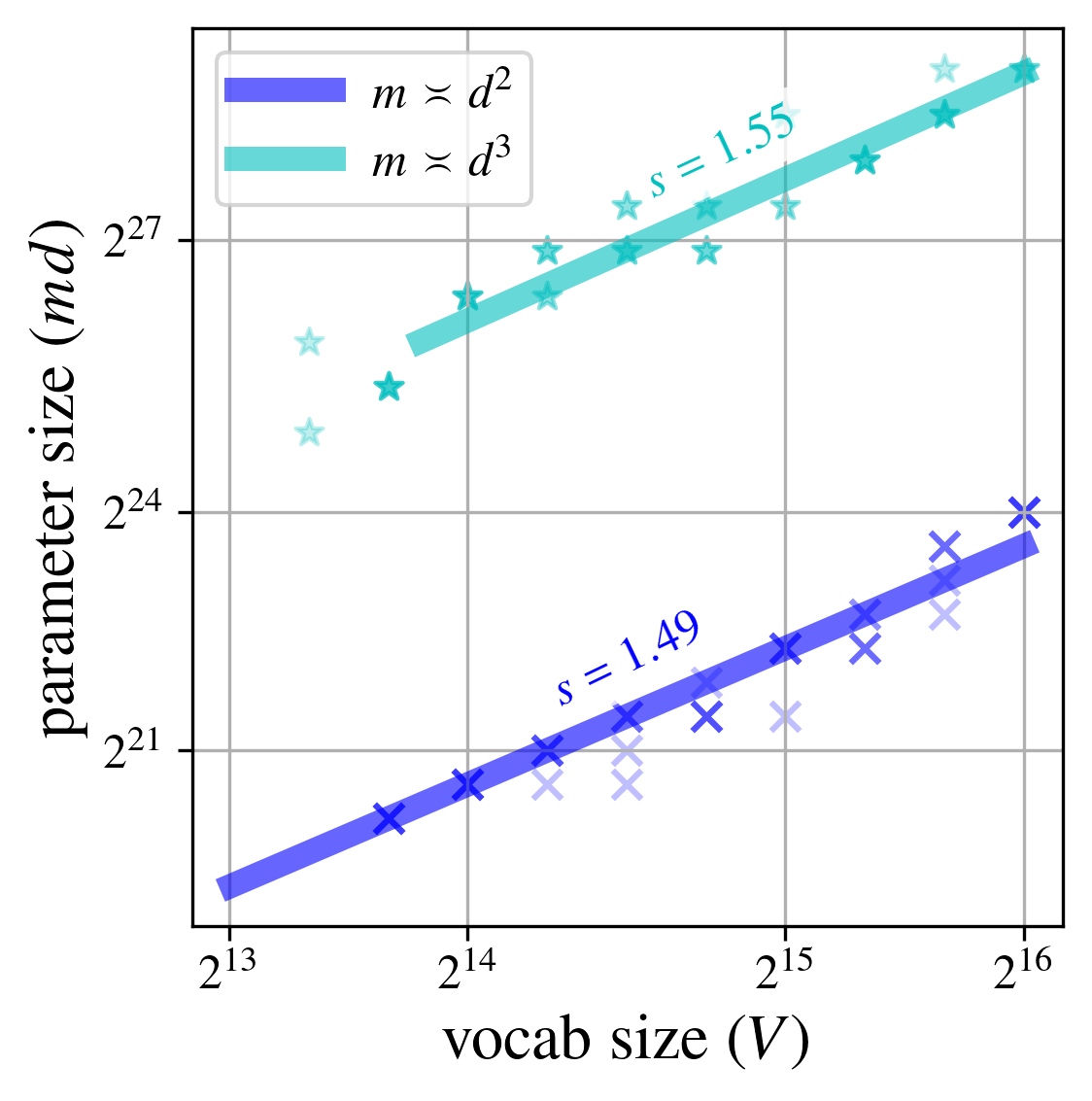}
        \caption{$L \asymp V^{0.5}$ and $N \asymp V \log V$}
        \label{fig:attentionmlpw1} 
    \end{subfigure} 
    \end{minipage}
     \vspace{-1mm}
      \caption{\small Empirical scaling of embedding  parameter size for the \emph{Attention-MLP} model under two width regimes, $m \asymp d^2$ and $m \asymp d^3$. In (a) and (b), top-row uses $L \asymp V^{0.25}$; bottom-row uses $L \asymp V^{0.5}$. In (c), we compare the parameter counts from (a) and (b) for the $L \asymp V^{0.5}$ case under both width regimes. }
    \label{fig:attentionmlplargewidth} 
\end{figure}

\newpage
\vspace{-3mm} 
\subsection{Attention-MLP Model} 
\label{sec:attnmlp}
\vspace{-1mm} 
For the attention-MLP model, the nonlinear MLP layer introduces additional phases as stated below.    

\begin{corollary}
\label{cor:attentionmlp}
For the \emph{Attention-MLP} model, 
 Theorem 1 translates to $m d \gtrsim V$ and
\eq{
\emph{Signal} \gtrsim 
\begin{cases}
\emph{MLP noise}, &    m = o(d^2 L) ~~ \text{and} ~~ m = o(dV)  \\
\emph{Gradient noise}, & V \gtrsim dL ~~ \text{and} ~~ m \gtrsim d^2 L \\
\emph{Mean bias}, &  V = o(dL) ~~ \text{and} ~~ m \gtrsim dV,
\end{cases}
}
where
\begin{itemize}[leftmargin =*,topsep=0.15mm,itemsep=0.05mm]
    \item  $\emph{Signal} \gtrsim  \emph{MLP noise}$ is equivalent to  $m d \gtrsim  V \frac{\sqrt{m} L^2}{N}$.
    \item  $\emph{Signal} \gtrsim  \emph{Gradient noise}$  is equivalent to   $m d \gtrsim  V  \frac{m L^{\frac{1}{4}}}{\sqrt{N} }$
    \item  $\emph{Signal} \gtrsim  \emph{Mean bias}$  is equivalent to  $m d  \gtrsim \frac{m L^{\frac{4}{3}} V^{\frac{1}{3}} }{N^{\frac{2}{3}}}$.
\end{itemize}
\end{corollary}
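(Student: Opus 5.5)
The plan is to treat Corollary~\ref{cor:attentionmlp} as a purely algebraic consequence of Theorem~\ref{thm:main}. Condition \eqref{eq:mainresult} holds, up to a factor of $3$ absorbed in $\gtrsim$, if and only if Signal dominates each of the three noise terms separately. It therefore suffices to (i) determine which of the three terms on the right-hand side is largest in each region of $(V,d,L,m)$, and (ii) rewrite the single inequality ``Signal $\gtrsim$ dominant term'' as a condition on the parameter count $md$. The requirement $md\gtrsim V$ is simply carried over from Assumption~\ref{ass:condsattentionmlp}.

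For step (i) I write $G=\frac{1}{N\sqrt{L}\,d(d\wedge L)}$, $B=\frac{1}{N\sqrt{Vd}(d\wedge L)}$ and $M=\frac{1}{Nd\sqrt m}$, and compare them pairwise through their ratios. The factor $N$ cancels in every ratio:
\eq{
\frac{G}{B}=\sqrt{\frac{V}{dL}},\qquad \frac{G}{M}=\frac{\sqrt m}{\sqrt L\,(d\wedge L)},\qquad \frac{B}{M}=\frac{\sqrt{md/V}}{d\wedge L}.
}
In the main regime $d\le L$ these comparisons read as follows. $G\ge B$ if and only if $V\gtrsim dL$. $G\ge M$ if and only if $m\gtrsim d^2L$. $B\ge M$ if and only if $m\gtrsim dV$. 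Combining them gives the three cases of the corollary. MLP noise dominates exactly when $m=o(d^2L)$ and $m=o(dV)$. Gradient noise dominates when $V\gtrsim dL$ and $m\gtrsim d^2L$, which automatically gives $m\gtrsim dV$ as well. Mean bias dominates when $V=o(dL)$ and $m\gtrsim dV$, which gives $m\gtrsim d^2L$. I also have to check that these regions cover all cases, with boundary cases handled up to polylogarithmic factors. Separately, the regime $d>L$, where $d\wedge L=L$, must be treated. Under $N=o(VL)$ and $L=V^c$, I expect this regime either to fall into one of the same cases with the same boundaries, or to be subsumed by the capacity condition; I would verify this directly.

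For step (ii) I solve each inequality for $d$ and multiply by $m$. Signal $\gtrsim M$ is $Nd\sqrt m\gtrsim VL^2$, i.e.\ $md\gtrsim V\sqrt m L^2/N$. Signal $\gtrsim B$ with $d\le L$ is $N\sqrt{V}d^{3/2}\gtrsim VL^2$, i.e.\ $d\gtrsim V^{1/3}L^{4/3}/N^{2/3}$, and multiplying by $m$ gives the stated form. Signal $\gtrsim G$ with $d\le L$ is $N\sqrt L d^2\gtrsim VL^2$, i.e.\ $d\gtrsim \sqrt{V}L^{3/4}/\sqrt N$. To reach the form stated in the corollary, I would then combine this with the regime constraints of the gradient-noise case, namely $V\gtrsim dL$ and $m\gtrsim d^2L$, to re-express the powers of $V$ and $L$.

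I expect this last conversion to be the main obstacle. The raw inequality does not literally match the displayed expression, so care is needed in deciding which regime relations are substituted, and whether polylogarithmic slack and the $d>L$ branch are handled consistently. Everything else is bookkeeping of exponents.
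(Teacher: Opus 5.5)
Your ratio computations and the three-way case split for $d\le L$ are correct. So are the conversions for MLP noise ($Nd\sqrt m\gtrsim VL^2$) and mean bias ($d\gtrsim V^{1/3}L^{4/3}/N^{2/3}$).

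Two side remarks in your case analysis point the wrong way, though this is harmless. Since $V\gtrsim dL$ gives $dV\gtrsim d^2L$, the condition $m\gtrsim d^2L$ does \emph{not} yield $m\gtrsim dV$; the mean-bias aside fails symmetrically. The implications you actually need go the other way, and they are what make the three regions exhaustive:
\begin{itemize}
\item when $V\gtrsim dL$, $m\gtrsim dV$ implies $m\gtrsim d^2L$;
\item when $V\lesssim dL$, $m\gtrsim d^2L$ implies $m\gtrsim dV$.
\end{itemize}
The $d>L$ branch is disposed of by $N\gtrsim V\log V$ and $L<V$, not by the capacity condition. In that branch $NdL^{3/2}\gtrsim VL^2$ and $N\sqrt{Vd}\,L\gtrsim VL^2$ hold automatically. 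Hence only MLP noise can bind there, and the $d\le L$ thresholds for $B$ and for (the correct) $G$ are met automatically.

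The genuine gap is the step you flag as the main obstacle: it cannot be carried out, because the displayed gradient-noise bullet does not follow from Theorem~\ref{thm:main}. For $d\le L$ your derivation gives Signal $\gtrsim G \iff md\gtrsim m\sqrt{V}L^{3/4}/\sqrt N$. The statement instead asks for $md\gtrsim mVL^{1/4}/\sqrt N$, which is larger by the factor $\sqrt{V/L}=V^{(1-c)/2}$. That factor is polynomial in $V$, not polylogarithmic. The regime relations $V\gtrsim dL$ and $m\gtrsim d^2L$ only bound $d$ from above and $m$ from below, so no substitution can turn one lower bound on $d$ into the other.

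A concrete counterexample: take $L=\sqrt V$, $N\asymp V\log V$, $d=V^{0.45}$, $m=V^2$, and $k_\star\ge 2$.
\begin{itemize}
\item All assumptions hold.
\item The point lies in the gradient-noise case, since $dL=V^{0.95}\le V$ and $d^2L=V^{1.4}\le m$.
\item All three inequalities of Theorem~\ref{thm:main} hold with polynomial slack.
\item Yet $VL^{1/4}/\sqrt N\asymp V^{5/8}/\sqrt{\log V}\gg d$, so the printed bullet fails.
\end{itemize}
A quicker sanity check is the triple point $V=dL$, $m=d^2L$, where the three noise terms coincide. There the MLP-noise condition, the mean-bias condition and your gradient-noise condition all reduce to $d\gtrsim L^{5/2}/N$. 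The printed one reduces instead to $N\gtrsim L^{5/2}$, which does not involve $d$.

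So the printed bullet is only a sufficient condition, not an equivalence. Do not try to massage exponents to reach it. State what your argument actually proves, $md\gtrsim m\sqrt V L^{3/4}/\sqrt N$, and flag the discrepancy with the displayed form.
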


The phase diagram for the Attention-MLP model is visualized in Figure~\ref{fig:introphase}. Compared to the \emph{Attention-only} case, it exhibits additional regimes because we can trade off $m$ and $d$ and thus use a smaller embedding dimension; this can lead to different dominant terms in the gradient. In particular, since large $L$ and $d$ entail a larger magnitude of the \emph{Mean bias} (as in the \emph{Attention-only} setting), increasing the MLP width $m$ and thereby reducing the required embedding dimension $d$ may suppress this bias term.

\paragraph{Empirical Findings.} We run the 3-step gradient descent algorithm on an Attention-MLP network over varying $V$ and $d$ and plot the accuracies in Figures \ref{fig:attentionmlplargesample} and \ref{fig:attentionmlplargewidth}. We take the nonlinearity to be the mixture of two Hermite polynomials $\phi = 0.7 h_2 + 0.3 h_3$, satisfying the conditions in Assumption \ref{ass:condsattentionmlp}. We run experiments with width $m \asymp d^2$ and $m \asymp d^3$. Due to the prohibitive cost of increasing the width further, we restrict ourselves to the \emph{MLP noise}-dominated region.

In Figure \ref{fig:introtradeoff}, we plot the scaling of the number of parameters ($md$) as a function of vocabulary size $V$ for different sequence-length regimes in $L$. We observe that $L \asymp V^{0.25}$ requires $m d \asymp V$, which is the optimal capacity, as predicted by our theory. As $L$ increases, we need more parameters to achieve the same capacity, as observed in the $L \asymp V^{0.5}$ and $L \asymp V^{0.75}$ cases in Figure \ref{fig:intro}, where the slopes agree with our theoretical predictions as well (see also Figures \ref{fig:attentionmlps2} and \ref{fig:attentionmlps3}). 

We further test the effect of sample size in Figure \ref{fig:attentionmlplargesample}, where we use $L \asymp V^{0.5}$ and $m \asymp d^2$. We plot both heat maps in Figures \ref{fig:attentionmlps2} and \ref{fig:attentionmlps3}, and the fitted lines for $L \asymp V^{0.5}$ together in Figure \ref{fig:attentionmlps1}.  We observe that increasing $N$ from $N \asymp V \log V$ to $N \asymp V^{1.5}$ reduces the network size to the optimal level, aligning with our theoretical prediction. The heatmap versions of these experiments are shown in Figures \ref{fig:attentionmlps2} and \ref{fig:attentionmlps3}.

Lastly, we probe the width scaling by keeping the sample size $N \asymp V \log V$ and $L \asymp V^{0.5}$ fixed in Figure \ref{fig:attentionmlplargewidth}. Here, we observe that we can reduce the embedding-dimension requirement by increasing $m$ in Theorem \ref{thm:main}, although it increases the total parameter count overall, as seen in Figures \ref{fig:attentionmlpw3} and \ref{fig:attentionmlpw1}, since width must grow proportionally more than $d$ to achieve the same accuracy. This is also consistent with our result.

\vspace{-2mm}
\subsection{Beyond Early Phase of Training}
\label{sec:exps}

\vspace{-1mm} 
While our theoretical analysis focuses on a particular three-gradient-step training procedure, we empirically observe qualitatively similar multiplicative scalings when the transformer model is optimized beyond the ``early phase''. Specifically, we train our \emph{Attention-only} model using Adam~\citep{KingmaB14} with mini-batch gradients. In the experiments, we use layer normalization in both the attention and output layers and set the learning rate to $0.005$. We use a batch size of $\floor{N/2}$ (except in the last experiment, where we use $\floor{N/16}$), and run the training for $16$ epochs.  We highlight the following observations:

\vspace{-1mm} 
\begin{itemize}[leftmargin=*, itemsep=0.25em]
\item \emph{Capacity improvement with multi-pass training.}  
In the top row of Figure~\ref{fig:multistepadam}, we plot the heatmaps for $L \asymp V$ and $N \asymp V \log V$. In early training the slope is suboptimal;  notably, by the end of Epoch~1 it closely aligns with our theoretical prediction. Moreover, training the network additional epochs improves the capacity condition to a near-optimal level, as shown in Figures~\ref{fig:multistepadamc} and~\ref{fig:multistepadamd}.

\item  \emph{Effect of sample size.}  
In the bottom row of Figure~\ref{fig:multistepadam}, we plot the heatmaps for $L \asymp V$ and $N \asymp V^{1.5}$. We observe a similar trajectory in capacity, while the overall capacities improve compared to the small-sample regime,  showing the multiplicative dependence on  sample size $N$.

\item   \emph{Effect of sequence length.}  
In Figure~\ref{fig:multistepadamLV085}, we plot the heatmaps for $L \asymp V^{0.85}$ and $N \asymp V \log V$. We observe improvements in capacity over multiple epochs, while the capacity is larger than in the $L \asymp V$ setting at every stage of training, which shows the effect of the sequence length $L$.

\item   \emph{Effect of batch size.}  
In Figure~\ref{fig:multistepadamlb}, we repeat the experiments from this section using the same learning rate and architecture but with a smaller batch size $\floor{N/16}$. As before, we consider $L \asymp V$ in two sample-size regimes, $N \asymp V \log V$ and $N \asymp V^{1.5}$. We observe behavior similar to the larger batch  size setting, but with improved slopes in Figure~\ref{fig:multistepadamlb}. This suggests that smaller batch sizes may improve capacity in practice.

\end{itemize}

\vspace{-1mm}
Overall, these experiments suggest that the multiplicative relation between the hyperparameters remains throughout training. However, the  exponents depend on the iteration number and batch size. 
Understanding how capacity evolves during training remains an interesting open question.

\vfill
\begin{figure}[!hbt]
    \centering 
      \begin{subfigure}[b]{.24\linewidth}
        \centering
        \includegraphics[width=0.85\textwidth]{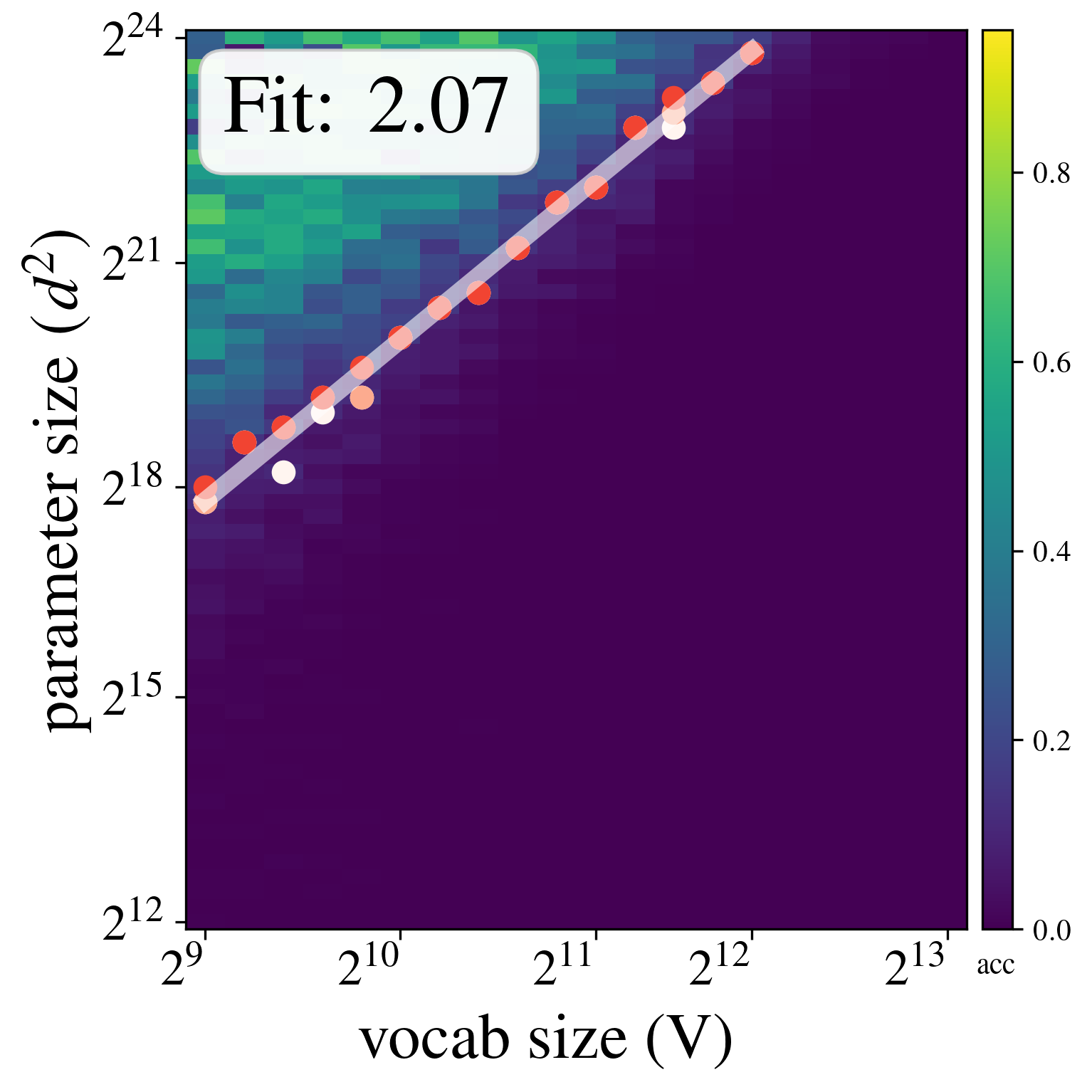}
        \caption{\small Epoch 1}
        \label{fig:multistepadama} 
    \end{subfigure} 
     \begin{subfigure}[b]{.24\linewidth}
        \centering
        \includegraphics[width=0.85\textwidth]{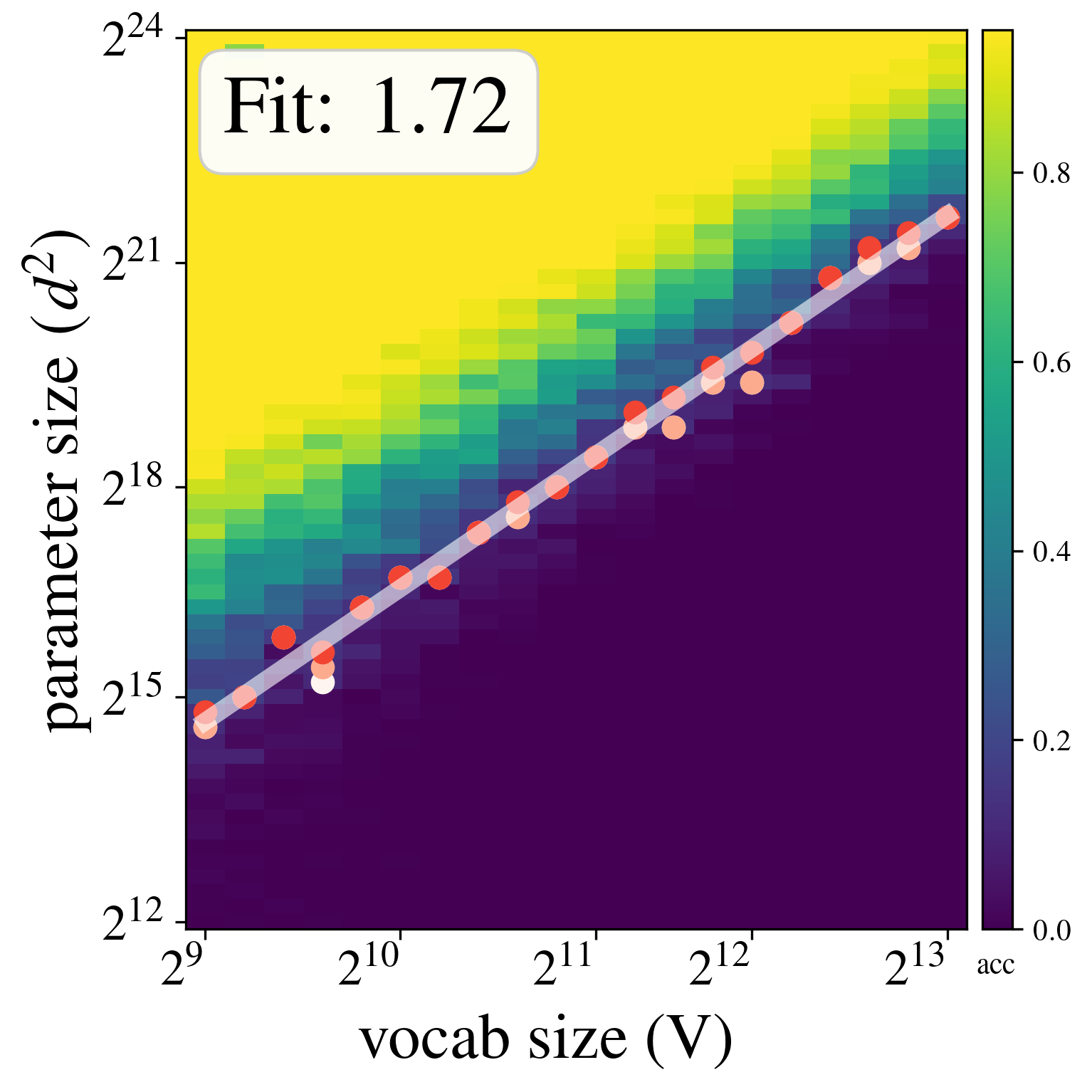}
        \caption{\small Epoch 2}
        \label{fig:multistepadamb}  
    \end{subfigure} 
     \begin{subfigure}[b]{.24\linewidth}
        \centering
        \includegraphics[width=0.85\textwidth]{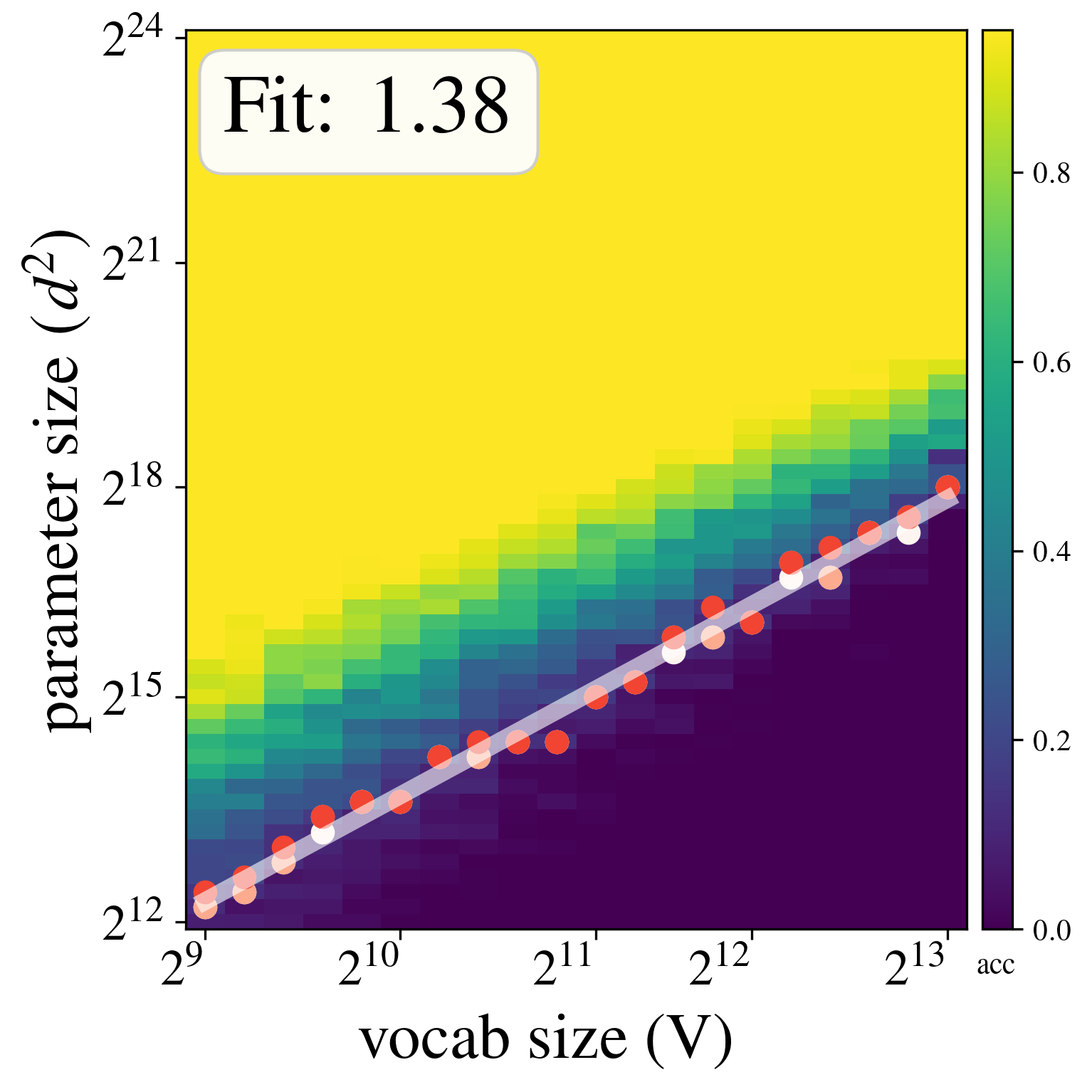}
        \caption{\small Epoch 8}
        \label{fig:multistepadamc} 
    \end{subfigure} 
    \begin{subfigure}[b]{.24\linewidth}
        \centering
        \includegraphics[width=0.85\textwidth]{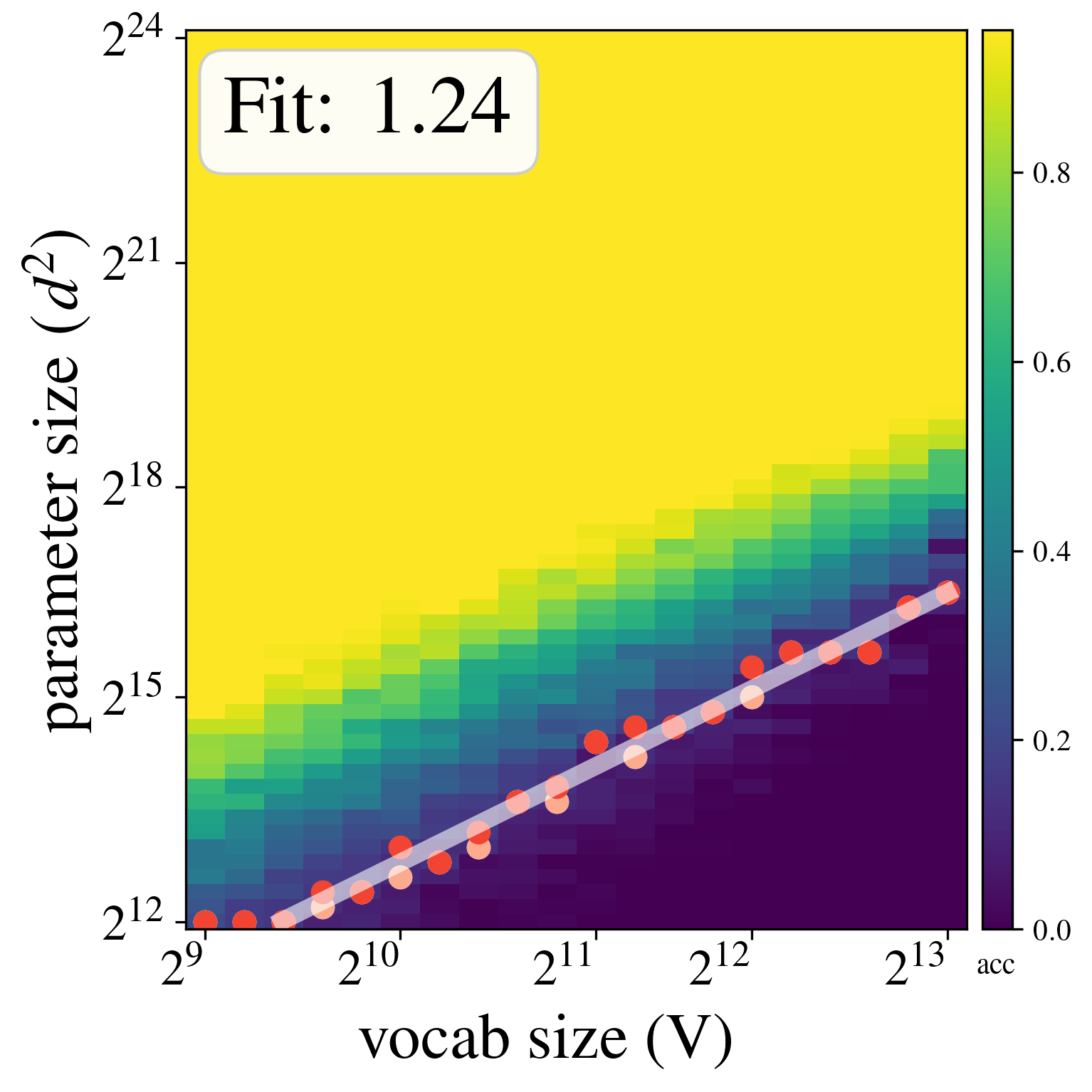}
        \caption{\small Epoch 16}
        \label{fig:multistepadamd} 
    \end{subfigure}  \\[0.25em]
    \begin{subfigure}[b]{.24\linewidth}
        \centering
        \includegraphics[width=0.85\textwidth]{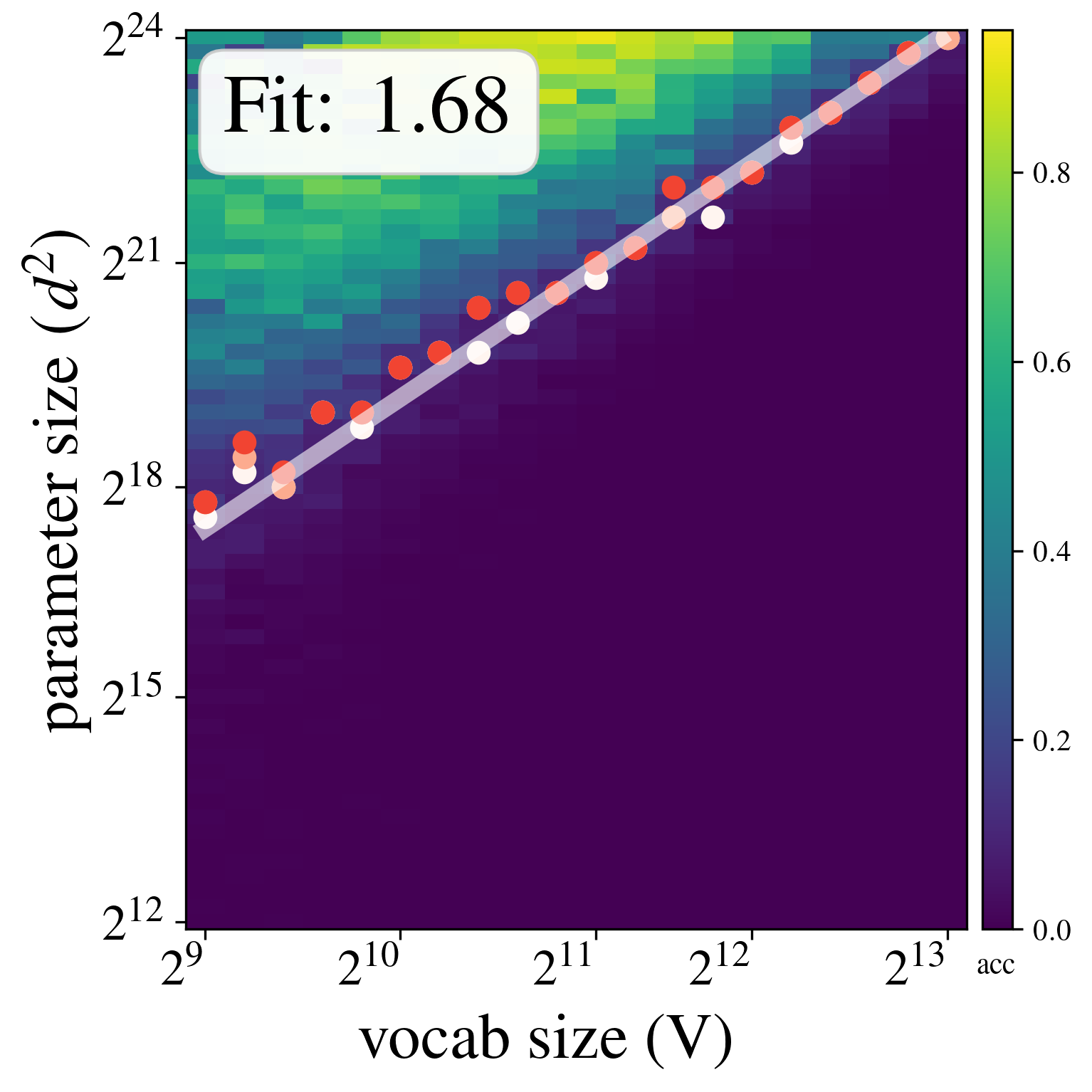}
        \caption{\small Epoch 1}
        \label{fig:multistepadame} 
    \end{subfigure} 
     \begin{subfigure}[b]{.24\linewidth}
        \centering
        \includegraphics[width=0.85\textwidth]{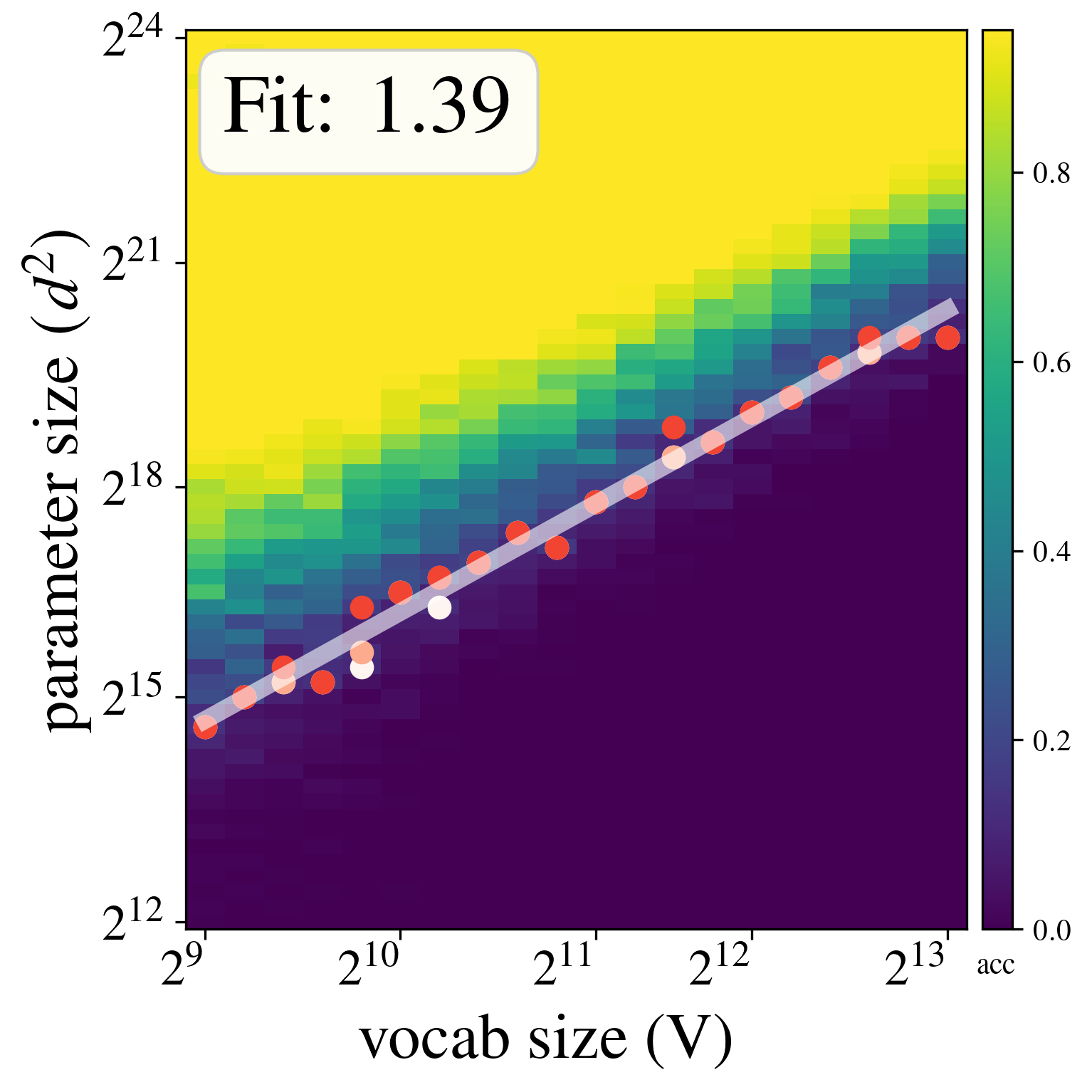}
        \caption{\small  Epoch 2}
        \label{fig:multistepadamf} 
    \end{subfigure} 
     \begin{subfigure}[b]{.24\linewidth}
        \centering
        \includegraphics[width=0.85\textwidth]{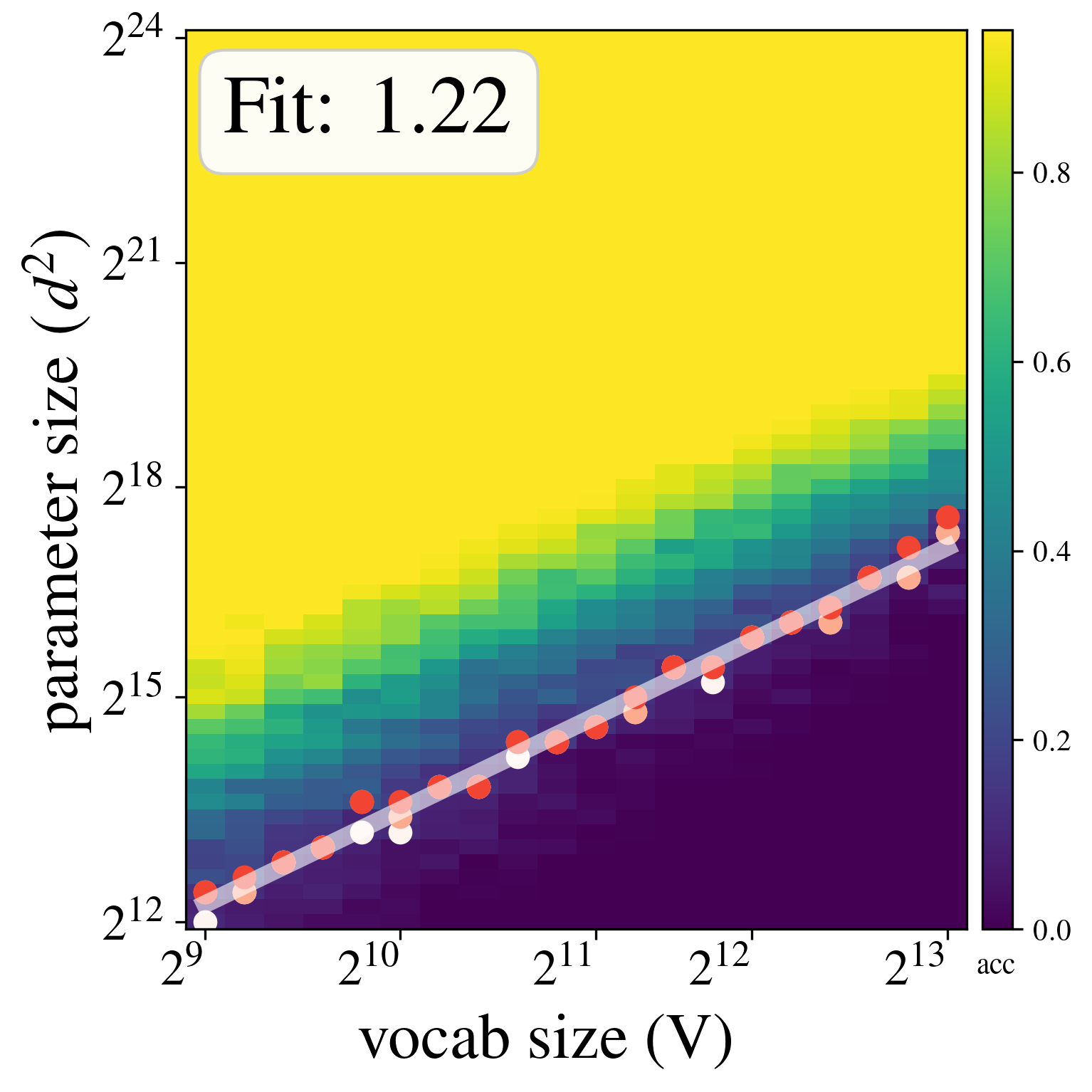}
        \caption{\small Epoch 8}
        \label{fig:multistepadamg} 
    \end{subfigure} 
    \begin{subfigure}[b]{.24\linewidth}
        \centering
        \includegraphics[width=0.85\textwidth]{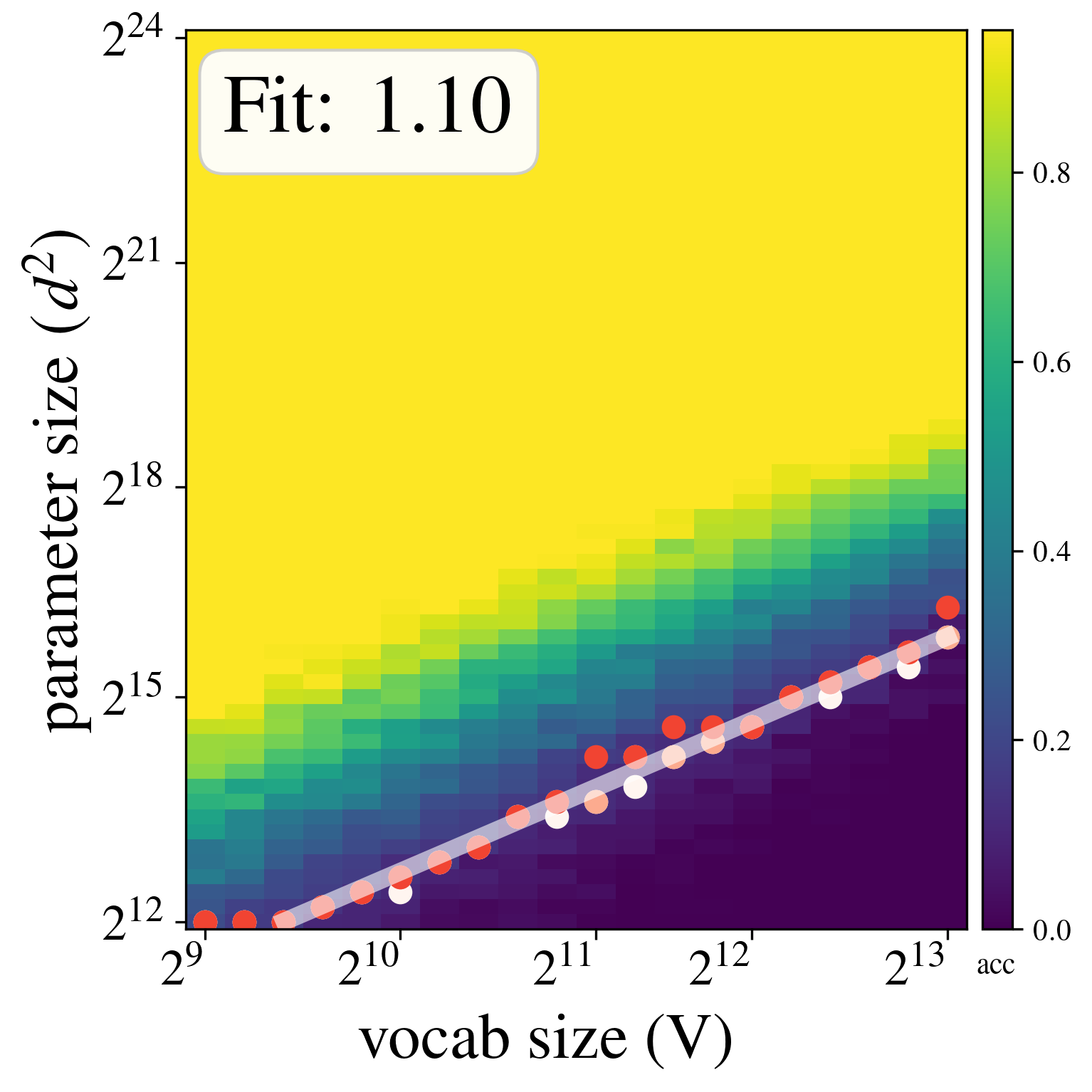}
        \caption{\small Epoch 16}
        \label{fig:multistepadamh} 
    \end{subfigure} 
    \vspace{-2mm}
     \caption{\small Empirical scaling of the parameter size for the \emph{Attention-only} model under two sample size regimes. \textit{Top row (a--d):} $N \asymp V \log V$. \textit{Bottom row (e--h):} $N \asymp V^{1.5}$. The model uses $L \asymp V$ and is trained using Adam over $16$ epochs. We observe that  the capacity improves as the number of epochs increases.}
    \label{fig:multistepadam} 
    
\end{figure}
\begin{figure}[!hbt]
 \vspace{3mm}
    \centering 
      \begin{subfigure}[b]{.24\linewidth}
        \centering
        \includegraphics[width=0.85\textwidth]{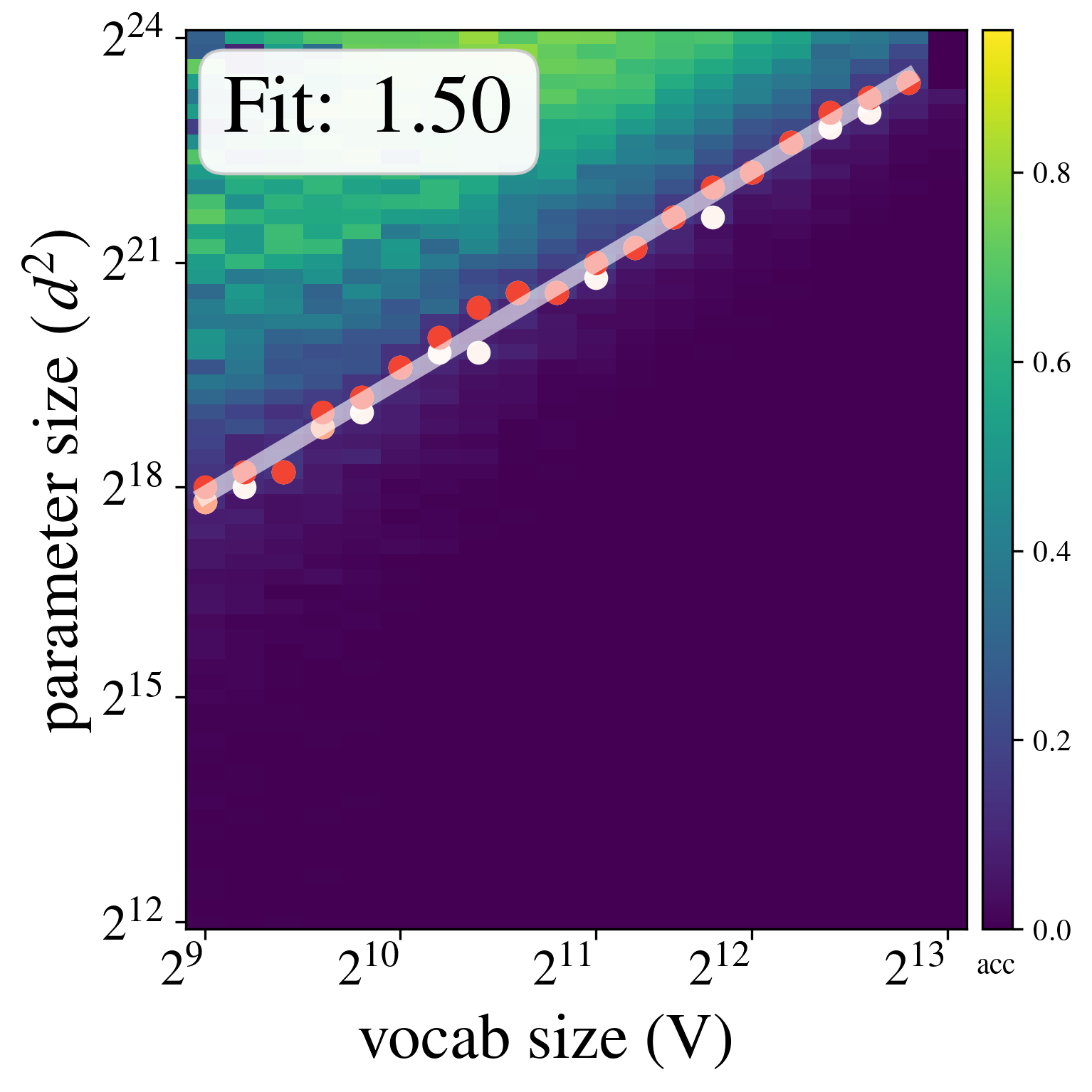}
        \caption{ $\mathrm{Epoch} = 1$}
        \label{fig:multiadam3} 
    \end{subfigure} 
     \begin{subfigure}[b]{.24\linewidth}
        \centering
        \includegraphics[width=0.85\textwidth]{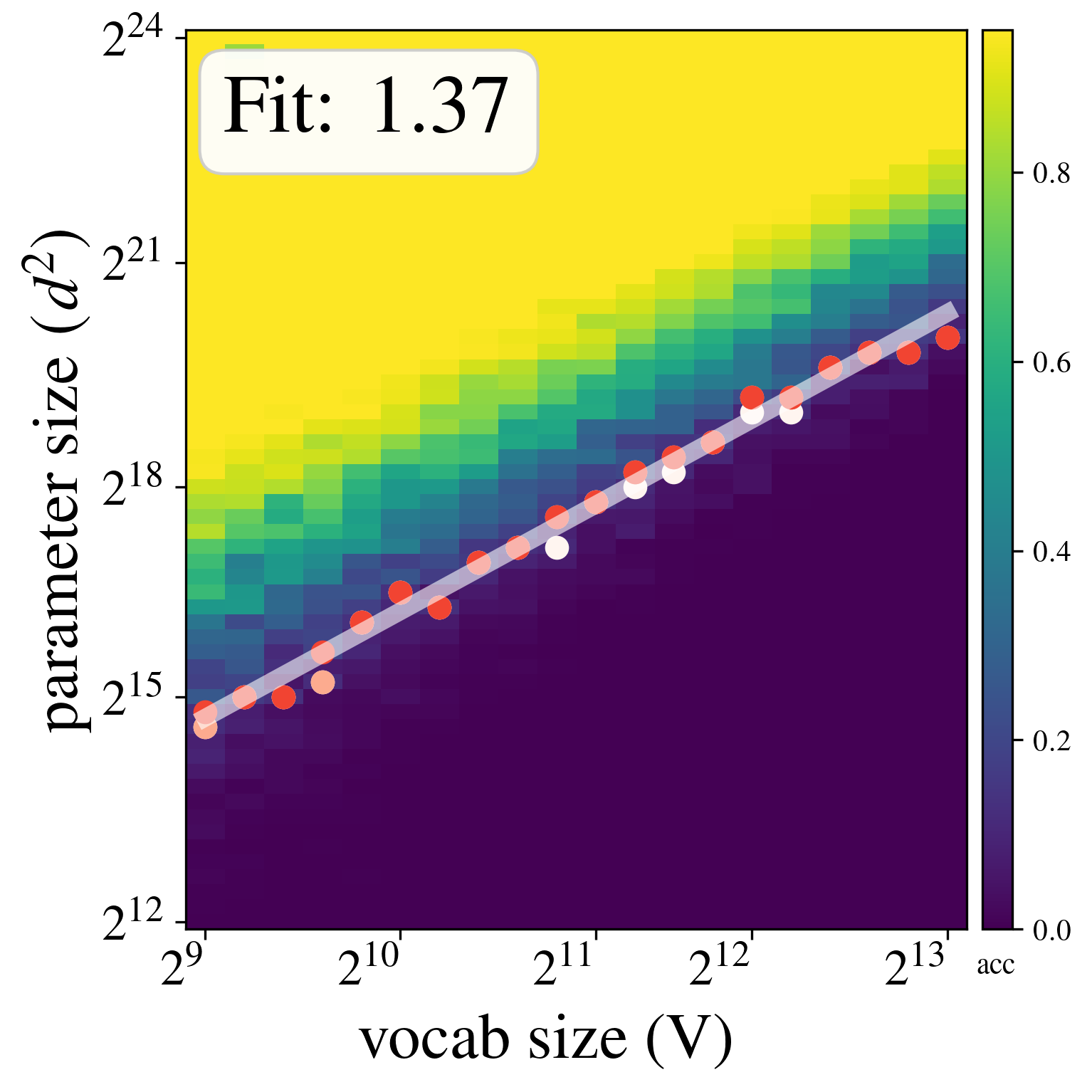}
        \caption{  $\mathrm{Epoch} = 2$}
        \label{fig:multiadam4} 
    \end{subfigure} 
     \begin{subfigure}[b]{.24\linewidth}
        \centering
        \includegraphics[width=0.85\textwidth]{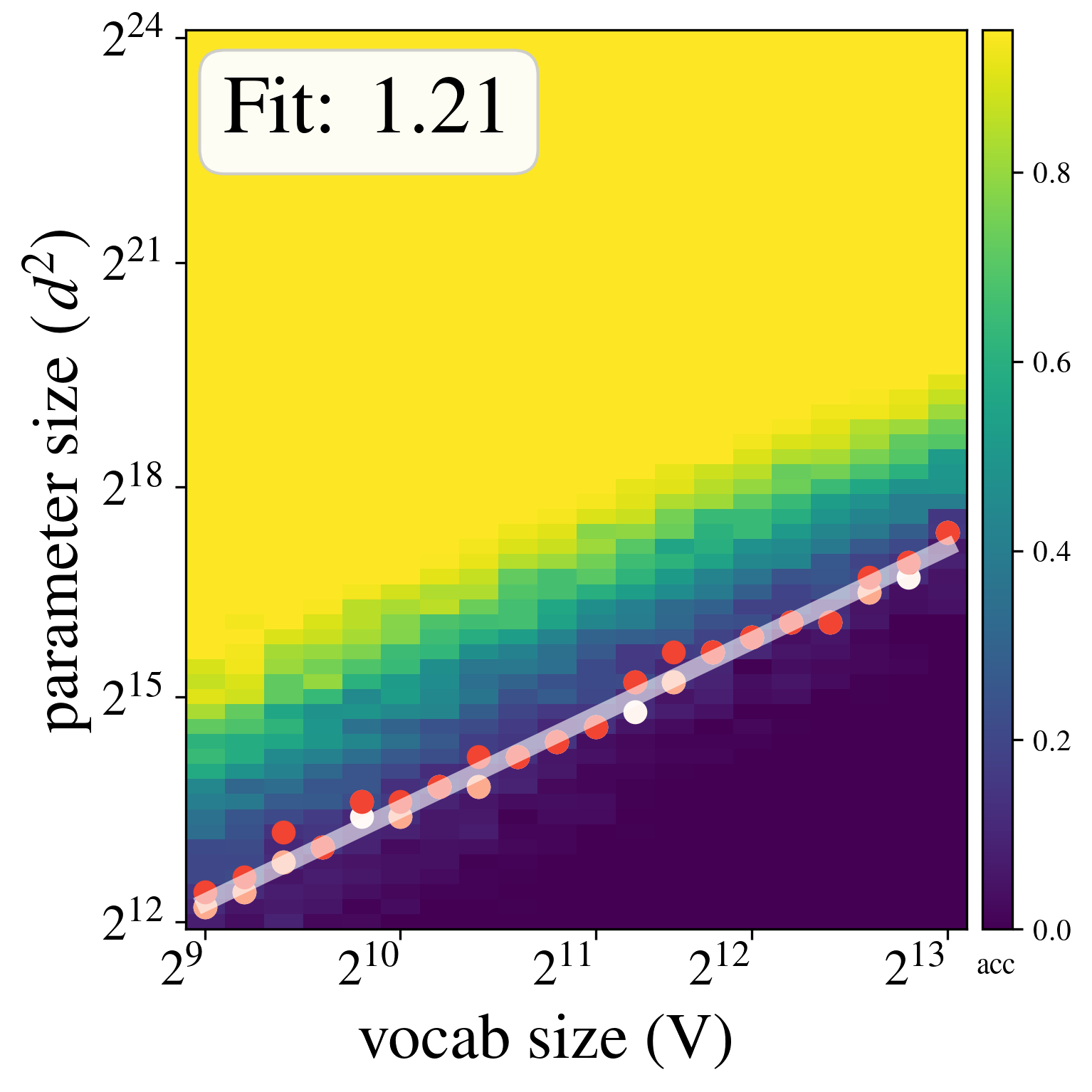}
        \caption{$\mathrm{Epoch} = 8$}
        \label{fig:multiadam1} 
    \end{subfigure} 
    \begin{subfigure}[b]{.24\linewidth}
        \centering
        \includegraphics[width=0.85\textwidth]{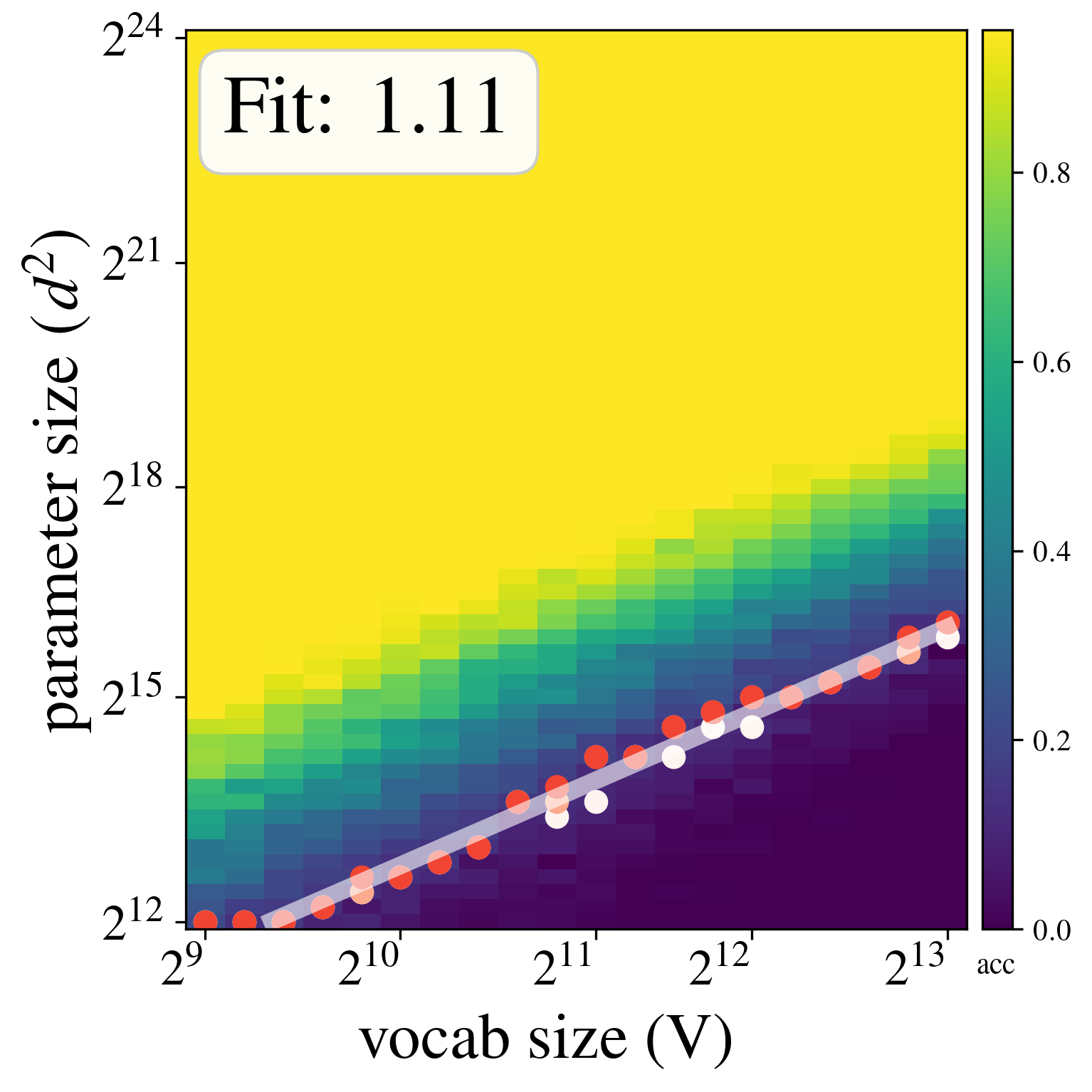}
        \caption{$\mathrm{Epoch} = 16$}
        \label{fig:multiadam2} 
    \end{subfigure} 
    \vspace{-2mm}
          \caption{\small Empirical scaling of the parameter size for the \emph{Attention-only} model with $N \asymp V \log V$ and $L \asymp V^{0.85}$. The model is trained using Adam over $16$ epochs. The slopes are smaller than in Figure \ref{fig:multistepadam}, which is consistent with the shorter sequence length.}
    \label{fig:multistepadamLV085} 
\end{figure}

\begin{figure}[!hbt]
    \centering 
      \begin{subfigure}[t]{.24\linewidth}
        \centering
        \includegraphics[width=0.85\textwidth]{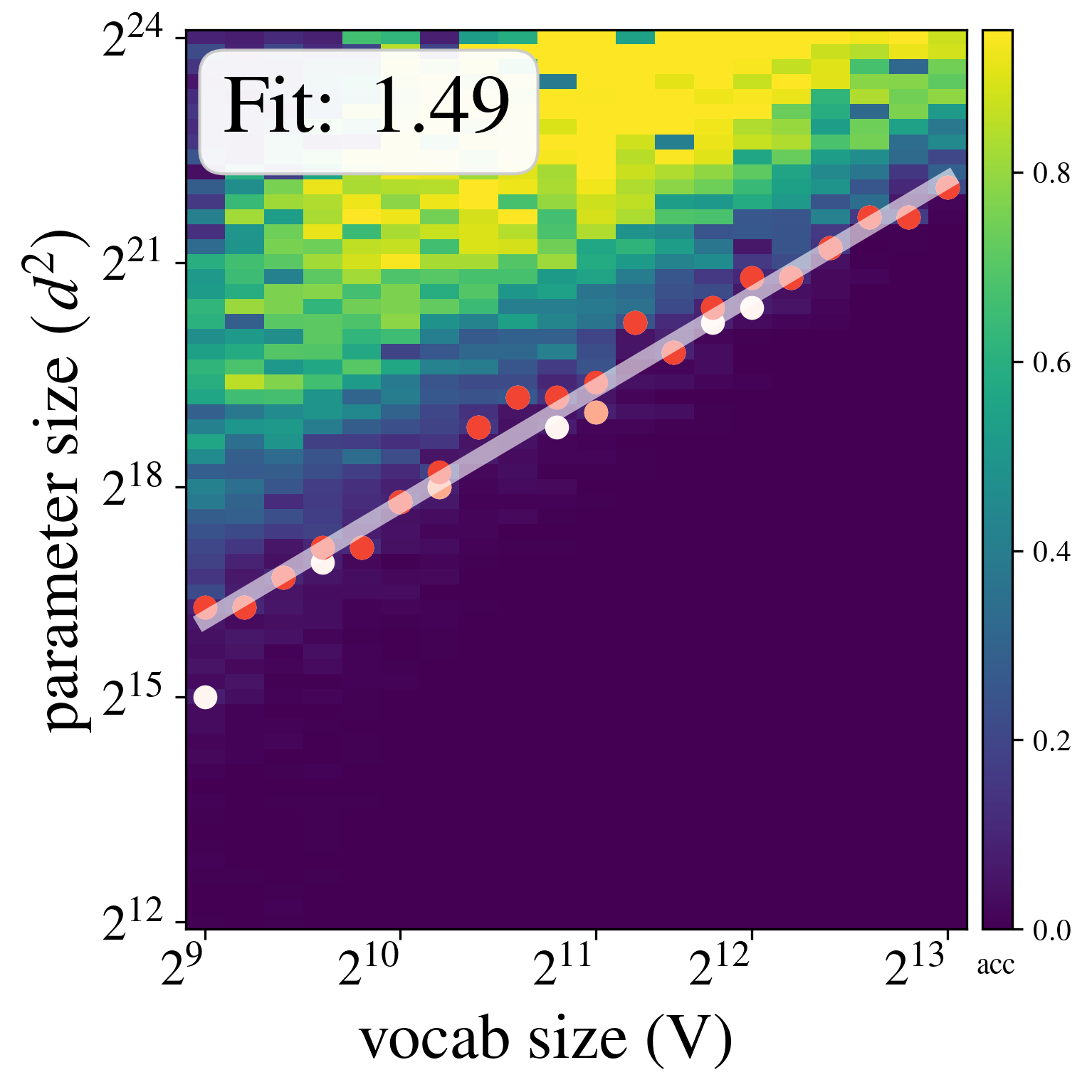}
        \caption{\small Epoch 1}
        \label{fig:multistepadamlba} 
    \end{subfigure} 
     \begin{subfigure}[t]{.24\linewidth}
        \centering
        \includegraphics[width=0.85\textwidth]{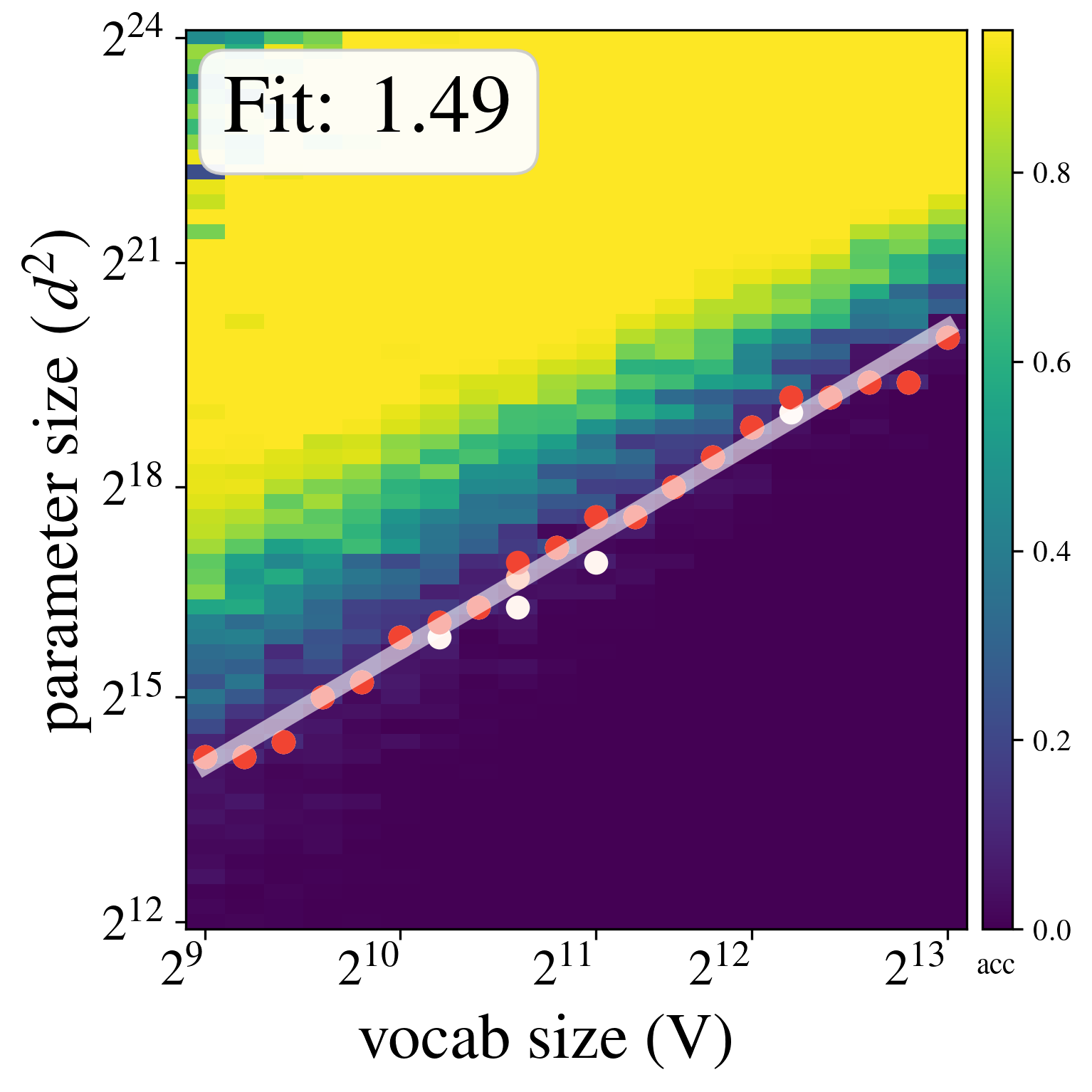}
        \caption{\small Epoch 2}
        \label{fig:multistepadamlbb}  
    \end{subfigure} 
     \begin{subfigure}[t]{.24\linewidth}
        \centering
        \includegraphics[width=0.85\textwidth]{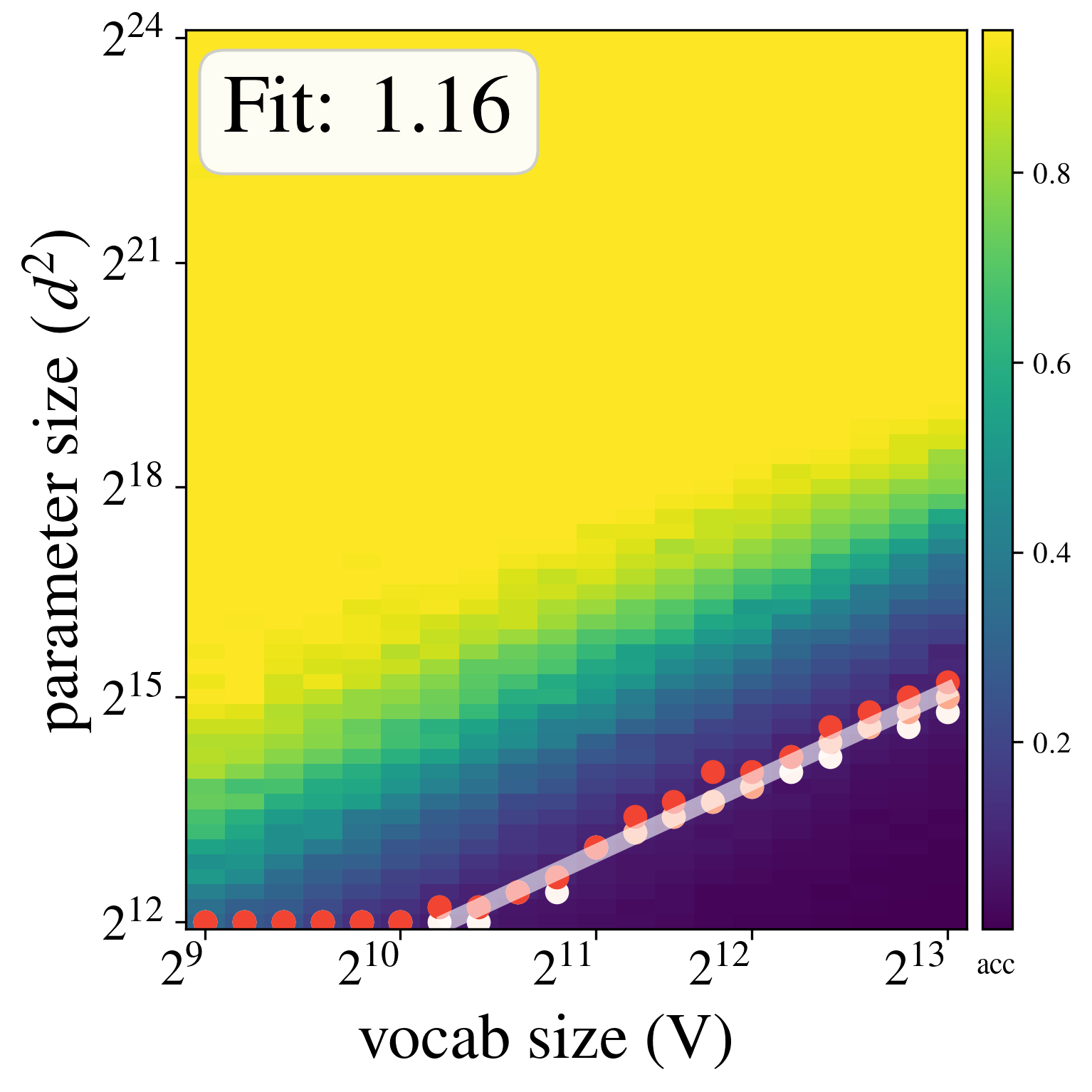}
        \caption{\small Epoch 8}
        \label{fig:multistepadamlbc} 
    \end{subfigure} 
    \begin{subfigure}[t]{.24\linewidth}
        \centering
        \includegraphics[width=0.85\textwidth]{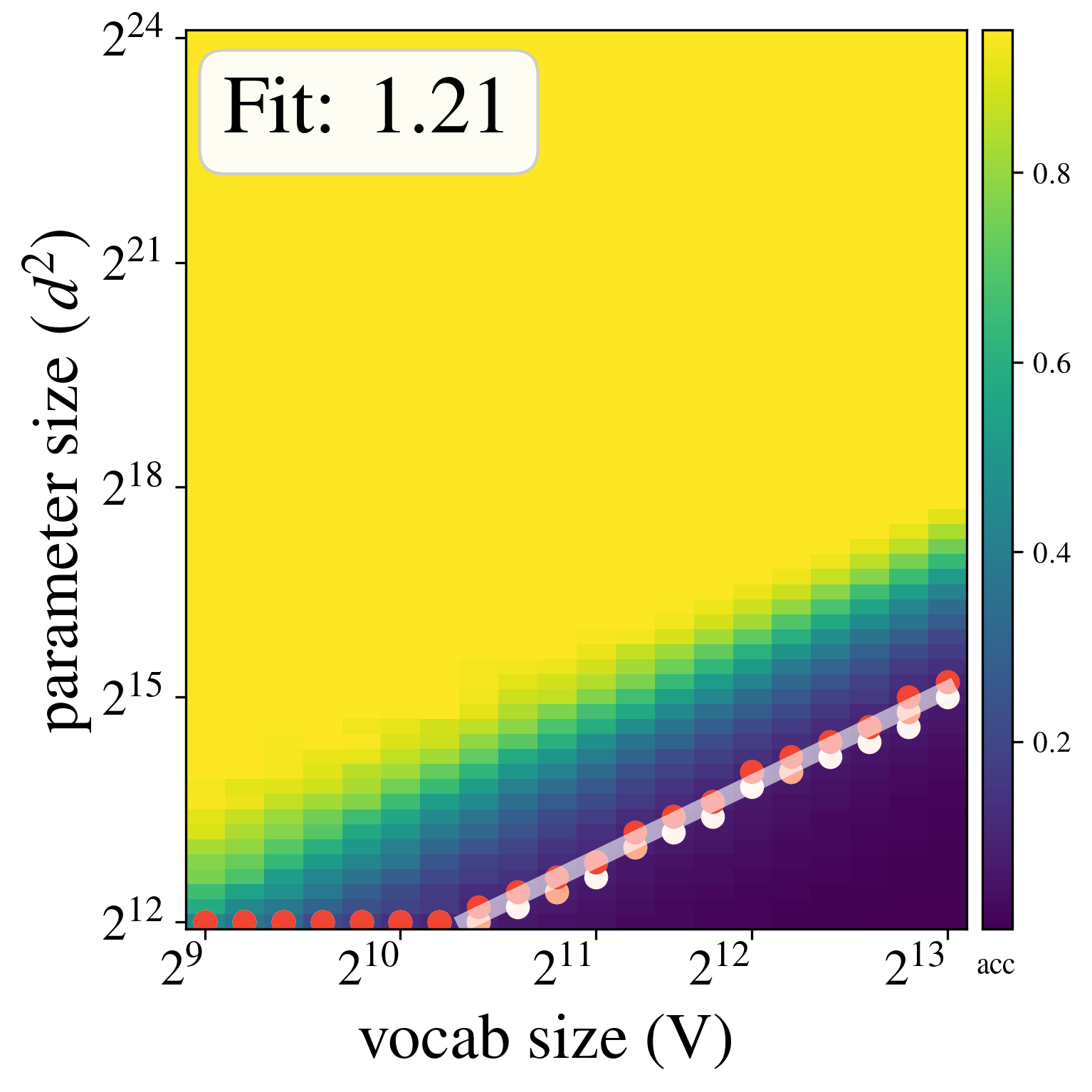}
        \caption{\small Epoch 16}
        \label{fig:multistepadamlbd} 
    \end{subfigure}  \\[0.5em]
    \begin{subfigure}[t]{.24\linewidth}
        \centering
        \includegraphics[width=0.85\textwidth]{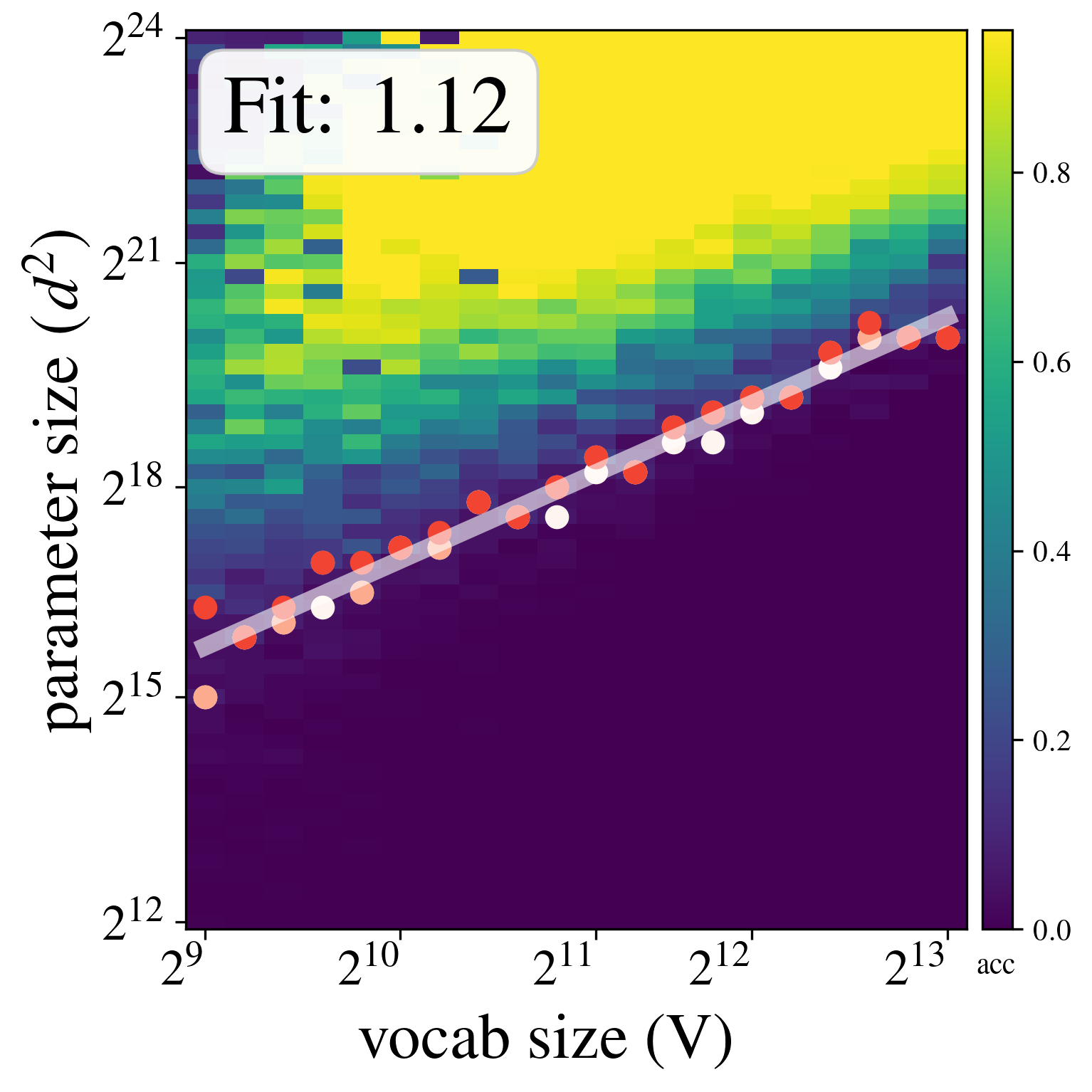}
        \caption{\small Epoch 1}
        \label{fig:multistepadamlbe} 
    \end{subfigure} 
     \begin{subfigure}[t]{.24\linewidth}
        \centering
        \includegraphics[width=0.85\textwidth]{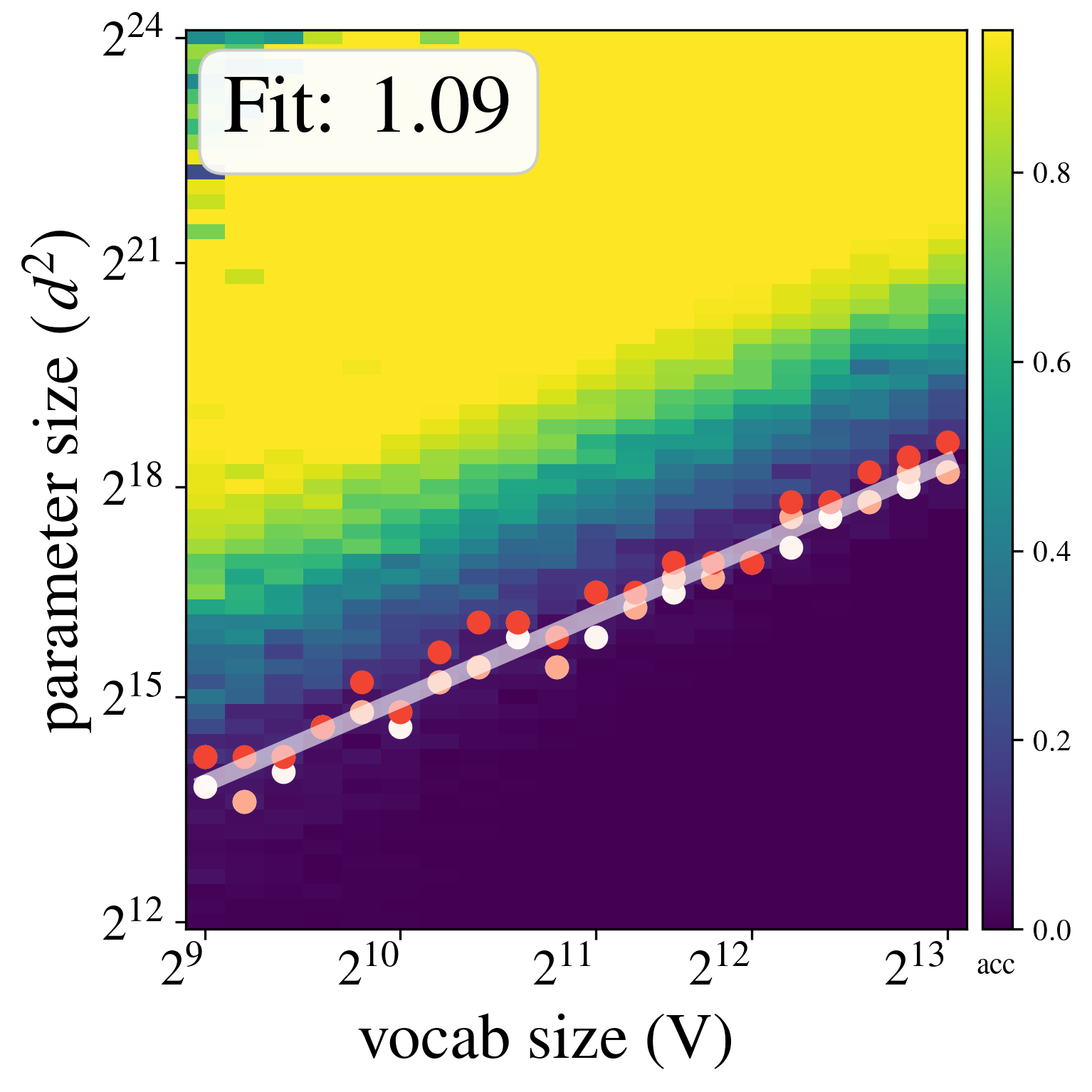}
        \caption{\small  Epoch 2}
        \label{fig:multistepadamlbf} 
    \end{subfigure} 
     \begin{subfigure}[t]{.24\linewidth}
        \centering
        \includegraphics[width=0.85\textwidth]{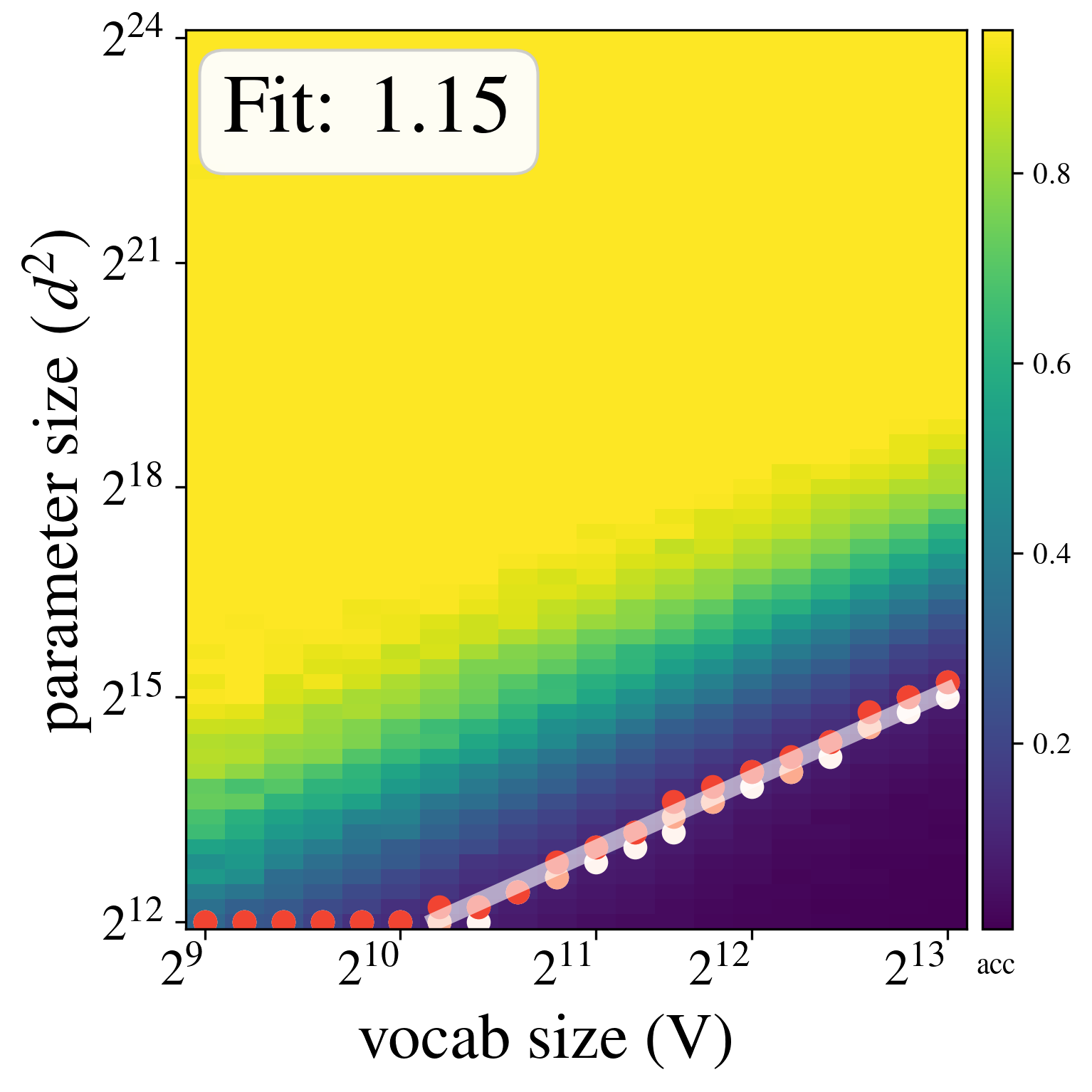}
        \caption{\small Epoch 8}
        \label{fig:multistepadamlbg} 
    \end{subfigure} 
    \begin{subfigure}[t]{.24\linewidth}
        \centering
        \includegraphics[width=0.85\textwidth]{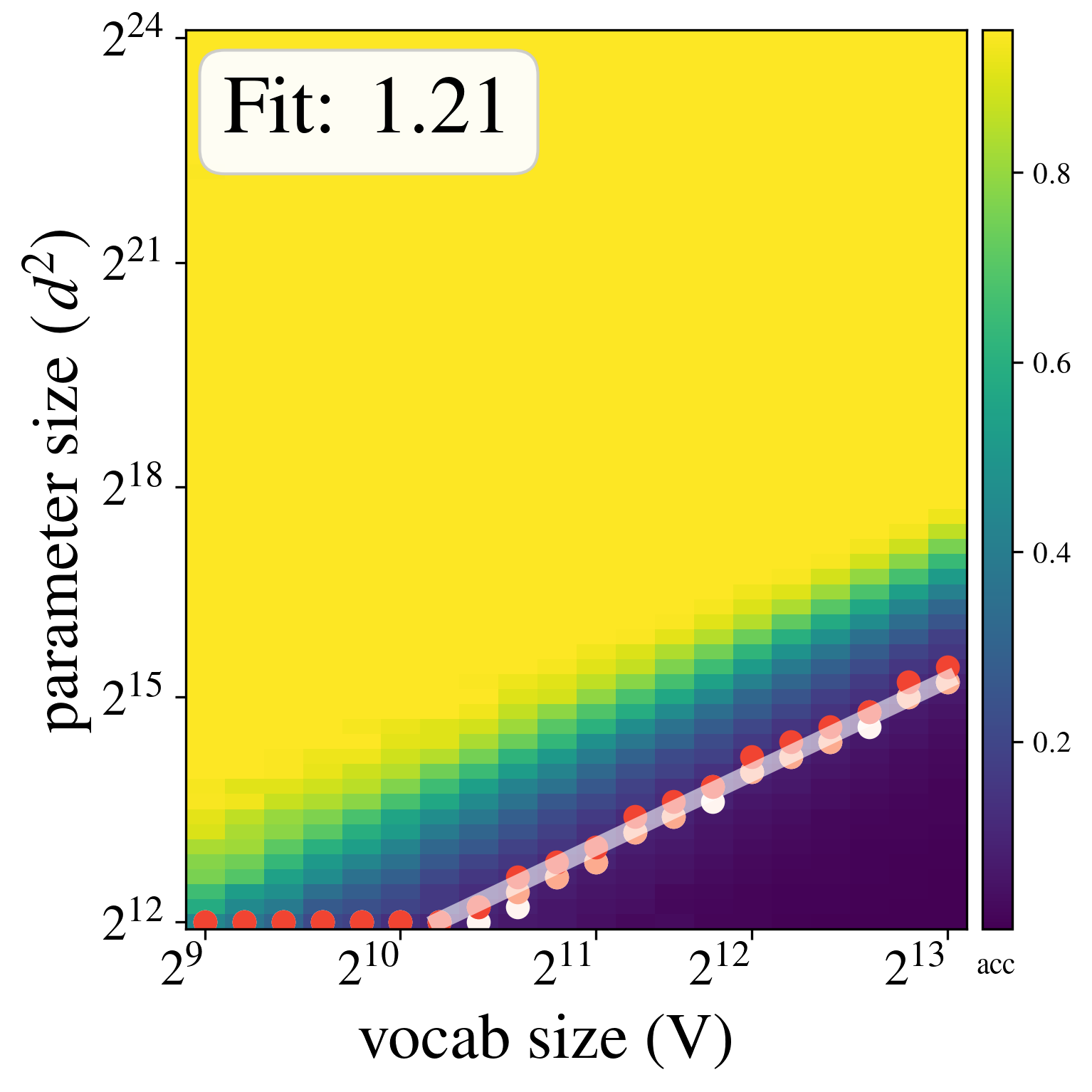}
        \caption{\small Epoch 16}
        \label{fig:multistepadamlbh} 
    \end{subfigure} 
    \vspace{-2mm}
    \caption{\small Empirical scaling of the parameter size for the \emph{Attention-only} model ($L \asymp V$), trained with Adam  using a batch size of $\lfloor N/16 \rfloor$.  \textit{Top row (a--d):} $N \asymp V \log V$. \textit{Bottom row (e--h):} $N \asymp V^{1.5}$.}
    \label{fig:multistepadamlb} 
\end{figure}

\section{Proof Overview}
\label{sec:overview}

In this section, we outline the main ideas behind the proof of Theorem~\ref{thm:main}. The key observation is that the recall task is achieved with near-perfect accuracy if and only if the attention mechanism can distinguish informative tokens. Once this occurs, the remaining task reduces to learning a linearly separable problem, which is well understood. Therefore, the proof focuses on the attention scores in \eqref{eq:attention} and characterizes the conditions under which the mechanism selects the informative tokens.

The pre-softmax scores evaluated on a fresh sequence $\bs{X}_{\mathrm{in}}$, with the key-query matrix given by the first gradient-descent iterate $\bs{W}^{(1)}_{\mathrm{KQ}}$, are given by
\eq{
\mathrm{scores} & \coloneqq \big( \bs{z}_{\mathrm{trig}} \bs{e}_{\ell}^\top + \Zin \bs{X}_{\mathrm{in}} \big)^\top \bs{W}^{(1)}_{\mathrm{KQ}} \bs{z}_{\mathrm{EOS}} . \label{eq:scoresdef}
}
By substituting the exact expression for $\bs{W}_{\mathrm{KQ}}^{(1)}$ into \eqref{eq:scoresdef}, we analyze $\mathrm{scores}$. For intuition, we present the simplified expression below (see \eqref{eq:kqgradient} for the full expression):
\eq{
\mathrm{scores} & \approx \gamma \lVert \bztrig \rVert_2^2 \bs{e}_{\ell}
\underbrace{\Big( \frac{1}{N L} \sum_{i=1}^N \bs{x}_{i,\ell}^\top \Zin^\top (\V^{(1)})^\top \Zout (\bs{p}_{i} - \tfrac{1}{V}\mathbbm{1}_V) \Big)}_{\text{Informative}} \label{eq:informative} \\[-0.5em]
&\quad + \gamma \bs{X}_{\mathrm{in}}^\top \Zin^\top
\underbrace{\Big( \frac{1}{N L} \sum_{i=1}^N \Zin \bs{X}_i \bs{X}_i^\top \Zin^\top (\V^{(1)})^\top \Zout (\bs{p}_{i} - \tfrac{1}{V}\mathbbm{1}_V) \Big)}_{\text{Non-informative}} . \label{eq:noninformative}
}
Here $\V^{(1)}$ denotes the first iterate of the value matrix defined in \eqref{eq:gdfirststep}. The \emph{informative term} in \eqref{eq:informative} captures the alignment between the trigger vector in the fresh input and the one encoded in the learned weights $\bs{W}^{(1)}_{\mathrm{KQ}}$, and therefore contains position information about the informative token. By contrast, the \emph{non-informative term} in \eqref{eq:noninformative} reflects correlations between tokens and does not contain information about the token position.

Thus, the proof reduces to characterizing the conditions under which the informative term in \eqref{eq:informative} dominates the non-informative term in \eqref{eq:noninformative}. Under these conditions, the attention mechanism correctly identifies the informative token, and the remaining prediction problem becomes linearly separable.

\subsection{Empirical Dynamics with Non-Orthogonal Embeddings}  
\label{sec:empdynamics}

We now provide a proof sketch for the finite-sample setting with non-orthogonal embeddings and explain how each noise term in Theorem~\ref{thm:main} arises. In particular, we consider \eqref{eq:informative}–\eqref{eq:noninformative} and, without loss of generality, assume $\ell = 1$ and $\bs{\Pi}_* = \bs{I}_V$ (accordingly, $\bs{p}_i = \bs{x}_{i,1}$). Our goal is to show how the \emph{Signal}, \emph{Gradient noise}, \emph{Mean bias}, and \emph{MLP noise} terms arise from the dynamics of the first gradient step.

The analysis proceeds in two steps. First, we show that the first iterate of the value matrix $\V^{(1)}$ admits a natural decomposition into mean, bias, and noise components. We then show how this decomposition gives rise to the terms appearing in Theorem~\ref{thm:main}.

\subsubsection{Decomposition of the value matrix} 
Both the informative and non-informative terms depend on $\V^{(1)}$. We show that it can be decomposed as
\eq{
\V^{(1)} & = \Zout \Big( \frac{1}{NL}  \sum_{i = 1}^N (\bs{x}_{i,1} -  \tfrac{1}{V} \mathbbm{1}_V ) ( \bs{X}_i \mathbbm{1}_L )^\top  \Big) \Zin^\top  \label{eq:outputiterate} \\
& = \Zout \Big( \underbrace{  \frac{1}{V L}  (\bs{I}_V - \tfrac{1}{V} \mathbbm{1}_V \mathbbm{1}_V^\top)  }_{\text{Mean}}  +  \underbrace{ \frac{1}{V N}  \sum_{i = 1}^N  (\bs{x}_{i,1} -  \tfrac{1}{V} \mathbbm{1}_V ) \mathbbm{1}_V^\top }_{\text{Bias}}  +  \underbrace{  \frac{1}{\sqrt{ L V N }}   \bs{\Xi}  }_{\text{Noise}} \Big) \Zin^\top \label{eq:outputiteratedecomp}
}
where the noise term is given by
\eq{
\bs{\Xi} \coloneqq \sqrt{ \frac{V}{L N} } \Big( \sum_{i = 1}^N  (\bs{x}_{i,1} -  \frac{1}{V} \mathbbm{1}_V ) ( \bs{X}_i \mathbbm{1}_L - \tfrac{L}{V} \mathbbm{1}_V )^\top  -    \frac{1}{V} (\bs{I}_V - \frac{1}{V} \mathbbm{1}_V \mathbbm{1}_V^\top)   \Big). \label{eq:noiseterms}
}
Here, the bias term arises from aggregating tokens at initialization; specifically, the aggregate-token averages $\tfrac{1}{L}\bs{X}_i \mathbbm{1}_L$ in \eqref{eq:outputiterate} concentrate around their mean $\tfrac{1}{V}\mathbbm{1}_V$ as $L$ grows, and this effect appears as the bias term. The noise term captures finite-sample fluctuations of tokens around this mean. In \eqref{eq:outputiteratedecomp}, we explicitly factor out the typical operator-norm scaling $1/\sqrt{VLN}$ from the noise term so that the remaining matrix $\bs{\Xi}$ has constant norm on average, i.e., $\E [ \lVert \bs{\Xi} \rVert_2^2 ] \asymp 1$.

\subsubsection{Characterization of noise terms} 

\textbf{Signal.} Using the mean component in \eqref{eq:outputiteratedecomp}, the informative term in \eqref{eq:informative} can be written as
\eq{
\text{Informative} & =  \gamma \lVert \bztrig \rVert_2^2 \bs{e}_{1}
 \Big( \frac{1}{N L} \sum_{i=1}^N \bs{x}_{i,1}^\top \Zin^\top (\V^{(1)})^\top \Zout (\bs{p}_{i} - \tfrac{1}{V}\mathbbm{1}_V) \Big) \\
& =  \underbrace{  \frac{\gamma \lVert \bztrig \rVert_2^2}{V L^2} \frac{1}{N} \sum_{i = 1}^N  \bs{x}_{i,1}^\top   \Zin^\top  \Zin (\bs{I}_V - \frac{1}{V} \mathbbm{1}_V \mathbbm{1}_V^\top)  \Zout^\top  \Zout       (\bs{x}_{i,1}  -\frac{1}{V} \mathbbm{1}_V)  }_{ \text{Signal} \ \asymp \ \tfrac{1}{VL^2}}  +   \underbrace{ \text{negligible terms} }_\text{$= o( \tfrac{1}{VL^2})$}.
}
The first term is due to the mean component; the negligible terms are due to the bias and noise in \eqref{eq:outputiteratedecomp}.  Standard concentration arguments for Gaussian matrices can be used to show that the leading term scales as $\tfrac{1}{V L^2}$, which gives us the \emph{Signal} term in \eqref{eq:mainresult}. The detailed derivations are provided in Section~\ref{sec:concentrations11}.

\medskip

\textbf{Gradient Noise and Mean Bias.} For ease of presentation, we focus on the large-$L$ regime where we can use the following approximation due to concentration
\eq{
\frac{1}{L} \Zin \bs{X}_i \bs{X}_i^\top   \Zin^\top \approx \frac{1}{d} \bs{I}_d. \label{eq:largeLapproximation}
}
Let $\bs{x}_{\mathrm{in}}$ denote an arbitrary row of $\bs{X}_{\mathrm{in}}$. Using \eqref{eq:largeLapproximation}, we can approximate the non-informative with
\eq{
\text{Non-informative}  & \approx \underbrace{  \frac{1}{d \sqrt{LNV}} \bs{x}_{\mathrm{in}}^\top \Zin^\top   \Zin    \bs{\Xi}  \Zout^\top \Zout   \frac{1}{N} \sum_{i = 1}^N   (\bs{x}_{i,1}  -\frac{1}{V} \mathbbm{1}_V  ) }_{\text{Gradient noise}}  \\
& + \underbrace{  \frac{1}{V d}  
\bs{x}_{\mathrm{in}}^\top \Zin^\top   \Zin   \mathbbm{1}_V  \Big  \lVert  \Zout \frac{1}{N} \sum_{i = 1}^N  (\bs{x}_{i,1}  -\frac{1}{V} \mathbbm{1}_V  )  \Big \rVert_2^2}_{\text{Mean bias}} +  \text{ negligible terms}.
}
The first term arises from the noise component $\bs{\Xi}$ and determines the scaling of the \emph{Gradient noise} term. The second term comes from the bias component and yields the \emph{Mean bias} term in \eqref{eq:mainresult}. We hide the contributions from the mean component in the negligible terms, since they are smaller in magnitude.  The fluctuations of each term can be bounded as stated in Theorem~\ref{thm:main} using standard concentration arguments. The detailed derivations are provided in Section~\ref{sec:concentrations12}.

\medskip

\textbf{MLP noise.} In this part, we consider the \emph{Attention-MLP} model. The scores in \eqref{eq:scoresdef} can be defined in the same way for this case as well. Let $\{ \bs{w}_k \}_{k = 1}^m$
denote the rows of $\bs{W}_{\mathrm{in}}$, where $\bs{w}_k \sim \cN(0, \bs{I}_d)$. For illustration, we work in the large-$L$ regime and adopt the approximation in \eqref{eq:largeLapproximation}.

We define the MLP-noise term as the deviation of the scores from their expectation with respect to the randomness in $\bs{W}_{\mathrm{in}}$:
\eq{
\text{MLP-noise} \coloneqq \mathrm{scores} - \E_{\bs{W}_{\mathrm{in}}}[\mathrm{scores}].
}
Under the large-$L$ assumption in \eqref{eq:largeLapproximation}, the scores admit the approximation (see \eqref{eq:s3full} for the full form)
\eq{
&\text{MLP-noise} \approx  \bs{x}_{\mathrm{in}}^\top \Zin^\top \frac{1}{N^2 d} \sum_{i,j = 1}^N   \FW(\bWin; \bZin,    \bX_i,    \bX_j) \big(\bs{x}_{i, 1} - \tfrac{1}{V} \mathbbm{1}_V \big) \Zout^\top \Zout \big(\bs{x}_{j, 1} - \tfrac{1}{V} \mathbbm{1}_V \big).   
}
Here $\FW(\bWin; \bZin, \bX_i, \bX_j)$ denotes the noise induced by the finite width of $\bWin$, defined as
\eq{
\FW(\bWin; \bZin,    \bX_i,    \bX_j) & \coloneqq    \frac{1}{m} \sum_{k = 1}^m \bs{w}_k  \phi^\prime \Big(\tfrac{1}{L} \bs{w}_k^\top \Zin \bs{X}_i \mathbbm{1}_L \Big)  \phi \Big(\tfrac{1}{L} \bs{w}_k^\top \Zin \bs{X}_j \mathbbm{1}_L \Big) \\
& \qquad  \qquad  \qquad - \E \Big[\bs{w}_k  \phi^\prime \Big(\tfrac{1}{L} \bs{w}_k^\top \Zin \bs{X}_i \mathbbm{1}_L \Big)  \phi \Big(\tfrac{1}{L} \bs{w}_k^\top \Zin \bs{X}_j \mathbbm{1}_L \Big)  \Big]. \label{def:fwdefmaintext}
}
For large $L$, standard concentration arguments imply  that $\big \lVert \tfrac{1}{L} \bs{w}_k^\top \Zin \bs{X}_i \mathbbm{1}_L \big \rVert_2 \approx L^{-1/2} \to 0$. Hence
\eq{
\phi^\prime \Big(\tfrac{1}{L} \bs{w}_k^\top \Zin \bs{X}_i \mathbbm{1}_L \Big) \phi \Big(\tfrac{1}{L} \bs{w}_k^\top \Zin \bs{X}_j \mathbbm{1}_L \Big) \to \underbrace{\phi(0) \phi^\prime(0)}_{\text{nonzero constant}}, 
}
where Assumption~\ref{ass:conditions} ensures $\phi(0)\phi^\prime(0)\neq 0$. Since $\E[\bs{w}_k] = 0$, replacing the $\phi$-dependent factors by this constant yields, we have
\eq{
\FW(\bWin; \bZin,    \bX_i,    \bX_j) \to 
  \frac{\phi(0) \phi^\prime(0)}{m} \sum_{k = 1}^m \bs{w}_k.
}
Substituting this into the expression above gives
\eq{
\text{MLP noise}  \approx \underbrace{ \frac{ \phi(0) \phi^\prime(0) }{d m} 
 \sum_{k = 1}^m \bs{x}_{\mathrm{input}}^\top \Zin^\top \bs{w}_k }_{\asymp \ \tilde O \Big( \tfrac{1}{d\sqrt{m}} \Big)} \ \underbrace{ \Big \lVert \Zout \frac{1}{N} \sum_{i = 1}^N \big(\bs{x}_{i, 1} - \tfrac{1}{V} \mathbbm{1}_V \big) \Big \rVert_2^2 }_{\asymp \ \tilde O \Big( \tfrac{1}{N} \Big)}.
}
Here, the terms can be bounded as in the displayed equation using standard concentration arguments, which yield the scaling of the \emph{MLP noise} term in \eqref{eq:mainresult}. The detailed derivations are provided in Section~\ref{sec:concentrationmlp}.

\vspace{-3mm}
\section{Conclusion}

\vspace{-2mm}
In this paper, we derived precise asymptotic rates for learning with gradient descent on transformers trained on a simple recall task with random embeddings and finite samples. Our analysis and experiments reveal a rich picture of multiplicative scalings between various problem parameters, showing that parameter count is not the only important factor controlling capacity when learning with finite samples on large noisy sequences.
Our results suggest that finer control of the data distribution may be necessary for learning efficiently at optimal capacity, for instance by ensuring sequences are less noisy and more informative, hoping that the discovered mechanisms are robust to harder settings. This is reminiscent of the procedures used for long context extension in LLMs, where most of training happens on shorter sequences, but the final models are extended to work with very long sequences, and empirically do well on retrieval tasks such as ``needle-in-a-haystack''~\citep[e.g.,][]{gemini2024multimodal}, which resembles our theoretical setup. Analyzing similar scalings in more structured data distributions and architectures is thus an interesting avenue for future work.

\bigskip

\subsection*{Acknowledgment}
The work of M. Soltanolkotabi was partially supported by AWS credits through an Amazon Faculty Research Award, a NAIRR Pilot Award, and generous funding by Coefficient Giving, and the USC-Capital One Center for Responsible AI and Decision Making in Finance (CREDIF) Fellowship. M. Soltanolkotabi is also supported by the Packard Fellowship in Science and Engineering, a Sloan Research Fellowship in Mathematics, NSF CAREER Award \#1846369, DARPA FastNICS program, NSF CIF Awards \#1813877 and \#2008443, and NIH Award DP2LM014564-01.

\bigskip

{

\bibliography{full,bibli}
\bibliographystyle{alpha}

\newpage
}

\appendix

\tableofcontents

\newpage

\allowdisplaybreaks

\section{Additional Experiments}

\section{Preliminaries for Appendix}

\textbf{Additional Notation. } For a vector $\bs{x} \in \R^{V},$  we use $\diag(\bs{x}) \in \R^{V \times V}$  denotes the diagonal matrix which has the same diagonal entries with $\bs{x}$, while for a matrix $\bs{A}$,   $\diag(\bs{A}) \in \R^{V}$ denotes the column vector whose elements coincide with the diagonal entries of $\bs{A}$.  For a random variable $\bw$,  $\E_{\bw}[\cdot]$ denotes taking expectation with respect to $\bw$ and keeping the remaining independent terms  fixed. Similarly, we use  $\E[\cdot \vert  \bw]$ for conditional expectation, conditioned on $\bw$. We use $\indic{\mathrm{Event}}$ as an indicator function, which takes values $\{ 0,1 \}$ depending on the event holds or not.  We use $C$ to denote any constant in the upper-bound, which might depend on $\phi$. We use $\mathrm{poly}_{p, q}(N,d,V,L)$ denotes a polynomial function of $(N, d,V,L)$ whose degree depends on  $(p, q)$  polynomially. For vectors $\bs{w}, \hat{\bs{w}}$ and a scaler variable $\eta > 0$, we use $\hat{\bs{w}} = \hat{\bs{w}} + O(\eta)$ to denote  $\lVert \hat{\bs{w}} - \bs{w} \rVert_{\infty} = O(\eta)$.

\noindent
Since we do not use positional encoding in the model,  without loss of generality we can fix the informative index $\ell = 1$.  We define the sequence of non-informative tokens as $\bN_i \coloneqq   [\bx_{i,2}, \cdots,  \bx_{i,L}]^\top$.   We will denote the rows of $\bWin$ with $\{ \bw_{k} \}_{k = 1}^m$.   For compact representation the attention with the trigger, we define 
\eq{
\sZin \eqqcolon \begin{bmatrix}
\bZin & \bztrig
\end{bmatrix}   ~~ \text{and} ~~  \sX_i  \eqqcolon \begin{bmatrix}
 \bx_{i,1}^\top &  1  \\[0.1em]
\bN_i & 0
\end{bmatrix}   \in \R^{L \times (V+1)}   \label{eq:data}
}

With this notation,  we can write the iterates in  three-step GD. Let 
\eq{
\hat{\bs{p}}_{t,i}  \coloneqq \hat{\bs{p}}(\bs{X}_i; \bs{V}^{(t)}, \bs{W}_{\mathrm{KQ}}^{(0)}  ), ~~ \text{and} ~~ \bs{\alpha}_{0,i} \coloneqq   \sigma \Big( \sX_i \sZin^\top \bs{W}_{\mathrm{KQ}}^{(0)} \bs{z}_{\mathrm{EOS}} \Big). 
}
We have
\eq{
& \V^{(1)} \!  = \!    \Zout \Big( \frac{ \eta}{N} \sum_{i = 1}^N (\bs{p}_i  - \hat{\bs{p}}_{0,i}) \phi \big(    \bs{\alpha}_{0,i}^\top    \bX_i  \bZin^\top \bWin^\top  \big)   \Big)   \\
& \bWkq^{(1)}  \! =  \!\sZin \frac{\gamma}{N}  \sum_{i = 1}^N   \sX_i^\top \! \big( \diag(\ba_{0,i} ) \! - \! \ba_{0,i}  \ba_{0,i}^\top \big)    \bX_i  \bZin^\top  \bWin^\top  \mathrm{diag} \Big( \!  \phi^\prime  \big( \bWin  \bZin    \bX_i^\top  \ba_{0,i}   \big)  \! \Big)    ( \V^{(1)} )^\top \!   \Zout  (\bs{p}_i \! - \! \hat{\bs{p}}_{i,1} )  \bs{z}_{\text{EOS}}^\top. ~~~ ~~~~ \label{eq:kqgradient}
}
For notational convenience,  we define the noise due to finite width as (which we defined equivalently in \eqref{def:fwdefmaintext}) 
\eq{
\FW(\bWin; \bZin,    \bX_i,    \bX_j) & \coloneqq   \frac{1}{m} \Big(   \bWin^\top    \mathrm{diag} \Big(  \phi^\prime  \big(  \tfrac{1}{L} \bWin  \bZin    \bX_i^\top \mathbbm{1}_L    \big)  \Big)    \phi \Big( \tfrac{1}{L}   \bWin   \bZin    \bX_j^\top \mathbbm{1}_L  \Big) \\
&\hspace{1em}  - \E_{\bWin} \Big[ \bWin^\top    \mathrm{diag} \Big(  \phi^\prime  \big(  \tfrac{1}{L} \bWin  \bZin    \bX_i^\top \mathbbm{1}_L    \big)  \Big)    \phi \Big( \tfrac{1}{L}   \bWin   \bZin    \bX_j^\top \mathbbm{1}_L  \Big) \Big]  \Big).   ~~~~~~ \label{def:expectedtermsfw}
}
For the terms arising in the expected value term in \eqref{def:expectedtermsfw}, we define
\begin{itemize}
    \item $\alpha_{ij} \! \coloneqq    \!   \E_{\bw} \Big[ \phi^{\prime}  \big(  \tfrac{1}{L} \bw^\top  \bZin    \bX_i^\top \mathbbm{1}_L    \big)   \phi^\prime  \big(  \tfrac{1}{L} \bw^\top  \bZin    \bX_j^\top \mathbbm{1}_L    \big)  \Big],$
    \item $\beta_{ij} \!  \coloneqq \!   \E_{\bw} \Big[ \phi^{\prime \prime}   \big(  \tfrac{1}{L} \bw^\top  \bZin    \bX_i^\top \mathbbm{1}_L    \big)   \phi  \big(  \tfrac{1}{L} \bw^\top  \bZin    \bX_j^\top \mathbbm{1}_L    \big)  \Big].$
\end{itemize}
Moreover, we make the following definitions to simplify the notation in the following:
\eq{
& \bs{A}_{1, ir} \coloneqq  \bZin \Big( \frac{1}{L N} \sum_{j = 1}^N  \alpha_{ij}  (\bx_j  - \tfrac{1}{V} \mathbbm{1}_V )     ( \bx_j -   \tfrac{1}{V} \mathbbm{1}_{V} )^\top     \Big)  \Big( \frac{1}{L N} \sum_{j = 1}^N  \alpha_{rj}  (\bx_j  - \tfrac{1}{V} \mathbbm{1}_V )       ( \bx_j -   \tfrac{1}{V} \mathbbm{1}_{V} )^\top     \Big)  \bZin^\top \label{def:aone}  \\[0.3em]
& \bs{A}_{2, ir} \coloneqq \bZin   \Big(    \frac{1}{L N} \sum_{j = 1}^N \! \alpha_{ij}   (\bN_j^\top  - \tfrac{1}{V} \mathbbm{1}_V  \mathbbm{1}_{L -1}^\top  )   \mathbbm{1}_{L -1}    ( \bx_j -   \tfrac{1}{V} \mathbbm{1}_{V} )^\top     \Big)\\[-0.2em]
& \hspace{12em} \times    \Big(    \frac{1}{L N}  \sum_{j = 1}^N  \! \alpha_{rj}   ( \bx_j -   \tfrac{1}{V} \mathbbm{1}_{V} )   \mathbbm{1}_{L -1}^\top (\bN_j^\top  - \tfrac{1}{V} \mathbbm{1}_V  \mathbbm{1}_{L -1}^\top  ) ^\top     \Big)     \bZin^\top ~~~~~~~  \label{def:atwo} \\
&  \bs{A}_{3, ir} \coloneqq  \frac{1}{L^2 V^2}  \Big( \frac{1}{N} \sum_{j = 1}^N  \alpha_{ij}    ( \bx_j -   \tfrac{1}{V} \mathbbm{1}_{V}    )   \Big) ^\top   \Big( \frac{1}{N} \sum_{j = 1}^N  \alpha_{rj}  (\bx_j  - \tfrac{1}{V} \mathbbm{1}_V )          \Big)   \bZin   \mathbbm{1}_V \mathbbm{1}_V^\top   \bZin ^\top  \label{def:athree}
}
and
\eq{
& \bs{S}_1 \! \coloneqq  \!  \Big( \frac{1}{ L N} \sum_{j = 1}^N    (\bx_j  - \tfrac{1}{V} \mathbbm{1}_V )     ( \bx_j -   \tfrac{1}{V} \mathbbm{1}_{V} )^\top     \Big)   \Big( \frac{1}{ L N} \sum_{j = 1}^N    (\bx_j  - \tfrac{1}{V} \mathbbm{1}_V )       ( \bx_j -   \tfrac{1}{V} \mathbbm{1}_{V} )^\top     \Big)  \label{def:sone} \\
& \bs{S}_2 \! \coloneqq  \!    \Big(  \frac{1}{LN}  \sum_{j = 1}^N   (\bN_j^\top  - \tfrac{1}{V} \mathbbm{1}_V  \mathbbm{1}_{L -1}^\top  )   \mathbbm{1}_{L -1}    ( \bx_j -   \tfrac{1}{V} \mathbbm{1}_{V} )^\top     \Big) \Big( \frac{1}{LN}    \sum_{j = 1}^N  ( \bx_j -   \tfrac{1}{V} \mathbbm{1}_{V} )   \mathbbm{1}_{L -1}^\top (\bN_j^\top  - \tfrac{1}{V} \mathbbm{1}_V  \mathbbm{1}_{L -1}^\top  ) ^\top     \Big) ~~~~~~~     \label{def:stwo}  \\
& \bs{S}_3 \! \coloneqq  \!   \frac{1}{L^2 V^2}  \Big( \frac{1}{N} \sum_{j = 1}^N    ( \bx_j -   \tfrac{1}{V} \mathbbm{1}_{V}    )   \Big) ^\top   \Big( \frac{1}{N} \sum_{j = 1}^N  (\bx_j  - \tfrac{1}{V} \mathbbm{1}_V )          \Big)     \mathbbm{1}_V \mathbbm{1}_V^\top.   \label{def:sthree}
}

\subsection{Preliminary Results: Characterization of Good Events}

We start with characterizing ``good events'' which we will use in the proof of  Theorem   \ref{thm:main}. 
\begin{lemma}
\label{lem:niceevent}
We consider $l \in \N$, and $V^3 \gg N \gg V \gg L$ and $L \asymp V^{\epsilon_1}$, and $d \asymp   V^{\epsilon_2}$ for some  $\epsilon_1, \epsilon_2 \in (0,1)$.
For the following we define,  $m_{ij} \coloneqq (1 - 1/V) \delta_{ij} + \frac{L}{V}$. We define the following events: 
\begin{enumerate}[leftmargin = *, label= (E\arabic*)]
\item  \label{event:boundZin} Let   $\bs{z}_k \coloneqq  \bZin \be_k$ and  $\mathsf{z}_{k} \coloneqq    (  \bs{z}_k +   \indic{l = 1}   \ztrig ).$  We have
\begin{enumerate}[leftmargin=1em, label= (E1.\arabic*)]
\item \label{event:boundZin1} $\frac{1}{V} \norm{ \bZin \bZin^\top }_2 \leq  \frac{2}{d}$  and $\max_{k \leq V} \norm{\bs{z}_k}_2   \vee  \norm{\ztrig}_2   \leq 2$ and $\max_{j \neq k} \lvert \inner{\bs{z}_j}{\mathsf{z}_k} \rvert \leq \frac{\log V}{\sqrt{d}}$.
\item \label{event:boundZin2}   $\frac{1}{\sqrt{V}} \norm{\bZin \mathbbm{1}_V}_2 \leq 2$  and    $\frac{1}{\sqrt{V}} \norm{  \bZin^\top \bZin \mathbbm{1}_V }_{\infty} \leq \frac{\log V}{\sqrt{d}}$
\item   \label{event:boundZin3}   $ \big \lvert     \mathsf{z}_{k} ^\top  \bZin \mathbbm{1}_V \big \rvert \leq  2 \log V \sqrt{\frac{V}{d}}$    and     $\big \lvert   \mathsf{z}_{k} ^\top  \bZin \bZin^\top  \bZin   \mathbbm{1}_V   \big \rvert \leq  C_K \log V \big( \frac{ V}{d} \big)^{\frac{3}{2}}  $ and $ \Big   \lvert \mathsf{z}_{k}^\top  \bZin \mathrm{diag} \big( \bZin^\top \bZin   \big) \Big \rvert \leq  C_K \log V \sqrt{\frac{V}{d}}$
\item   \label{event:boundZin4}   For all $i \in [N]$,  $ \lvert  \mathsf{z}_{k}^\top  \bZin \bX_i^\top \mathbbm{1}_L \rvert \leq  \be_{k}^\top  \bX_i^\top \mathbbm{1}_L     +  C_K \log V      \frac{ \lVert \bX_i^\top \mathbbm{1}_L \rVert_2}{\sqrt{d}}$
\item    \label{event:boundZin5}  For all $i \in [N]$,    $  \lvert \mathbbm{1}_V^\top  \bZin^\top  \bZin \bX_i^\top \mathbbm{1}_L \rvert \leq L + C_K  \log V  \lVert \bX_i^\top \mathbbm{1}_L \rVert_2  \sqrt{ \frac{V}{d}}$. 
\item    \label{event:boundZin6}   For all $i \in [N]$,  $\big \lvert   \mathsf{z}_{k}^\top  \bZin \bZin^\top  \bZin    \bX_i^\top \mathbbm{1}_L  \big \rvert \leq   \frac{2 V}{d}  \big(    \be_{k}^\top  \bX_i^\top \mathbbm{1}_L      +  C_K  \log V    \frac{ \lVert \bX_i^\top \mathbbm{1}_L \rVert_2}{\sqrt{d}} \big) $.   
\item  \label{event:boundZin7}  For all $i, j \in [N]$,   $ \lvert  \frac{1}{L} \mathbbm{1}_L^\top       \bX_j  \bZin^\top   \bZin     \bX_i^\top \mathbbm{1}_L  - m_{ij}   \rvert \leq    \lvert \frac{1}{L} \mathbbm{1}_L       \bX_j^\top    \bX_i^\top \mathbbm{1}_L  - m_{ij} \rvert   +   C_K \frac{ \lVert \bX_i^\top \mathbbm{1}_L \rVert_2 \lVert \bX_j^\top \mathbbm{1}_L \rVert_2 }{L}  \frac{  \log V}{\sqrt{d}}$
\item  \label{event:boundZin8}  For all $i \in [N]$,   $\lVert \bZin      \bN_i^\top   \bN_i       \bZin^\top  \mathsf{z}_{k} \rVert_2 \leq  C_K \big(  \be_{k}^\top  \bN_i^\top  \mathbbm{1}_{L - 1} + \frac{L}{d}  + \log^6 V  \frac{ \lVert  \bN_i^\top  \mathbbm{1}_{L - 1} \rVert_2  }{\sqrt{d} } \big)$.
\end{enumerate}
\item  \label{event:discrete}    We have
\begin{enumerate}[leftmargin=1em, label= (E2.\arabic*)]
\item    \label{event:discrete1}   For all $i, j \in [N]$,  $ \lvert \frac{1}{L} \mathbbm{1}_L^\top       \bX_j   \bX_i^\top \mathbbm{1}_L  - m_{ij} \rvert \leq C_K  \frac{\log^2 V}{\sqrt{V} \wedge L}$,
\item  \label{event:discrete2}    For all  $i \in [N]$,    $  \lVert    \bX_i^\top \mathbbm{1}_L   \rVert_{\infty} \leq \log L$ and  $  \lVert    \bX_i^\top \mathbbm{1}_L   \rVert_{0} \geq \frac{L}{2}$ 
\item   \label{event:discrete3}    $\Big \lvert \lVert \frac{1}{N} \sum_{i = 1}^N \bx_i \rVert_2 - \frac{1}{N}  - \frac{1}{V} \Big \rvert \leq C_K  \frac{\log^2 N}{N \sqrt{V}} $ and     $\Big \lvert \lVert \frac{1}{N} \sum_{i = 1}^N \bx_i - \frac{1}{V} \mathbbm{1}_V \rVert_2  - \frac{1}{N}    \Big \rvert  \leq   C_K  \frac{\log^2 N}{N \sqrt{V}}$ and $  \lVert \frac{1}{N} \sum_{i = 1}^N    \bx_i - \frac{1}{V} \mathbbm{1}_V   \rVert_{\infty} \leq  \frac{(e+1)}{V}$ 
\item   \label{event:discrete4}   $\sum_{i, j = 1}^N \lvert \indic{\bx_i =  \bx_j} - \frac{1}{V} \rvert \leq \frac{4 N^2}{V}$   and $\sum_{i, j = 1}^N ( \indic{\bx_i =  \bx_j} - \frac{1}{V} ) \leq \frac{4 N^2}{V}$  and for any $k \in [V]$, $\Big \lvert   \sum_{i, j = 1}^N   \vert \indic{\bx_j = \be_k }   -    \tfrac{1}{V}   \rvert  (\indic{\bx_i = \be_k }   -  \tfrac{1}{V} )   \Big \rvert \leq \frac{CN^2}{V^2}$. 
\item   \label{event:discrete5}  $\lVert  \bs{S}_1  \rVert_2 \leq  \frac{e}{L^2 V^2}$   and   $\lvert   \tr(\bs{S}_1)  -   (1- \frac{1}{V}) \frac{1}{L^2} \big(   \frac{1}{N} +  \frac{1}{V} \big)  \rvert \leq \frac{C_K \log^2 V}{L^2 N \sqrt{V}}  $
\item   \label{event:discrete6}   $\lVert  \bs{S}_2  \rVert_2 \leq     \frac{C_K \log^2 V}{NLV}$   and   $\lvert   \tr(\bs{S}_2)  -  (1 - \frac{1}{V})^2 \frac{L - 1}{L^2 N} \rvert \leq   \frac{C_K\log^3 V}{N \sqrt{LV}} $
\item   \label{event:discrete7}   $ \frac{-  C_K \log^2 V}{N \sqrt{V}}   \frac{1}{V^2 L^2} \mathbbm{1}_V \mathbbm{1}_V^\top  \preceq   \bs{S}_3 - \frac{1}{N} \frac{1}{V^2 L^2} \mathbbm{1}_V \mathbbm{1}_V^\top  \preceq  \frac{ C_K  \log^2 V}{N \sqrt{V}}    \frac{1}{V^2 L^2} \mathbbm{1}_V \mathbbm{1}_V^\top$
\item   \label{event:discrete8}
$\Big \lVert   \frac{1}{NL } \sum_{  j = 1 }^N         ( \bN_j^\top -   \tfrac{1}{V} \mathbbm{1}_{L-1} )   \mathbbm{1}_{L-1} \mathbbm{1}_{L-1}^\top  ( \bN_j^\top -   \tfrac{1}{V} \mathbbm{1}_{L-1} )^\top  \Big \rVert_2 = \frac{1}{V} \pm+ \frac{C_K \log^2 V}{\sqrt{NV}}$.
\end{enumerate}
\end{enumerate}
For any $K > 0,$ there exists a universal constant $C_K > 0$ depending only on  $K$ such that 
\eq{
\mpr[\ref{event:boundZin} \vert \{ \bs{X}_i\}_{i = 1}^N ]  \geq 1 - \frac{1}{V^K} ~~ \text{and} ~~  \mpr[\ref{event:discrete}] \geq 1 - \frac{1}{V^K}.
}
\end{lemma}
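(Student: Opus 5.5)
The plan is to prove each sub-event separately with failure probability at most $V^{-K-c}$ for a suitable margin $c$, and then take a union bound over the polynomially many sub-events and indices ($i,j\in[N]$, $k\in[V]$). Since $N\le V^3$, $L,d\le V$, a union over $\mathrm{poly}(V)$ events only changes $K$, which we absorb into $C_K$. The two groups behave differently. Event \ref{event:boundZin} is a statement about the Gaussian embeddings conditional on the data $\{\bX_i\}$. Event \ref{event:discrete} is a statement about the discrete uniform tokens alone.

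For \ref{event:boundZin} we condition on $\{\bX_i\}$, so every vector $\bX_i^\top\mathbbm{1}_L$, $\bN_i^\top\mathbbm{1}_{L-1}$ and $\mathbbm{1}_V$ is deterministic. Each quantity is then a low-degree polynomial (degree $\le 4$) in the Gaussian entries of $(\bZin,\bztrig)$, and we treat it with three tools.
\begin{itemize}
\item[(i)] Operator-norm bounds for Gaussian matrices give $\|\bZin\bZin^\top\|_2\le (1+\sqrt{d/V})^2 V/d+\text{small}\le 2V/d$ (using $d<V$ up to constants) and $\|\bs z_k\|_2\le 2$ by $\chi^2$ concentration.
\item[(ii)] Inner products are handled by conditioning on one factor. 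Given $\mathsf z_k$, $\inner{\bs z_j}{\mathsf z_k}$ is Gaussian with variance $\|\mathsf z_k\|^2/d$, which yields the $\log V/\sqrt d$ bounds. In the same way $\mathsf z_k^\top\bZin \bs v=\inner{\mathsf z_k}{\bs z_k}v_k+\sum_{j\ne k}v_j\inner{\mathsf z_k}{\bs z_j}$, with the second sum Gaussian given $\mathsf z_k$ and variance $\lesssim\|\bs v\|^2/d$. This gives \ref{event:boundZin3}, \ref{event:boundZin4} and \ref{event:boundZin5}, and also \ref{event:boundZin7} after recentring by the diagonal term $\bs v_j^\top\bs v_i$.
\item[(iii)] For the higher-degree terms (\ref{event:boundZin6}, \ref{event:boundZin8}, and the cubic terms in \ref{event:boundZin3}) we peel factors using the operator-norm bound from (i). We write $\bZin\bZin^\top=\tfrac Vd\bs I+\bs E$ with $\|\bs E\|\lesssim\sqrt{V}/d\cdot\text{polylog}$ and apply (ii) to what remains. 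Where a chaos term does not factor cleanly, we fall back on hypercontractivity (Gaussian chaos tail bounds with $\log^{p}V$ losses), which accounts for the $\log^6V$ appearing in \ref{event:boundZin8}.
\end{itemize}

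For \ref{event:discrete} the tools are counts of multinomial and Bernoulli variables, handled with Bernstein/Chernoff bounds.
\begin{itemize}
\item \ref{event:discrete1} follows because $\mathbbm{1}_L^\top\bX_j\bX_i^\top\mathbbm{1}_L$ counts matching pairs between two sequences. Its mean is $L m_{ij}$ up to $O(1)$, and a bounded-differences argument gives fluctuations of order $\sqrt{L^2/V}+\sqrt L$ up to logs.
\item \ref{event:discrete2} is a balls-in-bins maximum load, at most $\log L$ once $L\ll V$, together with a distinct-count bound.
\item \ref{event:discrete3} and \ref{event:discrete4} follow from concentration of empirical frequencies $n_k/N$ with $N\gg V\log V$: each $n_k\le eN/V$, and $\sum_k n_k^2$ concentrates.
\item \ref{event:discrete5} and \ref{event:discrete7} are explicit functions of these frequencies. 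For example, $\bs S_1$ equals $\tfrac1{L^2}\bs D^2$ up to rank-one corrections, where $\bs D=\diag(n_k/N)-\tfrac1V(\cdot)$.
\end{itemize}

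The main obstacle is \ref{event:discrete6} and \ref{event:discrete8}. These are spectral statements about the random matrix $\sum_j \bs u_j \bs y_j^\top$, where $\bs u_j=(\bN_j^\top-\tfrac1V\cdots)\mathbbm{1}$ is a centred multinomial count vector and $\bs y_j$ is a centred one-hot vector. I would use the matrix Bernstein inequality, truncating on \ref{event:discrete2} so that $\|\bs u_j\|\lesssim\sqrt L\log L$. Computing the variance proxies $\E[\bs u\bs u^\top]\approx \tfrac LV\bs I$ and $\E[\bs y\bs y^\top]\approx\tfrac1V\bs I$ gives an operator norm of order $\sqrt{NL/V}$ up to logs. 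After squaring and normalising by $(NL)^2$, this yields $\|\bs S_2\|\lesssim\log^2V/(NLV)$. The trace parts reduce to scalar Bernstein bounds on $\sum_j\|\bs u_j\|^2$ and on cross terms, whose means match $(1-\tfrac1V)^2\tfrac{L-1}{L^2N}$. For \ref{event:discrete8} the same matrix Bernstein argument applies to $\sum_j(\bs u_j\bs u_j^\top-\E)$.
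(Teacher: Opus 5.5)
Your overall plan is sound. Conditioning on the data for \ref{event:boundZin} and taking a union bound over $\mathrm{poly}(V)$ indices both work. Bounding $\|\bs S_2\|_2=\|\bs A\|_2^2$, with $\bs A=\frac{1}{LN}\sum_j\bs u_j\bs y_j^\top$, by rectangular matrix Bernstein is a cleaner route to \ref{event:discrete6} than the paper's martingale split; truncating $\bs u_j$ is harmless because $\bs y_j$ is independent of it and mean zero. However, two steps fail as written.

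\emph{\ref{event:discrete1}.} Multiplying through by $L$, the allowed deviation of $\mathbbm{1}_L^\top\bX_j\bX_i^\top\mathbbm{1}_L$ from $Lm_{ij}$ is $C_K\log^2V\cdot\max\{L/\sqrt V,1\}$. You claim fluctuations of order $\sqrt{L^2/V}+\sqrt L$. The $\sqrt L$ term is what McDiarmid gives with differences bounded by $\|\bX_j^\top\mathbbm 1_L\|_\infty\le\log L$. It exceeds the target by the polynomial factor $\min\{\sqrt L,\sqrt{V/L}\}$, since $L\asymp V^{\epsilon_1}$ with $\epsilon_1\in(0,1)$.

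Bounded differences cannot exploit that the match indicators are pairwise uncorrelated, which makes the true variance only about $L^2/V$. You need a variance-sensitive inequality:
\begin{itemize}
\item For $i\ne j$, condition on $\bX_j$. The count is then a sum of $L$ i.i.d.\ entries of $\bX_j^\top\mathbbm 1_L$ at uniform positions, each with variance $\lesssim L\log L/V$ and range $\le\log L$, so Bernstein applies.
\item For $i=j$, apply Freedman's inequality to the Doob martingale of $\sum_{l<r}(\indic{\bx_{il}=\bx_{ir}}-\tfrac1V)$, whose predictable quadratic variation is about $L^2/V$. This is what Proposition~\ref{prop:multrandommatrix} does via Rosenthal--Burkholder.
\end{itemize}
The same issue affects $\sum_k n_k^2$ in \ref{event:discrete3} and \ref{event:discrete5}. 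There the required precision is $N\log^2V/\sqrt V$, while bounded differences gives only $N^{3/2}/V$. Also, the cross terms of $\tr(\bs S_2)$ form a degenerate $U$-statistic, so plain scalar Bernstein does not apply to them either.

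\emph{Step (iii).} The estimate $\|\bs E\|_2\lesssim\sqrt V/d$ is wrong. In fact $\|\bZin\bZin^\top-\tfrac Vd\bs I_d\|_2\asymp\sqrt{V/d}$, as your own bound in (i) already shows through its $2\sqrt{V/d}$ term. With the correct value, peeling gives:
\begin{itemize}
\item in \ref{event:boundZin6}, $|\mathsf z_k^\top\bs E\bZin\bs v|\lesssim\sqrt{V/d}\,\|\bs v\|_2$, against the target $\frac Vd\cdot\frac{\log V\,\|\bs v\|_2}{\sqrt d}$;
\item for the cubic term of \ref{event:boundZin3}, a bound of order $V/\sqrt d$, against the target $(V/d)^{3/2}\log V$.
\end{itemize}
Both lose a factor $d/\sqrt V$. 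The argument therefore breaks exactly when $d\gg\sqrt V$, which includes the regime $d^2\gtrsim V$ needed for the Attention-only model. You also cannot apply (ii) to $\mathsf z_k^\top\bs E\bZin\bs v$, since $\bs E$ depends on $\bs z_k$.

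The true size is smaller because of cancellations inside the Gram matrix $\bZin^\top\bZin$, and you must capture them explicitly. One way is to split off the column $\bs z_k$ and control the remaining columns as a Kronecker-structured Gaussian quadratic form via Hanson--Wright, as in Proposition~\ref{prop:wishartsquare}. Alternatively, compute the variance of the degree-four chaos exactly and apply hypercontractivity, at the cost of an extra $\log V$.
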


\begin{proof}
For \ref{event:boundZin}:
\begin{itemize}[leftmargin=*]
\item By Proposition \ref{prop:gaussquadraticform},  we have  $ \norm{ \frac{1}{V} \bZin \bZin^\top - \frac{1}{d} \bs{I}_d }_2 \leq \frac{2 \log V}{\sqrt{V d}}$  and by Proposition \ref{prop:gausssquare}, we have  $\max_{k \leq V} \norm{\bs{z}_k}_2   \vee  \norm{\ztrig}_2   \leq 2$   with probability at least $1 - C V d \exp(- c \log^2 V).$ 
\item  By  Proposition \ref{prop:gausssquare},  $\frac{1}{\sqrt{V}} \norm{\bZin \mathbbm{1}_V}_2 \leq 2$  and    $\frac{1}{\sqrt{V}} \norm{  \bZin^\top \bZin \mathbbm{1}_V }_{\infty} \leq \frac{2 \log V}{\sqrt{d}}$  with probability at least $1 - C V d \exp(- c \log^2 V).$   
\item   By  Propositions \ref{prop:gausssquare} and \ref{prop:wishartsquare},  we have $ \frac{1}{\sqrt{V}} \big \lvert \mathsf{z}_{k}^\top    \bZin \mathbbm{1}_V \big \rvert \leq  \frac{2 \log V}{\sqrt{d}}$ and $\big \lvert   \mathsf{z}_{k} ^\top  \bZin \bZin^\top  \bZin   \mathbbm{1}_V   \big \rvert \leq  C_K \log V \big( \frac{ V}{d} \big)^{\frac{3}{2}}  $  with probability at least $1 - C V d \exp(- c \log^2 V).$   Moreover
\eq{
\frac{1}{V} \Big   \lvert \mathsf{z}_{k}^\top  \bZin \mathrm{diag} \big( \bZin^\top \bZin   \big) \Big \rvert 
& = \frac{1}{V} \sum_{\substack{ i = 1 \\ i \neq k}}^V \norm{\bs{z}_i}_2^2 \langle \bs{z}_i,      \bs{z}_k  \rangle 
+  \frac{ \indic{l = 1} }{V} \sum_{\substack{ i = 1 \\ i \neq k}}^V \norm{\bs{z}_i}_2^2 \langle \bs{z}_i,     \ztrig \rangle    + \underbrace{ \frac{1}{V}  \mathsf{z}_{k}^\top  \bs{z}_{k}  }_{\in \frac{1}{V} [-C_K, C_K]},
}
where we used previous items to bound the last term.  For $i \neq k,$ by using Lemma \ref{lem:gaussianvectormoment}, we have  for $p \leq \frac{d}{6}$,
\eq{
 \E[   \norm{\bs{z}_i}_2^{4p} ~  \lvert \langle \bs{z}_i,     \bs{z}_k  \rangle \rvert^{2p}   ] 
& \leq  d^{-p}   \E[   \norm{\bs{z}_i}_2^{6p}    ]  (2p)^{p}  \leq   d^{- p} 2^p p^{p}  \frac{d (d +2) \cdots (d + 6p - 2)}{ d^{ 3p}} \leq  d^{- p} 2^{4p} p^{p}.
}
Therefore,      
\eq{
\E[   \norm{\bs{z}_i}_2^{4p} ~  \lvert \langle \bs{z}_i,     \bs{z}_k  \rangle \rvert^{2p}   ]^{\frac{1}{2p}} \leq 4 d^{-1/2} \sqrt{p}.  
}
By Proposition \ref{prop:rosenthal}, we have for $2 \leq p \leq \frac{d}{6}$ ,
\eq{
\E \Big[  \Big   \lvert      \frac{1}{V}  \mathsf{z}_{k}^\top  \bZin \mathrm{diag} \big( \bZin^\top \bZin   \big) \Big \rvert^{2p} \Big]^{\frac{1}{2p}} \leq C d^{- 1/2} \Big[  \sqrt{ \frac{p}{V} } +  V^{\frac{1}{p}} \frac{p^{3/2}}{V} \Big]
}
By using $p = \log V$, we have   the bound in the statement with probability  $1 - \frac{1}{V^K}$.
\item By Proposition \ref{prop:gausssquare} with probability at least $1 - \frac{1}{V^K}$
\eq{
\lvert  \mathsf{z}_{k}^\top  \bZin \bX_i^\top \mathbbm{1}_L \rvert 
& \leq \lvert \be_k^\top   \bZin^\top   \bZin \bX_i^\top \mathbbm{1}_L \rvert  +   \indic{l = 1}   \lvert  \ztrig ^\top   \bZin \bX_i^\top \mathbbm{1}_L \rvert \\
& \leq   \be_k^\top  \bX_i^\top \mathbbm{1}_L + C_K  \log  V  \frac{\lVert  \bX_i^\top \mathbbm{1}_L \rVert_2}{   \sqrt{d} }.
}
By the union bound, the item follows.
\item By Proposition \ref{prop:gausssquare} with probability at least $1 - \frac{1}{V^K}$,
\eq{
\rvert \mathbbm{1}_V^\top \bZin^\top  \bZin \bX_i^\top \mathbbm{1}_L \rvert 
& \leq  \mathbbm{1}_V^\top \bX_i^\top \mathbbm{1}_L   +  C_K \log V  \lVert  \bX_i^\top \mathbbm{1}_L \rVert_2 \sqrt{ \frac{V }{d} } \\
& =   L  +  C_K \log V  \lVert  \bX_i^\top \mathbbm{1}_L \rVert_2 \sqrt{ \frac{V }{d} }. 
} 
\item  By  Proposition \ref{prop:wishartsquare}, with probability at least $1 - C N \exp(-c \log^2 V)$,  we have $\big \lvert   \mathsf{z}_{k}^\top  \bZin \bZin^\top  \bZin    \bX_i^\top \mathbbm{1}_L  \big \rvert \leq   \frac{2 V}{d}  \big(    \be_{k}^\top  \bX_i^\top \mathbbm{1}_L      +  C_K  \log V    \frac{ \lVert \bX_i^\top \mathbbm{1}_L \rVert_2}{\sqrt{d}} \big) $ for all $i \in [N]$.
\item By  Proposition \ref{prop:gausssquare}, with probability at least $1 - \frac{1}{V^K}$
\eq{
\frac{1}{L} \mathbbm{1}_L^\top       \bX_j  \bZin^\top   \bZin     \bX_i^\top \mathbbm{1}_L  - m_{ij}  
& =  \frac{1}{L} \mathbbm{1}_L^\top       \bX_j  \bX_i^\top \mathbbm{1}_L       - m_{ij}   \pm C_K  \frac{\lVert \bX_j^\top \mathbbm{1}_L \rVert_2 \lVert \bX_i^\top \mathbbm{1}_L \rVert_2}{L}\frac{\log V}{\sqrt{d}}.
}
\item For the last item,  let $n_{k} \coloneqq  \mathbbm{1}_{L - 1}^\top \bN_i  \be_k    $. We have
\eq{
 \bZin      \bN_i^\top   \bN_i       \bZin^\top \mathsf{z}_{k} =   n_{k}  (    \lVert \bs{z}_{k}\rVert^2_2 +  \indic{l = 1} \mathsf{z}_{k}^\top  \ztrig   - \frac{1}{d} ) \bs{z}_{k}    + \frac{L}{d} \mathsf{z}_{k}  + \sum_{\substack{ j = 1 \\ j \neq k } }^V n_j \big( \bs{z}_j   \bs{z}_j^\top - \frac{1}{d} \bs{I}_d \big)  \mathsf{z}_{k}.
}
By Proposition \ref{prop:HCpositivepolynomial}, we have
\eq{
\E \Big[ \Big \lVert \sum_{\substack{j = 1 \\ j \neq k } }^V n_j \big( \bs{z}_j   \bs{z}_j^\top - \frac{1}{d} \bs{I}_d \big)  \mathsf{z}_{k}   \Big \rVert_2^{2p} \Big]^{\frac{1}{p}}   
& \leq  C (p - 1)^6  \E \Big[ \Big \lVert \sum_{\substack{ j = 1 \\ j \neq k } }^V n_j \big( \bs{z}_j   \bs{z}_j^\top - \frac{1}{d} \bs{I}_d \big)  \mathsf{z}_{k}  \Big \rVert_2^{2} \Big]   \\
& \leq  \frac{C }{d}  (p - 1)^6   \lVert \bN_i^\top \mathbbm{1}_{L-1}   \rVert_2^2.
} 
Therefore,  with probability $1 - \frac{1}{V^K}$, we have
\eq{
\lVert   \bZin      \bN_i^\top   \bN_i       \bZin^\top    \mathsf{z}_{k} \rVert_2  \leq    C_K \Big(  n_{k}  + \frac{L}{d} + \log^6 V  \frac{  \lVert \bN_i^\top \mathbbm{1}_{L-1}   \rVert_2}{\sqrt{d}} \Big).
}
\end{itemize}
For  \ref{event:discrete}:
\begin{itemize}[leftmargin = *]
\item  By Proposition \ref{prop:multrandommatrix}, we have  the first item with probability $1 - \frac{N^2}{V^K}$.
\item By Corollary \ref{cor:rosenthalresults}, we have  $ \max_{i \in [N]} \lVert    \bX_i^\top \mathbbm{1}_L   \rVert_{\infty} \leq \log L$ with probability $1 - \frac{N}{V^K}$ for large enough $L$. For the second part, we define $n_k \coloneqq   \mathbbm{1}_L^\top  \bX_i \be_k .$ We observe that
\eq{
\E [ \lVert    \bX_i^\top \mathbbm{1}_L   \rVert_{0} ] =  \sum_{k = 1}^V \mpr[n_k > 0] = V \Big( 1 - (1 - \frac{1}{V})^L \Big) = L \Big(1 - \frac{L}{2V} + o(L/V) \Big).
}
By McDiarmid inequality,  we have 
\eq{
\mpr \Big[  \Big \lvert \lVert    \bX_i^\top \mathbbm{1}_L   \rVert_{0}  -  L \Big(1 - \frac{L}{2V} + o(L/V) \Big) \Big \rvert > \sqrt{L}  \log V \Big] \leq 2 \exp( - 2 \log^2 V ),
}
which gives the result.
\item Let $\bs{n} \coloneqq   \sum_{i = 1}^{N}  \bx_i$, where $\E [ \bs{n}   ] = \frac{N}{V} \mathbbm{1}_V$. By Proposition \ref{prop:multrandommatrix} with probability $1 - \frac{1}{V^K},$  we have
\eq{
\Big \lvert \big \lVert \frac{1}{N} \bs{n} - \frac{1}{V} \mathbbm{1}_V  \big \rVert_2^2   - (1 - \frac{1}{V}) \frac{1}{N} \Big \rvert =  \Big\lvert  \big \lVert  \frac{1}{N} \bs{n}  \big \rVert_2^2 - (1 - \frac{1}{V}) \frac{1}{N} - \frac{1}{V}  \Big \rvert \leq C_K \frac{\log^2 V}{N \sqrt{V}}.
}
 Lastly, by Corollary \ref{cor:rosenthalresults}, we have  $\norm{ \frac{1}{N} \bs{n} - \frac{1}{V} \mathbbm{1}_V  }_{\infty} \leq \frac{(e+1)}{V}$.
\item We have
\eq{
  \sum_{i, j = 1}^N \lvert \indic{\bx_i  =  \bx_j} - \frac{1}{V} \rvert & =  \Big(  \sum_{i, j = 1}^N \lvert \indic{\bx_i =  \bx_j} - \frac{1}{V} \rvert   - \frac{2}{V} (1 - \frac{1}{V}) \Big)   + \frac{2 N^2}{V} (1 - \frac{1}{V})  \\
 & = (1 - \frac{2}{V}) \sum_{i, j = 1}^N  ( \indic{\bx_i =  \bx_j} - \frac{1}{V} ) + \frac{2 N^2}{V} (1 - \frac{1}{V}) \\
&  =   (1 - \frac{2}{V}) \Big \lVert \sum_{i = 1}^{N} ( \bx_i - \frac{1}{V} \mathbbm{1}_V ) \Big \rVert_2^2  + \frac{2 N^2}{V} (1 - \frac{1}{V})
 }
By the previous item,  the statement follows. Moreover,
\eq{
 &  \sum_{i, j = 1}^N  \big(  \lvert \indic{\bx_j = \be_k }   -  \tfrac{1}{V}   \rvert \pm \tfrac{2}{V} (1 - \tfrac{1}{V}) \big)  (\indic{\bx_i = \be_k }   -  \tfrac{1}{V} ) \\
& = ( 1 - \frac{2}{V})  \Big( \sum_{i  = 1}^N (\indic{\bx_i = \be_k }   -  \tfrac{1}{V} ) \Big)^2  +  \frac{2N}{V} ( 1 - \frac{1}{V}) \sum_{i = 1}^N (\indic{\bx_i = \be_k }   -  \tfrac{1}{V} ) \\
& = N^2 ( 1 - \frac{2}{V}) \Big \langle \be_k,  \frac{1}{N}\sum_{i  = 1}^N ( \bx_i     -  \tfrac{1}{V}\mathbbm{1}_V ) \Big \rangle^2 + \frac{2N^2}{V} ( 1 - \frac{1}{V}) \Big \langle \be_k, \frac{1}{N}\sum_{i  = 1}^N ( \bx_i    -  \tfrac{1}{V} \mathbbm{1}_V ) \Big \rangle \\
& \leq \frac{C N^2}{V^2}.
}
\item The events for $\bs{S}_1,$ $\bs{S}_2$ and $\bs{S}_3$ follows Proposition \ref{prop:sbound}.
\item  \ref{event:discrete8} follows the second item in Proposition \ref{prop:multrandommatrix}.
\end{itemize}
\end{proof}

\begin{proposition}
\label{prop:niceeventconsequences}
We consider the parameter regime in  Lemma \ref{lem:niceevent}.  Let  $\bar{\phi} \coloneqq \sup_{k_1, k_2 \geq 1} \lvert  \phi^{(k_1)}(0) \phi^{(k_2)}(0) \rvert $.   The intersection of \ref{event:boundZin} and \ref{event:discrete} implies the following events:
\begin{enumerate}[leftmargin = *, label= (R\arabic*)]
\item     \label{cons:innerprodrestricted}  For all $i, j \in [N]$,   $ \lvert \frac{1}{L} \mathbbm{1}_L       \bX_j^\top \bZin^\top   \bZin     \bX_i^\top \mathbbm{1}_L  - m_{ij} \rvert \leq  C_K \big(  \frac{  \log V}{\sqrt{d}} +    \frac{\log^2 V}{\sqrt{V} \wedge L} \big)$ ,
\item    \label{cons:coeffbound}    $ \sup_{i ,  j} \lvert \alpha_{ij} - \phi^\prime(0)^2 \rvert  \vee  \lvert \beta_{ij} - \phi^{\prime \prime}(0) \phi(0)   \rvert         \leq   \frac{ \bar{\phi} }{L} \big( m_{ij} + C_K    \frac{  \log V}{\sqrt{d}} +  C_K \frac{\log^2 V}{\sqrt{V} \wedge L} \big)$ 
\item   \label{cons:matrixbounds1} Let $\Delta_{*, ir} \coloneqq  \bs{A}_{*, ir}    -  \phi^\prime(0)^4 \bZin   \bs{S}_* \bZin^\top$ for $* \in \{1,2,3\}$.
We have 
\begin{itemize}[leftmargin = -0.5em]
\item[-]  $\sup_{i, r \in [N]} \lVert  \Delta_{1,ir}  \rVert_2  \leq   C_K  \phi^\prime(0)^2 \Big(  \frac{1}{N d  L^3} +   \frac{1}{V d L^2} \frac{1}{V \wedge L^2 \wedge L \sqrt{d} } \Big)$.
\item[-]  $ \sup_{i, r \in [N]}  \lVert  \Delta_{2,ir}  \rVert_2  \leq  \frac{C_K \sqrt{V}}{ d \sqrt{NL}}     \Big(   \frac{1}{NL^{\frac{3}{2}}} + \frac{1}{V \sqrt{L}}  \frac{1}{V \wedge L^2 \wedge L \sqrt{d}} \Big)$.
\item[-]  We have $\Delta_{3, ir} =   \frac{\bar{\Delta}_{3,ir}}{V^2 L^2} \Zin \mathbbm{1}_V\mathbbm{1}_V^\top \Zin^\top$ such that
$$ \sup_{i, r \in [N]}  \lvert  \bar{\Delta}_{3,ir}  \rvert   \leq   \frac{C_K  \phi^\prime(0)^2}{N}  \Big( \frac{1}{N L} + \frac{1}{\sqrt{N}} \frac{1}{V \wedge L^2 \wedge L \sqrt{d}} \Big) +   \Big( \frac{1}{N L} + \frac{1}{\sqrt{N}} \frac{1}{V \wedge L^2 \wedge L \sqrt{d}} \Big)^2.  $$
\end{itemize}
\item   \label{cons:matrixbounds2}    For all $i ,r \in [N]$,
\begin{itemize}[leftmargin = -0.5em]
\item[-]   We have
\eq{
\Big \lVert   \bs{A}_{1, ir}  - \frac{  \phi^\prime(0)^4 }{d} \big(   \frac{1}{N} +  (1- \frac{1}{V}) \frac{1}{V} \big)  \bs{I}_d \Big  \rVert_2 & \leq    C_K  \phi^\prime(0)^2 \Big(  \frac{1}{N d  L^3} +   \frac{1}{V d L^2} \frac{1}{V \wedge L^2 \wedge L \sqrt{d} } \Big)   \\
 & +    C_K  \phi^\prime(0)^4   \Big(   \frac{\log V}{L^2 V^{3/2} \sqrt{d}} +   \frac{ \log^2 V}{L^2 N \sqrt{V} d}    \Big).
 }
\item[-]  We have
\eq{
\Big \lVert   \bs{A}_{2, ir}  -  \frac{   \phi^\prime(0)^4  }{d}   (1 - \frac{1}{V})^2 \frac{L - 1}{L^2 N}   \bs{I}_d \Big  \rVert_2 & \leq      \frac{C_K \sqrt{V}}{ d \sqrt{NL}}     \Big(   \frac{1}{NL^{\frac{3}{2}}} + \frac{1}{V \sqrt{L}}  \frac{1}{V \wedge L^2 \wedge L \sqrt{d}} \Big) \\
 & +    C_K  \phi^\prime(0)^4   \Big(   \frac{\log V}{N L \sqrt{V d}} +   \frac{ \log^3 V}{N \sqrt{LV} d}     \Big).
 }
\item[-]   We have  $\bs{A}_{3, ir}    -  \frac{  \phi^\prime(0)^4 }{N}   \frac{1}{V^2 L^2} \bZin \mathbbm{1}_V \mathbbm{1}_V^\top \bZin^\top \eqqcolon    \frac{\tilde{\Delta}_{3, ir} }{V^2 L^2} \bZin \mathbbm{1}_V \mathbbm{1}_V^\top \bZin^\top$ such that
\eq{
\lvert  \tilde{\Delta} _{3, ir}  \rvert &\leq  \frac{C_K \phi^\prime(0)^4 \log^2 V}{N \sqrt{V}} \\
& +   \frac{C_K  \phi^\prime(0)^2}{N}  \Big( \frac{1}{N L} + \frac{1}{\sqrt{N}} \frac{1}{V \wedge L^2 \wedge L \sqrt{d}} \Big)  +   \Big( \frac{1}{N L} + \frac{1}{\sqrt{N}} \frac{1}{V \wedge L^2 \wedge L \sqrt{d}} \Big)^2.
}
\end{itemize}
\end{enumerate} 
\end{proposition}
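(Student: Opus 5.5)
The plan is to derive (R1)–(R4) in order, each from the previous one together with the events of Lemma~\ref{lem:niceevent}. The proof is purely deterministic on the intersection of \ref{event:boundZin} and \ref{event:discrete}.

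\textbf{(R1).} Combine \ref{event:boundZin7} with \ref{event:discrete1}. It remains to bound $\lVert \bX_i^\top \mathbbm{1}_L\rVert_2 \lVert \bX_j^\top\mathbbm{1}_L\rVert_2/L$ by a polylogarithmic constant. By \ref{event:discrete2}, every entry of $\bX_i^\top\mathbbm{1}_L$ is at most $\log L$ and the entries sum to $L$, so $\lVert \bX_i^\top\mathbbm{1}_L\rVert_2^2 \le L\log L$. The product is therefore at most $\log L \lesssim \log V$, and (R1) follows after absorbing logarithmic factors into $C_K$.

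\textbf{(R2).} Write $u_i \coloneqq \tfrac1L \bw^\top \bZin\bX_i^\top\mathbbm{1}_L$. Conditionally on the embeddings and data, $(u_i,u_j)$ is a centered Gaussian pair with covariances $\tfrac{1}{L^2}\mathbbm{1}_L^\top\bX_j\bZin^\top\bZin\bX_i^\top\mathbbm{1}_L = \tfrac{1}{L}\bigl(m_{ij} + \text{(R1) error}\bigr)$. Since $\phi$ is a polynomial, expand $\phi'(u_i)\phi'(u_j)$ and $\phi''(u_i)\phi(u_j)$ in Taylor series around $0$ and take Gaussian expectations (Wick/Isserlis). The zeroth-order term gives $\phi'(0)^2$ and $\phi''(0)\phi(0)$ respectively. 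Odd moments vanish. Every even moment of order $2k$ is a product of $k$ covariances, each of size $O(1/L)$ by (R1); note $m_{ii}\le 2$ because $L\le V$. The leading correction is therefore $\tfrac{1}{L}\bigl(m_{ij}+\text{err}\bigr)$ times products of derivatives at $0$, which are bounded by $\bar\phi$. Higher orders are $O(L^{-2})$, and the number of terms is finite since $\phi$ has finite degree.

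\textbf{(R3).} Write $\alpha_{ij} = \phi'(0)^2 + \delta_{ij}$ with $|\delta_{ij}| \le \epsilon_{ij}$ from (R2), and expand each $\bs{A}_{*,ir}$ bilinearly into the main term $\phi'(0)^4 \bZin \bs{S}_* \bZin^\top$, two cross terms of order $\phi'(0)^2\delta$, and a $\delta^2$ term. For each weighted average $\tfrac{1}{LN}\sum_j \delta_{ij}(\cdot)(\cdot)^\top$, use the split $\delta_{ij} = \tfrac{\bar\phi}{L}m_{ij}+(\text{uniform error})$.
\begin{itemize}
\item The $m_{ij}$ part contains $\delta_{ij}$, which picks out $j=i$ and contributes a single rank-one term of size $\tfrac{1}{NL}\cdot\tfrac1L$. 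The $L/V$ part produces a multiple of the unweighted average.
\item The uniform error, of size $\tfrac{1}{L}\bigl(\tfrac{1}{\sqrt d}+\tfrac{1}{\sqrt V\wedge L}\bigr)$, multiplies a PSD average. For sign-indefinite weights, bound it through $|\delta_{ij}|$ against the PSD average, whose norm is controlled by \ref{event:discrete5}, \ref{event:discrete6}, \ref{event:discrete8} and \ref{event:discrete3}.
\end{itemize}
Finally, sandwich with $\bZin$ using $\lVert\bZin\rVert_2^2\le 2V/d$ from \ref{event:boundZin1}, so each bound picks up a factor $V/d$. The rank-one factor $\bZin\mathbbm{1}_V\mathbbm{1}_V^\top\bZin^\top$ in $\bs{A}_3$ is kept explicit, leaving a scalar $\bar\Delta_{3,ir}$ that is controlled by \ref{event:discrete3}.

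\textbf{(R4).} Apply the triangle inequality to (R3) and to the comparison of $\bZin\bs{S}_*\bZin^\top$ with its trace-scaled identity. Because $\bs{S}_*$ is nearly a scaled projection with known trace (\ref{event:discrete5}, \ref{event:discrete6}), and $\tfrac1V\bZin\bZin^\top \approx \tfrac1d \bs{I}_d$ to within $\log V/\sqrt{Vd}$ (the estimate in the proof of \ref{event:boundZin1}), we get $\bZin \bs{S}_* \bZin^\top \approx \tfrac{\tr(\bs{S}_*)}{d}\bs{I}_d$ plus the stated errors. For $\bs{S}_3$, use \ref{event:discrete7} directly.

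\textbf{Main obstacle.} I expect the delicate step to be (R3) for $\bs{A}_2$. The weighted sums of $(\bN_j^\top - \tfrac1V\mathbbm{1}\mathbbm{1}^\top)\mathbbm{1}(\bx_j-\tfrac1V\mathbbm{1})^\top$ have cancellations that the unweighted $\bs{S}_2$ bound exploits, and non-constant weights $\alpha_{ij}$ can destroy them. The first attempt is to write each weighted sum as $\bs{M}\,\diag(\delta_{i\cdot})$ for the appropriate unweighted data matrix $\bs{M}$, giving the bound $\lVert \bs{M}\rVert_2\max_j|\delta_{ij}|$. Here $\lVert\bs{M}\rVert_2^2$ is controlled by \ref{event:discrete8} and by \ref{event:discrete3}, which yields the $\sqrt{V/(NL)}$ prefactor appearing in the statement.
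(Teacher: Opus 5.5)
Your (R1)--(R3) follow the paper's route. (R1) comes from \ref{event:boundZin7} and \ref{event:discrete1}. (R2) is a Taylor expansion with Gaussian moments. (R3) writes $\alpha_{ij}=\phi'(0)^2+\delta_{ij}$, expands bilinearly, isolates $j=i$, dominates the PSD averages, and sandwiches with $\lVert\bZin\rVert_2^2\le 2V/d$. (In (R1), take $i=j$ in \ref{event:discrete1} to get $\lVert\bX_i^\top\mathbbm{1}_L\rVert_2^2\le 3L$; the $\log L$ from \ref{event:discrete2} cannot be absorbed into a constant $C_K$.)

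The genuine gap is in (R4). Events \ref{event:discrete5}--\ref{event:discrete6} give only the norm and trace of $\bs{S}_*$. Moreover, $\bs{S}_*$ is not in fact close in operator norm to a scaled projection $c\bs{P}$, with $\bs{P}=\bs{I}_V-\frac1V\mathbbm{1}_V\mathbbm{1}_V^\top$. The eigenvalues of $\bs{S}_1$ are roughly $n_k^2/(LN)^2$, and these vary at relative order one when $N\asymp V\log V$. Likewise $\bs{S}_2=\bs{B}\bs{B}^\top$, where $\bs{B}$ is essentially a square $V\times V$ matrix with independent columns (group $j$ by label), so its spectrum spreads Marchenko--Pastur-style over $[0,4c]$ for every $N$. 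Paying $\lVert\bZin\rVert_2^2\lVert\bs{S}_*-c\bs{P}\rVert_2\approx\frac{V}{d}\lVert\bs{S}_*-c\bs{P}\rVert_2$ therefore costs as much as the main term $\tr(\bs{S}_*)/d$. That is roughly a factor $\sqrt{V/d}$ more than the stated errors $\frac{\log V}{L^2V^{3/2}\sqrt d}$ and $\frac{\log V}{NL\sqrt{Vd}}$.

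The missing idea is that $\bs{S}_*$ is a function of the data alone, hence independent of $\bZin$. Conditionally on the data, apply Proposition~\ref{prop:gaussquadraticform} to $\bs{S}_*$ itself (you only use its special case $\bs{S}\propto\bs{I}_V$). This gives $\lVert\bZin\bs{S}_*\bZin^\top-\frac{\tr(\bs{S}_*)}{d}\bs{I}_d\rVert_2\lesssim\frac{\lVert\bs{S}_*\rVert_F}{\sqrt d}\log V+\lVert\bs{S}_*\rVert_2\log^2V$, which exploits averaging over the $V$ independent columns. Combined with $\lVert\bs{S}_*\rVert_F\le\sqrt V\lVert\bs{S}_*\rVert_2$ and the trace errors, this yields exactly the stated terms. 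This is the paper's argument; note that it is an extra high-probability event on $\bZin$, not literally contained in \ref{event:boundZin}$\cap$\ref{event:discrete}.

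Second, your (R2) claim that the leading correction is $\frac1L(m_{ij}+\mathrm{err})$ is not right. The paper's proof has the same slip: its double Taylor sum runs only over $u,v\ge1$, whereas symmetry kills only the odd $(u,0)$ and $(0,v)$ terms, not $(2,0)$ and $(0,2)$. At second order the variances appear:
$\alpha_{ij}-\phi'(0)^2=\tfrac12\phi'(0)\phi'''(0)\bigl(\E[u_i^2]+\E[u_j^2]\bigr)+\phi''(0)^2\E[u_iu_j]+O(L^{-2})$.
Similarly, $\beta_{ij}$ picks up $\tfrac12\phi(0)\phi^{(4)}(0)\E[u_i^2]+\tfrac12\phi''(0)^2\E[u_j^2]$. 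Since $\E[u_i^2]=\frac1L(m_{ii}+\mathrm{err})$ with $m_{ii}\ge\frac12$, the deviation for $i\ne j$ (where $m_{ij}=L/V$) is generically of exact order $1/L$, and the stated bound fails. For example, for cubic $\phi$ one gets $\lvert\beta_{ij}-\phi''(0)\phi(0)\rvert\approx\phi''(0)^2/(2L)$, which is nonzero because $\phi''(0)\ne0$ is assumed.

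Because $m_{ii}$ does not depend on $i$, this can be repaired by centering at corrected constants such as $\bar\alpha=\phi'(0)^2+\frac{m_{ii}}{L}\phi'(0)\phi'''(0)$. The off-diagonal deviations are then $\frac{C}{L}(m_{ij}+\mathrm{err})$, but $\phi'(0)^4$ in (R3)--(R4) must be replaced by $\bar\alpha^2$.
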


\begin{proof}
We have the following arguments.
\begin{itemize}[leftmargin = *]
\item  By \ref{event:boundZin7} and \ref{event:discrete1},  we have    \ref{cons:innerprodrestricted}.
\item  For  \ref{cons:coeffbound},  we assume   \ref{cons:innerprodrestricted} hold. Let $w_i  \coloneqq \frac{1}{L} \bw^\top  \bZin    \bX_i^\top \mathbbm{1}_L/ \norm{  \frac{1}{L}   \bZin    \bX_i^\top \mathbbm{1}_L }_2 .$  We write
\eq{
\Big \lvert \E_{\bs{w}} \big[ \phi^{\prime}  & \big( \norm{  \tfrac{1}{L}   \bZin    \bX_i^\top \mathbbm{1}_L }_2    w_i    \big)      \phi^\prime  \big(  \norm{  \tfrac{1}{L}   \bZin    \bX_i^\top \mathbbm{1}_L }_2 w_j \big) \big] -  \phi^\prime(0)^2 \Big \rvert \\ 
  &  \labelrel={conseq:eqq0}  \Big \lvert  \sum_{u, v = 1}^{p_{\star}}    \norm{  \tfrac{1}{L}   \bZin    \bX_i^\top \mathbbm{1}_L }_2^u   \norm{  \tfrac{1}{L}   \bZin    \bX_j^\top \mathbbm{1}_L }_2^v  \frac{ \E_{\bs{w}} \big[ w_i^u w_j^v \big]}{u! v!} \phi^{(u+1)}(0)  \phi^{(v+1)}(0)  \Big \rvert  \\
&   \labelrel={conseq:eqq1}  \Big \lvert    \tfrac{1}{L^2}    \mathbbm{1}_L^\top \bX_j   \bZin^\top    \bZin    \bX_i^\top \mathbbm{1}_L  \phi^{(2)}(0)  \phi^{(2)}(0) \\
& \qquad +  \sum_{\substack{ u, v = 1 \\ \text{$u + v$ is even} \\ u + v > 2}}^{p_{\star}}    \norm{  \tfrac{1}{L}   \bZin    \bX_i^\top \mathbbm{1}_L }_2^u   \norm{  \tfrac{1}{L}   \bZin    \bX_j^\top \mathbbm{1}_L }_2^v  \frac{\E_{\bs{w}} \big[ w_i^u w_j^v \big]}{u! v!} \phi^{(u+1)}(0)  \phi^{(v+1)}(0) \Big \rvert  \\
& \labelrel\leq{conseq:ineqq2}  \frac{ \bar{\phi} }{L} \big( m_{ij} + C_ K  \frac{ \log V}{\sqrt{d}} +  C_K \frac{\log^2 V}{\sqrt{V} \wedge L} \big) + O \Big( \frac{1}{L^2} \Big),
}
where we used Taylor expansion of $\phi$ and $\E_{\bs{w}} \big[\bw^\top  \bZin    \bX_i^\top \mathbbm{1}_L \big] = 0$ in \eqref{conseq:eqq0},  $\E[Z^{u}_1 Z^v_2] = 0$ if  $u + v$  is odd for jointly Gaussian $(Z_1, Z_2)$ in \eqref{conseq:eqq1}, and \ref{cons:innerprodrestricted} in \eqref{conseq:ineqq2}.
Similarly,
\eq{
  \big \lvert \E \big[ & \phi^{\prime \prime}  \big(   \norm{  \tfrac{1}{L}   \bZin    \bX_i^\top \mathbbm{1}_L }_2 w_i    \big)      \phi   \big(  \norm{  \tfrac{1}{L}   \bZin    \bX_i^\top \mathbbm{1}_L }_2 w_j \big) \big] -  \phi(0) \phi^{(2)}(0) \big \rvert \\ 
  &   =  \Big \lvert  \sum_{u, v = 1}^{k_2}    \norm{  \frac{1}{L}   \bZin    \bX_i^\top \mathbbm{1}_L }_2^u   \norm{  \frac{1}{L}   \bZin    \bX_j^\top \mathbbm{1}_L }_2^v  \frac{ \E \big[ w_i^u w_j^v \big]}{u! v!} \phi^{(u+2)}(0)  \phi^{(v)}(0)  \Big \rvert  \\
  &   =  \Big \lvert    \frac{1}{L^2}    \mathbbm{1}_L^\top \bX_j   \bZin^\top    \bZin    \bX_i^\top \mathbbm{1}_L  \phi(0)  \phi^{(2)}(0) \\
& +  \sum_{\substack{ u, v = 1 \\ \text{$u + v$ is even} \\ u + v > 2}}^{k_2}    \norm{  \frac{1}{L}   \bZin    \bX_i^\top \mathbbm{1}_L }_2^u   \norm{  \frac{1}{L}   \bZin    \bX_j^\top \mathbbm{1}_L }_2^v  \frac{\E \big[ w_i^u w_j^v \big]}{u! v!} \phi^{(u+2)}(0)  \phi^{(v)}(0) \Big \rvert  \\
& \leq  \frac{ \bar{\phi} }{L} \big( m_{ij} + C K  \frac{ \log V}{\sqrt{d}} +  C K \frac{\log^2 V}{\sqrt{V} \wedge L} \big) + O \Big( \frac{1}{L^2} \Big).
}
\item For    \ref{cons:matrixbounds1},  we define
\eq{
& \bar{\Delta}_{1, ir}  \coloneqq  \Big( \frac{1}{L N} \sum_{j = 1}^N  (\alpha_{ij} - \phi^\prime(0)^2 )  (\bx_j \! - \! \tfrac{1}{V} \mathbbm{1}_V )     ( \bx_j \! - \! \tfrac{1}{V} \mathbbm{1}_{V} )^\top     \Big)  \Big( \frac{1}{L N} \sum_{j = 1}^N  \phi^\prime(0)^2 (\bx_j  \! - \! \tfrac{1}{V} \mathbbm{1}_V )       ( \bx_j -   \tfrac{1}{V} \mathbbm{1}_{V} )^\top     \Big)  \\
&  + \!
 \Big( \frac{1}{L N} \sum_{j = 1}^N   \phi^\prime(0)^2 (\bx_j \! - \!\tfrac{1}{V} \mathbbm{1}_V )     ( \bx_j \! - \!  \tfrac{1}{V} \mathbbm{1}_{V} )^\top     \Big)  \Big( \frac{1}{L N} \sum_{j = 1}^N    (\alpha_{rj} - \phi^\prime(0)^2 ) (\bx_j \! - \! \tfrac{1}{V} \mathbbm{1}_V )       ( \bx_j \! - \!  \tfrac{1}{V} \mathbbm{1}_{V} )^\top     \Big)  \\
& + \!
 \Big( \frac{1}{L N} \sum_{j = 1}^N    (\alpha_{ij} - \phi^\prime(0)^2 ) (\bx_j  \! - \! \tfrac{1}{V} \mathbbm{1}_V )     ( \bx_j \! - \!  \tfrac{1}{V} \mathbbm{1}_{V} )^\top     \Big) \Big( \frac{1}{L N} \sum_{j = 1}^N    (\alpha_{rj} - \phi^\prime(0)^2 ) (\bx_j \! - \! \tfrac{1}{V} \mathbbm{1}_V )       ( \bx_j \! - \!   \tfrac{1}{V} \mathbbm{1}_{V} )^\top     \Big). 
}
We have
\eq{
\norm{  \bar{\Delta}_{1, ir}   }_2 & \leq \frac{C  \phi^\prime(0)^2  \sup_{i} \abs{ \alpha_{ii} - \phi^\prime(0)^2}}{ L N }    \lVert   \bs{S}_1   \rVert_2^{\frac{1}{2}}      +  \phi^\prime(0)^2  \sup_{i \neq j} \abs{ \alpha_{ij} - \phi^\prime(0)^2}    \lVert   \bs{S}_1   \rVert_2  \\
&  \labelrel\leq{conseq:ineqq3}  C  \phi^\prime(0)^2 \Big(  \frac{1}{N V  L^3} +   \frac{1}{V^2 L^2} \frac{1}{V \wedge L^2 \wedge L \sqrt{d} } \Big),
}
where we used  \ref{cons:coeffbound} and \ref{event:discrete5} in \eqref{conseq:ineqq3}. By \ref{event:boundZin1}, we have
\eq{
\norm{\Delta_{1, ir}}_2  = \norm{ \Zin   \bar{\Delta}_{1, ir} \Zin^\top }_2  \leq C  \phi^\prime(0)^2 \Big(  \frac{1}{N d  L^3} +   \frac{1}{V d L^2} \frac{1}{V \wedge L^2 \wedge L \sqrt{d} } \Big).
}
Moreover, we define
\eq{
  \bar{\Delta}_{2, ir} & \coloneqq   \Big( \frac{1}{NL} \sum_{j = 1}^N  (\alpha_{ij} - \phi^\prime(0)^2 ) (\bN_j^\top  - \tfrac{1}{V} \mathbbm{1}_V  \mathbbm{1}_{L -1}^\top  )   \mathbbm{1}_{L -1}    ( \bx_j -   \tfrac{1}{V} \mathbbm{1}_{V} )^\top     \Big)  \\
& \hspace{8em} \times \Big( \frac{1}{NL} \sum_{j = 1}^N  \phi^\prime(0)^2     ( \bx_j -   \tfrac{1}{V} \mathbbm{1}_{V} )  \mathbbm{1}_{L -1}^\top (\bN_j^\top  - \tfrac{1}{V} \mathbbm{1}_V  \mathbbm{1}_{L -1}^\top  )  ^\top        \Big)  \\
& \quad  + 
 \Big( \frac{1}{NL} \sum_{j = 1}^N   \phi^\prime(0)^2   (\bN_j^\top  - \tfrac{1}{V} \mathbbm{1}_V  \mathbbm{1}_{L -1}^\top  )   \mathbbm{1}_{L -1}    ( \bx_j -   \tfrac{1}{V} \mathbbm{1}_{V} )^\top    \Big)   \\
 & \hspace{8em} \times  \Big( \frac{1}{NL} \sum_{j = 1}^N    (\alpha_{rj} - \phi^\prime(0)^2 )    ( \bx_j -   \tfrac{1}{V} \mathbbm{1}_{V} )  \mathbbm{1}_{L -1}^\top (\bN_j^\top  - \tfrac{1}{V} \mathbbm{1}_V  \mathbbm{1}_{L -1}^\top  )  ^\top        \Big)  \\
& \quad  + 
 \Big( \frac{1}{NL} \sum_{j = 1}^N    (\alpha_{ij} - \phi^\prime(0)^2 )  (\bN_j^\top  - \tfrac{1}{V} \mathbbm{1}_V  \mathbbm{1}_{L -1}^\top  )   \mathbbm{1}_{L -1}    ( \bx_j -   \tfrac{1}{V} \mathbbm{1}_{V} )^\top       \Big)    \\
 & \hspace{8em} \times  \Big( \frac{1}{NL} \sum_{j = 1}^N    (\alpha_{rj} - \phi^\prime(0)^2 )      ( \bx_j -   \tfrac{1}{V} \mathbbm{1}_{V} )  \mathbbm{1}_{L -1}^\top (\bN_j^\top  - \tfrac{1}{V} \mathbbm{1}_V  \mathbbm{1}_{L -1}^\top  )  ^\top       \Big). 
}
We have
\eq{
\norm{   \bar{\Delta}_{2, ir}  }_2  & \leq 
  \phi^\prime(0)^2     \lVert   \bs{S}_2    \rVert_2^{\frac{1}{2}}      \Big \lVert     \frac{1}{NL} \sum_{ j = 1 }^N    (\alpha_{rj} - \phi^\prime(0)^2 )    ( \bx_j -   \tfrac{1}{V} \mathbbm{1}_{V} )  \mathbbm{1}_{L -1}^\top (\bN_j^\top  - \tfrac{1}{V} \mathbbm{1}_V  \mathbbm{1}_{L -1}^\top  )  ^\top          \Big \rVert_2   \\
& +  \phi^\prime(0)^2   \lVert   \bs{S}_2    \rVert_2^{\frac{1}{2}}   \Big \lVert   \frac{1}{NL} \sum_{ j = 1 }^N    (\alpha_{ij} - \phi^\prime(0)^2 )    ( \bx_j -   \tfrac{1}{V} \mathbbm{1}_{V} )  \mathbbm{1}_{L -1}^\top (\bN_j^\top  - \tfrac{1}{V} \mathbbm{1}_V  \mathbbm{1}_{L -1}^\top  )  ^\top         \Big \rVert_2   \\
 & +  \Big \lVert     \frac{1}{NL} \sum_{ j = 1  }^N    (\alpha_{rj} - \phi^\prime(0)^2 )    ( \bx_j -   \tfrac{1}{V} \mathbbm{1}_{V} )  \mathbbm{1}_{L -1}^\top (\bN_j^\top  - \tfrac{1}{V} \mathbbm{1}_V  \mathbbm{1}_{L -1}^\top  )  ^\top         \Big \rVert_2   \\
 & \hspace{3em} \times \Big \lVert     \frac{1}{NL} \sum_{ j = 1 }^N    (\alpha_{ij} - \phi^\prime(0)^2 )    ( \bx_j -   \tfrac{1}{V} \mathbbm{1}_{V} )  \mathbbm{1}_{L -1}^\top (\bN_j^\top  - \tfrac{1}{V} \mathbbm{1}_V  \mathbbm{1}_{L -1}^\top  )  ^\top      \Big \rVert_2.
}
We observe that
\eq{
 \Big \lVert     \frac{1}{NL} \sum_{ j = 1 }^N    (\alpha_{ij} & - \phi^\prime(0)^2 )  ( \bx_j -   \tfrac{1}{V} \mathbbm{1}_{V} )   \mathbbm{1}_{L -1}^\top (\bN_j^\top  - \tfrac{1}{V} \mathbbm{1}_V  \mathbbm{1}_{L -1}^\top  )  ^\top      \Big \rVert_2 
 \\ 
 &  \leq  \Big \lVert     \frac{1}{NL}      (\alpha_{ii} - \phi^\prime(0)^2 )    ( \bx_i -   \tfrac{1}{V} \mathbbm{1}_{V} )  \mathbbm{1}_{L -1}^\top (\bN_i^\top  - \tfrac{1}{V} \mathbbm{1}_V  \mathbbm{1}_{L -1}^\top  )  ^\top      \Big \rVert_2 \\
 & + \Big \lVert     \frac{1}{NL} \sum_{ \substack{j = 1 \\ j \neq i } }^N    (\alpha_{ij} - \phi^\prime(0)^2 )    ( \bx_j -   \tfrac{1}{V} \mathbbm{1}_{V} )  \mathbbm{1}_{L -1}^\top (\bN_j^\top  - \tfrac{1}{V} \mathbbm{1}_V  \mathbbm{1}_{L -1}^\top  )  ^\top      \Big \rVert_2 \\ 
 & \labelrel\leq{conseq:ineqq4} \frac{C}{NL\sqrt{L}} +
 \frac{\sup_{i \neq j} \lvert \alpha_{ij} - \phi^\prime(0)^2 \rvert}{\sqrt{L}} \Big \lVert   \frac{1}{N } \sum_{ \substack{j = 1 \\ j \neq i } }^N        ( \bx_j -   \tfrac{1}{V} \mathbbm{1}_{V} )     ( \bx_j -   \tfrac{1}{V} \mathbbm{1}_{V} )^\top   \Big \rVert_2 \\
  & +
 \frac{\sup_{i \neq j} \lvert \alpha_{ij} - \phi^\prime(0)^2 \rvert}{\sqrt{L}} \Big \lVert   \frac{1}{NL } \sum_{ \substack{j = 1 \\ j \neq i } }^N         ( \bN_j^\top -   \tfrac{1}{V} \mathbbm{1}_{L-1} )   \mathbbm{1}_{L-1} \mathbbm{1}_{L-1}^\top  ( \bN_j^\top -   \tfrac{1}{V} \mathbbm{1}_{L-1} )^\top  \Big \rVert_2 \\
 & \labelrel\leq{conseq:ineqq5}   \frac{C}{NL\sqrt{L}} +   \frac{C}{V\sqrt{L}}  \frac{1}{V \wedge L^2 \wedge L\sqrt{d}}.
}
where we used \ref{cons:coeffbound} and \ref{event:discrete1} in \eqref{conseq:ineqq4}, and 
\ref{cons:coeffbound}, \ref{event:discrete5} and \ref{event:discrete8} in \eqref{conseq:ineqq5}. Then, by \ref{event:discrete6}, we have
\eq{
  \norm{   \bar{\Delta}_{2, ir}  }_2 \leq   \frac{C}{\sqrt{NVL}}     \Big(   \frac{1}{NL^{\frac{3}{2}}} + \frac{1}{V \sqrt{L}}  \frac{1}{V \wedge L^2 \wedge L \sqrt{d}} \Big) +  C^2  \Big(   \frac{1}{NL^{\frac{3}{2}}} + \frac{1}{V \sqrt{L}}  \frac{1}{V \wedge L^2 \wedge L \sqrt{d}} \Big)^2.
}
Therefore, by \ref{event:boundZin1}
\eq{
\norm{\Delta_{2, ir}}_2  = \norm{ \Zin   \bar{\Delta}_{2, ir} \Zin^\top }_2  \leq \frac{C \sqrt{V}}{ d \sqrt{NL}}     \Big(   \frac{1}{NL^{\frac{3}{2}}} + \frac{1}{V \sqrt{L}}  \frac{1}{V \wedge L^2 \wedge L \sqrt{d}} \Big) .
}
Lastly, we define
\eq{
  \bar{ \Delta }_{3,ir} & \coloneqq       \Big( \frac{1}{N} \sum_{j = 1}^N (  \alpha_{ij} - \phi^\prime(0)^2   )   ( \bx_j -   \tfrac{1}{V} \mathbbm{1}_{V}    )   \Big) ^\top   \Big( \frac{1}{N} \sum_{j = 1}^N  \phi^\prime(0)^2 (\bx_j  - \tfrac{1}{V} \mathbbm{1}_V )          \Big)    \\
 &+     \Big( \frac{1}{N} \sum_{j = 1}^N  \phi^\prime(0)^2  ( \bx_j -   \tfrac{1}{V} \mathbbm{1}_{V}    )   \Big) ^\top   \Big( \frac{1}{N} \sum_{j = 1}^N  (\alpha_{rj}  -  \phi^\prime(0)^2) (\bx_j  - \tfrac{1}{V} \mathbbm{1}_V )          \Big)     \\
& +     \Big( \frac{1}{N} \sum_{j = 1}^N  (  \alpha_{ij} - \phi^\prime(0)^2   )   ( \bx_j -   \tfrac{1}{V} \mathbbm{1}_{V}    )   \Big) ^\top   \Big( \frac{1}{N} \sum_{j = 1}^N   (  \alpha_{rj} - \phi^\prime(0)^2   )   (\bx_j  - \tfrac{1}{V} \mathbbm{1}_V )          \Big)   
}
We have
\eq{
\lvert  \bar{\Delta}_{3,ir}  \rvert &\leq   \phi^\prime(0)^2 \Big \lVert   \frac{1}{N} \sum_{j = 1}^N  ( \bx_j -   \tfrac{1}{V} \mathbbm{1}_{V}    )   \Big \rVert_2
  \Big \lVert    \frac{1}{N} \sum_{ j = 1 }^N (  \alpha_{ij} - \phi^\prime(0)^2   )   ( \bx_j -   \tfrac{1}{V} \mathbbm{1}_{V}    )     \Big \rVert_2 \\
 & +  \phi^\prime(0)^2 \Big \lVert   \frac{1}{N} \sum_{j = 1}^N  ( \bx_j -   \tfrac{1}{V} \mathbbm{1}_{V}    )   \Big \rVert_2
  \Big \lVert    \frac{1}{N} \sum_{ j = 1  }^N (  \alpha_{rj} - \phi^\prime(0)^2   )   ( \bx_j -   \tfrac{1}{V} \mathbbm{1}_{V}    )     \Big \rVert_2  \\
& +  \Big \lVert    \frac{1}{N} \sum_{ j = 1 }^N (  \alpha_{ij} - \phi^\prime(0)^2   )   ( \bx_j -   \tfrac{1}{V} \mathbbm{1}_{V}    )   \underbrace{   \Big \rVert_2     \Big \lVert    \frac{1}{N} \sum_{ j = 1 }^N (  \alpha_{rj} - \phi^\prime(0)^2   )   ( \bx_j -   \tfrac{1}{V} \mathbbm{1}_{V}    )     \Big \rVert_2 }_{\leq \frac{C}{N L} + \frac{C}{\sqrt{N}} \frac{1}{V \wedge L^2 \wedge L \sqrt{d}}} \\
& \leq \frac{C  \phi^\prime(0)^2}{N}  \Big( \frac{1}{N L} + \frac{1}{\sqrt{N}} \frac{1}{V \wedge L^2 \wedge L \sqrt{d}} \Big) +   \Big( \frac{1}{N L} + \frac{1}{\sqrt{N}} \frac{1}{V \wedge L^2 \wedge L \sqrt{d}} \Big)^2. \label{eq:deltabar3bound}
}
We observe that $\Delta_{3, ir} =  \frac{\bar{\Delta}_{3,ir}}{V^2 L^2} \Zin \mathbbm{1}_V\mathbbm{1}_V^\top \Zin^\top$, and by \eqref{eq:deltabar3bound}, the last result follows.
\item For    \ref{cons:matrixbounds2},  we assume  \ref{event:boundZin1},  \ref{event:boundZin2},  \ref{event:discrete3},  and \ref{event:discrete5}-\ref{event:discrete7}.  We write
\eq{
 \bs{A}_{1, ir} & -  \frac{  \phi^\prime(0)^4 }{d}    \frac{(1- \frac{1}{V})}{L^2} \big(   \frac{1}{N} +   \frac{1}{V} \big)  \bs{I}_d  \\
 & = \Delta_{1, ir} + \phi^\prime(0)^4  \Big( \bZin \bs{S}_1 \bZin  \pm  \frac{\tr(\bs{S}_1)}{d} \bs{I}_d -  \frac{  1}{d} \frac{(1- \frac{1}{V})}{L^2}\big(     \frac{1}{N} +   \frac{1}{V} \big)  \bs{I}_d    \Big).
}
We have
\eq{
& \Big \lVert  \bs{A}_{1, ir}   -  \frac{  \phi^\prime(0)^4 }{d} \frac{ (1- \frac{1}{V}) }{L^2} \big(   \frac{1}{N} +   \frac{1}{V} \big)  \bs{I}_d \Big \rVert_2 \\
& \leq \norm{ \Delta_{1, ir} }_2 + 2   \phi^\prime(0)^4 \log V  \frac{\lVert \bs{S}_1 \rVert_F}{\sqrt{d}} 
 +  \frac{\lvert \tr(\bs{S}_1)  -    \frac{(1- \frac{1}{V}) }{L^2} \big(   \frac{1}{N} +  \frac{1}{V} \big)   \rvert}{d}    \\
& \leq    C_K  \phi^\prime(0)^2 \Big(  \frac{1}{N d  L^3} +   \frac{1}{V d L^2} \frac{1}{V \wedge L^2 \wedge L \sqrt{d} } \Big)  
 +    C_K  \phi^\prime(0)^4   \Big(   \frac{\log V}{L^2 V^{3/2} \sqrt{d}} +   \frac{  \log^2 V}{L^2 N \sqrt{V} d}    \Big).
} 
Similarly,
\eq{
& \Big \lVert  \bs{A}_{2, ir}    -  \frac{  \phi^\prime(0)^4 }{d}   (1 - \frac{1}{V})^2 \frac{L - 1}{L^2 N}   \bs{I}_d \Big \rVert_2 \\
& \leq \norm{ \Delta_{2, ir} }_2 + 2   \phi^\prime(0)^4 \log V  \frac{\lVert \bs{S}_2 \rVert_F}{\sqrt{d}} 
 +  \frac{\lvert \tr(\bs{S}_2)  -     (1 - \frac{1}{V})^2 \frac{L - 1}{L^2 N} \rvert}{d}    \\
& \leq     \frac{C_K \sqrt{V}}{ d \sqrt{NL}}     \Big(   \frac{1}{NL^{\frac{3}{2}}} + \frac{1}{V \sqrt{L}}  \frac{1}{V \wedge L^2 \wedge L \sqrt{d}} \Big)
 +    C_K  \phi^\prime(0)^4   \Big(   \frac{\log V}{N L \sqrt{V d}} +   \frac{ \log^3 V}{N \sqrt{LV} d}     \Big).
}
Lastly,
\eq{
& \bs{A}_{3, ir}    -  \frac{  \phi^\prime(0)^4 }{N}   \frac{1}{V^2 L^2} \bZin \mathbbm{1}_V \mathbbm{1}_V^\top \bZin^\top   =    \Delta_{3, ir} +     \phi^\prime(0)^4   \Big( \Big \lVert \frac{1}{N} \sum_{j = 1}^N    ( \bx_j -   \frac{1}{V} \mathbbm{1}_{V}    )   \Big \rVert_2^2   - \frac{1}{N}     \Big)  \frac{1}{V^2 L^2}  \bZin \mathbbm{1}_V \mathbbm{1}_V^\top \bZin^\top.
}
By \ref{event:discrete3}, we have
\eq{
 \frac{C K^2 \log^2 V}{N \sqrt{V}}   \frac{1}{V^2 L^2}  \bZin \mathbbm{1}_V \mathbbm{1}_V^\top \bZin^\top & \preceq \Big( \Big \lVert \frac{1}{N} \sum_{j = 1}^N    ( \bx_j -   \frac{1}{V} \mathbbm{1}_{V}    )   \Big \rVert_2^2   - \frac{1}{N}     \Big)  \frac{1}{V^2 L^2}  \bZin \mathbbm{1}_V \mathbbm{1}_V^\top \bZin^\top \\
 & \preceq \frac{C K^2 \log^2 V}{N \sqrt{V}}   \frac{1}{V^2 L^2}  \bZin \mathbbm{1}_V \mathbbm{1}_V^\top \bZin^\top.
}
 By \ref{cons:matrixbounds1}, the result follows.
\end{itemize}
\end{proof}

\begin{proposition}
\label{prop:intermediateresults1}
We recall that  $ \mathsf{z}_k  = \bs{z}_k  +   \indic{k = 1}   \ztrig$.    Given that   \ref{event:boundZin} holds,  the following statements hold:
\begin{enumerate}[leftmargin=*, label= (P\arabic*)]
\item \label{event:prop8result1}
We have for $i \neq j$ and any $k \in [V]$,
\eq{
\Big \lvert  \E \big[ (\indic{\bx_i = \bx_j} - \tfrac{1}{V} )   \mathsf{z}_k^\top    \bZin  \bX_i^\top      \bX_i  \bZin^\top    \bZin    \bX_j^\top   \bX_j  \bZin^\top \mathsf{z}_{k}  \vert \sZin  \big]  \Big \rvert   \leq   \frac{C}{V d}. 
 }
\item 
\label{event:prop8result2} For any  $k \in [V]$,
\eq{
 \E[    \norm{   \bZin    \bX_i^\top   \bX_i \bZin^\top    \mathsf{z}_{k}  }_2^2 \vert \sZin  \big] \leq C \Big( \frac{L}{d} + \frac{L^2}{d^2} \Big).
}
\item 
\label{event:prop8result3}
We have   for $i \neq j$ and any $k \in [V]$,
\eq{
\Big \lvert  \E \big[  (\indic{\bx_i = \bx_j}  -  \tfrac{1}{V} ) 
 \mathsf{z}_{k}^\top    \bZin  \bX_i^\top      \bX_i  \bZin^\top   \bZin \mathbbm{1}_V  \mathbbm{1}_V^\top  \bZin^\top     \bZin    \bX_r^\top   \bX_r \bZin^\top  \mathsf{z}_{k} \vert \sZin \Big]  \Big \rvert \leq    \frac{ C \log^2 V}{d^2}. 
  }
  \item
  \label{event:prop8result4}
  For any $k \in [V]$,
\eq{
 \E \big[     \big( \mathbbm{1}_V^\top \bZin ^\top \bZin    \bX_i^\top   \bX_i \bZin^\top  \mathsf{z}_{k} \big )^2  \vert \sZin \Big]  \leq C   \frac{  V \log^2 V}{d} \Big(  \frac{L  }{d } + \frac{L^2}{d^2}  \Big).
  }
\item 
\label{event:prop8result5}
For notational convenience,  let 
\eq{
\varsigma \coloneqq \mathsf{z}_{k}^\top    \bZin    \Big(     \bX_i^\top     \bX_i  - \frac{L}{V} \bs{I}_V \Big)   \bZin^\top      \bZin  \big(  \bX_i^\top  - \frac{1}{V} \mathbbm{1}_V \mathbbm{1}_L^\top \big)  \mathbbm{1}_L.
}
For any $i, k \in [V]$,
\eq{
  \lvert   \E  [  \varsigma \vert \sZin  ]   \rvert   \leq \frac{C L \log V}{\sqrt{V d}} ~~ \text{and} ~~ \E  [   \varsigma^2    \vert   \sZin ]  \leq   C \log^2 V  \Big( \frac{L}{d} + \frac{L^2}{d^2} \Big).
}
\end{enumerate}
\end{proposition}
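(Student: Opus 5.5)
All five items are conditional moments over the token data $\{\bX_i\}$ with $\sZin$ fixed. In every case we expand the matrices into sums over token counts and reduce to the inner-product bounds of \ref{event:boundZin}. Write $\bX_i^\top\bX_i=\diag(\bs{n}^{(i)})$, where $\bs{n}^{(i)}=\bX_i^\top\mathbbm{1}_L$ is the count vector; the columns of $\bX_i$ are one-hot, so this identity is exact. Then
\[
\bZin\bX_i^\top\bX_i\bZin^\top\mathsf{z}_k=\sum_{u=1}^V n^{(i)}_u\,\bs{z}_u\langle\bs{z}_u,\mathsf{z}_k\rangle .
\]
The count vector is multinomial: $n_u^{(i)}$ has mean $L/V$, variance about $L/V$, and covariances of order $-L/V^2$. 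For $u\neq k$, \ref{event:boundZin1} gives $|\langle\bs{z}_u,\mathsf{z}_k\rangle|\le \log V/\sqrt d$, and $\lVert\bs{z}_u\rVert_2\le2$.

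\textbf{(P2).} Expand the squared norm as a double sum over $(u,v)$ of $\E[n_un_v]\langle\bs{z}_u,\mathsf{z}_k\rangle\langle\bs{z}_v,\mathsf{z}_k\rangle\langle\bs{z}_u,\bs{z}_v\rangle$, where $\E[n_un_v]=\delta_{uv}L/V+L(L-1)/V^2$.

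\emph{Diagonal part.} This contributes $\frac{L}{V}\sum_u\langle\bs{z}_u,\mathsf{z}_k\rangle^2\lVert\bs{z}_u\rVert^2$. Bounding it by $\frac{L}{V}\cdot 4\,\mathsf{z}_k^\top\bZin\bZin^\top\mathsf{z}_k\le \frac{CL}{d}$ uses \ref{event:boundZin1}. The single term $u=k$ contributes $O(L/V)\le O(L/d)$.

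\emph{Off-diagonal part.} This contributes $\frac{L^2}{V^2}\lVert\bZin\bZin^\top\mathsf{z}_k\rVert^2\le \frac{L^2}{V^2}\big(\tfrac{2V}{d}\big)^2\cdot4=\frac{CL^2}{d^2}$.

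This gives (P2) directly.

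\textbf{(P4).} Proceed identically with the vector $\bZin^\top\bZin\mathbbm{1}_V$ inserted. \ref{event:boundZin2} gives $\lVert\bZin^\top\bZin\mathbbm{1}_V\rVert_\infty\le\sqrt V\log V/\sqrt d$, which supplies the extra factor $V\log^2V/d$. The off-diagonal term is handled with \ref{event:boundZin3}, via the bound $|\mathsf{z}_k^\top\bZin\bZin^\top\bZin\mathbbm{1}_V|\lesssim \log V (V/d)^{3/2}$.

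\textbf{(P1) and (P3).} These involve two independent samples $i\neq j$, coupled only through the centered factor $\indic{\bx_i=\bx_j}-\frac1V$. Here $\bx_i$ denotes the informative token (first row of $\bX_i$). Condition on $\bx_i=\be_a$ and $\bx_j=\be_b$. Then $\E[\bX_i^\top\bX_i\mid \bx_i=\be_a]=\be_a\be_a^\top+\frac{L-1}{V}\bs{I}_V$. The noninformative contribution to the conditional expectation does not depend on $a$, so after multiplying by the centered indicator and averaging over $(a,b)$, that contribution cancels exactly. What survives is
\[
\frac1V\sum_a M_{aa}-\frac1{V^2}\sum_{a,b}M_{ab},
\]
where $M_{ab}$ collects the terms with at least one $\be_a\be_a^\top$ or $\be_b\be_b^\top$ factor. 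In (P1) the surviving terms are
\[
\frac1V\sum_a\langle\bs{z}_a,\mathsf{z}_k\rangle^2\lVert\bs{z}_a\rVert^2
\quad\text{and}\quad
\frac{L}{V^2}\sum_a\langle \bs{z}_a,\mathsf{z}_k\rangle\, \bs{z}_a^\top\bZin\bZin^\top\mathsf{z}_k ,
\]
plus the subtracted analogues. Each is $O(1/(Vd))$ by \ref{event:boundZin1}: the first because $\sum_a\langle\bs{z}_a,\mathsf{z}_k\rangle^2\le 2V/d$, the second because $\mathsf{z}_k^\top(\bZin\bZin^\top)^2\mathsf{z}_k\le C V^2/d^2$ and $L\le d$ up to logs in the relevant regime. (P3) is the same computation with the rank-one middle matrix $\bZin^\top\bZin\mathbbm{1}_V\mathbbm{1}_V^\top\bZin^\top\bZin$. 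Its two resulting scalar factors are each bounded using \ref{event:boundZin2}--\ref{event:boundZin3}, giving $\log^2V/d^2$.

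\textbf{(P5).} Again write $\bs{n}=\bs{n}^{(i)}$. Then $\varsigma=\sum_{u,v}(n_u-\frac LV)\,\langle\mathsf{z}_k,\bs{z}_u\rangle\langle\bs{z}_u,\bs{z}_v\rangle(n_v-\frac LV)$, i.e.\ a quadratic form in the centered multinomial vector. Its mean is $\sum_{u,v}\mathrm{Cov}(n_u,n_v)(\cdot)$. The diagonal covariance gives $\frac LV\sum_u\langle\mathsf{z}_k,\bs{z}_u\rangle\lVert\bs{z}_u\rVert^2$, which is $\frac{L}{V}\cdot C\log V\sqrt{V/d}$ by the $\diag(\bZin^\top\bZin)$ bound in \ref{event:boundZin3}. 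The $-L/V^2$ rank-one covariance part is controlled by \ref{event:boundZin3} as well. Together these give the stated $L\log V/\sqrt{Vd}$. For the second moment, bound $\E[\varsigma^2]$ by the (P2)-type computation using fourth multinomial moments; the structure matches (P2), giving $\log^2V(L/d+L^2/d^2)$.

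The main obstacle is bookkeeping: exact cancellation in the centered indicator for (P1)/(P3), and the fourth-moment multinomial terms in (P5). I would first treat the counts as independent Poisson$(L/V)$ to identify the leading terms, then verify that the multinomial correction terms are of lower order.
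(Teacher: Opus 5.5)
The gap is in (P1) and (P3). After conditioning on $\bx_i=\be_a$, $\bx_j=\be_b$ you correctly get the middle factor $(\be_a\be_a^\top+c\,\bs{I}_V)\,\bs{G}\,(\be_b\be_b^\top+c\,\bs{I}_V)$. Here $c=\tfrac{L-1}{V}$, and $\bs{G}=\bZin^\top\bZin$ for (P1) or $\bs{G}=\bZin^\top\bZin\mathbbm{1}_V\mathbbm{1}_V^\top\bZin^\top\bZin$ for (P3). You then discard only the $c^2$ term. But the two cross terms each depend on only one of $a,b$. Since $\E[\indic{\bx_i=\bx_j}-\tfrac1V\mid\bx_i]=0$, and likewise given $\bx_j$, they vanish identically as well. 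This is the paper's ``independence of the rows'' step, which leaves only $\bx_i\bx_i^\top\bs{G}\,\bx_j\bx_j^\top$.

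You instead keep the cross terms and bound them separately from their subtracted analogues, with which they in fact cancel exactly. Each piece is of order $\tfrac{L}{V^3}\lVert\bZin\bZin^\top\mathsf{z}_k\rVert_2^2\asymp\tfrac{L}{Vd^2}$ in (P1), and up to $\tfrac{L}{V^3}(\mathsf{z}_k^\top\bZin\bZin^\top\bZin\mathbbm{1}_V)^2\lesssim\tfrac{L\log^2V}{d^3}$ in (P3). So the triangle inequality reaches the targets $\tfrac{C}{Vd}$ and $\tfrac{C\log^2V}{d^2}$ only when $L\lesssim d$. The proposition imposes no relation between $L$ and $d$, and the paper's regime explicitly allows $L\gg d$; this is why $d\wedge L$ appears throughout Theorem~\ref{thm:main}. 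Your appeal to ``$L\le d$ in the relevant regime'' is therefore unjustified, and the argument fails there.

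There is also a factor-$V$ slip. The correct averaging identity is $\E_{a,b}[(\indic{a=b}-\tfrac1V)M_{ab}]=\tfrac{1}{V^2}\sum_a M_{aa}-\tfrac{1}{V^3}\sum_{a,b}M_{ab}$. With your prefactor $\tfrac1V$, your first displayed term is $O(1/d)$, not $O(1/(Vd))$.

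With both corrections, (P1) reduces to
\[
\frac{1}{V^2}\sum_{a}\langle\bs{z}_a,\mathsf{z}_k\rangle^2\lVert\bs{z}_a\rVert_2^2-\frac{1}{V^3}\lVert\bZin\bZin^\top\mathsf{z}_k\rVert_2^2 ,
\]
which is at most $\tfrac{C}{Vd}$ by (E1.1). Similarly, (P3) reduces to $\tfrac{1}{V^2}\sum_a\langle\bs{z}_a,\mathsf{z}_k\rangle^2(\bs{z}_a^\top\bZin\mathbbm{1}_V)^2-\tfrac{1}{V^3}(\mathsf{z}_k^\top\bZin\bZin^\top\bZin\mathbbm{1}_V)^2\le\tfrac{C\log^2V}{d^2}$ by (E1.2)--(E1.3). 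Neither needs any condition on $L$, and this is exactly the paper's proof.

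Your (P2), (P4) and the mean in (P5) coincide with the paper's computations. For the second moment in (P5), ``the structure matches (P2)'' is too loose. (P2) only uses $\E[n_un_v]$, whereas here you need the fourth moment of a quadratic form in the centered count vector. The paper writes $\bs{n}-\tfrac{L}{V}\mathbbm{1}_V$ as a sum of $L$ i.i.d.\ centered one-hot vectors. It then applies the exact pairing identity of Proposition~\ref{prop:multimatrixexpectation} to the symmetrized matrix $\bs{S}=(s_{uv})$, with $s_{uv}=\tfrac12(\mathsf{z}_k^\top\bs{z}_u\bs{z}_u^\top\bs{z}_v+\mathsf{z}_k^\top\bs{z}_v\bs{z}_v^\top\bs{z}_u)$. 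Finally it controls the centered trace of $\bs{S}$, $\tr(\bs{S}^2)$ and $\lVert\diag(\bs{S})\rVert_2^2$ via (E1). Because the identity is exact, no Poissonization or correction step is needed.
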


\begin{proof}
For the first item,   we have
\eq{
 \E \big[ (\indic{\bx_i = \bx_j} - \tfrac{1}{V} )  & \mathsf{z}_{k}^\top   \bZin  \bX_i^\top       \bX_i  \bZin^\top    \bZin    \bX_j^\top   \bX_j \bZin^\top  \mathsf{z}_{k} \vert \sZin  \big] \\
 & \labelrel={intm:eqq0} \E \big[ (\indic{\bx_i = \bx_j} - \tfrac{1}{V} )   \mathsf{z}_{k}^\top \bZin  \bx_i      \bx_i^\top  \bZin^\top    \bZin    \bx_j \bx_j^\top     \bZin^\top   \mathsf{z}_{k} \vert \sZin  \big]  \\
 & =    \frac{1}{V} \E \big[    \mathsf{z}_{k}^\top   \bZin  \bx_i     \bx_i^\top   \bZin^\top    \bZin    \bx_i \bx_i^\top   \bZin^\top    \mathsf{z}_{k} \vert \sZin   \big]   - \frac{1}{V^3}     \mathsf{z}_{k}^\top  \bZin   \bZin^\top    \bZin    \bZin^\top    \mathsf{z}_{k}  \\
& \leq  \frac{C}{V d},
}
where we used the independence of the rows of $\bs{X}$ in \eqref{intm:eqq0}. 
For the second item, we write
\eq{
 \E[   \mathsf{z}_{k}^\top  \bZin   \bX_i^\top   \bX_i   \bZin^\top \bZin    \bX_i^\top   \bX_i \bZin^\top  \mathsf{z}_{k} \vert \sZin    ]   
& = \frac{L}{V}   \mathsf{z}_{k}^\top   \bZin  \mathrm{diag}( \bZin^\top \bZin ) \bZin^\top   \mathsf{z}_{k} \\
& +  \frac{L (L-1)}{V^2}    \mathsf{z}_{k}^\top  \bZin    \bZin^\top \bZin   \bZin^\top    \mathsf{z}_{k}^\top  \\   
& \leq C \Big(  \frac{L}{d} + \frac{L^2}{d^2}   \Big).
}
For the third item, we have
\eq{
 \E \big[  (\indic{\bx_i = \bx_j} - \tfrac{1}{V} ) 
 \mathsf{z}_{k}^\top  & \bZin  \bX_i^\top      \bX_i  \bZin^\top   \bZin \mathbbm{1}_V  \mathbbm{1}_V^\top  \bZin^\top      \bZin    \bX_j^\top   \bX_j \bZin^\top  \mathsf{z}_{k}  \vert \sZin \big]    \\
 & =  \E \big[  (\indic{\bx_i = \bx_j} - \tfrac{1}{V} ) 
 \mathsf{z}_{k}^\top  \bZin  \bx_i      \bx_i^\top  \bZin^\top   \bZin \mathbbm{1}_V  \mathbbm{1}_V^\top  \bZin^\top    \bZin  \bx_j \bx_j^\top  \bZin^\top  \mathsf{z}_{k}  \vert \sZin  \big]  \\
  & =  \frac{1}{V} \E \big[   
 \mathsf{z}_{k}^\top  \bZin  \bx_i      \bx_i^\top  \bZin^\top   \bZin \mathbbm{1}_V  \mathbbm{1}_V^\top  \bZin^\top    \bZin  \bx_i \bx_i^\top  \bZin^\top  \mathsf{z}_{k}  \vert \sZin \big]   \\  
&  -  \frac{1}{V^3}  
 \mathsf{z}_{k}^\top  \bZin  \bZin^\top   \bZin \mathbbm{1}_V  \mathbbm{1}_V^\top  \bZin^\top    \bZin    \bZin^\top  \mathsf{z}_{k}   \\
&  \leq  C   \frac{\log^2 V}{d^2}. 
}
For the fourth item, we have
\eq{
\E \big[     \mathsf{z}_{k}^\top   \bZin      \bX_i^\top   \bX_i  \bZin ^\top \bZin       \mathbbm{1}_V   & \mathbbm{1}_V^\top \bZin^\top \bZin    \bX_i^\top   \bX_i \bZin^\top   \mathsf{z}_{k} \vert \sZin  \Big] \\
 &= L   \E \big[   \mathsf{z}_{k}^\top   \bZin      \bx_i   \bx_i^\top  \bZin ^\top \bZin       \mathbbm{1}_V   \mathbbm{1}_V^\top \bZin^\top    \bZin     \bx_i   \bx_i^\top \bZin^\top  \mathsf{z}_{k}  \vert \sZin     \Big]    \\
 & + \frac{L (L - 1)}{V^2}     \mathsf{z}_{k}^\top  \bZin      \bZin ^\top  \bZin     \mathbbm{1}_V   \mathbbm{1}_V^\top   \bZin^\top\bZin   \bZin^\top     \mathsf{z}_{k}   \\
 & \leq C\Big(  \frac{L V \log^2 V}{d^2} + \frac{L^2 V \log^2 V}{d^3}  \Big).
  }
For the fifth item, we have
\eq{
  \E \Big[   \mathsf{z}_{k}^\top     \bZin    \Big(     \bX_i^\top     \bX_i  - & \tfrac{L}{V} \bs{I}_V \Big)   \bZin^\top       \bZin    \bX_i^\top \mathbbm{1}_L \vert \sZin  \Big]   \\   
 & =  \E \Big[    \mathsf{z}_{k}^\top   \bZin       \bX_i^\top     \bX_i     \bZin^\top      \bZin    \bX_i^\top     \bX_i  \vert \sZin  \Big]  \mathbbm{1}_V - \frac{L^2}{V^2} \mathsf{z}_{k}^\top    \bZin        \bZin^\top      \bZin      \mathbbm{1}_V  \\
  & = L   \mathsf{z}_{k}^\top   \bZin   \E \Big[       \bx_i     \bx_i^\top     \bZin^\top      \bZin \bx_i     \bx_i^\top \vert \sZin  \Big]  \mathbbm{1}_V 
  - \frac{L^2}{V^2}     \mathsf{z}_{k}^\top    \bZin        \bZin^\top      \bZin      \mathbbm{1}_V.   \label{eq:fifitemexp} 
}
By \ref{event:boundZin}, we have
\eq{
\abs{\eqref{eq:fifitemexp}} \leq \frac{C L \log V}{\sqrt{V d}}.
}
For the second part,  let $n_{j}   \coloneqq   \bs{e}_j^\top   \bX_i  \mathbbm{1}_L$. We have  
\eq{
 \mathsf{z}_{k}^\top    \bZin        \big(     \bX_i^\top     \bX_i  - \frac{L}{V} \bs{I}_V \big) &    \bZin^\top      \bZin  \big(  \bX_i^\top - \frac{1}{V}   \mathbbm{1}_V  \mathbbm{1}_L^\top \big) \mathbbm{1}_L   \\
& = \sum_{j = 1}^V     (n_j - \tfrac{L}{V})     \mathsf{z}_{k}^\top   \bs{z}_j   \bs{z}_j^\top \Big( \sum_{l = 1}^V       (n_l  - \tfrac{L}{V})    \bs{z}_l ) \Big) \\
& =    \sum_{j = 1}^V  \sum_{l = 1}^V      (n_{j} - \tfrac{L}{V})     (n_l  - \tfrac{L}{V})       \mathsf{z}_{k}^\top   \bs{z}_i   \bs{z}_i^\top \bs{z}_j.  \label{eq:quadraticS}
}
Let $\bs{S} = (s_{jl})_{jl \in [V]}$ such that     $s_{jl} \coloneqq \frac{1}{2} \big( \mathsf{z}_{k}^\top   \bs{z}_j   \bs{z}_j^\top    \bs{z}_l +   \mathsf{z}_{k}^\top   \bs{z}_l  \bs{z}_l^\top    \bs{z}_j  \big)$. We will use the third item in Proposition \ref{prop:multimatrixexpectation} to bound second moment of \eqref{eq:quadraticS}. We bound each term separately below.
\begin{itemize}[leftmargin=*]
\item  We have
\eq{
\Big \lvert \tr \big( \big(\bs{I}_V - \frac{1}{V} \mathbbm{1}_V \mathbbm{1}_V^\top \big) \bs{S} \big) \Big \rvert  & =  \Big \lvert  \tr ( \bs{S}  )  -   \frac{1}{V}  \mathbbm{1}_V^\top   \bs{S} \mathbbm{1}_V \Big \rvert  \\
&  = V  \Big \lvert    \mathsf{z}_{k}^\top     \bZin      \E[ \bx_1  \bx_1^\top       \bZin ^\top           \bZin    \bx_1   ]   -   \frac{1}{V}   \mathsf{z}_{k}^\top     \bZin      \bZin ^\top           \bZin  \mathbbm{1}_V  \Big \rvert \\
& \leq  \frac{C\log V \sqrt{V}}{\sqrt{d}}.
}
\item  Moreover,
\eq{
& \tr  \big( \big(\bs{I}_V - \frac{1}{V} \mathbbm{1}_V \mathbbm{1}_V^\top \big) \bs{S}  \big(\bs{I}_V - \frac{1}{V} \mathbbm{1}_V \mathbbm{1}_V^\top \big) \bs{S} \big)  =  \tr  (  \bs{S}^2  ) - \frac{2}{V} \lVert \bs{S }  \mathbbm{1}_V  \rVert_2^2 +  \frac{1}{V^2}\big( \mathbbm{1}_V^\top  \bs{S}   \mathbbm{1}_V \big)^2.
}
We have $\tr (  \bs{S}^2   ) \leq  \frac{C V^2 \log^2 V }{d^2}$ and  
\eq{
\bs{e}_{i} ^\top \bs{S }  \mathbbm{1}_V =   \frac{1}{2} \sum_{l = 1}^V \mathsf{z}_{k}^\top   \bs{z}_j   \bs{z}_j^\top    \bs{z}_l +     \frac{1}{2} \sum_{l = 1}^V  \mathsf{z}_{k}^\top   \bs{z}_l   \bs{z}_l^\top    \bs{z}_j   =    \mathsf{z}_{k}^\top   \bs{z}_j   \bs{z}_j^\top \bZin   \mathbbm{1}_V  +    \mathsf{z}_{k}^\top   \bZin    \bZin  ^\top  \bs{z}_j.
}
Therefore,
\eq{
\Big \lvert  \tr \big( \big(\bs{I}_V - \frac{1}{V} \mathbbm{1}_V \mathbbm{1}_V^\top \big) \bs{S}  \big(\bs{I}_V - \frac{1}{V} \mathbbm{1}_V \mathbbm{1}_V^\top \big) \bs{S} \big)  \Big \rvert   \leq \frac{C V^2 \log^2 V }{d^2}.
}
\item Moreover,  $\lVert \mathrm{diag}(\bs{S}) \rVert^2_2  \leq \frac{C V \log^2 V }{d}.$ 
\end{itemize}
Therefore, by Proposition \ref{prop:multimatrixexpectation}, we have
\eq{
\E \Big[  \Big(  \mathsf{z}_{k}^\top     \bZin        \big(     \bX_i^\top     \bX_i  - \frac{L}{V} \bs{I}_V \big)   \bZin^\top      \bZin  \big(  \bX_i^\top - \frac{1}{V}   \mathbbm{1}_V  \mathbbm{1}_L^\top \big) \mathbbm{1}_L  \Big)^2   \Big \vert  \sZin \Big] \leq C \log^2 V  \Big( \frac{L}{d} + \frac{L^2}{d^2} \Big).
}
 
\end{proof}

\section{Proof of Theorem \ref{thm:main}}
\label{sec:thmmain}

We consider the following technical assumptions for the subsequent proof.

\begin{ass}[Technical conditions]
\label{ass:conditions}
We work under the following conditions:
\begin{itemize}[leftmargin = *]
\item \textbf{Permutation.} Without loss of generality, assume $\bs{\Pi}=\bs{I}_V$.
\item \textbf{Learning rates.} Take $\eta=o_V(1)$, chosen sufficiently small so that we can write $\hat{\bs{p}}_1=\frac{1}{V}\mathbbm{1}_V+ O(\eta)$.
\item \textbf{Activation.}  We consider a polynomial activation $\phi$ with a degree of $p_{\star}$  satisfying:
\begin{itemize}[leftmargin=*]
\item $\phi(0), \phi^\prime(0), \phi^{\prime\prime}(0) \neq 0$
\item The smallest non-zero Hermite component of $\phi$ has index $q_\star$, i.e,  $q^\star \coloneqq \min \{ k > 0 \vert \E[ \phi(Z) H_{e_{k}}(Z) ] \neq 0 \}$, for $Z \sim N(0,1)$.
\end{itemize}
\end{itemize}
\end{ass} 

Since the learning algorithm does not assume any structure in the ground-truth permutation, we may, without loss of generality, take it to be the identity. This simplifies the notation in the analysis below. The learning rate $\eta$ is chosen sufficiently small so that the network output remains close to its initialization $\frac{1}{V}\mathbbm{1}_V$, which simplifies the analysis of the three-step gradient descent algorithm. The assumption on the activation function is technical and is needed for the analysis of the three-step gradient descent dynamics; however, we believe that such an assumption would not be necessary for general multi-step training.

\subsection{Attention scores and their asymptotic scaling}

Let $\sX$ be an independent copy of input sequence. 
By using the technical condition above,  we decompose the attention scores in to three terms   $\bs{s}_1 ,  \bs{s}_2,  \bs{s}_3 \in \R^L$:
\eq{
 &   \sX \sZin^\top \bWkq^{(1)}   \\
&  = \frac{\eta \gamma }{N^2 L^2}  \sX \sZin^\top  \sZin  \sum_{i, j = 1}^N    \alpha_{ij} \sX_i^\top \big(  \bs{I}_L - \tfrac{1}{L}   \mathbbm{1}_L \mathbbm{1}_L^\top  \big)    \bX_i  \bZin^\top   \bZin    \bX_j^\top \mathbbm{1}_L    ( \bx_j -   \tfrac{1}{V} \mathbbm{1}_{V} )^\top            \Zout^\top    \Zout    ( \bx_i -   \tfrac{1}{V} \mathbbm{1}_{V} ) \label{eq:s1full} \\[0.5em]
&  + \frac{\eta \gamma }{N^2 L^2}  \sX \sZin^\top  \sZin  \sum_{i, j = 1}^N  \beta_{ij}  \sX_i^\top \big(  \bs{I}_L - \tfrac{1}{L}   \mathbbm{1}_L \mathbbm{1}_L^\top  \big)    \bX_i  \bZin^\top     \bZin    \bX_i^\top \mathbbm{1}_L      ( \bx_j -   \tfrac{1}{V} \mathbbm{1}_{V} )^\top          \Zout^\top    \Zout    ( \bx_i -   \tfrac{1}{V} \mathbbm{1}_{V} ) \label{eq:s2full} \\[0.5em]
&   + \!\frac{\eta \gamma}{N^2 L}  \sX \sZin^\top   \sZin  \!\! \sum_{i, j = 1}^N    \!\!  \sX_i^\top \big(  \bs{I}_L \! - \! \tfrac{1}{L}   \mathbbm{1}_L \mathbbm{1}_L^\top  \big)    \bX_i  \bZin^\top  \FW(\bWin; \bZin,    \bX_i,    \bX_j)     ( \bx_j \! - \!   \tfrac{1}{V} \mathbbm{1}_{V} )^\top  \!          \Zout^\top    \Zout   ( \bx_i \! - \!  \tfrac{1}{V} \mathbbm{1}_{V} )  ~~~~~ ~~\label{eq:s3full} \\
& + O(\eta^2 \gamma \mathrm{poly}(d, N)) \\[0.75em]
&   \eqqcolon  \eta \gamma    \big( \bs{s}_1 +     \bs{s}_2 +   \bs{s}_3 \big) + O(\eta^2 \gamma \mathrm{poly}(d, N)).  \label{eq:scoresexplicit}
}
The following theorem characterizes the scaling of each term. We recall that $\{\bs{e}_1, \cdots, \bs{e}_L\}$ denotes the standard basis vectors in $\R^L$.
 
\begin{theorem}
\label{thm:appendixmain}
With probability at least $1 - o_V(1)$, we have the following:
\eq{
& \hspace{-9.5em} \bullet \text{For all $l \in [L]$, } \Big \lvert  \inner{\bs{e}_l}{\bs{s}_1}  - \frac{\indic{l = 1}}{VL^2}  \Big \rvert  \lesssim  \frac{\indic{l = 1}}{\sqrt{N V} L^{3/2}	d }   +  \frac{1}{N \sqrt{L} d (d \wedge L^2)^{1/2} (d \wedge L)^{1/2}},   \\
& \hspace{-9.5em} \bullet   \lVert  \bs{s}_2   \rVert_{\infty} \lesssim    \frac{1}{N  \sqrt{L} d  (L \wedge d)} +   \frac{1}{N L d (L \wedge d)^{1/2}}   \\ 
& \hspace{-9.5em} \bullet   \lVert \bs{s}_3  \rVert_{\infty}  \lesssim \frac{   1}{N d \sqrt{m}}.
}
\end{theorem}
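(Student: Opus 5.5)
We work on the intersection of the good events \ref{event:boundZin} and \ref{event:discrete} from Lemma~\ref{lem:niceevent}, which hold with probability $1-V^{-K}$, and use their consequences in Proposition~\ref{prop:niceeventconsequences}. For each coordinate $l\in[L]$ of $\bs{s}_1,\bs{s}_2,\bs{s}_3$ we condition on $\sZin$ (and on $\bWin$ where relevant). We then bound the conditional mean and the conditional second moment over the i.i.d.\ samples $\{(\bX_i,\bs{p}_i)\}$ and the fresh sequence $\sX$. Chebyshev or moment bounds, followed by a union bound over the $L$ coordinates, give the result. The row $\bs{e}_l^\top \sX\sZin^\top\sZin$ equals $\mathsf{z}_k^\top\sZin$ for the token $k$ at position $l$, with the trigger added only when $l=1$. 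Every coordinate therefore reduces to a double sum over $(i,j)$ of quadratic forms of the type $\mathsf{z}_k^\top \bZin \bX_i^\top(\cdot)\bX_i\bZin^\top\bZin\bX_j^\top\mathbbm{1}_L\,(\bx_j-\tfrac1V\mathbbm{1}_V)^\top\Zout^\top\Zout(\bx_i-\tfrac1V\mathbbm{1}_V)$.

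\textbf{Term $\bs{s}_1$.} We first replace $\alpha_{ij}$ by $\phi'(0)^2$; by \ref{cons:coeffbound} the error is a factor $O(1/L)$ smaller. We then separate the trigger contribution from the token contribution in $\sX_i^\top(\bs{I}_L-\tfrac1L\mathbbm{1}\mathbbm{1}^\top)\bX_i$. Using $\mathsf{z}_1^\top\bztrig\approx\|\bztrig\|^2\approx1$, the trigger row gives $\frac{1}{N^2L^2}\sum_{i,j}(\bx_i-\tfrac1L\bX_i^\top\mathbbm{1})^\top\bZin^\top\bZin\bX_j^\top\mathbbm{1}_L(\bx_j-\tfrac1V\mathbbm{1})^\top\Zout^\top\Zout(\bx_i-\tfrac1V\mathbbm{1})$. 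The diagonal-like part, where $\bx_i=\bx_j$ and $\bZin^\top\bZin\approx \bs{I}$, produces the mean term $\frac{1}{VL^2}$, as in the decomposition \eqref{eq:outputiteratedecomp}. The bias and noise parts of that decomposition give fluctuations of order $\frac{1}{\sqrt{NV}L^{3/2}d}$; these are controlled through $\|\Zout^\top\Zout-\bs{I}\|$ and the traces of $\bs{S}_1,\bs{S}_2$ in \ref{event:discrete5}--\ref{event:discrete6}. For the non-trigger (token) part and for all $l\ne1$, the expectation is controlled by \ref{event:prop8result1} and \ref{event:prop8result5}. The variance is controlled by \ref{event:prop8result2}: each summand has second moment $\lesssim \frac{L}{d}+\frac{L^2}{d^2}$ times the $\Zout$-correlation $\asymp 1/d$ for $i\ne j$. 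Summing $N^2$ pairs with the prefactor $\frac{1}{N^2L^2}$ produces the stated $\frac{1}{N\sqrt{L}d(d\wedge L^2)^{1/2}(d\wedge L)^{1/2}}$ after bookkeeping. Here $d\wedge L$ arises from $\frac{L}{d}+\frac{L^2}{d^2}$, which is the large/small-$L$ transition of \eqref{eq:largeLapproximation}.

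\textbf{Term $\bs{s}_2$.} The treatment parallels $\bs{s}_1$ with $\beta_{ij}\approx\phi''(0)\phi(0)$. Now $\bX_i^\top\mathbbm{1}_L$ pairs with sample $i$ rather than $j$, so the $j$-sum collapses to $\frac1N\sum_j(\bx_j-\tfrac1V\mathbbm{1})$. This has norm $\approx N^{-1/2}$ by \ref{event:discrete3}, and it produces the mean-bias structure via $\bs{S}_3$ and \ref{event:discrete7}. The $\mathbbm{1}_V^\top\bZin^\top$ directions are handled by \ref{event:prop8result3}--\ref{event:prop8result4}. There is no $\indic{l=1}$ signal here because the trigger coefficient averages out under $\bs{I}-\tfrac1L\mathbbm{1}\mathbbm{1}^\top$ against the $i$-independent vector, up to the stated order.

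\textbf{Term $\bs{s}_3$.} We condition on everything except $\bWin$. Then $\FW$ is an average of $m$ i.i.d.\ centered vectors with sub-polynomial tails, since $\phi$ is polynomial and its arguments are $O(L^{-1/2})$. Gaussian hypercontractivity bounds $\mathsf{z}_k^\top\bZin\bX_i^\top(\ldots)\bX_i\bZin^\top\FW$ by $\tilde O(m^{-1/2}\cdot$ the $\bX_i$-factor$)$. The remaining factor $\frac1N\sum_{i,j}(\ldots)$ is bounded by $\|\Zout\frac1N\sum_i(\bx_i-\tfrac1V\mathbbm{1})\|^2\lesssim 1/N$, giving $\frac{1}{Nd\sqrt m}$. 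The $1/d$ comes from $\frac1L\bZin\bX_i^\top\bX_i\bZin^\top\approx\frac1d\bs{I}$, using \ref{event:boundZin8}.

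\textbf{Main obstacle.} We expect the main difficulty to be the variance computation for the off-diagonal part of $\bs{s}_1$ and $\bs{s}_2$. The summands over $(i,j)$ are not independent; they are U-statistic-like and share $\Zout^\top\Zout$. We plan to decompose via Hoeffding projections into diagonal ($i=j$), first-order, and degenerate second-order parts. Each part is bounded with Proposition~\ref{prop:intermediateresults1} and the operator bounds \ref{cons:matrixbounds2}, conditioning on $\Zout$ and using $\|\Zout^\top\Zout\|\lesssim V/d$ for the degenerate part.
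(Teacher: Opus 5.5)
\textbf{Main gap: the conditional mean of the token part of $\bs s_1$.}
You treat everything in $\inner{\bs{e}_l}{\bs{s}_1}$ except the signal as a centred fluctuation, with expectations handled by \ref{event:prop8result1} and \ref{event:prop8result5}. That step fails. The diagonal terms $i=j$ of the token (non-trigger) part, i.e.\ the diagonal component of your Hoeffding decomposition, have a nonvanishing mean given $\sZin$. It comes from the nonzero token mean $\E[\bX_i^\top\mathbbm{1}_L]=\tfrac{L}{V}\mathbbm{1}_V$. After the $\Zout$-average ($\E\|\Zout(\bx_i-\tfrac1V\mathbbm{1}_V)\|_2^2=1-\tfrac1V$), and up to the factor $\phi'(0)^2(1-\tfrac1V)$, these terms contribute
\[
\frac{1}{NL^2}\,\mathsf{z}_k^\top\bZin\,\E\Big[\bX^\top\big(\bs{I}_L-\tfrac1L\mathbbm{1}_L\mathbbm{1}_L^\top\big)\bX\,\bZin^\top\bZin\,\bX^\top\mathbbm{1}_L\,\Big|\,\sZin\Big].
\]
For $L\gtrsim d$ this is dominated by $\frac{1}{NV^2}\mathsf{z}_k^\top\bZin\bZin^\top\bZin\mathbbm{1}_V$. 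That is a deterministic, $k$-dependent shift of typical size $\frac{1}{N\sqrt{V}d^{3/2}}$ (cf.\ \ref{event:boundZin3}). Your tools do not see it:
\begin{itemize}
\item \ref{event:prop8result1} only concerns the $i\neq r$ cross terms.
\item \ref{event:prop8result5} concerns the centred quantity $\varsigma$.
\item No variance bound can detect a mean shift.
\end{itemize}
This shift is exactly the Mean-bias term of Theorem~\ref{thm:main}. The paper's proof keeps it: its estimate of $\vartheta_{11b}$, and hence its final bound on $\vartheta_1$, contains $\frac{1}{N\sqrt{V}(L\wedge d)\sqrt d}$, even though the printed $\bs{s}_1$ bullet omits it.

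For $L\ge d$ the printed right-hand side is $\frac{1}{N\sqrt{L}\,d^2}$. That is smaller than the mean bias by the factor $\sqrt{Ld/V}$. With $d\asymp\sqrt V$ and $L=V^c$, $c>\tfrac12$, the gap is $V^{(2c-1)/4}$, which no polylog absorbs. So your bookkeeping cannot close on the printed bound. The fix is to carry $\frac{1}{N\sqrt{Vd}(L\wedge d)}$ in the $\bs s_1$ estimate and bound the diagonal mean directly, as the paper does.

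Moving the mean-bias structure into $\bs s_2$, as you suggest, does not help. The printed $\bs s_2$ bound lacks it too. Moreover, in the Attention-only model only the $\alpha$-term exists ($\alpha_{ij}\equiv 1$, no $\beta$- or $\FW$-terms), yet Corollary~\ref{cor:attentiononly} makes Mean bias the bottleneck.

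A smaller omission of the same kind: for $l\neq 1$ the trigger row does not vanish. It enters with $\mu_{kl}=\inner{\bs{z}_k}{\bztrig}\approx d^{-1/2}$ and contributes $\mu_{kl}/(VL^2)$. This is harmless for Theorem~\ref{thm:main}, but it is not in general below the printed bound.

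\textbf{Two steps that need a different tool.}
First, the handling of $\Zout$. $\Zout^\top\Zout$ is a rank-$d$ $V\times V$ matrix, so $\|\Zout^\top\Zout-\bs{I}_V\|_2\ge 1$ and $\|\Zout^\top\Zout\|_2\approx V/d$. Neither is usable. Bounding the degenerate part through the operator norm loses a factor $V/d$ in variance relative to the Frobenius average $\frac{1}{V^2}\|\Zout^\top\Zout\|_F^2\approx\frac1d$, which is what your heuristic ``$\Zout$-correlation $\asymp 1/d$'' actually uses. The paper instead conditions on everything except $\Zout$ and applies the trace Hanson--Wright bound of Proposition~\ref{prop:gaussquadraticformtrace}: $\tr(\Zout\bs{M}\Zout^\top)=\tr(\bs{M})\pm\log^2V\,\|\bs{M}\|_F/\sqrt d$. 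This splits each coordinate into two pieces:
\begin{itemize}
\item a $\Zout$-free part ($\vartheta_1$, $\varphi_{11}$, $\kappa_{11},\dots$), which is where both the signal and the mean bias live;
\item a Frobenius fluctuation ($\vartheta_2$), which is what yields $\frac{1}{N\sqrt{L}d(d\wedge L^2)^{1/2}(d\wedge L)^{1/2}}$.
\end{itemize}

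Second, the factorization in $\bs s_3$. The vector $\FW(\bWin;\bZin,\bX_i,\bX_j)$ depends on the pair $(i,j)$. Per-term hypercontractive bounds followed by summation therefore cannot produce the collapsed factor $\|\Zout\tfrac1N\sum_i(\bx_i-\tfrac1V\mathbbm{1}_V)\|_2^2\approx\tfrac1N$; summing absolute values only gives order $d^{-1/2}$. You first need to Taylor-expand $\phi,\phi'$ at $0$ to isolate the pair-independent part $\phi(0)\phi'(0)\frac1m\sum_k\bw_k$, which does factor. The remainders must then be bounded separately, as in the paper's $\tilde\nu_{11}$--$\tilde\nu_{14}$.
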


We first make an observation that we will frequently rely on in the following:
\begin{proposition}
\label{prop:polynomialbound}
For any $p  \in \N$, we have
\eq{
\E[ \norm{ \bs{A}_{1, ir} }^p_2 ]  \vee  \E[ \norm{ \bs{A}_{2, ir} }_2^p ] \vee \E[ \norm{ \bs{A}_{3, ir} }^p_2 ] \leq \mathrm{poly}_{p, p_\star}(d,V,L).
}
\end{proposition}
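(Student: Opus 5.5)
The bound is a crude moment estimate, so the plan is to bound each $\bs{A}_{\ast,ir}$ deterministically by polynomials in simple random quantities whose moments are known. The first step is the coefficients. Since $\phi$ is a polynomial of degree $p_\star$, we have $|\phi'(x)|\le C(1+|x|^{p_\star})$. Each $\alpha_{ij}$ is a Gaussian expectation over $\bw$ of a product of two such factors, with arguments $\tfrac1L \bw^\top\bZin\bX_i^\top\mathbbm{1}_L$. This gives
\[
|\alpha_{ij}|\le C\Bigl(1+\|\tfrac1L\bZin\bX_i^\top\mathbbm{1}_L\|_2^{p_\star}\Bigr)\Bigl(1+\|\tfrac1L\bZin\bX_j^\top\mathbbm{1}_L\|_2^{p_\star}\Bigr)\le C\bigl(1+\|\bZin\|_2^{2p_\star}\bigr),
\]
because $\|\bX_i^\top\mathbbm{1}_L\|_2\le L$.

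Next, bound the discrete factors. Every one-hot difference $\bx_j-\tfrac1V\mathbbm{1}_V$ has norm at most $2$. The vector $(\bN_j^\top-\tfrac1V\mathbbm{1}_V\mathbbm{1}_{L-1}^\top)\mathbbm{1}_{L-1}$ has norm at most $2L$. The averages $\tfrac1N\sum_j(\cdot)$ therefore have operator norm at most a constant times $L$, and the $\tfrac1L$ prefactors only help.

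Plugging these into the definitions of $\bs{A}_{1,ir},\bs{A}_{2,ir},\bs{A}_{3,ir}$, and using $\|\bZin\mathbbm{1}_V\mathbbm{1}_V^\top\bZin^\top\|_2\le V\|\bZin\|_2^2$ for $\bs{A}_{3,ir}$, we obtain the uniform deterministic bound
\[
\max_{\ast\in\{1,2,3\}}\|\bs{A}_{\ast,ir}\|_2\le C\,\mathrm{poly}(V,L)\bigl(1+\|\bZin\|_2^{4p_\star+2}\bigr).
\]

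Raising this to the $p$-th power and taking expectations reduces everything to $\E\|\bZin\|_2^{q}$ with $q=p(4p_\star+2)$. This moment is bounded by $\E\|\bZin\|_F^{q}$. Since $d\|\bZin\|_F^2\sim\chi^2_{dV}$, this equals $d^{-q/2}\,\E(\chi^2_{dV})^{q/2}\le(C q V)^{q/2}$, which is polynomial in $V$ with degree depending on $(p,p_\star)$. Combining gives $\mathrm{poly}_{p,p_\star}(d,V,L)$. The bound is uniform in $i,r$, since nothing in it depends on the indices.

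The only mildly delicate step is the coefficient bound. One must check that the Gaussian expectation over $\bw$ of a degree-$2p_\star$ polynomial is controlled by the norm of $\tfrac1L\bZin\bX_i^\top\mathbbm{1}_L$. This holds because $\bw^\top\bs{u}\sim\cN(0,\|\bs{u}\|_2^2)$, whose moments are $\|\bs{u}\|_2^k$ times constants. Everything else is triangle and submultiplicativity inequalities.
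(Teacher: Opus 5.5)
Your proof is correct. It has the same skeleton as the paper's argument: bound the coefficients $\alpha_{ij}$, bound the one-hot and count-vector factors crudely, reduce $\norm{\bs{A}_{\ast,ir}}_2$ to a power of the embedding norm, and finish with Gaussian moments.

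The real difference is the coefficient step. The paper cites its scaled Hermite-expansion identity (Proposition~\ref{prop:scaledhermiteexpansion}) to assert $\alpha_{ij}\le \mathrm{poly}_{p_\star}(d,V,L)$, and then writes $\norm{\bs{A}_{\ast,ir}}_2\le \mathrm{poly}\cdot\norm{\sZin\sZin^\top}_2$. That identity, however, is stated for unit directions at the fixed scale $1/\sqrt{L}$. The actual argument $\tfrac1L\bw^\top\bZin\bX_i^\top\mathbbm{1}_L$ has the random scale $\norm{\tfrac1L\bZin\bX_i^\top\mathbbm{1}_L}_2$. A deterministic polynomial bound on $\alpha_{ij}$ therefore only holds after restricting to an event where $\norm{\bZin}_2$ is controlled.

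Your elementary growth bound $|\phi'(x)|\le C(1+|x|^{p_\star})$, together with $\norm{\tfrac1L\bZin\bX_i^\top\mathbbm{1}_L}_2\le\norm{\bZin}_2$, keeps that dependence explicit. This is why you obtain the factor $(1+\norm{\bZin}_2^{4p_\star+2})$ instead of a linear factor in $\norm{\sZin\sZin^\top}_2$. Your chi-square moment bound then makes the unconditional expectation genuinely polynomial. Your route costs a higher power of $\norm{\bZin}_2$, but it justifies the statement as written. The paper's version is shorter but implicitly relies on bounded embeddings.

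One small point to tighten. The two Gaussians $\bw^\top\bs{u}_i$ and $\bw^\top\bs{u}_j$ inside $\alpha_{ij}$ are correlated. You should apply Cauchy--Schwarz (or Wick's formula) explicitly to the product, rather than appealing only to one-dimensional Gaussian moments.
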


\begin{proof}
By Proposition \ref{prop:scaledhermiteexpansion}, we observe that $\alpha_{ij} \leq   \mathrm{poly}_{p_\star}(d,V,L)$. Therefore,   we have
\eq{
 \norm{ \bs{A}_{1, ir} }_2 \vee  \norm{ \bs{A}_{2, ir} }_2 \vee \norm{ \bs{A}_{3, ir} }_2 \leq   \mathrm{poly}_{p_\star}(d,V,L) \norm{\sZin \sZin^\top}_2,
}
from which the result follows.
\end{proof}

\section{Proof of Theorem \ref{thm:appendixmain}}
We observe that
\eq{
  \sX \sZin^\top  \sZin     \sX_i^\top & =  \Big(      \bZin \bX^\top +   \sZin \be_{V+1} \be_1^\top \Big)^\top \Big(     \bZin \bX_i^\top +  \sZin \be_{V+1} \be_1^\top \Big) \\
   &  =  \bX    \bZin^\top     \bZin \bX_i^\top +  \be_1 \ztrig^\top   \bZin \bX_i^\top  +   \bX  \bZin^\top \ztrig  \be_1^\top   +   \norm{\ztrig}_2^2  \be_1 \be_1^\top. \label{eq:triggerexpansion}
}
In the following,  we will consider  $ \bx_l = \be_{k}$,  for a fixed $k \in [V]$. We will write
\eq{
 \big(   \ztrig  \be_1^\top  +    \bZin \bX^\top  \big)  \bs{e}_l      = \bs{z}_{k} +   \indic{l = 1}    \ztrig =   \mathsf{z}_{k}, \label{def:triggerexpansionnoise}
}
and
\eq{
\big( \be_1   \ztrig^\top  \bZin      \bX^\top + \norm{\ztrig}_2^2  \be_1 \be_1^\top \big) \be_l = \underbrace{ \inner{\mathsf{z}_{k}}{ \ztrig  } }_{\eqqcolon \mu_{kl}} \be_1  =    \mu_{kl}     \be_1.  \label{def:triggerexpansionspike}
}
In the following, we will consider the event.
\eq{
\mathtt{Event} \coloneqq  \ref{event:boundZin} \cap \ref{event:discrete}.
}
\subsection{Concentration bound for $\bf{s}_1$  }
By \eqref{eq:triggerexpansion}-\eqref{def:triggerexpansionnoise}-\eqref{def:triggerexpansionspike}, we can write that
\eq{
  \inner{\bs{e}_l}{\bs{s}_1}  
& =  
 \frac{1 }{N^2 L^2}   \sum_{i, j = 1}^N \alpha_{ij}  \mathsf{z}_{k}^\top    \bZin       \bX_i^\top \big(  \bs{I}_L - \frac{1}{L}   \mathbbm{1}_L \mathbbm{1}_L^\top  \big)    \bX_i  \bZin^\top   \bZin    \bX_j^\top \mathbbm{1}_L     ( \bx_j -   \tfrac{1}{V} \mathbbm{1}_{V} )^\top             \Zout^\top    \Zout ( \bx_i -   \tfrac{1}{V} \mathbbm{1}_{V} )      \\     
  & +       \frac{  \mu_{kl}  }{N^2 L^2}   \sum_{i, j = 1}^N  \alpha_{ij}  \big(  \bs{e}_1 - \frac{1}{L}  \mathbbm{1}_L   \big)^\top    \bX_i  \bZin^\top       \bZin    \bX_j^\top \mathbbm{1}_L              ( \bx_j -   \tfrac{1}{V} \mathbbm{1}_{V} )^\top             \Zout^\top    \Zout ( \bx_i -   \tfrac{1}{V} \mathbbm{1}_{V} )  \\
&  \eqqcolon \vartheta  +  \varphi. 
}
We will analyze  $\vartheta$ and   $\varphi$ separately.   
We define 
\eq{
\bs{B}_i  &  \coloneqq    \bs{B}_{i,1}  + \bs{B}_{i,2} + \bs{B}_{i,3},  
}
where 
\eq{
\bs{B}_{i,1} &  \coloneqq      \Big( \frac{1}{NL} \sum_{j = 1}^N  \alpha_{ij}  (\bx_j  - \tfrac{1}{V} \mathbbm{1}_V )     ( \bx_j -   \tfrac{1}{V} \mathbbm{1}_{V} )^\top     \Big)  \\
\bs{B}_{i, 2}  &  \coloneqq      \Big( \frac{1}{NL} \sum_{j = 1}^N   \alpha_{ij} \big( \bN_j^\top - \tfrac{1}{V} \mathbbm{1}_V \mathbbm{1}_{L - 1}^\top \big) \mathbbm{1}_{L-1}   ( \bx_j -   \tfrac{1}{V} \mathbbm{1}_{V} )^\top     \Big)  \\
\bs{B}_{i, 3}  &  \coloneqq    \frac{1}{V} \mathbbm{1}_V    \Big(  \frac{1}{N L } \sum_{j = 1}^N   \alpha_{ij}   (  \bx_j -   \tfrac{1}{V} \mathbbm{1}_{V} )^\top     \Big)  . 
}

\subsubsection{Concentration bound for $\vartheta$}  
\label{sec:concentrations11}
We define
\eq{
\bs{C}_i  &  \coloneqq  \frac{1}{L}  ( \bx_i -   \tfrac{1}{V} \mathbbm{1}_{V} )  \mathsf{z}_{k}^\top   \bZin \bX_i^\top   \big(\bs{I}_L - \frac{1}{L} \mathbbm{1}_L  \mathbbm{1}_L^\top \big)  \bX_i  \bZin^\top      \bZin \\
& = \underbrace{  \frac{1}{L}  ( \bx_i -   \tfrac{1}{V} \mathbbm{1}_{V} )     \mathsf{z}_{k}^\top  \bZin \bX_i^\top    \bX_i  \bZin^\top      \bZin  }_{\coloneqq \bs{C}_{i,1}} -   \underbrace{   \frac{1}{L^2}  ( \bx_i -   \tfrac{1}{V} \mathbbm{1}_{V} )    \mathsf{z}_{k}^\top   \bZin \bX_i^\top   \mathbbm{1}_L  \mathbbm{1}_L^\top   \bX_i  \bZin^\top      \bZin   }_{\coloneqq \bs{C}_{i,2}}
}
By Chebyshev's inequality,  with  probability $1 - o_V(1)$,
\eq{
\vartheta    = \frac{1}{N}  \sum_{i = 1}^N \tr( \bs{B}_i   \Zout^\top \Zout \bs{C}_i  )
& =    \tr \Big( \Zout  \frac{1}{N}  \sum_{i = 1}^N  \bs{C}_i   \bs{B}_i   \Zout^\top \Big)\\
& =  \underbrace{ \tr \Big(   \frac{1}{N}  \sum_{i = 1}^N  \bs{C}_i   \bs{B}_i   \Big) }_{\vartheta_{1}} \pm \underbrace{ \frac{1}{\sqrt{d}} \Big \lVert  \frac{\log V}{N}  \sum_{i = 1}^N \bs{C}_i   \bs{B}_i  \Big \rVert_F }_{\vartheta_{2}}.
}
\paragraph{Bounding $\vartheta_2$:}
We start with bounding $\vartheta_{2}$ term. We have
\eq{
\vartheta_{2} \leq   \frac{1}{\sqrt{d}}   \Big \lVert  \frac{1}{N}  \sum_{i = 1}^N \bs{C}_i  \bs{B}_{i,1} \Big \rVert_F +  \frac{1}{\sqrt{d}} \Big \lVert  \frac{1}{N}  \sum_{i = 1}^N \bs{C}_i  \bs{B}_{i,2} \Big \rVert_F +   \frac{1}{\sqrt{d}} \Big \lVert  \frac{1}{N}  \sum_{i = 1}^N \bs{C}_i  \bs{B}_{i,3}  \Big \rVert_F. 
}
We have  
\begin{itemize}
    \item $\Big \lVert  \frac{1}{N}  \sum_{i = 1}^N \bs{C}_i  \bs{B}_{i,1} \Big \rVert_F^2  \leq   \frac{2}{N^2} \Big(  \sum_{i,  r = 1}^N   \tr(  \bs{C}_{i,1}  \bs{B}_{i,1}  \bs{B}_{r,1}^\top     \bs{C}_{r,1}^\top ) +   \tr(  \bs{C}_{i,2}  \bs{B}_{i,1}  \bs{B}_{r,1}^\top     \bs{C}_{r,2}^\top ) \Big)$.
    \item $\Big \lVert  \frac{1}{N}  \sum_{i = 1}^N \bs{C}_i  \bs{B}_{i,2} \Big \rVert_F^2  \leq   \frac{2}{N^2} \Big(  \sum_{i,  r = 1}^N   \tr(  \bs{C}_{i,2}  \bs{B}_{i,2}  \bs{B}_{r,2}^\top     \bs{C}_{r,2}^\top ) +   \tr(  \bs{C}_{i,2}  \bs{B}_{i,2}  \bs{B}_{r,2}^\top     \bs{C}_{r,2}^\top ) \Big)$.
    \item $\Big \lVert  \frac{1}{N}  \sum_{i = 1}^N \bs{C}_i  \bs{B}_{i,3} \Big \rVert_F^2  \leq   \frac{2}{N^2} \Big(  \sum_{i,  r = 1}^N   \tr(  \bs{C}_{i,1}  \bs{B}_{i,3}  \bs{V}_{r,3}^\top     \bs{C}_{r,1}^\top ) +   \tr(  \bs{C}_{i,2}  \bs{B}_{i,3}  \bs{B}_{r,3}^\top     \bs{C}_{r,2}^\top ) \Big)$.
\end{itemize}
We define the scalars
\eq{
t_1 \coloneqq  \frac{\phi^\prime(0)^4}{d } \frac{ (1- \frac{1}{V})}{L^2}  \big(   \frac{1}{N} +  \frac{1}{V} \big), \qquad t_2 \coloneqq  \frac{   \phi^\prime(0)^4  }{d}   (1 - \frac{1}{V})^2 \frac{L - 1}{L^2 N}, \qquad t_3 \coloneqq  \frac{\phi^\prime(0)^4}{N V^2 L^2}.
}
First, we will bound the first two terms. Let $* \in \{1,2\}$.

\textit{Bounding first two terms.} For $i \neq r$, by using the definition in $\bs{A}_{1,ir}$ and  $\bs{A}_{2,ir}$  in  \eqref{def:aone}-\eqref{def:atwo}, we have  
\eq{
\tr(  \bs{C}_{i,1}  \bs{B}_{i,*}  & \bs{B}_{r,*}^\top \bs{C}_{r,1}^\top )   
  =    \frac{1}{L^2}   (\indic{\bx_i = \bx_r} - \tfrac{1}{V} )   \mathsf{z}_{k}^\top   \bZin  \bX_i^\top       \bX_i  \bZin^\top \bs{A}_{*,ir}   \bZin    \bX_r^\top    \bX_r \bZin^\top   \mathsf{z}_{k}   \\[0.25em]
 &  =    \frac{t_* }{  L^2}    (\indic{\bx_i = \bx_r} - \tfrac{1}{V} )   \mathsf{z}_{k}^\top \bZin  \bX_i^\top      \bX_i  \bZin^\top  \bZin    \bX_r^\top  \bX_r \bZin^\top   \mathsf{z}_{k}   \\
 &  +    \frac{1}{L^2}   (\indic{\bx_i = \bx_r} - \tfrac{1}{V} )    \mathsf{z}_{k}^\top   \bZin  \bX_i^\top    \bX_i  \bZin^\top (   \bs{A}_{*,ir} - t_*  \bs{I}_d  )   \bZin    \bX_r^\top   \bX_r \bZin^\top  \mathsf{z}_{k}  \\[0.25em]
 & \leq   \frac{t_*}{  L^2}    (\indic{\bx_i = \bx_r} - \tfrac{1}{V} )    \mathsf{z}_{k}^\top   \bZin  \bX_i^\top     \bX_i  \bZin^\top  \bZin    \bX_r^\top    \bX_r \bZin^\top    \mathsf{z}_{k}  \\
&  +    \frac{1}{L^2}   (\indic{\bx_i = \bx_r} - \tfrac{1}{V} ) \norm{    \bs{A}_{*, ir} - t_*  \bs{I}_d  }_2 \norm{   \bZin    \bX_r^\top   \bX_r \bZin^\top   \mathsf{z}_{k} }_2   \norm{   \bZin    \bX_i^\top   \bX_i \bZin^\top   \mathsf{z}_{k}  }_2. 
}  
By \ref{event:prop8result1} in Proposition \ref{prop:intermediateresults1}, \ref{event:boundZin} implies
\eq{
 \frac{t_*}{  L^2} \Big \lvert \E \Big[   (\indic{\bx_i = \bx_r} - \tfrac{1}{V} )    \mathsf{z}_{k}^\top  \bZin  \bX_i^\top     \bX_i  \bZin^\top  \bZin    \bX_r^\top    \bX_r \bZin^\top   \mathsf{z}_{k} \Big \vert \sZin \Big] \Big \rvert \leq \frac{C t_*}{  L^2} \frac{1}{Vd}. \label{eq:boundvar1term1}
}
Moreover,  by using   \ref{cons:matrixbounds2} and \ref{event:prop8result2}, we have  
\eq{
 &  \frac{1}{L^2}  \E \Big[ (\indic{\bx_i = \bx_r} - \tfrac{1}{V} ) \norm{    \bs{A}_{*,ir} - t_* \bs{I}_d  }_2 \norm{   \bZin    \bX_r^\top   \bX_r \bZin^\top   \mathsf{z}_{k} }_2   \norm{   \bZin    \bX_i^\top   \bX_i \bZin^\top      \mathsf{z}_{k}  }_2   \ \Big \vert \ \sZin \Big] \\[1em]
&  \leq    \frac{C}{V d (L\wedge d)}  
\begin{cases}
   \phi^\prime(0)^2 \Big(  \frac{1}{N d  L^3} +   \frac{1}{V d L^2} \frac{1}{V \wedge L^2 \wedge L \sqrt{d} } \Big)   +     \phi^\prime(0)^4   \Big(   \frac{\log V}{L^2 V^{3/2} \sqrt{d}} +   \frac{ \log^2 V}{L^2 N \sqrt{V} d}    \Big),  & * = 1 \\[0.7em]
    \frac{  \sqrt{V}}{ d \sqrt{NL}}     \Big(   \frac{1}{NL^{\frac{3}{2}}} + \frac{1}{V \sqrt{L}}  \frac{1}{V \wedge L^2 \wedge L \sqrt{d}} \Big)  +     \phi^\prime(0)^4   \Big(   \frac{\log V}{N L \sqrt{V d}} +   \frac{ \log^3 V}{N \sqrt{LV} d}     \Big),  & * = 2
\end{cases} \\[1em]
& \leq    \frac{C}{ N^{3/2} \sqrt{V}  d^2 L^{2} } \frac{1}{L \wedge d}  +    \frac{C}{V^{3/2} \sqrt{N} L  d^2}   \frac{1}{L \wedge d}   \frac{1}{V \wedge L^2 \wedge L \sqrt{d}} +  \frac{C \log^3 V}{N V^{3/2} L^{1/2} d^{3/2} }  \frac{1}{(L \wedge d)^{3/2}} .  \label{eq:boundvar1term2}
}
On the other hand, 
\eq{ 
 \tr(  & \bs{C}_{i,2}  \bs{B}_{i,*} \bs{B}_{r,*}^\top \bs{C}_{r,2}^\top )  \\
&  =    \frac{1}{L^4}   (\indic{\bx_i = \bx_r} - \tfrac{1}{V} )     \mathsf{z}_{k}^\top   \bZin  \bX_i^\top   \mathbbm{1}_L  \mathbbm{1}_L^\top    \bX_i  \bZin^\top \bs{A}_{*,ir}   \bZin    \mathsf{X}_r^\top   \mathbbm{1}_L  \mathbbm{1}_L^\top     \bX_r \bZin^\top    \mathsf{z}_{k}  \\[0.25em]
 &  =    \frac{t_* }{ L^4}    (\indic{\bx_i = \bx_r} - \tfrac{1}{V} )     \mathsf{z}_{k}^\top   \bZin  \bX_i^\top      \mathbbm{1}_L  \mathbbm{1}_L^\top   \bX_i  \bZin^\top  \bZin    \bX_r^\top  \mathbbm{1}_L  \mathbbm{1}_L^\top   \bX_r \bZin^\top    \mathsf{z}_{k}  \\
 &  +    \frac{1}{L^4}   (\indic{\bx_i = \bx_r} - \tfrac{1}{V} )    \mathsf{z}_{k}^\top   \bZin  \bX_i^\top    \mathbbm{1}_L  \mathbbm{1}_L^\top   \bX_i  \bZin^\top (   \bs{A}_{*,ir} - t_*  \bs{I}_d  )   \bZin    \bX_r^\top   \mathbbm{1}_L  \mathbbm{1}_L^\top   \bX_r \bZin^\top     \mathsf{z}_{k}  \\[0.25em]
 & \leq   \frac{t_* }{ L^4}    (\indic{\bx_i = \bx_r} - \tfrac{1}{V} )     \mathsf{z}_{k}^\top   \bZin  \bX_i^\top   \mathbbm{1}_L  \mathbbm{1}_L^\top     \bX_i  \bZin^\top  \bZin    \bX_r^\top   \mathbbm{1}_L  \mathbbm{1}_L^\top    \bX_r \bZin^\top     \mathsf{z}_{k} \\
&  +    \frac{1}{L^4}   (\indic{\bx_i = \bx_r} - \tfrac{1}{V} ) \norm{    \bs{A}_{*,ir} - t_*  \bs{I}_d  }_2 \norm{   \bZin    \bX_r^\top   \mathbbm{1}_L  \mathbbm{1}_L^\top   \bX_r \bZin^\top     \mathsf{z}_{k}  }_2   \norm{   \bZin    \bX_i^\top  \mathbbm{1}_L  \mathbbm{1}_L^\top    \bX_i \bZin^\top   \mathsf{z}_{k} }_2. 
}  
By using \ref{event:boundZin4}, \ref{event:discrete2}, and \ref{cons:innerprodrestricted}, we have
\eq{
& \frac{t_*}{L^4}   \E \Big[ (\indic{\bx_i = \bx_r} \! - \! \tfrac{1}{V} )     \mathsf{z}_{k}^\top   \bZin  \bX_i^\top   \mathbbm{1}_L  \mathbbm{1}_L^\top     \bX_i  \bZin^\top  \bZin    \bX_r^\top   \mathbbm{1}_L  \mathbbm{1}_L^\top    \bX_r \bZin^\top    \mathsf{z}_{k}  \vert \sZin  \Big]  \\ 
& \leq  \frac{C t_* }{V  L } \frac{\log^2 V}{V \wedge L^2 \wedge L\sqrt{d}} \frac{1}{L \wedge d}.
}
Moreover,  by using \ref{event:boundZin4}, \ref{event:discrete2},  \ref{cons:innerprodrestricted}, and \ref{cons:matrixbounds2}
\eq{
 &  \frac{1}{L^4} \E \Big[     (\indic{\bx_i   =  \bx_r} \! - \!   \tfrac{1}{V} ) \norm{    \bs{A}_{*,ir}  \! - \! t_*  \bs{I}_d  }_2 \norm{   \bZin    \bX_r^\top \!  \mathbbm{1}_L \!  \mathbbm{1}_L^\top  \! \bX_r \bZin^\top    \mathsf{z}_{k}  }_2    \norm{   \bZin    \bX_i^\top \!  \mathbbm{1}_L \!  \mathbbm{1}_L^\top  \!   \bX_i \bZin^\top     \mathsf{z}_{k} }_2   \ \Big \vert \ \sZin  \Big] \\[1em] 
& \leq      \frac{C}{V L^2 (L \wedge d)}    \begin{cases}
   \phi^\prime(0)^2 \Big(  \frac{1}{N d  L^3} +   \frac{1}{V d L^2} \frac{1}{V \wedge L^2 \wedge L \sqrt{d} } \Big) \!  + \!    \phi^\prime(0)^4   \Big(   \frac{\log V}{L^2 V^{3/2} \sqrt{d}} +   \frac{ \log^2 V}{L^2 N \sqrt{V} d}    \Big),  \hspace{-1mm}  &* = 1 \\
    \frac{  \sqrt{V}}{ d \sqrt{NL}}     \Big(   \frac{1}{NL^{\frac{3}{2}}} + \frac{1}{V \sqrt{L}}  \frac{1}{V \wedge L^2 \wedge L \sqrt{d}} \Big) \!  + \!      \phi^\prime(0)^4   \Big(   \frac{\log V}{N L \sqrt{V d}} +   \frac{ \log^3 V}{N \sqrt{LV} d}     \Big),  & * = 2
\end{cases}  \\[1em]
& \leq    \frac{C}{ N^{3/2} \sqrt{V}  d L^{4} }    \frac{1}{L \wedge d}  +    \frac{C}{V^{3/2} \sqrt{N} L^3 d}  \frac{1}{L \wedge d}   \frac{1}{V \wedge L^2 \wedge L \sqrt{d}} +  \frac{C \log^3 V}{N V^{3/2} L^{5/2} \sqrt{d}}  \frac{1}{(L \wedge d)^{3/2}}.   ~~~~ \label{eq:boundvar1term3}
}
\smallskip

On the other hand,  for $i = r$, by \ref{cons:matrixbounds2},
\eq{
\tr(  \bs{C}_{i,1}  \bs{B}_{i,*} \bs{B}_{i,*}^\top \bs{C}_{i,1}^\top ) &  +   \tr(  \bs{C}_{i,2}  \bs{B}_{i,*} \bs{B}_{i,*}^\top \bs{C}_{i,2}^\top )    \\
&  =    \frac{1}{L^2}   (1 -  \frac{1}{V} )     \mathsf{z}_{k}^\top   \bZin  \bX_i^\top       \bX_i  \bZin^\top \bs{A}_{*, ii}   \bZin    \bX_i^\top    \bX_i \bZin^\top    \mathsf{z}_{k}  \\
& + \frac{ (1 -  \frac{1}{V} )  }{L^4}        \mathsf{z}_{k}^\top   \bZin  \bX_i^\top \mathbbm{1}_L  \mathbbm{1}_L^\top        \bX_i  \bZin^\top \bs{A}_{*, ii}   \bZin    \bX_i^\top   \mathbbm{1}_L  \mathbbm{1}_L^\top   \bX_i \bZin^\top      \mathsf{z}_{k}  \\
& \leq    \frac{t_*}{L^2}   (1 -  \frac{1}{V} )  \lVert    \bZin   \bX_i^\top    \bX_i \bZin^\top     \mathsf{z}_{k}  \rVert_2^2  +   \frac{t_*}{L^4}   \lVert    \bZin   \bX_i^\top  \mathbbm{1}_L  \mathbbm{1}_L^\top    \bX_i \bZin^\top    \mathsf{z}_{k} \rVert_2^2.
}
By   using  \ref{event:boundZin4}, \ref{event:discrete2},  \ref{cons:innerprodrestricted}, and \ref{event:prop8result2}
\eq{
& \frac{t_*}{L^2}   \E \! \Big[    \lVert    \bZin   \bX_i^\top    \bX_i \bZin^\top     \mathsf{z}_{k}   \rVert_2^2   \Big \vert   \sZin  \Big]  \! + \! \frac{t_*}{L^4}   \E \! \Big[    \lVert    \bZin   \bX_i^\top  \mathbbm{1}_L  \mathbbm{1}_L^\top    \bX_i \bZin^\top     \mathsf{z}_{k}  \rVert_2^2   \Big \vert   \sZin \Big]   \leq   \frac{C t_*}{L^2}   \Big(  \frac{L  }{d } \! + \! \frac{L^2}{d^2}  \Big) \! + \! \frac{C t_*}{L^2} \frac{1}{L \wedge d}.  ~~~ \label{eq:boundvar1term4}
}
Therefore, we have by \eqref{eq:boundvar1term1},\eqref{eq:boundvar1term2},\eqref{eq:boundvar1term3},\eqref{eq:boundvar1term4} and using $N \ll VL$ and $L \ll V$, we have
\eq{
 \E \Big[      \Big \lVert  \frac{1}{N}  \sum_{i = 1}^N \bs{C}_i  \bs{B}_{i,1} \Big \rVert_F^2   \Big  \vert \ \sZin   \ \Big]   &   +   \E \Big[      \Big \lVert  \frac{1}{N}  \sum_{i = 1}^N \bs{C}_i  \bs{B}_{i,2} \Big \rVert_F^2  \Big \vert \ \sZin \ \Big]  \\
& \leq   \frac{C}{N^2 d L (d \wedge L^2) (d \wedge L)} +      \frac{C}{ N^{3/2} \sqrt{V}  d L^{2} (d \wedge L^2) (L \wedge d) }    \\
& +     \frac{C}{V^{3/2} \sqrt{N} L  d (d \wedge L^2) (L \wedge d)  }    \frac{1}{V \wedge L^2 \wedge L \sqrt{d}}  +  \frac{C \log^3 V}{N V^{3/2} \sqrt{L d}  (d \wedge L^2) (L \wedge d)^{3/2}}  \\
& \leq   \frac{C}{N^2 d L (d \wedge L^2) (d \wedge L)}   +  \frac{C \log^3 V}{N V^{3/2} \sqrt{L d}  (d \wedge L^2) (L \wedge d)^{3/2}} . \label{eq:boundvar1}
}
\smallskip

\textit{Bounding the third term.} We have
\eq{
  \Big \lVert  \frac{1}{N}  \sum_{i = 1}^N \bs{C}_i  \bs{B}_{i,3} \Big \rVert_F^2  \leq   \frac{2}{N^2} \Big(  \sum_{i,  r = 1}^N   \tr(  \bs{C}_{i,1}  \bs{B}_{i,3}  \bs{B}_{r,3}^\top     \bs{C}_{r,1}^\top ) +   \tr(  \bs{C}_{i,2}  \bs{B}_{i,3}  \bs{B}_{r,3}^\top     \bs{C}_{r,2}^\top ) \Big).
}
We recall the definition $\tilde{\Delta}_{3,ir}$ in \ref{cons:matrixbounds2}:
\eq{
\tilde{\Delta}_{3,ir} =    \Big( \frac{1}{N} \sum_{j = 1}^N  \alpha_{ij}    ( \bx_j -   \tfrac{1}{V} \mathbbm{1}_{V}    )   \Big) ^\top   \Big( \frac{1}{N} \sum_{j = 1}^N  \alpha_{rj}  (\bx_j  - \tfrac{1}{V} \mathbbm{1}_V )          \Big)   -  \frac{\phi^\prime(0)^4}{N }. 
}
We have for $i \neq r$,
\eq{
 \tr(  \bs{C}_{i,1}  \bs{B}_{i,3} \bs{B}_{r,3}^\top \bs{C}_{r,1}^\top )     +  \tr(  & \bs{C}_{i,2}   \bs{B}_{i,3} \bs{B}_{r,3}^\top \bs{C}_{r,2}^\top )    \\
 & =    \frac{1}{L^2}   (\indic{\bx_i = \bx_r} - \tfrac{1}{V} )   \mathsf{z}_{k}^\top   \bZin  \bX_i^\top       \bX_i  \bZin^\top \bs{A}_{3,ir}   \bZin    \bX_r^\top    \bX_r \bZin^\top     \mathsf{z}_{k}  \\
&  +    \frac{1}{L^4}   (\indic{\bx_i = \bx_r} - \tfrac{1}{V} )     \mathsf{z}_{k}^\top   \bZin  \bX_i^\top   \mathbbm{1}_L  \mathbbm{1}_L^\top    \bX_i  \bZin^\top \bs{A}_{3,ir}   \bZin    \bX_r^\top   \mathbbm{1}_L  \mathbbm{1}_L^\top     \bX_r \bZin^\top    \mathsf{z}_{k}  \\[0.5em]
 & \leq     \frac{t_3  }{  L^2}    (\indic{\bx_i = \bx_r} - \tfrac{1}{V} )    \mathsf{z}_{k}^\top   \bZin  \bX_i^\top      \bX_i  \bZin^\top \bZin   \mathbbm{1}_V \mathbbm{1}_V^\top \bZin^\top  \bZin    \bX_r^\top  \bX_r \bZin^\top    \mathsf{z}_{k}   \\
 & +   \frac{t_3  }{  L^4}    (\indic{\bx_i = \bx_r} - \tfrac{1}{V} )    \mathsf{z}_{k}^\top   \bZin  \bX_i^\top      \mathbbm{1}_L  \mathbbm{1}_L^\top  \bX_i  \bZin^\top   \bZin    \mathbbm{1}_V \mathbbm{1}_V^\top   \bZin^\top \bZin    \bX_r^\top  \mathbbm{1}_L  \mathbbm{1}_L^\top  \bX_r \bZin^\top     \mathsf{z}_{k}  \\
 & +  \frac{\tilde{\Delta}_{3,ir}}{V^2 L^4}   (\indic{\bx_i = \bx_r} - \tfrac{1}{V} )    \mathsf{z}_{k}^\top   \bZin  \bX_i^\top       \bX_i  \bZin^\top  \mathbbm{1}_V \mathbbm{1}_V^\top   \bZin    \bX_r^\top    \bX_r \bZin^\top    \mathsf{z}_{k}  \\
 & +    \frac{\tilde{\Delta}_{3,ir}}{V^2 L^6}   (\indic{\bx_i = \bx_r} - \tfrac{1}{V} )     \mathsf{z}_{k}^\top   \bZin  \bX_i^\top   \mathbbm{1}_L  \mathbbm{1}_L^\top    \bX_i  \bZin^\top  \mathbbm{1}_V \mathbbm{1}_V^\top \bZin    \bX_r^\top   \mathbbm{1}_L  \mathbbm{1}_L^\top     \bX_r \bZin^\top    \mathsf{z}_{k}
}  
For the first term,  by \ref{event:prop8result3},
\eq{
& \frac{t_3  }{  L^2}   \E \Big[  (\indic{\bx_i = \bx_r} - \tfrac{1}{V} ) \mathsf{z}_{k}^\top   \bZin  \bX_i^\top      \bX_i  \bZin^\top  \bZin   \mathbbm{1}_V \mathbbm{1}_V^\top   \bZin^\top \bZin    \bX_r^\top  \bX_r \bZin^\top  \mathsf{z}_{k} \vert \sZin \Big]  \leq    \frac{C \phi^\prime(0)^4}{N V^2 L^4}   \frac{\log^2 V}{d^2}.
}
For the second term, by using \ref{event:boundZin4}, \ref{event:boundZin5} and \ref{event:discrete2}
\eq{
&  \frac{t_3  }{  L^4}   \E \! \Big[   (\indic{\bx_i = \bx_r} \! - \! \tfrac{1}{V} )  \mathsf{z}_{k}^\top   \bZin  \bX_i^\top      \mathbbm{1}_L  \mathbbm{1}_L^\top  \bX_i  \bZin^\top  \bZin   \mathbbm{1}_V  \mathbbm{1}_V^\top    \bZin^\top   \bZin    \bX_r^\top  \mathbbm{1}_L  \mathbbm{1}_L^\top  \bX_r \bZin^\top  \mathsf{z}_{k}  \vert \sZin \Big]   \!\leq   \! \frac{\phi^\prime(0)^4}{N V^3 L^4} \frac{1}{L \wedge d} \Big(L \vee \frac{V}{d} \Big).
}
For the last two terms,   by using \ref{event:boundZin1},  \ref{event:boundZin4}, \ref{event:discrete2}, \ref{cons:innerprodrestricted},
\ref{cons:matrixbounds2},  and \ref{event:prop8result2},
\eq{
 & \frac{1 }{V^2  L^4}     \E \! \Big[   (\indic{\bx_i = \bx_r} - \tfrac{1}{V} )   \lvert \tilde{\Delta}_{3,ir} \rvert    \lVert  \bZin    \bX_i^\top    \bX_i \bZin^\top  \mathsf{z}_{k} \rVert_2   \lVert  \bZin    \bX_r^\top    \bX_r \bZin^\top    \mathsf{z}_{k} \rVert_2     \vert \sZin \Big] + \\
 &  \frac{1}{ V^2 L^6}  \E \! \Big[  (\indic{\bx_i = \bx_r}\! - \! \tfrac{1}{V} )  \lvert \tilde{\Delta}_{3,ir} \rvert    \lVert  \bZin^\top \bZin    \bX_i^\top  \mathbbm{1}_L  \mathbbm{1}_L^\top  \bX_i \bZin^\top   \mathsf{z}_{k}   \rVert_2    \lVert  \bZin^\top \bZin    \bX_r^\top  \mathbbm{1}_L  \mathbbm{1}_L^\top  \bX_r \bZin^\top   \mathsf{z}_{k}   \rVert_2   \vert \sZin \Big]  \\
& \leq \frac{C}{V^3 L^4} \Big( \frac{L}{d} + \frac{L^2}{d^2}  +\frac{V}{d^2}   +\frac{V}{Ld}  \Big)  \Bigg( \frac{  \phi^\prime(0)^4 \log^2 V}{N \sqrt{V}}   +   \frac{  \phi^\prime(0)^2}{N}  \Big( \frac{1}{N L} + \frac{1}{\sqrt{N}} \frac{1}{V \wedge L^2 \wedge L \sqrt{d}} \Big)   \Bigg)  \\
&  +    \frac{C}{V^3 L^4} \Big( \frac{L}{d} + \frac{L^2}{d^2}  +\frac{V}{d^2}   +\frac{V}{Ld}  \Big)    \Big( \frac{1}{N L} + \frac{1}{\sqrt{N}} \frac{1}{V \wedge L^2 \wedge L \sqrt{d}} \Big)^2 \\
& \leq  \frac{C}{N V^3 L^2 d (L\wedge d)}   \frac{\phi^\prime(0)^4 \log^2 V}{\sqrt{V} \wedge L^4 \wedge L^2 d} +  \frac{C}{N V^2 L^4 d^2}  \frac{\phi^\prime(0)^4 \log^2 V}{\sqrt{V} \wedge L^4 \wedge L^2 d}. \label{eq:boundvar2term1}
}
For $i = r$, by using \ref{cons:matrixbounds2},
\eq{
 \tr(  \bs{C}_{i,1}  \bs{B}_{i,3} \bs{B}_{i,3}^\top &  \bs{C}_{i,1}^\top )  +  \tr(  \bs{C}_{i,2}  \bs{B}_{i,3} \bs{B}_{i,3}^\top \bs{C}_{i,2}^\top )    \\
&  =    \frac{1}{L^2}   ( 1 - \frac{1}{V} )  \mathsf{z}_{k}^\top   \bZin  \bX_i^\top       \bX_i  \bZin^\top \bs{A}_{3,ir}   \bZin    \bX_i^\top    \bX_i \bZin^\top    \mathsf{z}_{k}  \\
& +  \frac{1}{L^4}   ( 1 - \frac{1}{V} )    \mathsf{z}_{k}^\top   \bZin  \bX_i^\top   \mathbbm{1}_L  \mathbbm{1}_L^\top    \bX_i  \bZin^\top \bs{A}_{3,ir}   \bZin    \bX_i^\top   \mathbbm{1}_L  \mathbbm{1}_L^\top     \bX_i \bZin^\top  \mathsf{z}_{k}  \\
& \leq  \frac{2 t_3}{L^2}  \lvert   \mathbbm{1}_V^\top \bZin^\top  \bZin    \bX_i^\top    \bX_i \bZin^\top   \mathsf{z}_{k}  \rvert^2  + 
  \frac{2 t_3}{L^4}  \lvert   \mathbbm{1}_V^\top \bZin^\top  \bZin    \bX_i^\top    \mathbbm{1}_L  \mathbbm{1}_L^\top   \bX_i \bZin^\top   \mathsf{z}_{k}  \rvert^2. 
}
Then,    by \ref{event:prop8result4}, \ref{event:boundZin4}, \ref{event:discrete2}, and \ref{event:boundZin5}, we have  
\eq{
& \frac{t_3}{L^2}\E \Big[  (   \mathbbm{1}_V^\top \bZin^\top  \bZin    \bX_i^\top    \bX_i \bZin^\top   \mathsf{z}_{k} )^2 \vert \sZin \Big]  +  \frac{t_3}{L^4}\E \Big[ ( \mathbbm{1}_V^\top \bZin^\top  \bZin    \bX_i^\top    \mathbbm{1}_L  \mathbbm{1}_L^\top   \bX_i \bZin^\top   \mathsf{z}_{k} )^2    \vert \sZin \Big] \\
& \leq    \frac{C \phi^\prime(0)^4  \log^2 V}{N V d^2 L^2}  \frac{1}{L \wedge d} \Big( 1 + \frac{d}{L^2} + \frac{d^2}{V L}  \Big) .   \label{eq:boundvar2term2}
}
Therefore,  by using   \eqref{eq:boundvar2term1}-\eqref{eq:boundvar2term2} and using $L \ll V$ and $N \ll VL$, we have
\eq{
\E \Big[      \Big \lVert  \frac{1}{N}  \sum_{i = 1}^N \bs{C}_i  \bs{B}_{i,3} \Big \rVert_F^2   \vert \sZin   \Big]   \ll \frac{1}{N^2 dL (d \wedge L^2) (d \wedge L)} . \label{eq:boundvar2}
}
Therefore,  by \eqref{eq:boundvar1}-\eqref{eq:boundvar2}, we have
\eq{
\vartheta_2 \leq   \frac{C \log V}{N \sqrt{L} d (d \wedge L^2)^{1/2} (d \wedge L)^{1/2}}  +  \frac{C \log^{5/2} V}{ \sqrt{N} (V d)^{3/4} L^{1/4}  (d \wedge L^2)^{1/2} (L \wedge d)^{3/4}} .
}
\medskip

\paragraph{Bounding $\vartheta_1$:}
We have
\eq{
\vartheta_1 =  \underbrace{  \tr \Big(   \frac{1}{N}  \sum_{i = 1}^N  \bs{C}_i   \bs{B}_{i,1}   \Big) + \tr \Big(   \frac{1}{N}  \sum_{i = 1}^N  \bs{C}_i   \bs{B}_{i,2}   \Big)  }_{\coloneqq \vartheta_{11} } + \underbrace{  \tr \Big(   \frac{1}{N}  \sum_{i = 1}^N  \bs{C}_i   \bs{B}_{i,3}   \Big) }_{\coloneqq  \vartheta_{12} }.
}
We have
\eq{
 \vartheta_{11} 
&  =   \frac{1}{N^2L^2} \sum_{i, j = 1}^N \alpha_{ij} ( \indic{\bx_i = \bx_j }-   \tfrac{1}{V} \mathbbm{1}_{V} )  \mathsf{z}_{k}^\top  \bZin \bX_i^\top   \big(\bs{I}_L - \frac{1}{L} \mathbbm{1}_L  \mathbbm{1}_L^\top \big)  \bX_i  \bZin^\top      \bZin (\bX_j^\top - \tfrac{1}{V} \mathbbm{1}_V  \mathbbm{1}_L^\top ) \mathbbm{1}_L   \\
& =     \frac{ \phi^\prime(0)^2}{N^2 L^2} \sum_{j = 1}^N \mathsf{z}_{k}^\top  \bZin  \Big(    \sum_{ \substack{ i = 1  \\ i \neq j }}^N ( \indic{\bx_i  \! = \! \bx_j } \! - \!   \tfrac{1}{V} \mathbbm{1}_{V} )   \bX_i^\top   \big(\bs{I}_L \! - \! \tfrac{1}{L} \mathbbm{1}_L  \mathbbm{1}_L^\top \big)  \bX_i \Big) \bZin^\top      \bZin (\bX_j^\top - \tfrac{1}{V} \mathbbm{1}_V   \mathbbm{1}_L^\top ) \mathbbm{1}_L  \\
& +     \frac{ (1  - \frac{1}{V} )  }{N^2 L^2} \sum_{j = 1}^N  \alpha_{jj}  \mathsf{z}_{k}^\top  \bZin   \bX_j^\top   \big(\bs{I}_L - \frac{1}{L} \mathbbm{1}_L  \mathbbm{1}_L^\top \big)  \bX_j   \bZin^\top      \bZin  (\bX_j^\top - \tfrac{1}{V} \mathbbm{1}_V \mathbbm{1}_L^\top ) \mathbbm{1}_L   \\
& +     \frac{1}{N^2 L^2} \sum_{j = 1}^N     \sum_{ \substack{ i = 1  \\ i \neq j } }^N  ( \alpha_{ij} -  \phi^\prime(0)^2) ( \indic{\bx_i = \bx_j }  - \tfrac{1}{V} )  \mathsf{z}_{k}^\top  \bZin    \bX_i^\top   \big(\bs{I}_L - \tfrac{1}{L} \mathbbm{1}_L  \mathbbm{1}_L^\top \big)  \bX_i   \bZin^\top      \bZin   (\bX_j^\top - \tfrac{1}{V} \mathbbm{1}_V \mathbbm{1}_L^\top ) \mathbbm{1}_L   \\
& \eqqcolon   \vartheta_{11a} +  \vartheta_{11b} +  \vartheta_{11c}.
}
We start with the last term.  By using H\"{o}lder's inequality,
\eq{
 \lvert   \vartheta_{11c}  \rvert  
&    \leq   \Big(  \frac{1}{N^2 L^2} \sum_{j = 1}^N     \sum_{ \substack{ i = 1  \\ i \neq j } }^N  \lvert \indic{\bx_i = \bx_j }  - \tfrac{1}{V} \rvert \Big)    \sup_{i \neq j \in [N]}   \lvert \alpha_{ij} -  \phi^\prime(0)^2 \rvert   \\
& \hspace{4em} \times \sup_{i \neq j \in [N]}  \lvert  \mathsf{z}_{k}^\top  \bZin    \bX_i^\top   \big(\bs{I}_L - \tfrac{1}{L} \mathbbm{1}_L  \mathbbm{1}_L^\top \big)  \bX_i   \bZin^\top      \bZin   (\bX_j^\top - \tfrac{1}{V} \mathbbm{1}_V \mathbbm{1}_L^\top ) \mathbbm{1}_L   \rvert \\ 
& \leq \frac{C \log V}{V L \sqrt{d} \sqrt{L \wedge d}} \frac{1}{V \wedge L^2 \wedge L \sqrt{d}}. \label{eq:meanbound1stterm}
}
where we used \ref{event:boundZin8},  \ref{event:discrete4}, \ref{event:discrete8}, and \ref{cons:coeffbound}  in \eqref{eq:meanbound1stterm}.
Next, we consider  $\vartheta_{11b}$: 
\eq{
  \lvert   \vartheta_{11b} \rvert 
&  =     \frac{ (1  - \frac{1}{V} )  }{N^2 L^2} \sum_{j = 1}^N \mathsf{z}_{k}^\top  \bZin \Big(   \alpha_{jj}     \bX_j^\top     \bX_j  \bZin^\top      \bZin   \bX^\top_j   \mathbbm{1}_L - \E \big[     \alpha_{jj}     \bX_j^\top     \bX_j  \bZin^\top      \bZin   \bX^\top_j   \mathbbm{1}_L ~ \big \vert \sZin  \big] \Big)     \label{eq:meanparttwo}    \\
 & +      \frac{ (1  - \frac{1}{V} )  }{N  L^2}   \mathsf{z}_{k}^\top  \bZin  \E  \big[     \alpha_{11}    \bX_1^\top     \bX_1 \bZin^\top      \bZin   \bX^\top_1  \mathbbm{1}_L ~ \big \vert \sZin   \big]    \\
 &    - \frac{ (1  - \frac{1}{V} )  }{N^2 L V} \sum_{j = 1}^N  \alpha_{jj}   \mathsf{z}_{k}^\top  \bZin   \bX_j^\top     \bX_j  \bZin^\top      \bZin  \mathbbm{1}_V  \\
 &  -   \frac{ (1  - \frac{1}{V} )  }{N^2 L^3} \sum_{j = 1}^N  \alpha_{jj}  \mathsf{z}_{k}^\top  \bZin   \bX_j^\top    \mathbbm{1}_L  \mathbbm{1}_L^\top  \bX_j  \bZin^\top      \bZin  (\bX^\top_j - \tfrac{1}{V} \mathbbm{1}_V \mathbbm{1}_L^\top ) \mathbbm{1}_L .
}
\begin{itemize}[leftmargin=*]
\item For the first summand,  
\eq{
 &  \E \Bigg[  \Bigg \lbrace \frac{ (1  - \frac{1}{V} )  }{N^2 L^2}  \sum_{j = 1}^N  \mathsf{z}_{k}^\top  \bZin \Big(   \alpha_{jj}     \bX_j^\top     \bX_j  \bZin^\top      \bZin   \bX^\top_j   \mathbbm{1}_L   - \E  \big[     \alpha_{jj}     \bX_j^\top     \bX_j  \bZin^\top      \bZin   \bX^\top_j   \mathbbm{1}_L  \big \vert \sZin  \big] \Big)     \Bigg \rbrace^2   \big \vert \sZin   \Bigg] \\
& \leq  \frac{ (1  - \frac{1}{V} )^2  }{N^3 L^4}    \E  \Big[  \alpha_{jj}^2 \Big (   \mathsf{z}_{k}^\top  \bZin  \bX_1^\top     \bX_1  \bZin^\top      \bZin   \bX^\top_1   \mathbbm{1}_L    \Big)^2 ~ \big \vert \sZin   \Big]   \\
& \leq  \frac{ C \phi^\prime(0)^4  }{N^3 L^3}    \E  \Big[ \big  \lVert  \mathsf{z}_{k}^\top  \bZin  \bX_1^\top     \bX_1  \bZin^\top \big  \rVert_2^2 ~ \big \vert \sZin  \Big] ~~~ ~~ \label{eq:meanbound22term1}
} 
where we used  \ref{cons:innerprodrestricted} in \eqref{eq:meanbound22term1}.

By Chebyshev's inequality and  \ref{event:prop8result2}, with probability $1 - o_V(1)$, we have
\eq{
 & \mathsf{z}_{k}^\top  \bZin  \frac{ (1  - \frac{1}{V} )  }{N^2 L^2} \sum_{j = 1}^N \Big(   \alpha_{jj}     \bX_j^\top     \bX_j  \bZin^\top      \bZin   \bX^\top_j   \mathbbm{1}_L - \E  \big[     \alpha_{jj}     \bX_j^\top     \bX_j    \bZin^\top      \bZin   \bX^\top_j    \mathbbm{1}_L  \vert \sZin  \big] \Big)  \leq \frac{C \phi^\prime(0)^2 \log V}{N^{\frac{3}{2}} \sqrt{L d} \sqrt{L \wedge d}} .
}
\item For the second summand,  
\eq{
  \frac{ (1  - \frac{1}{V} )  }{N  L^2}    &  \mathsf{z}_{k}^\top  \bZin  \E  \big[  \alpha_{11}    \bX_1^\top     \bX_1 \bZin^\top      \bZin   \bX^\top_1  \mathbbm{1}_L ~ \big \vert \sZin   \big]    \\
&  =   \frac{ (1  - \frac{1}{V} )   \phi^\prime(0)^2 }{N  L^2}   \mathsf{z}_{k}^\top  \bZin  \E  \big[      \bX_1^\top     \bX_1 \bZin^\top      \bZin   \bX^\top_1    \bX_1  ~ \big \vert \sZin    \big]  \mathbbm{1}_V  \\
& +  \frac{ (1  - \frac{1}{V} )  }{N  L^2}    \mathsf{z}_{k}^\top  \bZin  \E  \big[ \big( \alpha_{11}  -  \phi^\prime(0)^2\big)    \bX_1^\top     \bX_1 \bZin^\top      \bZin   \bX^\top_1    \bX_1  ~ \big \vert \sZin    \big]  \mathbbm{1}_V   \\
& =   \frac{ (1  - \frac{1}{V} )     \phi^\prime(0)^2  }{N  L }   \mathsf{z}_{k}^\top  \bZin   \E  \big[      \bx_1   \bx_1^\top   \bZin^\top      \bZin   \bx_1    \bx_1^\top ~ \big \vert \sZin     \big]  \mathbbm{1}_V   +   \frac{ (1  - \frac{1}{V} )    \phi^\prime(0)^2   }{N V^2 }   \mathsf{z}_{k}^\top  \bZin   \bZin^\top      \bZin    \mathbbm{1}_V  ~~~~~ \label{eq:s20}  \\
&  +  \frac{ (1  - \frac{1}{V} )  }{N  L V}   \mathsf{z}_{k}^\top  \bZin  \E  \big[ \big( \alpha_{11}  -  \phi^\prime(0)^2\big)      \bZin^\top      \bZin   \bX^\top_1     \mathbbm{1}_{L}   \big \vert \sZin    \big]  \label{eq:s21}  \\
&  +  \frac{ (1  - \frac{1}{V} )  }{N  L^2}    \mathsf{z}_{k}^\top  \bZin  \E  \big[ \big( \alpha_{11}  -  \phi^\prime(0)^2\big)     \big( \bX_1^\top     \bX_1 - \tfrac{L}{V} \bs{I}_V \big)  \bZin^\top      \bZin  (  \bX^\top_1       - \frac{1}{V} \mathbbm{1}_{V} \mathbbm{1}_{L}^\top )  \mathbbm{1}_{L} \big \vert \sZin    \big]   \label{eq:s22}   \\
&  +  \frac{ (1  - \frac{1}{V} )  }{N  L V}    \mathsf{z}_{k}^\top  \bZin  \E  \big[ \big( \alpha_{11}  -  \phi^\prime(0)^2\big)     \big( \bX_1^\top     \bX_1 - \tfrac{L}{V} \bs{I}_V \big)  \bZin^\top      \bZin     \mathbbm{1}_{V}    \big \vert \sZin    \big]   \label{eq:s23}  \\
& \leq C \log V \Big(    \frac{1}{N   \sqrt{V d} (L \wedge d)}   + \frac{1}{N L^{3/2} \sqrt{d} (L \wedge d) }   \Big). \label{eq:s24}
}
where we use \ref{event:boundZin3} to bound \eqref{eq:s20};  \ref{event:boundZin6}, \ref{cons:coeffbound} for \eqref{eq:s21};  \ref{event:prop8result5}, \ref{cons:coeffbound} for \eqref{eq:s22}; and \ref{event:boundZin8}, \ref{cons:coeffbound} for \eqref{eq:s23}.
\item For the third summand,  
\eq{
 \frac{ 1  }{N^2 L V} \sum_{j = 1}^N  \alpha_{jj}   \mathsf{z}_{k}^\top  \bZin    \bX_j^\top       \bX_j  \bZin^\top      \bZin  \mathbbm{1}_V  
&  =   \frac{ \phi^\prime(0)^2   }{N   V^2}  \mathsf{z}_{k}^\top        \bZin     \bZin     \bZin^\top  \mathbbm{1}_V  \\
 &  + \frac{ \phi^\prime(0)^2 }{N^2 L V} \sum_{j = 1}^N    \mathsf{z}_{k}^\top  \bZin \Big(   \bX_j^\top     \bX_j  - \tfrac{L}{V} \bs{I}_V \Big) \bZin^\top      \bZin  \mathbbm{1}_V \\
& +  \frac{1}{N^2 L V} \sum_{j = 1}^N   (   \alpha_{jj} -  \phi^\prime(0)^2)   \mathsf{z}_{k}^\top  \bZin   \bX_j^\top     \bX_j  \bZin^\top      \bZin  \mathbbm{1}_V.   
}

The first term:
\eq{
\Big \lvert \frac{ \phi^\prime(0)^2   }{N   V^2}   \mathsf{z}_{k}^\top     \bZin     \bZin     \bZin^\top   \mathbbm{1}_V \Big \rvert \leq  \frac{ C \log V}{N \sqrt{V} d^{\frac{3}{2}}}.
}
The second term: By using 
\eq{
   \E \Big[   \Big( \sum_{j = 1}^N  \mathsf{z}_{k}^\top  \bZin  \Big(   \bX_j^\top     \bX_j  -  \tfrac{L}{V} \bs{I}_V \Big)   \bZin^\top      \bZin  \mathbbm{1}_V \Big)^2 \vert \sZin \Big]   &   =   \sum_{j = 1}^N   \E \Big[  \Big(   \mathsf{z}_{k}^\top  \bZin \Big(   \bX_j^\top     \bX_j  - \tfrac{L}{V} \bs{I}_V \Big) \bZin^\top      \bZin  \mathbbm{1}_V \Big)^2 \vert \sZin \Big]  \\
   & =\frac{L  V N }{d^2}.
}
Therefore, by Chebyshev's inequality, we have  
\eq{
\Big \lvert   \frac{ \phi^\prime(0)^2 }{N^2 L V} \sum_{j = 1}^N    \mathsf{z}_{k}^\top  \bZin \Big(   \bX_j^\top     \bX_j  - \tfrac{L}{V} \bs{I}_V \Big) \bZin^\top      \bZin  \mathbbm{1}_V   \Big \rvert \leq    \frac{ \phi^\prime(0)^2 }{N^{3/2}  \sqrt{V L} d }.
}
Finally,
\eq{
 \Big \lvert \frac{1}{N^2 L V} \sum_{j = 1}^N   (   \alpha_{jj} -  \phi^\prime(0)^2)     \mathsf{z}_{k}^\top   \bZin   \bX_j^\top     \bX_j  \bZin^\top      \bZin  \mathbbm{1}_V  \Big \rvert 
 & \leq \frac{C}{N \sqrt{V}} \Big \lVert  \frac{1}{NL}  \sum_{j = 1}^N   (   \alpha_{jj} -  \phi^\prime(0)^2)     \bZin   \bX_j^\top     \bX_j  \bZin^\top   \Big \rVert_2     \\
 & \leq  \frac{C}{N \sqrt{V} L d},
}
where we use \ref{event:discrete8} and \ref{event:boundZin1} and \ref{cons:coeffbound}.

Therefore,
\eq{
\Big \lvert  \frac{ 1  }{N^2 L V} \sum_{j = 1}^N  \alpha_{jj}    \mathsf{z}_{k}^\top  \bZin    \bX_j^\top       \bX_j  \bZin^\top      \bZin  \mathbbm{1}_V  \Big \rvert \leq  C \log V \Big(   \frac{ 1}{N \sqrt{V} d^{\frac{3}{2}}}   + \frac{1}{N \sqrt{V} L d} \Big).
}
\item For the last summand,
\eq{
&  \frac{ (1  - \frac{1}{V} )  }{N^2 L^3} \sum_{j = 1}^N  \alpha_{jj}     \mathsf{z}_{k}^\top  \bZin   \bX_j^\top  \mathbbm{1}_L  \mathbbm{1}_L^\top  \bX_j  \bZin^\top      \bZin  (\bX^\top_j - \tfrac{1}{V} \mathbbm{1}_V \mathbbm{1}_L^\top ) \mathbbm{1}_L  \\
&  =      \frac{ (1  - \frac{1}{V} )  }{N^2 L^3} \sum_{j = 1}^N    \mathsf{z}_{k}^\top  \bZin  \Big(   \alpha_{jj}   \bX_j^\top  \mathbbm{1}_L  \mathbbm{1}_L^\top  \bX_j  \bZin^\top      \bZin  (\bX^\top_j - \tfrac{1}{V} \mathbbm{1}_V \mathbbm{1}_L^\top ) \mathbbm{1}_L  \\
& \hspace{12em}- \E  \big[   \alpha_{jj}   \bX_j^\top  \mathbbm{1}_L  \mathbbm{1}_L^\top  \bX_j  \bZin^\top      \bZin  (\bX^\top_j - \tfrac{1}{V} \mathbbm{1}_V \mathbbm{1}_L^\top ) \mathbbm{1}_L  ~ \big \vert \sZin  \big] \Big) \\
 & \quad + \frac{ (1  - \frac{1}{V} ) \phi^\prime(0)^2 }{N  L^3}     \mathsf{z}_{k}^\top  \bZin  \E  \big[   \bX_1^\top  \mathbbm{1}_L  \mathbbm{1}_L^\top  \bX_1  \bZin^\top      \bZin  (\bX^\top_1 - \frac{1}{V} \mathbbm{1}_V \mathbbm{1}_L^\top ) \mathbbm{1}_L  ~ \big \vert \sZin  \big] \\
 & \quad +  \frac{ (1  - \frac{1}{V} )  }{N  L^3}   \mathsf{z}_{k}^\top  \bZin  \E  \big[  ( \alpha_{11} - \phi^\prime(0)^2 )  \bX_1^\top  \mathbbm{1}_L  \mathbbm{1}_L^\top  \bX_1  \bZin^\top      \bZin  (\bX^\top_1 - \frac{1}{V} \mathbbm{1}_V \mathbbm{1}_L^\top ) \mathbbm{1}_L  ~ \big \vert \sZin  \big]. ~~~~~~~~ \label{eq:meantermtsecondandfourth}
}
We have 
\eq{
 \mathsf{z}_{k}^\top  \bZin    \E  \Big[ \Big(    \alpha_{jj}   \bX_1^\top  \mathbbm{1}_L  \mathbbm{1}_L^\top  \bX_1  \bZin^\top      \bZin  (\bX^\top_1 \! - \! \frac{1}{V} \mathbbm{1}_V \mathbbm{1}_L^\top ) \mathbbm{1}_L \Big)^2  ~ \big \vert \sZin\Big] \bZin ^\top    \mathsf{z}_{k}   
& \leq C  L^2  \mathsf{z}_{k}^\top  \bZin     \E  \Big[   \bX_1^\top  \mathbbm{1}_L  \mathbbm{1}_L^\top  \bX_1   \Big] \bZin ^\top   \mathsf{z}_{k} \\
& \leq C L^2 \Big( \frac{L}{d} + \frac{L^2}{V d} \Big).
}
Moreover, by using Proposition \ref{prop:multimatrixexpectation}
\eq{
    & \E  \big[   \bX_1^\top  \mathbbm{1}_L  \mathbbm{1}_L^\top  \bX_1  \bZin^\top      \bZin  (\bX^\top_1  \! - \! \frac{1}{V} \mathbbm{1}_V \mathbbm{1}_L^\top ) \mathbbm{1}_L ~ \big \vert \sZin   \big]    \\
& =  \E  \big[   \bX_1^\top  \mathbbm{1}_L  \mathbbm{1}_L^\top  \bX_1  \bZin^\top      \bZin   \bX^\top_1   \mathbbm{1}_L    \big \vert \sZin  \big]  \! - \! \frac{L}{V}  \E  \big[   \bX_1^\top  \mathbbm{1}_L  \mathbbm{1}_L^\top  \bX_1  \bZin^\top      \bZin     \mathbbm{1}_V ~ \big \vert \sZin     \big]  \\
& =    L   \E  \big[ \bx_1   \bx_1^\top  \bZin^\top      \bZin  \bx_1   \big \vert \sZin \big] \!  + \! \Big(  \frac{L(L-1)}{V^2} \tr(\bZin^\top      \bZin  ) \! - \!    \frac{2 L(L-1) }{V^3} \mathbbm{1}_V^\top    \bZin^\top      \bZin    \mathbbm{1}_V   \Big)  \mathbbm{1}_V  \! + \!  \frac{  L (L- 2)}{V^2}    \bZin^\top      \bZin  \mathbbm{1}_V.
    }    
Lastly,
\eq{
 &  \Big \lvert  \frac{1}{N  L^3}  \mathsf{z}_{k}^\top  \bZin  \E  \big[  ( \alpha_{11} - \phi^\prime(0)^2 )  \bX_1^\top  \mathbbm{1}_L  \mathbbm{1}_L^\top  \bX_1  \bZin^\top      \bZin  (\bX^\top_1 - \frac{1}{V} \mathbbm{1}_V \mathbbm{1}_L^\top ) \mathbbm{1}_L  ~ \big \vert \sZin  \big]  \Big \rvert \\
& \leq  \frac{1}{N  L^3}    \E  \Big[  \lvert  \alpha_{11} - \phi^\prime(0)^2  \rvert \lvert   \mathsf{z}_{k}^\top  \bZin   \bX_1^\top  \mathbbm{1}_L  \rvert ~ \lvert      \mathbbm{1}_L^\top  \bX_1  \bZin^\top      \bZin  (\bX^\top_1 - \frac{1}{V} \mathbbm{1}_V \mathbbm{1}_L^\top ) \mathbbm{1}_L \rvert   ~ \Big \vert \sZin  \Big]   \leq \frac{C \log V}{N L^{5/2} \sqrt{d}},
}
where we used \ref{cons:innerprodrestricted},\ref{cons:coeffbound}, \ref{event:boundZin4}, \ref{event:boundZin5} for the last inequality.

Therefore,  by Chebyshev's inequality,  with probability $1 - o_V(1)$, we have 
\eq{
\eqref{eq:meantermtsecondandfourth} 
& \leq  C \log V \Big(     \frac{1}{N  L^2 \sqrt{L \wedge d}} +    \frac{1}{N  L \sqrt{Vd} }     \Big)
}
\end{itemize}
Therefore, we have
\eq{
 \lvert   \vartheta_{11b}  \rvert  \leq C \log V  \Big( \frac{1 }{N \sqrt{V} ( L \wedge d) \sqrt{d} }     +      \frac{1}{N  L^2 \sqrt{L \wedge d}}       \Big).  \label{eq:meanbound2ndterm}
}
Finally,  we consider $ \vartheta_{11a}$:
\eq{
 & \vartheta_{11a}  =  (1 - \tfrac{1}{L}) \frac{ \phi^\prime(0)^2}{N L} \sum_{j = 1}^N   \mathsf{z}_{k}^\top  \bZin  \Big(  \frac{1}{NL} \sum_{ \substack{ i = 1  \\ i \neq j }}^N ( \indic{\bx_i = \bx_j }-   \tfrac{1}{V}  )   \bx_i   \bx_i^\top  \Big) \bZin^\top      \bZin (\bX_j^\top - \tfrac{1}{V} \mathbbm{1}_V   \mathbbm{1}_L^\top ) \mathbbm{1}_L  \\ 
 & -   \frac{ \phi^\prime(0)^2}{N L^2} \sum_{j = 1}^N  \mathsf{z}_{k}^\top  \bZin  \Big(  \frac{1}{NL} \sum_{ \substack{ i = 1  \\ i \neq j }}^N ( \indic{\bx_i = \bx_j }  \! - \!   \tfrac{1}{V} ) (  \bx_i   \mathbbm{1}_{L}^\top   \bX_i  \!  + \!   \bX_i^\top   \mathbbm{1}_{L}   \bx_i^\top \Big) \bZin^\top      \bZin (\bX_j^\top \! -\! \tfrac{1}{V} \mathbbm{1}_V   \mathbbm{1}_L^\top ) \mathbbm{1}_L  \\
 & +  \frac{ \phi^\prime(0)^2}{N L} \sum_{j = 1}^N   \mathsf{z}_{k}^\top  \bZin  \Big(  \frac{1}{NL} \sum_{ \substack{ i = 1  \\ i \neq j }}^N ( \indic{\bx_i = \bx_j }-   \tfrac{1}{V}  )   \bN_i^\top   \bN_i  \Big) \bZin^\top      \bZin (\bx_j  - \tfrac{1}{V} \mathbbm{1}_V  )   \\ 
 & +  \frac{ \phi^\prime(0)^2}{N L} \sum_{j = 1}^N  \mathsf{z}_{k}^\top  \bZin  \Big(  \frac{1}{NL} \sum_{ \substack{ i = 1  \\ i \neq j }}^N ( \indic{\bx_i = \bx_j }-   \tfrac{1}{V} )   \bN_i^\top   \bN_i  \Big) \bZin^\top      \bZin (\bN_j^\top - \tfrac{1}{V} \mathbbm{1}_V   \mathbbm{1}_{L-1}^\top ) \mathbbm{1}_{L-1}  \\ 
& -   \frac{ \phi^\prime(0)^2}{N L^2} \sum_{j = 1}^N  \mathsf{z}_{k}^\top  \bZin  \Big(  \frac{1}{NL} \sum_{ \substack{ i = 1  \\ i \neq j }}^N ( \indic{\bx_i = \bx_j }-   \tfrac{1}{V} )   \bN_i^\top   \mathbbm{1}_L  \mathbbm{1}_L^\top   \bN_i \Big) \bZin^\top      \bZin (\bN_j^\top - \tfrac{1}{V} \mathbbm{1}_V   \mathbbm{1}_{L-1}^\top ) \mathbbm{1}_{L-1}  \\
& \eqqcolon   \vartheta_{aa} +   \vartheta_{ab} +   \vartheta_{ac} +   \vartheta_{ad} +   \vartheta_{ae}.
}
For the first summand, we write
\eq{
&  \vartheta_{aa}
 \coloneqq  \underbrace{ (1 - \frac{1}{L}) \frac{ \phi^\prime(0)^2}{N L} \sum_{j = 1}^N   \mathsf{z}_{k}^\top  \bZin  \Big(  \frac{1}{NL} \sum_{ \substack{ i = 1  \\ i \neq j }}^N ( \indic{\bx_i = \bx_j }-   \tfrac{1}{V}  )    \bx_i   \bx_i^\top  \Big) \bZin^\top      \bZin (\bx_j - \tfrac{1}{V} \mathbbm{1}_V) }_{\coloneqq  \vartheta_{aa1} } \\
& + \underbrace{ (1 - \frac{1}{L}) \frac{ \phi^\prime(0)^2}{N L} \sum_{j = 1}^N   \mathsf{z}_{k}^\top  \bZin  \Big(  \frac{1}{NL} \sum_{ \substack{ i = 1  \\ i \neq j }}^N ( \indic{\bx_i = \bx_j }-   \frac{1}{V} )   \bx_i   \bx_i^\top  \Big) \bZin^\top      \bZin (\bN_j^\top - \frac{1}{V} \mathbbm{1}_V   \mathbbm{1}_{L-1}^\top ) \mathbbm{1}_{L-1} }_{\coloneqq  \vartheta_{aa2} }.
}
We have  
\eq{
 \lvert \vartheta_{aa1}  \rvert  
& \leq  \Big(   \frac{ \phi^\prime(0)^2}{N^2 L^2} \sum_{j = 1}^N  \sum_{ \substack{ i = 1  \\ i \neq j }}^N \vert \indic{\bx_i = \bx_j }-   \tfrac{1}{V}   \rvert \Big)  \sup_{i \neq j } \Big \lvert   \indic{\bx_i \neq \be_{k} }     \mathsf{z}_{k}^\top  \bZin \bx_i \bx_i^\top  \bZin^\top      \bZin (\bx_j - \frac{1}{V} \mathbbm{1}_V) \Big   \rvert \\
& +  \Big( \frac{ \phi^\prime(0)^2}{N^2 L^2} \sum_{j = 1}^N  \sum_{ \substack{ i = 1  \\ i \neq j }}^N  \lvert \indic{\bx_j = \be_k }-   \tfrac{1}{V} \rvert ( \indic{\bx_i  =  \be_{k} } - \tfrac{1}{V} )   \Big)  \sup_{j } \lvert  \mathsf{z}_{k}^\top  \bZin  \be_k    \be_k^\top  \bZin^\top      \bZin (\bx_j - \tfrac{1}{V} \mathbbm{1}_V)  \Big  \rvert  \\
& +  \Big( \frac{ \phi^\prime(0)^2}{N^2 V L^2} \sum_{j = 1}^N  \sum_{ \substack{ i = 1  \\ i \neq j }}^N  \lvert \indic{\bx_j = \be_k }-   \tfrac{1}{V} \rvert \Big)  \sup_{j } \lvert   \mathsf{z}_{k}^\top  \bZin  \be_k    \be_k^\top  \bZin^\top      \bZin (\bx_j - \tfrac{1}{V} \mathbbm{1}_V)  \Big  \rvert  \\
& \leq \frac{C \log V}{V L^2 \sqrt{d}}.
}
where we use \ref{event:boundZin1} and \ref{event:discrete4}. 

Moreover, let
\eq{
 \vartheta_{aa2}  \eqqcolon  (1 - \frac{1}{L}) \frac{ \phi^\prime(0)^2}{N L} \sum_{j = 1}^N   \vartheta_{aa2,j}.
}
We have $\E [ \vartheta_{aa2,j} \vert \sZin  ] = 0$ and    $\E [ \vartheta_{aa2,j}  \vartheta_{aa2,j^\prime} \vert \sZin  ] = 0$  for $j \neq j^\prime$, and
\eq{
\E [   \vartheta_{aa2,j}^2 \vert \sZin ]  &    \leq  \frac{ CL }{d}    \E \Big[   \mathsf{z}_{k}^\top  \bZin  \Big(  \frac{1}{NL} \sum_{ \substack{ i = 1  \\ i \neq j }}^N  \indic{\bx_i = \bx_j }   \bx_i   \bx_i^\top  \Big) \bZin^\top \bZin  \Big(  \frac{1}{NL} \sum_{ \substack{ i = 1  \\ i \neq j }}^N   \indic{\bx_i = \bx_j }  \bx_i   \bx_i^\top  \Big)  \bZin^\top  \mathsf{z}_{k}   \vert \sZin \Big ] \\
& +  \frac{ CL }{ d V^2}  \E \Big[ \mathsf{z}_{k}^\top  \bZin  \Big(  \frac{1}{NL} \sum_{ \substack{ i = 1  \\ i \neq j }}^N    \bx_i   \bx_i^\top  \Big) \bZin^\top \bZin  \Big(  \frac{1}{NL} \sum_{ \substack{ i = 1  \\ i \neq j }}^N    \bx_i   \bx_i^\top  \Big)  \bZin^\top \mathsf{z}_{k} \vert \sZin \Big]   \leq \frac{ C  }{V^2 L d^2},
}
where we use  \ref{event:discrete8} and \ref{event:boundZin1}.

Therefore,  by Chebyshev's inequality with probability $1  - o_V(1)$, we have
\eq{
\lvert \vartheta_{aa2}   \rvert \leq  \frac{ C  \log V}{\sqrt{N} V L^{3/2} d} .
}
Therefore,
\eq{
\lvert \vartheta_{aa}  \rvert \leq   \frac{C \log V}{V L^2 \sqrt{d}} +\frac{ C  \log V}{\sqrt{N} V L^{3/2} d}.
}
Moreover,  for the second term, we write
\eq{
& \lvert \vartheta_{ab}  \rvert  
 \leq  \Big(  \frac{ \phi^\prime(0)^2}{N^2 L^3} \sum_{j = 1}^N  \sum_{ \substack{ i = 1  \\ i \neq j }}^N \vert \indic{\bx_i = \bx_j }-   \tfrac{1}{V}  \rvert \Big)    \sup_{i \neq j } \lvert \indic{\bx_i \neq \be_k } \mathsf{z}_{k}^\top  \bZin  (  \bx_i   \mathbbm{1}_{L}^\top   \bX_i   +   \bX_i^\top   \mathbbm{1}_{L}   \bx_i^\top  ) \bZin^\top      \bZin (\bX_j^\top - \tfrac{1}{V} \mathbbm{1}_V   \mathbbm{1}_L^\top ) \mathbbm{1}_L  \rvert \\
& + \Big(  \frac{ \phi^\prime(0)^2}{N^2 L^3} \sum_{j = 1}^N  \sum_{ \substack{ i = 1  \\ i \neq j }}^N \vert \indic{\bx_j = \be_k } \! - \!   \tfrac{1}{V}   \rvert  (\indic{\bx_i = \be_k } \! - \! \tfrac{1}{V} )   \Big)     \sup_{ j } \lvert  \mathsf{z}_{k}^\top  \bZin   (  \be_k  \mathbbm{1}_{L}^\top   \bX_i   +   \bX_i^\top   \mathbbm{1}_{L}   \be_k^\top \Big) \bZin^\top      \bZin (\bX_j^\top \! - \! \tfrac{1}{V} \mathbbm{1}_V   \mathbbm{1}_L^\top ) \mathbbm{1}_L  \rvert \\
& + \Big(  \frac{ \phi^\prime(0)^2}{N^2 V L^3} \sum_{j = 1}^N  \sum_{ \substack{ i = 1  \\ i \neq j }}^N \vert \indic{\bx_j = \be_k }-   \tfrac{1}{V}   \rvert      \Big)    \sup_{ j } \lvert  \mathsf{z}_{k}^\top  \bZin   (  \be_k  \mathbbm{1}_{L}^\top   \bX_i   +   \bX_i^\top   \mathbbm{1}_{L}   \be_k^\top \Big) \bZin^\top      \bZin (\bX_j^\top - \tfrac{1}{V} \mathbbm{1}_V   \mathbbm{1}_L^\top ) \mathbbm{1}_L  \rvert \\
& \leq \frac{C \log V}{V L^2 d }.
}
where we used \ref{event:boundZin1}, \ref{event:boundZin3}, \ref{event:discrete4} and \ref{cons:innerprodrestricted}.

For the third term, we write
\eq{
  \vartheta_{ac} & =  \frac{ \phi^\prime(0)^2}{N L^2} \sum_{i = 1}^N  \mathsf{z}_{k}^\top  \bZin  \Big(  \bN_i^\top   \bN_i - \frac{L-1}{V} \bs{I}_V \Big) \bZin^\top      \bZin   \Big( \frac{1}{N } \sum_{ \substack{ j = 1  \\ j \neq i }}^N ( \indic{\bx_i = \bx_j }-   \tfrac{1}{V}  )  (\bx_j  - \tfrac{1}{V} \mathbbm{1}_V  )    \Big)  \\
& + \frac{ \phi^\prime(0)^2 (L-1) }{N L^2 V}  \mathsf{z}_{k}^\top  \bZin  \bZin^\top      \bZin    \Big( \frac{1}{N }  \sum_{   j = 1   }^N   (\bx_j  - \tfrac{1}{V} \mathbbm{1}_V  )    (\bx_j  - \tfrac{1}{V} \mathbbm{1}_V  )^\top   \Big) \sum_{i = 1}^N  (\bx_i  - \tfrac{1}{V} \mathbbm{1}_V  )   \\
& - \frac{ \phi^\prime(0)^2 (1 - \frac{1}{V}) (L-1) }{N^2 L^2 V}  \mathsf{z}_{k}^\top  \bZin  \bZin^\top      \bZin     \sum_{i = 1}^N  (\bx_i  - \tfrac{1}{V} \mathbbm{1}_V  )   \\
& \eqqcolon    \vartheta_{ac1}  +   \vartheta_{ac2}  +   \vartheta_{ac3}. 
} 
By using  \ref{event:boundZin1}, \ref{event:discrete3}, \ref{event:discrete5},
\eq{
\E[    \vartheta_{ac2}^2 \vert \sZin ] \leq \frac{C}{N^2 L^2 V^3 d}    \mathsf{z}_{k}^\top  \bZin  \bZin^\top       \bZin    \bZin^\top   \mathsf{z}_{k}
 \leq \frac{C}{N^2 L^2 V d^3}.
}
Moreover,  
\eq{
\E[    \vartheta_{ac3}^2 \vert \sZin ] \leq    \frac{C}{N^3 L^2 V^2 d}      \mathsf{z}_{k}^\top  \bZin  \bZin^\top       \bZin    \bZin^\top   \mathsf{z}_{k} \leq  \frac{C}{N^3 L^2   d^3}  
}
Therefore,
\eq{
\lvert   \vartheta_{ac2}  \rvert \leq \frac{C \log V}{N \sqrt{V} L d^{\frac{3}{2}}}, \qquad  \lvert   \vartheta_{ac3}  \rvert \leq \frac{C \log V}{N \sqrt{N} L d^{\frac{3}{2}}}.
}

Moreover,  we have
\eq{
 \E[    \vartheta_{ac1}^2 ]   
&\leq \frac{C}{N^2 V^2 L^4 d} \sum_{i = 1}^N \E \Big[   \mathsf{z}_{k}^\top  \bZin  \Big(  \bN_i^\top   \bN_i - \frac{L-1}{V} \bs{I}_V \Big) \bZin^\top        \bZin  \Big(  \bN_i^\top   \bN_i - \frac{L-1}{V} \bs{I}_V \Big)   \bZin ^\top  \mathsf{z}_{k} \vert \sZin \Big] \\
&  \leq \frac{C}{N^2 V^2 L^4 d} \frac{L- 1}{V} \sum_{i = 1}^N \E \Big[  \mathsf{z}_{k}^\top  \bZin     \bZin ^\top  \mathsf{z}_{k} \vert \sZin \Big]  
  -  \frac{C}{N^2 V^2 L^4 d} \frac{L - 1}{V^2} \sum_{i = 1}^N \E \Big[   \mathsf{z}_{k}^\top  \bZin    \bZin^\top        \bZin    \bZin ^\top   \mathsf{z}_{k} \vert \sZin \Big] \\
& \leq  \frac{C}{N V^2 L^3 d^2}.
}
Therefore, by Chebyshev's inequality, we have 
\eq{
\lvert \vartheta_{ac1} \rvert \leq \frac{C \log V}{\sqrt{N} V L^{\frac{3}{2}} d}
}
For the fourth term, we have $\E[    \vartheta_{ad}  \vert \sZin ] = 0$ and
\eq{
 \E[  \vartheta_{ad}^2 \vert \sZin ] 
& =    \frac{C}{N^4 L^4}  \E  \! \Big[ \Big( \sum_{i, j = 1}^N \! \mathbbm{1}_{i \neq j} (\mathbbm{1}_{\bx_i = \bx_j} \! - \! \tfrac{1}{V} \mathbbm{1}_V) \mathsf{z}_{k}^\top  \bZin  \bN_i^\top   \bN_i    \bZin^\top      \bZin (\bN_j^\top \! - \! \frac{1}{V} \mathbbm{1}_V   \mathbbm{1}_{L-1}^\top ) \mathbbm{1}_{L-1} \Big)^2 \vert \sZin \Big] \\
& =    \frac{C}{V N^4 L^4}  \sum_{i, j = 1}^N  \E \Big[ \Big( \mathsf{z}_{k}^\top  \bZin      \bN_i^\top   \bN_i    \bZin^\top      \bZin (\bN_j^\top - \frac{1}{V} \mathbbm{1}_V   \mathbbm{1}_{L-1}^\top ) \mathbbm{1}_{L-1} \Big)^2 \vert \sZin \Big] \\
& \leq    \frac{C}{V N^4 L^3 d}  \sum_{i, j = 1}^N \E \Big[  \mathsf{z}_{k}^\top  \bZin      \bN_i^\top   \bN_i    \bZin^\top   \bZin  \bN_i^\top   \bN_i   \bZin^\top  \mathsf{z}_{k}  \vert \sZin \Big] \\
&  \leq    \frac{C}{V N^2 L  d^2 (L \wedge d)}  
}
where we used \ref{event:prop8result2} in the last step.

 Therefore,  by Chebyshev's inequality with probability $1  - o_V(1)$, we have
\eq{
\lvert  \vartheta_{ad}  \rvert \leq    \frac{C \log V}{N \sqrt{V L }  d \sqrt{L \wedge d}} 
}

For the last term, we have $\E[  \vartheta_{ae} \vert \sZin   ] = 0$ and
\eq{
 \E[ \vartheta_{ae}^2  \vert \sZin  ] 
& \leq      \frac{ C}{N^4 L^6} \E \Big[ \Big( \sum_{i, j = 1}^N   \mathbbm{1}_{i \neq j} ( \mathbbm{1}_{\bx_i = \bx_j} - \tfrac{1}{V}) \mathsf{z}_{k}^\top  \bZin    \bN_i^\top   \mathbbm{1}_L  \mathbbm{1}_L^\top   \bN_i   \bZin^\top      \bZin (\bN_j^\top - \frac{1}{V} \mathbbm{1}_V   \mathbbm{1}_{L-1}^\top ) \mathbbm{1}_{L-1}  \Big)^2  \vert \sZin   \Big] \\
& \leq      \frac{ C}{V N^4 L^6} \sum_{i, j = 1}^N    \E \Big[ \Big( \mathsf{z}_{k}^\top  \bZin   \bN_i^\top   \mathbbm{1}_L  \mathbbm{1}_L^\top   \bN_i   \bZin^\top      \bZin (\bN_j^\top - \frac{1}{V} \mathbbm{1}_V   \mathbbm{1}_{L-1}^\top ) \mathbbm{1}_{L-1}  \Big)^2 \vert \sZin   \Big] \\
&  \leq      \frac{ C}{N^4 L^4 d V}  \sum_{i, j = 1}^N \E \Big[  \mathsf{z}_{k}^\top  \bZin   \bN_i^\top   \mathbbm{1}_{L-1}  \mathbbm{1}_{L-1}^\top   \bN_i    \bZin  ^\top   \mathsf{z}_{k}    \vert \sZin  \Big] \\
&  \leq      \frac{ C}{N^2 L^4 d V} \Big(1 + \frac{L}{d}  \Big)
}
where we use \ref{event:boundZin4} in the last step.

 Therefore,  by Chebyshev's inequality with probability $1  - o_V(1)$, we have
\eq{
\lvert \vartheta_{ae} \rvert \leq    \frac{C \log V}{N  \sqrt{V}  L^{3/2}  \sqrt{d}(L \wedge d)^{1/2}} . 
} 
Overall, we have
\eq{
\lvert  \vartheta_{11a} \rvert   \leq  C \log V \Big(   \frac{1}{V L^2 \sqrt{d}} +  \frac{1}{\sqrt{N} V \sqrt{L d} (L \wedge d)} \Big).   \label{eq:meanbound3rdterm}
}
Therefore, by \eqref{eq:meanbound1stterm}-\eqref{eq:meanbound2ndterm}-\eqref{eq:meanbound3rdterm} and using $N \ll VL$ and $L \ll V$, we have 
\eq{
   \lvert   \vartheta_{11} \rvert   &  \leq 
 C \log V \Big(   \frac{1}{V L^2 \sqrt{d}} +  \frac{1}{\sqrt{N} V \sqrt{L d} (L \wedge d)} \Big) \\
& +   C \log V  \Big( \frac{1 }{N \sqrt{V} ( L \wedge d) \sqrt{d} }     +      \frac{1}{N  L^2 \sqrt{L \wedge d}}       \Big)  
 + \frac{C \log V}{V L \sqrt{d} \sqrt{L \wedge d} } \frac{1}{V \wedge L^2 \wedge L \sqrt{d}} \\
 & \leq  C \log V  \Big( \frac{1 }{N \sqrt{V} ( L \wedge d) \sqrt{d} }   + \frac{1}{V L^2 (L \wedge d)^{1/2}}  \Big)  .   \label{eq:meanfirsttwoterms}
}
Finally,
\eq{
 \vartheta_{12}  
 & =   \frac{1}{N^2 L^2 V}  \sum_{i, j = 1}^N    \alpha_{ij}  ( \indic{\bx_i = \bx_j}  -   \tfrac{1}{V}  )   \mathsf{z}_{k}^\top  \bZin \bX_i^\top   \big(\bs{I}_L - \tfrac{1}{L} \mathbbm{1}_L  \mathbbm{1}_L^\top \big)  \bX_i  \bZin^\top      \bZin       \mathbbm{1}_V  \\
 & =   \frac{1}{N^2 L^2 V}  \sum_{i, j = 1}^N    \alpha_{ij}  ( \indic{\bx_i = \bx_j}  -   \tfrac{1}{V}  )   \mathsf{z}_{k}^\top  \bZin \Big( \bX_i^\top   \bX_i  - \tfrac{L}{V} \bs{I}_V \Big) \bZin^\top      \bZin       \mathbbm{1}_V  \\
& +  \frac{1}{N^2 L  V^2} \sum_{i, j = 1}^N    \alpha_{ij}  ( \indic{\bx_i = \bx_j}  -   \tfrac{1}{V}  )    \mathsf{z}_{k}^\top  \bZin  \bZin^\top      \bZin       \mathbbm{1}_V  \\ 
&  -  \frac{1}{N^2 L^3 V}  \sum_{i, j = 1}^N    \alpha_{ij}  ( \indic{\bx_i = \bx_j}  -   \tfrac{1}{V}  )   \mathsf{z}_{k}^\top  \bZin \bX_i^\top    \mathbbm{1}_L  \mathbbm{1}_L^\top \bX_i  \bZin^\top      \bZin       \mathbbm{1}_V 
}
\begin{itemize}[leftmargin=*]
\item For the first term,
\eq{
& \frac{1}{N^2 L^2 V}  \sum_{i, j = 1}^N    \alpha_{ij}  ( \indic{\bx_i = \bx_j}  -   \tfrac{1}{V}  )   \mathsf{z}_{k}^\top  \bZin \big( \bX_i^\top   \bX_i  - \tfrac{L}{V} \bs{I}_V \big) \bZin^\top      \bZin       \mathbbm{1}_V \\
&\leq  \frac{1}{N^2 L^2 V}  \Big( \sum_{i, j = 1}^N      \lvert \indic{\bx_i = \bx_j}  -   \tfrac{1}{V}  \rvert  \Big) \sup_{i,j} \Big \lvert (\alpha_{ij} - \phi^\prime(0)^2) \mathsf{z}_{k}^\top  \bZin \big( \bX_i^\top   \bX_i  - \tfrac{L}{V} \bs{I}_V \big) \bZin^\top      \bZin       \mathbbm{1}_V \Big \rvert  \\
& + \frac{\phi^\prime(0)^2}{N^2 L^2 V}  \Big( \sum_{i = 1}^N    \Big     \lvert  \sum_{j = 1}^N   ( \indic{\bx_i = \bx_j}  -   \tfrac{1}{V} ) \Big  \rvert  \Big) \sup_{i } \Big \lvert (  \mathsf{z}_{k}^\top  \bZin \big( \bX_i^\top   \bX_i  - \tfrac{L}{V} \bs{I}_V \big) \bZin^\top      \bZin       \mathbbm{1}_V \Big \rvert  \\
&\leq  \frac{1}{N^2 L^2 V}  \Big( \sum_{i, j = 1}^N      \lvert \indic{\bx_i = \bx_j}  -   \tfrac{1}{V}  \rvert  \Big) \sup_{i,j} \Big \lvert (\alpha_{ij} - \phi^\prime(0)^2) \mathsf{z}_{k}^\top  \bZin \big( \bX_i^\top   \bX_i  - \tfrac{L}{V} \bs{I}_V \big) \bZin^\top      \bZin       \mathbbm{1}_V \Big \rvert  \\
& + \frac{\phi^\prime(0)^2}{N  L^2 V}     \Big     \lVert  \sum_{j = 1}^N   (  \bx_j   -   \tfrac{1}{V} \mathbbm{1}_V) \Big  \rVert_{\infty} \sup_{i } \Big \lvert   \mathsf{z}_{k}^\top  \bZin \big( \bX_i^\top   \bX_i  - \tfrac{L}{V} \bs{I}_V \big) \bZin^\top      \bZin       \mathbbm{1}_V \Big \rvert  \\
& \lesssim \frac{1}{VL^2 \sqrt{V} (L \wedge d)},
}
where we \ref{event:boundZin2}, \ref{event:boundZin8},   \ref{event:discrete3}, and \ref{cons:coeffbound}.
\item For the second term
\eq{
 & \frac{1}{N^2 L  V^2}  \sum_{i = 1}^N  \sum_{j = 1}^N  \alpha_{ij}  ( \indic{\bx_i = \bx_j}  -   \tfrac{1}{V}  )    \mathsf{z}_{k}^\top  \bZin  \bZin^\top      \bZin       \mathbbm{1}_V  \\
 & \leq     \frac{1}{N^2 L  V^2}  \Big( \sum_{i = 1}^N  \sum_{j = 1}^N \lvert \indic{\bx_i = \bx_j}  -   \tfrac{1}{V}  \rvert \Big) \sup_{i,j} \lvert   \alpha_{ij}     \mathsf{z}_{k}^\top  \bZin  \bZin^\top      \bZin       \mathbbm{1}_V  \rvert   \leq    \frac{C \log V}{V^{3/2} L   d^{3/2}},
}
where we used \ref{event:boundZin3}, \ref{event:discrete4}, and \ref{cons:coeffbound}.
 \item For the third term,
\eq{
& \frac{1}{N^2 L^3 V}  \sum_{i, j = 1}^N    \alpha_{ij}  ( \indic{\bx_i = \bx_j}  -   \tfrac{1}{V}  )   \mathsf{z}_{k}^\top  \bZin \bX_i^\top    \mathbbm{1}_L  \mathbbm{1}_L^\top \bX_i  \bZin^\top      \bZin       \mathbbm{1}_V \\
& \leq \frac{1}{N^2 L^3 V}  \Big( \sum_{i, j = 1}^N \lvert \mathbbm{1}_{\bx_i = \bx_j} - \tfrac{1}{V} \mathbbm{1}_V  \rvert \Big) \sup_{i,j} \lvert \alpha_{ij}   \mathsf{z}_{k}^\top  \bZin \bX_i^\top    \mathbbm{1}_L  \mathbbm{1}_L^\top \bX_i  \bZin^\top      \bZin       \mathbbm{1}_V  \rvert  \\
& \leq \frac{1}{V^2 L^2} \frac{\Big( \sqrt{L} + \sqrt{\frac{V}{d}} \Big)}{\sqrt{L \wedge d}} 
}
where we used \ref{event:boundZin4}, \ref{event:boundZin5}, \ref{event:discrete4}, and \ref{cons:coeffbound}.
\end{itemize}

Therefore, by Chebyshev's inequality with probability $1  - o_V(1)$, we have
\eq{
\lvert  \vartheta_{12}  \rvert \leq   \frac{C  \log^3 V}{N  L^3 V d^2}   +   \frac{C \log^2 V}{V L  \sqrt{V} d^{3/2}}. \label{eq:meanthirdterm}
} 
By   \eqref{eq:meanfirsttwoterms}-\eqref{eq:meanthirdterm}, overall we have
\eq{
 \vartheta_{1} & \leq 
 C \log V \Big(   \frac{1}{V L^2 \sqrt{d}} +  \frac{1}{\sqrt{NV} L^{3/2} d} +  \frac{1}{N \sqrt{V L}  d \sqrt{L \wedge d}} +    \frac{1}{N L  d \sqrt{V} (L \wedge \sqrt{V})} \Big) \\
& +   C \log V  \Big( \frac{1 }{N \sqrt{V} ( L \wedge d) \sqrt{d} }   +   \frac{1 }{N  \sqrt{V} d^{\frac{3}{2}}}  +      \frac{1}{N  L^2 \sqrt{L \wedge d}}       \Big)  
 + \frac{C \log V}{V \sqrt{L d} (L \wedge d)} \frac{1}{V \wedge L^2 \wedge L \sqrt{d}} \\
 & \leq  C \log V  \Big( \frac{1 }{N \sqrt{V} ( L \wedge d) \sqrt{d} }   +  \frac{1}{\sqrt{NV} L^{3/2}  d}  + \frac{1}{V L^{3/2} d (L \wedge d)}  + \frac{1}{V^2 \sqrt{ Ld} (L \wedge d) } \Big) \\
 & + \frac{C \log V}{V L^2 (L \wedge d)^{1/2}}. 
}
 
\subsubsection{Concentration bound for $ \varphi$}
\label{sec:concentrations12}

We recall that
\eq{
   \varphi & =     \frac{  \mu_{kl}  }{N^2 L^2}   \sum_{i, j = 1}^N  \alpha_{ij}  \big(  \bs{e}_1 - \tfrac{1}{L}  \mathbbm{1}_L   \big)^\top    \bX_i  \bZin^\top       \bZin    \bX_j^\top \mathbbm{1}_L              ( \bx_j -   \tfrac{1}{V} \mathbbm{1}_{V} )^\top             \Zout^\top    \Zout ( \bx_i -   \tfrac{1}{V} \mathbbm{1}_{V} )        
}
In this part, we will focus on the term
\eq{
&  \frac{1}{N^2 L^2}   \sum_{i, j = 1}^N   \alpha_{ij}  \big(  \be_1 - \tfrac{1}{L}  \mathbbm{1}_L   \big)^\top     \bX_i  \bZin^\top       \bZin    \bX_j^\top \mathbbm{1}_L              ( \bx_j -   \tfrac{1}{V} \mathbbm{1}_{V} )^\top             \Zout^\top    \Zout ( \bx_i -   \tfrac{1}{V} \mathbbm{1}_{V} ) \\
& =    \frac{1}{N^2 L^2}   \sum_{i, j = 1}^N   \alpha_{ij} \tr \Big(  \bZin^\top       \bZin       \bX_j^\top \mathbbm{1}_L              ( \bx_j -   \tfrac{1}{V} \mathbbm{1}_{V} )^\top             \Zout^\top    \Zout         ( \bx_i -   \tfrac{1}{V} \mathbbm{1}_{V} )    \bx_i^\top \Big) \\
& -    \frac{1}{N^2 L^3}   \sum_{i, j = 1}^N    \alpha_{ij}   \tr \Big(    \bZin^\top       \bZin    \bX_j^\top \mathbbm{1}_L              ( \bx_j -   \tfrac{1}{V} \mathbbm{1}_{V} )^\top             \Zout^\top    \Zout ( \bx_i -   \tfrac{1}{V} \mathbbm{1}_{V} )     \mathbbm{1}_L^\top    \bX_i  \Big) \\
& \eqqcolon   \varphi_1  +    \varphi_2.
}
For the first term,  we write
\eq{
  \varphi_1 
  &  =  \frac{  \phi^\prime(0)^2  }{N^2 L^2}   \sum_{i, j = 1}^N  \tr \Big(   \Zout  ( \bx_i -   \tfrac{1}{V} \mathbbm{1}_{V} )    \bx_i^\top  \bZin^\top       \bZin       \bX_j^\top \mathbbm{1}_L              ( \bx_j -   \tfrac{1}{V} \mathbbm{1}_{V} )^\top             \Zout^\top    \Big)    \\
&  +     \frac{1}{N^2 L^2}   \sum_{i, j = 1}^N  \tr \Big(   \Zout      ( \alpha_{ij} -   \phi^\prime(0)^2 )     ( \bx_i -   \frac{1}{V} \mathbbm{1}_{V} )   \bx_i^\top \bZin^\top       \bZin       \bX_j^\top \mathbbm{1}_L              ( \bx_j -   \frac{1}{V} \mathbbm{1}_{V} )^\top             \Zout^\top    \Big)  \\
& \eqqcolon     \varphi_{11}   +    \varphi_{12} 
}
We start with the second term. By Proposition \ref{prop:gaussquadraticformtrace}, we have 
\eq{
\varphi_{12} 
& = \frac{1}{N^2 L^2}     \sum_{i, j = 1}^N  ( \alpha_{ij} -   \phi^\prime(0)^2 )     ( \indic{\bx_i = \bx_j}-   \frac{1}{V} )   \bx_i^\top   \bZin^\top       \bZin       \bX_j^\top \mathbbm{1}_L             \\
& \pm \frac{\log^2 V}{N^2 L^2 \sqrt{d}} \Big \lVert \sum_{i, j = 1}^N    ( \alpha_{ij} -   \phi^\prime(0)^2 )     ( \bx_i -   \frac{1}{V} \mathbbm{1}_{V} )   \bx_i^\top   \bZin^\top       \bZin       \bX_j^\top \mathbbm{1}_L              ( \bx_j -   \frac{1}{V} \mathbbm{1}_{V} )^\top   \Big \rVert_F            
}
Let $\bs{M}  \coloneqq \big \{  ( \alpha_{ij} -   \phi^\prime(0)^2) \bx_i^\top   \bZin^\top       \bZin       \bX_j^\top \mathbbm{1}_L    )   \big\}_{i,j \in [N]}$. We have
\eq{
\Big \lVert \sum_{i, j = 1}^N    ( \alpha_{ij} -   \phi^\prime(0)^2 )     ( \bx_i -   \tfrac{1}{V} \mathbbm{1}_{V} )   \bx_i^\top   \bZin^\top    &    \bZin       \bX_j^\top \mathbbm{1}_L              ( \bx_j -   \tfrac{1}{V} \mathbbm{1}_{V} )^\top   \Big \rVert_F  \\
& \leq    \lVert \bs{M}    \rVert_F  \Big  \lVert   \sum_{i = 1}^N      ( \bx_i -   \frac{1}{V} \mathbbm{1}_{V} )      ( \bx_i -   \frac{1}{V} \mathbbm{1}_{V} )  ^\top  \Big \rVert_2  \leq \frac{N}{V}  \lVert \bs{M}    \rVert_F, 
}
where we used \ref{event:discrete5}. 
Moroever, 
\eq{
\lVert \bs{S}    \rVert_F^2 & = \sum_{i, j = 1}^N  ( \alpha_{ij} -   \phi^\prime(0)^2 ) ^2  (   \bx_i^\top   \bZin^\top       \bZin       \bX_j^\top \mathbbm{1}_L   )^2 \\
& \leq  \Big( \sum_{i \neq j = 1}^N  \lvert \alpha_{ij} -   \phi^\prime(0)^2 \rvert^2 +  \sum_{i  = 1}^N  \lvert \alpha_{ii} -   \phi^\prime(0)^2 \rvert^2 \Big) \sup_{i,j } \lvert \bx_i^\top   \bZin^\top       \bZin       \bX_j^\top \mathbbm{1}_L \rvert \\
& \lesssim \Big(  \frac{N^2}{(V \wedge L^2 \wedge L\sqrt{d})^2} + \frac{N}{L^2} \Big) \Big(1 + \frac{L}{d} \Big),
}
where we used  \ref{event:boundZin4} and \ref{event:discrete4}.
Therefore,
\eq{
&  \frac{1}{N^2 L^2 \sqrt{d}} \Big \lVert \sum_{i, j = 1}^N     ( \alpha_{ij} -   \phi^\prime(0)^2 )      ( \bx_i -   \frac{1}{V} \mathbbm{1}_{V} )   \bx_i^\top   \bZin^\top       \bZin       \bX_j^\top \mathbbm{1}_L              ( \bx_j -   \frac{1}{V} \mathbbm{1}_{V} )^\top   \Big \rVert_F   \\
& \leq    \frac{1}{N V L^{3/2} \sqrt{d} (L \wedge d)^{1/2}}   \Big(  \frac{N}{ V \wedge L^2 \wedge L \sqrt{d}}  + \frac{\sqrt{N}}{L}  \Big)  \lesssim  \frac{1}{V L^2 \sqrt{L \wedge d} }.
}
Moreover,
\eq{
&  \frac{1}{N^2 L^2}  \Big \lvert  \sum_{i, j = 1}^N  ( \alpha_{ij} -   \phi^\prime(0)^2 )     ( \indic{\bx_i = \bx_j}-   \tfrac{1}{V} )   \bx_i^\top   \bZin^\top       \bZin       \bX_j^\top \mathbbm{1}_L   \Big \rvert   \\
&  \leq  \frac{1}{N^2 L^2}  \Big(    \sum_{i, j = 1}^N \lvert \indic{\bx_i = \bx_j}-   \tfrac{1}{V} \rvert \Big)  \sup_{i,  j \in [N]}  \lvert (\alpha_{ij} -   \phi^\prime(0)^2 )      \bx_i^\top   \bZin^\top       \bZin       \bX_j^\top \mathbbm{1}_L    \rvert       \lesssim   \frac{1}{V  L^2 \sqrt{L \wedge d}},
}
where we used \ref{event:boundZin4}, \ref{event:discrete4}, \ref{cons:coeffbound}.
Therefore,  $ \lvert  \varphi_{12}  \rvert   \lesssim   \frac{1}{V  L^2 \sqrt{L \wedge d}}$.    

Next, we consider $\lvert  \varphi_{2}  \rvert $. By Proposition \ref{prop:gaussquadraticformtrace}, 
\eq{
 \varphi_{2}
& =   \frac{\phi^\prime(0)^2}{N^2 L^3}   \sum_{i, j = 1}^N      ( \indic{\bx_i = \bx_j}-   \tfrac{1}{V}  )      \mathbbm{1}_L^\top    \bX_i     \bZin^\top       \bZin    \bX_j^\top \mathbbm{1}_L  \\
&    \pm \frac{\phi^\prime(0)^2}{\sqrt{d}}  \frac{1}{N^2 L^3}   \Big   \lVert  \sum_{i, j = 1}^N     ( \bx_i -   \tfrac{1}{V} \mathbbm{1}_{V} )     \mathbbm{1}_L^\top    \bX_i     \bZin^\top       \bZin    \bX_j^\top \mathbbm{1}_L   ( \bx_j -   \tfrac{1}{V} \mathbbm{1}_{V} )^\top  \Big  \rVert_F   \\
& +  \frac{1}{N^2 L^3}   \sum_{i, j = 1}^N   ( \alpha_{ij} - \phi^\prime(0)^2 )   ( \indic{\bx_i = \bx_j}-   \tfrac{1}{V}  )      \mathbbm{1}_L^\top    \bX_i     \bZin^\top       \bZin    \bX_j^\top \mathbbm{1}_L \\
&  \pm \frac{1}{\sqrt{d}} \frac{1}{N^2 L^3}    \Big   \lVert  \sum_{i, j = 1}^N     ( \alpha_{ij} - \phi^\prime(0)^2 )      ( \bx_i -   \tfrac{1}{V} \mathbbm{1}_{V} )     \mathbbm{1}_L^\top    \bX_i     \bZin^\top       \bZin    \bX_j^\top \mathbbm{1}_L   ( \bx_j -   \tfrac{1}{V} \mathbbm{1}_{V} )^\top  \Big  \rVert_F   \\
& \eqqcolon  \varphi_{21} +   \varphi_{22} +   \varphi_{23} +  \varphi_{24}.
}
For $ \varphi_{24}$, we define $\bs{M} \coloneqq \big \{   ( \alpha_{ij} - \phi^\prime(0)^2 )       \mathbbm{1}_L^\top    \bX_i     \bZin^\top       \bZin    \bX_j^\top \mathbbm{1}_L    \big \}_{i,j\in[N]}$. Similar to above, we have
\eq{
\varphi_{24} \leq \frac{1}{N V L^3 \sqrt{d}} \lVert \bs{M} \rVert_F.
}
We have
\eq{
\lVert \bs{M}    \rVert_F^2 & = \sum_{i, j = 1}^N  ( \alpha_{ij} -   \phi^\prime(0)^2 ) ^2  (   \bX_i^\top   \bZin^\top       \bZin       \bX_j^\top \mathbbm{1}_L   )^2 \\
& \leq  \Big( \sum_{i \neq j = 1}^N  \lvert \alpha_{ij} \! - \!    \phi^\prime(0)^2 \rvert^2 \Big) \sup_{i \neq j } \lvert \mathbbm{1}_L^\top \bX_i^\top   \bZin^\top       \bZin       \bX_j^\top \mathbbm{1}_L \rvert  +  \Big( \sum_{i  = 1}^N  \lvert \alpha_{ii} \! - \!   \phi^\prime(0)^2 \rvert^2 \Big) \sup_{i } \lvert \mathbbm{1}_L^\top \bX_i^\top   \bZin^\top       \bZin       \bX_i^\top \mathbbm{1}_L \rvert \\
& \lesssim L \Big(  \frac{N^2}{(V \wedge L^2 \wedge L\sqrt{d})^3}  + \frac{N}{L^2} \Big),
}
where we used \ref{cons:innerprodrestricted}, \ref{cons:coeffbound}.
Therefore,
\eq{
\lvert  \varphi_{24}   \rvert \leq   \frac{1}{N V L^2 \sqrt{d}} \Big( \frac{N}{( V \wedge L^2 \wedge L \sqrt{d})^{3/2}}    + \frac{\sqrt{N}}{L}    \Big) \lesssim   \frac{1}{V L^2 \sqrt{L \wedge d}}.
}
For $\varphi_{23}$, we have
\eq{
\lvert  \varphi_{23} \rvert & \leq \frac{1}{N^2 L^3} \Big(  \sum_{i \neq j = 1}^N \lvert  \indic{\bx_i = \bx_j}-   \frac{1}{V} \rvert \Big) \rvert \sup_{i \neq j} \lvert ( \alpha_{ij} - \phi^\prime(0)^2 )      \mathbbm{1}_L^\top    \bX_i     \bZin^\top       \bZin    \bX_j^\top \mathbbm{1}_L \rvert \\
& +  \frac{1}{N L^3}  \sup_{i} \lvert   ( \alpha_{ii} - \phi^\prime(0)^2 )    \mathbbm{1}_L^\top    \bX_i     \bZin^\top       \bZin    \bX_i^\top \mathbbm{1}_L \rvert \\
& \lesssim   \frac{1}{V L^2 \sqrt{L \wedge d}},
}
where we used \ref{event:discrete4}, \ref{cons:innerprodrestricted}, \ref{cons:coeffbound}.

For the first two terms,  we define
\begin{alignat}{2}
& \bs{V}_0 \coloneqq     \frac{1}{N L}   \sum_{  j = 1}^N     \bX_j^\top \mathbbm{1}_L   ( \bx_j -   \frac{1}{V} \mathbbm{1}_{V} )^\top, ~~ &&  \bs{V}_{0,1} \coloneqq   \frac{1}{NL} \sum_{i = 1}^N   \bx_i   ( \bx_i -   \frac{1}{V}  \mathbbm{1}_{V} ) ^\top  \\
& \bs{V}_{0,2} \coloneqq    \frac{1}{NL} \sum_{i = 1}^N \big ( \bN_j^\top - \frac{1}{V}  \mathbbm{1}_{L-1} ^\top \big) \mathbbm{1}_{L-1}     ( \bx_i -   \frac{1}{V}  \mathbbm{1}_{V} ) ^\top, ~~ && \bs{V}_{0,3} \coloneqq    \frac{1}{V} \mathbbm{1}_{V}      \frac{1}{N L} \sum_{i = 1}^N    ( \bx_i -   \frac{1}{V}  \mathbbm{1}_{V} ) ^\top.
\end{alignat}
We have by  \ref{event:discrete5}-\ref{event:discrete7},
\eq{
\lvert  \varphi_{22}  \rvert  \leq   \frac{\phi^\prime(0)^2}{\sqrt{d}}  \frac{1}{L}     \Big   \lVert     \bZin    \bs{V}_0  \bs{V}_0^\top  \bZin^\top    \Big  \rVert_F  \leq  \frac{\phi^\prime(0)^2}{NVL^2  \sqrt{d} }    \Big   \lVert     \bZin   \bZin^\top    \Big  \rVert_F  &  \leq   \frac{2 \phi^\prime(0)^2}{L d  LN}   \lesssim   \frac{1}{V L^2 d}.
}
Lastly,
\eq{
\lvert  \varphi_{21}  \rvert  =    \frac{\phi^\prime(0)^2}{L} \tr \Big(   \bZin    \bs{V}_0  \bs{V}_0^\top  \bZin^\top  \Big)  \leq    \frac{\phi^\prime(0)^2}{NV L^2 \sqrt{d}} \tr \Big(   \bZin    \bZin^\top  \Big)   \leq    \frac{2 \phi^\prime(0)^2}{N L^2 \sqrt{d}}     \lesssim   \frac{1}{V L^2 \sqrt{d}}.
}
Therefore,  $ \lvert      \varphi_{21}    \rvert \lesssim   \frac{1}{V L^2 \sqrt{L \wedge d}}$.    

Lastly, we consider $ \varphi_{11}$. By Proposition \ref{prop:gaussquadraticformtrace}, we have  
\eq{
& \lvert   \varphi_{11}  \rvert \\
& =   \phi^\prime(0)^2    \tr(      \bs{V}_{0,1}^\top   \bZin^\top  \bZin  \bs{V}_{0,1}  )  +    \phi^\prime(0)^2    \tr(    \bs{V}_{0,1}^\top  \bZin^\top  \bZin      \bs{V}_{0,2}   )  +  \phi^\prime(0)^2 L   \tr(    \bs{V}_{0,1}^\top  \bZin^\top  \bZin      \bs{V}_{0,3}         )  \\
& \pm \frac{1}{\sqrt{d}} \Big \lVert     \bs{V}_{0,1}^\top \bZin^\top  \bZin     \bs{V}_{0,1} \Big     \rVert_F  \pm  \frac{1}{ \sqrt{d}} \Big \lVert  \bs{V}_{0,1}^\top \bZin^\top  \bZin      \bs{V}_{0,2}  \Big     \rVert_F  \pm \frac{L}{\sqrt{d}} \Big \lVert      \bs{V}_{0,1}^\top   \bZin^\top  \bZin    \bs{V}_{0,3}  \Big     \rVert_F.
}
\begin{itemize}[leftmargin=*]
\item For the first term, by \ref{event:discrete5}, we have
\eq{
   \tr(      \bs{V}_{0,1}^\top   \bZin^\top  \bZin  \bs{V}_{0,1}  )  =    \tr(     \bZin  \bs{V}_{0,1}  \bs{V}_{0,1}^\top   \bZin^\top    )  \asymp \frac{1}{VL^2}
}
\item  For the second term, 
\eq{
 \tr(    \bs{V}_{0,1}^\top  \bZin^\top  \bZin      \bs{V}_{0,2}   ) =  \frac{1}{N  L } \sum_{j = 1}^N (  \bx_j - \tfrac{1}{V} \mathbbm{1}_V )^\top \bs{V}_{0,1}^\top     \bZin^\top  \bZin    ( \bN_j^\top - \tfrac{1}{V}  \mathbbm{1}_{L-1} ^\top \big) \mathbbm{1}_{L-1}
}
We have
\eq{
\E \Big[ \big( (  \bx_j - \tfrac{1}{V} \mathbbm{1}_V )^\top \bs{V}_{0,1}^\top     \bZin^\top  \bZin    ( \bN_j^\top - \tfrac{1}{V}  \mathbbm{1}_{L-1} ^\top \big) \mathbbm{1}_{L-1} \big)^2 \Big \vert \sZin \Big] \lesssim \frac{1}{V^2 L d}
}
By Chebyshev's inequality,
\eq{
\Big \lvert \tr(    \bs{V}_{0,1}^\top  \bZin^\top  \bZin      \bs{V}_{0,2}   ) \Big \rvert \lesssim \frac{1}{\sqrt{N} V L^{3/2} \sqrt{d}}.
}
\item  The third summand:  We have
\eq{
 (L-1) &  \tr(    \bs{V}_{0,1}^\top  \bZin^\top  \bZin      \bs{V}_{0,3}   ) \leq L \lVert     \bZin  \bs{V}_{0,1} \rVert_2    \lVert    \bZin      \bs{V}_{0,3} \rVert_2  
}
where we used  that $\bZin      \bs{V}_{0,3}$ is 1-rank. By \ref{event:discrete5}-\ref{event:discrete7}
\eq{
L \lVert  \bs{V}_{0,1}^\top  \bZin^\top\rVert_2    \lVert    \bZin      \bs{V}_{0,3} \rVert_2   \lesssim \frac{L}{\sqrt{V} L \sqrt{d}} \frac{1}{\sqrt{V} L N} \leq \frac{1}{NVL \sqrt{d}}.
}
\item    The fourth  summand:   We have by \ref{event:discrete5}-\ref{event:discrete7} 
\eq{
\frac{1}{\sqrt{d}} \Big \lVert     \bs{V}_{0,1}^\top \bZin^\top  \bZin     \bs{V}_{0,1} \Big     \rVert_F = \frac{1}{\sqrt{d}} \Big \lVert     \bZin     \bs{V}_{0,1}   \bs{V}_{0,1}^\top \bZin^\top   \Big     \rVert_F \leq \frac{C}{V L^2 d}
}
\item    The fifth summand:   We have by \ref{event:discrete5}-\ref{event:discrete7}
\eq{
 \Big \lVert     \bs{V}_{0,1}^\top \bZin^\top  \bZin     \bs{V}_{0,2} \Big     \rVert_F^2 &  \leq  \tr \Big(     \bs{V}_{0,1}^\top \bZin^\top  \bZin      \bs{V}_{0,2}      \bs{V}_{0,2}^\top  \bZin^\top  \bZin   \bs{V}_{0,1}  \Big) \\ 
 & \leq     \frac{1}{NLd }  \tr(   \bZin  \bs{V}_{0,1}   \bs{V}_{0,1}^\top     \bZin^\top    )    \leq      \frac{V}{NLd }   \frac{1}{V^2 L^2} =   \frac{1}{N V L^3d }. 
}
Therefore,
\eq{
  \frac{1}{ \sqrt{d}} \Big \lVert  \bs{V}_{0,1}^\top \bZin^\top  \bZin      \bs{V}_{0,2}  \Big     \rVert_F  \leq  \frac{1}{\sqrt{N V} L^{3/2}	d } 
}
\item    The sixth summand:   
\eq{
(L - 1)^2 \Big \lVert      \bs{V}_{0,1}^\top   \bZin^\top  \bZin    \bs{V}_{0,3}  \Big     \rVert_F^2 \leq \frac{1}{V^2 N}    \mathbbm{1}_V   \bZin^\top  \bZin  \bs{V}_{0,1}    \bs{V}_{0,1}^\top   \bZin^\top  \bZin \mathbbm{1}_V \leq \frac{1}{V^2 N L^2 d}
}
Therefore,
\eq{
\frac{L - 1}{\sqrt{d}} \Big \lVert      \bs{V}_{0,1}^\top   \bZin^\top  \bZin    \bs{V}_{0,3}  \Big     \rVert_F \leq \frac{C}{V L \sqrt{N} d}.
}
\end{itemize}
Therefore, we have
\eq{
\varphi  =    \mu_{kl}  \Big(  \frac{1 \pm o_V(1)}{VL^2} \pm  \frac{\tilde O(1)}{\sqrt{N V} L^{3/2}	d } \Big).
}
 
\subsection{Concentration bound for $\bf{s}_2$  }
In this section, we will use 
$\bbt \coloneqq  \phi^{\prime \prime}(0) \phi(0)$.
We have
\eq{
& \be_l^\top \bs{s}_2     =  
 \frac{1 }{N^2 L^2}   \sum_{i, j = 1}^N \beta_{ij}  \mathsf{z}_{k}^\top    \bZin      \bX_i^\top     \bX_i   \bZin^\top      \bZin    \bX_i^\top \mathbbm{1}_L     ( \bx_i -  \tfrac{1}{V} \mathbbm{1}_{V} )^\top             \Zout^\top    \Zout ( \bx_j -  \tfrac{1}{V} \mathbbm{1}_{V} )      \\     
& -  \frac{1 }{N^2 L^3}   \sum_{i, j = 1}^N \beta_{ij}  \mathsf{z}_{k}^\top    \bZin       \bX_i^\top   \mathbbm{1}_L \mathbbm{1}_L^\top      \bX_i  \bZin^\top      \bZin    \bX_i^\top \mathbbm{1}_L     ( \bx_i -  \tfrac{1}{V} \mathbbm{1}_{V} )^\top             \Zout^\top    \Zout ( \bx_j -  \tfrac{1}{V} \mathbbm{1}_{V} )      \\     
  & +   \frac{\mu_{kl}}{N^2 L^2}   \sum_{i, j = 1}^N  \beta_{ij}  \big(  \bs{e}_1 - \tfrac{1}{L}  \mathbbm{1}_L   \big)^\top    \bX_i  \bZin^\top       \bZin    \bX_i^\top \mathbbm{1}_L              ( \bx_i -  \tfrac{1}{V} \mathbbm{1}_{V} )^\top             \Zout^\top    \Zout ( \bx_j -  \tfrac{1}{V} \mathbbm{1}_{V} )  \\
&  \eqqcolon  \kappa + \text{negligible terma}. 
}

\subsubsection{Concentration for $\kappa$} 

We will write  $ \kappa$ as follows:
\eq{
\kappa 
&   =  
 \frac{1 }{N^2 L^2}   \sum_{i, j = 1}^N ( \beta_{ij} - \bbt  ) \mathsf{z}_{k}^\top    \bZin      \bX_i^\top     \bX_i   \bZin^\top      \bZin    \bX_i^\top \mathbbm{1}_L   ( \bx_i -  \tfrac{1}{V} \mathbbm{1}_{V} )^\top             \Zout^\top    \Zout ( \bx_j -  \tfrac{1}{V} \mathbbm{1}_{V} )    \\
& +  \frac{\bbt }{N^2 L^2}   \sum_{i, j = 1}^N  \mathsf{z}_{k}^\top    \bZin    \Big(     \bx_i      \bx_i^\top  -\tfrac{1}{V} \bs{I}_V \Big)   \bZin^\top      \bZin   \big( \bx_i +\tfrac{L-1}{V} \mathbbm{1}_V \big)     ( \bx_i -  \tfrac{1}{V} \mathbbm{1}_{V} )^\top             \Zout^\top    \Zout ( \bx_j -  \tfrac{1}{V} \mathbbm{1}_{V} )    \\
& +  \frac{\bbt }{N^2 L^2}   \sum_{i, j = 1}^N  \mathsf{z}_{k}^\top    \bZin    \Big(     \bN_i^\top      \bN_i   - \tfrac{L -1}{V} \bs{I}_V \Big)   \bZin^\top      \bZin   \big( \bx_i +\tfrac{L-1}{V} \mathbbm{1}_V \big)  ( \bx_i -  \tfrac{1}{V} \mathbbm{1}_{V} )^\top             \Zout^\top    \Zout ( \bx_j -  \tfrac{1}{V} \mathbbm{1}_{V} )    \\
& +  \frac{\bbt }{N^2 L^2}   \sum_{i, j = 1}^N  \mathsf{z}_{k}^\top    \bZin    \Big(    \bx_i      \bx_i^\top    -\tfrac{1}{V} \bs{I}_V \Big)   \bZin^\top      \bZin  \big(  \bN_{i}^\top -\tfrac{1}{V} \mathbbm{1}_V   \mathbbm{1}_{L-1}^\top \big) \mathbbm{1}_{L-1}  ( \bx_i -  \tfrac{1}{V} \mathbbm{1}_{V} )^\top             \Zout^\top    \Zout ( \bx_j -  \tfrac{1}{V} \mathbbm{1}_{V} )    \\
& +  \frac{\bbt }{N^2 L^2}   \sum_{i, j = 1}^N  \mathsf{z}_{k}^\top    \bZin    \Big(      \bN_i      \bN_i^\top  - \tfrac{L -1}{V} \bs{I}_V  \Big)   \bZin^\top      \bZin    \big(  \bN_{i}^\top -\tfrac{1}{V} \mathbbm{1}_V   \mathbbm{1}_{L-1}^\top \big)    ( \bx_i -  \tfrac{1}{V} \mathbbm{1}_{V} )^\top             \Zout^\top    \Zout ( \bx_j -  \tfrac{1}{V} \mathbbm{1}_{V} )    \\
& +  \frac{\bbt }{N^2 L V}   \sum_{i, j = 1}^N   \mathsf{z}_{k}^\top    \bZin      \bZin^\top      \bZin    \bX_i^\top \mathbbm{1}_L ( \bx_i -  \tfrac{1}{V} \mathbbm{1}_{V} )^\top             \Zout^\top    \Zout ( \bx_j -  \tfrac{1}{V} \mathbbm{1}_{V} )    \\
& \eqqcolon    \kappa_1 + \kappa_2  + \kappa_3  + \kappa_4  + \kappa_5 + \kappa_6.  
}
By Proposition \ref{prop:gaussquadraticformtrace}, we have  
\eq{
 \kappa_1 &  =   \frac{1 }{N^2 L^2}   \sum_{i, j = 1}^N ( \beta_{ij} - \bbt  ) ( \indic{ \bx_i = \bx_j }-  \tfrac{1}{V}  )    \mathsf{z}_{k}^\top    \bZin     \bX_i^\top     \bX_i   \bZin^\top      \bZin    \bX_i^\top \mathbbm{1}_L            \\
& \pm  \frac{\log^2 V}{N^2 L^2 \sqrt{d}}  \Big \lVert \sum_{i, j = 1}^N ( \beta_{ij} - \bbt  ) ( \bx_j -  \tfrac{1}{V} \mathbbm{1}_{V} )     \mathsf{z}_{k}^\top    \bZin       \bX_i^\top     \bX_i    \bZin^\top      \bZin    \bX_i^\top \mathbbm{1}_L     ( \bx_i -  \tfrac{1}{V} \mathbbm{1}_{V} )^\top       \Big \rVert_F     \\
& \eqqcolon   \kappa_{11} +      \kappa_{12}.
}
We have
\eq{
 & \lvert   \kappa_{11} \rvert   \leq   \frac{1}{ N^2 L^2}  \Bigg( \sum_{i = 1}^N \Big(  \sum_{j = 1}^N ( \beta_{ij} \! - \! \bbt )(  \indic{ \bx_i = \bx_j }  \! - \!  \tfrac{1}{V}  ) \Big)^2  \Bigg)^{\frac{1}{2}} \!\! \Big( \sum_{i  = 1}^N (  \mathsf{z}_{k}^\top    \bZin     \bX_i^\top     \bX_i   \bZin^\top      \bZin    \bX_i^\top \mathbbm{1}_L )^2    \Big)^{\frac{1}{2}}  \\
&  =     \frac{1}{ N^2 L V}  \Bigg( \sum_{i = 1}^N \Big(  \sum_{j = 1}^N ( \beta_{ij} \! - \! \bbt )(  \indic{ \bx_i = \bx_j }  \! - \!  \tfrac{1}{V}  ) \Big)^2  \Bigg)^{\frac{1}{2}} \!\! \Big( \sum_{i  = 1}^N (  \mathsf{z}_{k}^\top    \bZin     \bX_i^\top     \bX_i   \bZin^\top      \bZin      \mathbbm{1}_V )^2    \Big)^{\frac{1}{2}}  \\
&  +     \frac{1}{ N^2 L^2}  \Bigg( \sum_{i = 1}^N \Big(  \sum_{j = 1}^N ( \beta_{ij} \! - \! \bbt )(  \indic{ \bx_i = \bx_j }  \! - \!  \tfrac{1}{V}  ) \Big)^2  \Bigg)^{\frac{1}{2}} \!\!  \Big( \sum_{i  = 1}^N (  \mathsf{z}_{k}^\top    \bZin     ( \bX_i^\top     \bX_i  \! - \!  \tfrac{L}{V} \bs{I}_V)  \bZin^\top      \bZin    (\bX_i^\top \! - \!  \tfrac{1}{V} \mathbbm{1}_V \mathbbm{1}_L^\top)  \mathbbm{1}_L  )^2    \Big)^{\frac{1}{2}}  \\
&  +     \frac{1}{ N^2 L V}  \Bigg( \sum_{i = 1}^N \Big(  \sum_{j = 1}^N ( \beta_{ij} \! - \! \bbt )(  \indic{ \bx_i = \bx_j }  \! - \!  \tfrac{1}{V}  ) \Big)^2  \Bigg)^{\frac{1}{2}}  \Big( \sum_{i  = 1}^N (  \mathsf{z}_{k}^\top    \bZin     \bZin^\top      \bZin     (\bX_i^\top  - \tfrac{1}{V} \mathbbm{1}_V \mathbbm{1}_L^\top)  \mathbbm{1}_L  )^2    \Big)^{\frac{1}{2}}  \\
& \lesssim   \frac{1}{N^{3/2} L V}  \Big( \frac{\sqrt{N}}{L} + \frac{N^{3/2}}{V} \frac{1}{V \wedge L^2 \wedge L \sqrt{d}} \Big) \Big( \frac{ L}{d} \frac{\sqrt{V}}{\sqrt{L \wedge d}}  + \frac{V\sqrt{L}}{d^{3/2}} \Big)   \\
& +   \frac{1}{N^{3/2} L^2}  \Big( \frac{\sqrt{N}}{L} + \frac{N^{3/2}}{V} \frac{1}{V \wedge L^2 \wedge L \sqrt{d}} \Big)  \frac{L}{\sqrt{d} \sqrt{L \wedge d}}   \\
&\lesssim \frac{1}{V L^2 \sqrt{d} (L \wedge d)^{\frac{1}{2}}} +  \frac{1}{N L^{3/2} d \sqrt{L \wedge d}} + \frac{1}{V L^{1/2} d \sqrt{L \wedge d}}  \frac{1}{V \wedge L^2 \wedge L \sqrt{d}},
}
where we use \ref{event:prop8result4}- \ref{event:prop8result5}, and \ref{event:boundZin3}-\ref{event:boundZin5}.
Moreover,   
\eq{
 \lvert \kappa_{12}  \rvert &  \lesssim       \frac{1 }{N^{3/2} L^2 \sqrt{d}}  \Big \lVert \sum_{i = 1}^N ( \bx_i -  \tfrac{1}{V} \mathbbm{1}_{V} )   ( \bx_i -  \tfrac{1}{V} \mathbbm{1}_{V} )^\top  \Big \rVert_2  \\
 & \hspace{8em} \times \Big(\frac{1}{N} \sum_{i = 1}^N \sum_{j = 1}^N   ( \beta_{ij} - \phi^\prime(0)^2 ) ^2   \lvert   \mathsf{z}_{k}^\top   \bZin        \bX_i^\top     \bX_i      \bZin^\top      \bZin    \bX_i^\top \mathbbm{1}_L       \rvert^2 \Big)^{\frac{1}{2}} \\
&  \lesssim      \frac{1}{N^{3/2} L^2 \sqrt{d}} \frac{N}{V} \Big( \frac{\sqrt{N}}{V \wedge L^2 \wedge L \sqrt{d}}  + \frac{1}{L} \Big) \Big( \frac{ \sqrt{L}}{\sqrt{d}}   +  \frac{L^{3/2}}{d^{3/2}}   \Big) \\
& \lesssim  \frac{1}{V \sqrt{L} d (L \wedge d)} \frac{1}{V \wedge L^2 \wedge L \sqrt{d}}  +  \frac{1}{\sqrt{N} V L^{3/2} d (L \wedge d)}.
}
Therefore,
\eq{
 \lvert \kappa_1   \rvert  
 & \lesssim \frac{1}{V L^2 \sqrt{d} (L \wedge d)^{\frac{1}{2}}} +  \frac{1}{N L^{3/2} d \sqrt{L \wedge d}} + \frac{1}{V L^{1/2} d \sqrt{L \wedge d}}  \frac{1}{V \wedge L^2 \wedge L \sqrt{d}}.
}
By Proposition \ref{prop:gaussquadraticformtrace},
\eq{
  \kappa_2 
& =   \frac{( 1 -  \tfrac{1}{V}  )  }{N^2 L^2}  \Big(    \sum_{j= 1}^N   \bx_j -  \tfrac{1}{V} \mathbbm{1}_{V}    \Big)^\top  \sum_{i = 1}^N   ( \bx_i -  \tfrac{1}{V} \mathbbm{1}_{V} )  \mathsf{z}_{k}^\top    \bZin    \Big(     \bx_i  \bx_i^\top   -\tfrac{1}{V} \bs{I}_V \Big)   \bZin^\top      \bZin    \big( \bx_i +\tfrac{L-1}{V} \mathbbm{1}_V \big)           \\
& \pm  \frac{1 }{N^2 L^2 \sqrt{d}}  \Big \lVert  \sum_{ j = 1}^N  ( \bx_j -  \tfrac{1}{V} \mathbbm{1}_{V} )   \Big \rVert_2    \Big \lVert   \sum_{i = 1}^N     ( \bx_i -  \tfrac{1}{V} \mathbbm{1}_{V} )  \mathsf{z}_{k}^\top   \bZin    \Big(     \bx_i  \bx_i^\top   -\tfrac{1}{V} \bs{I}_V \Big)   \bZin^\top      \bZin    \big( \bx_i +\tfrac{L-1}{V} \mathbbm{1}_V \big)       \Big \rVert_2 .
}
Let $n_i \coloneqq \lvert \{ j \leq N \vert \bx_j = \bs{e}_i \} \rvert$.  We have 
\eq{
& \frac{1 }{N^2 L^2} \Big \lVert   \sum_{j= 1}^N ( \bx_j -  \tfrac{1}{V} \mathbbm{1}_{V} )  \Big \rVert_2 \Big  \lVert   \sum_{i = 1}^N   ( \bx_i -  \tfrac{1}{V} \mathbbm{1}_{V} )    \mathsf{z}_{k}^\top    \bZin    \Big(     \bx_i  \bx_i^\top   -\tfrac{1}{V} \bs{I}_V \Big)   \bZin^\top      \bZin     \bx_i    \Big  \rVert_2 \\
& + \frac{L-1}{L}    \frac{1 }{N^2 L V} \Big \lVert   \sum_{j= 1}^N ( \bx_j -  \tfrac{1}{V} \mathbbm{1}_{V} )  \Big \rVert_2 \Big  \lVert   \sum_{i = 1}^N   ( \bx_i -  \tfrac{1}{V} \mathbbm{1}_{V} )    \mathsf{z}_{k}^\top    \bZin    \big(     \bx_i  \bx_i^\top   -\tfrac{1}{V} \bs{I}_V \big)   \bZin^\top      \bZin    \mathbbm{1}_V   \Big  \rVert_2 \\
&  \lesssim    \frac{1}{N   L^2}   \Big ( \frac{1}{N} \sum_{i = 1}^V   n_i^2  \big \lvert   \mathsf{z}_{k}^\top   \bZin    \big(     \be_i  \be_i^\top   -\tfrac{1}{V} \bs{I}_V \big)   \bZin^\top      \bZin     \be_i     \big \rvert^2 \Big )^{\frac{1}{2}} \\
& +    \frac{1 }{N^2 L V}  \Big  \lVert   \sum_{i = 1}^N   ( \bx_i -  \tfrac{1}{V} \mathbbm{1}_{V} )    \mathsf{z}_{k}^\top    \bZin    \big(     \bx_i  \bx_i^\top   -\tfrac{1}{V} \bs{I}_V \big)   \bZin^\top      \bZin    \mathbbm{1}_V   \Big  \rVert_2    \\
& \lesssim \frac{1}{VL^2\sqrt{d}} +    \frac{1 }{N^2 L V} \Big  \lVert   \sum_{i = 1}^N   ( \bx_i -  \tfrac{1}{V} \mathbbm{1}_{V} )    \mathsf{z}_{k}^\top    \bZin    \big(     \bx_i  \bx_i^\top   -\tfrac{1}{V} \bs{I}_V \big)   \bZin^\top      \bZin    \mathbbm{1}_V   \Big  \rVert_2,
}
where we used \ref{event:boundZin1} and \ref{event:discrete3}.
Moreover,  
\eq{
& \E \Big[  \Big  \lVert  \frac{1}{N} \sum_{i = 1}^N   ( \bx_i -   \tfrac{1}{V} \mathbbm{1}_{V} )    \mathsf{z}_{k}^\top    \bZin    \big(     \bx_i  \bx_i^\top   - \tfrac{1}{V} \bs{I}_V \big)   \bZin^\top      \bZin    \mathbbm{1}_V   \Big  \rVert_2^2  \vert \sZin \Big]\\
&  \leq
\frac{1}{N} \E \Big[   \mathsf{z}_{k}^\top  \!   \bZin    \big(     \bx_i \bx_i^\top   - \tfrac{1}{V} \bs{I}_V \big)   \bZin^\top   \!    \bZin    \mathbbm{1}_V    \mathbbm{1}_V^\top    \bZin^\top    \!   \bZin  \big(     \bx_i  \bx_i^\top   - \tfrac{1}{V} \bs{I}_V \big)    \bZin^\top \!     \mathsf{z}_{k}     \vert \sZin \Big] \\
& +\!   \frac{1}{N^2} \!\!\! \sum_{i \neq j = 1}^N \!\!  \E \!  \Big[    (  \indic{ \bx_i  = \bx_j} \! \!  - \!    \tfrac{1}{V}  )    \mathsf{z}_{k}^\top    \bZin    \big(     \bx_i \bx_i^\top  \!\!   -  \!  \tfrac{1}{V} \bs{I}_V \big)   \bZin^\top      \bZin    \mathbbm{1}_V   \!  \mathbbm{1}_V^\top    \bZin^\top      \bZin  \big(     \bx_j  \bx_j^\top \!\!    - \!  \tfrac{1}{V} \bs{I}_V \big)    \bZin^\top     \mathsf{z}_{k}     \vert \sZin \Big] \\
&  \leq
\big( \frac{1}{N} + \frac{N - 1}{N V}  \big) \E \!  \Big[     \mathsf{z}_{k}^\top    \bZin    \big(     \bx_i \bx_i^\top  \!\!   -  \!  \tfrac{1}{V} \bs{I}_V \big)   \bZin^\top      \bZin    \mathbbm{1}_V   \!  \mathbbm{1}_V^\top    \bZin^\top      \bZin  \big(     \bx_j  \bx_j^\top \!\!    - \!  \tfrac{1}{V} \bs{I}_V \big)    \bZin^\top     \mathsf{z}_{k}     \vert \sZin \Big] \\
&  \lesssim
\big( \frac{1}{N} + \frac{N - 1}{N V}  \big) \frac{ V }{d^2},
}
where we \ref{event:boundZin1}-\ref{event:boundZin3}. 
Therefore,   by Chebyshev's inequality, we have
\eq{
 \lvert   \kappa_{2}  \rvert   \lesssim  \frac{1}{VL^2\sqrt{d}} + \frac{1}{NLVd}
}
Moreover,  by using Chebyshev's inequality
\eq{
 \kappa_3 
&  =    
\frac{1 }{N^2 L^2}     \Big(    \sum_{j= 1}^N ( \bx_j -   \tfrac{1}{V} \mathbbm{1}_{V} )  \Big)^\top \Big(  \sum_{i = 1}^N   ( \bx_i -   \tfrac{1}{V} \mathbbm{1}_{V} ) \mathsf{z}_{k}^\top   \bZin       \Big(     \bN_i^\top      \bN_i   - \tfrac{L -1}{V} \bs{I}_V \Big)    \bZin^\top      \bZin     \big( \bx_i + \tfrac{L-1}{V} \mathbbm{1}_V \big)    \Big)      \\
& \pm  \frac{1 }{N^2 L^2 \sqrt{d}}  \Big  \lVert  \sum_{j= 1}^N ( \bx_j -   \tfrac{1}{V} \mathbbm{1}_{V} )  \Big \rVert_2   \Big  \lVert  \sum_{i = 1}^N   ( \bx_i -   \tfrac{1}{V} \mathbbm{1}_{V} )  \mathsf{z}_{k}^\top     \bZin       \Big(     \bN_i^\top      \bN_i   - \tfrac{L -1}{V} \bs{I}_V \Big)    \bZin^\top      \bZin     \big( \bx_i + \tfrac{L-1}{V} \mathbbm{1}_V \big)      \Big \rVert_2.
}
We have  
\eq{
 & \E \Bigg[  \Big(  \sum_{i,  j = 1}^N (  \indic{ \bx_i = \bx_j }-   \tfrac{1}{V}  )   \mathsf{z}_{k}^\top       \bZin       \Big(     \bN_i^\top      \bN_i   - \tfrac{L -1}{V} \bs{I}_V \Big)    \bZin^\top      \bZin    \big( \bx_i + \tfrac{L-1}{V} \mathbbm{1}_V \big)    \Big)^2  \Big \vert \sZin  \Bigg] \\
& =    \sum_{i = 1}^N \E \Bigg[  \Big(  \sum_{j = 1}^N ( \indic{ \bx_i = \bx_j }-   \tfrac{1}{V}  )  \mathsf{z}_{k}^\top    \bZin       \Big(     \bN_i^\top      \bN_i   - \tfrac{L -1}{V} \bs{I}_V \Big)    \bZin^\top      \bZin  \big( \bx_i + \tfrac{L-1}{V} \mathbbm{1}_V \big)   \Big)^2  \Big \vert \sZin    \Bigg] \\
& \lesssim \frac{N^2}{V}     \E \Bigg[  \Big(    \mathsf{z}_{k}^\top    \bZin       \Big(     \bN_1^\top      \bN_1   - \tfrac{L - 1}{V} \bs{I}_V \Big)    \bZin^\top      \bZin    \big( \bx_1 + \tfrac{L-1}{V} \mathbbm{1}_V \big)    \Big)^2  \Big \vert \sZin    \Bigg] ~~~ \label{eq:s2131secondmoment}
}
We have
\eq{
 & \E \Bigg[  \Big(    \mathsf{z}_{k}^\top      \bZin       \Big(     \bN_1^\top      \bN_1   - \tfrac{L - 1}{V} \bs{I}_V \Big)    \bZin^\top      \bZin    \big( \bx_1 + \tfrac{L-1}{V} \mathbbm{1}_V \big)     \Big)^2   \Big\vert \sZin    \Bigg]  
  \leq  \frac{C L}{d^2} \Big( 1 + \frac{ L^2}{V} \Big).
}
where we used \ref{event:boundZin1}-\ref{event:boundZin3} and \ref{event:boundZin8}. Therefore, we have  $\eqref{eq:s2131secondmoment}   \lesssim \tfrac{N^2 L}{Vd^2} (1 + \tfrac{L^2}{V})$ .  Also,
\eq{
&  \E \Big[    \Big \lVert   \sum_{i = 1}^N     ( \bx_i -   \tfrac{1}{V} \mathbbm{1}_{V} )    \mathsf{z}_{k}^\top      \bZin    \Big(     \bN_i^\top      \bN_i    - \tfrac{L - 1}{V} \bs{I}_V \Big)   \bZin^\top      \bZin        \big( \bx_i + \tfrac{L-1}{V} \mathbbm{1}_V \big)          \Big \rVert_2 ^2  ~ \big\vert \sZin    \Big] \leq  \frac{C L}{d^2} \Big( 1 + \frac{ L^2}{V} \Big).
}
Therefore, we have
\eq{
\lvert   \mathtt{\kappa}_{3} \rvert \lesssim  \Big(  \frac{1}{N \sqrt{V} L^{3/2} d} + \frac{1}{N L^{3/2} d^{3/2}}    + \frac{1}{N  \sqrt{V} \sqrt{L}  d^{3/2}}    \Big).
}
Moreover, by Chebyshev's inequality
\eq{
 \kappa_4  =  
& \frac{1 }{N^2 L^2}     \Big(    \sum_{j= 1}^N ( \bx_j -   \tfrac{1}{V} \mathbbm{1}_{V} )  \Big)^\top   \Big(  \sum_{i = 1}^N   ( \bx_i -   \tfrac{1}{V} \mathbbm{1}_{V} )    \mathsf{z}_{k}^\top    \bZin       \big(     \bx_i      \bx_i^\top   - \tfrac{1}{V} \bs{I}_V \big)    \bZin^\top      \bZin  \big(  \bN_{i} ^\top - \tfrac{1}{V} \mathbbm{1}_{V}   \mathbbm{1}_{L-1}^\top \big)   \mathbbm{1}_{L-1}      \Big)      \\
& \pm  \frac{1 }{N^2 L^2 \sqrt{d}}  \Big  \lVert  \sum_{j= 1}^N ( \bx_j \! - \!  \tfrac{1}{V} \mathbbm{1}_{V} )  \Big \rVert_2    \Big  \lVert  \sum_{i = 1}^N   ( \bx_i \! - \!    \tfrac{1}{V} \mathbbm{1}_{V} )    \mathsf{z}_{k}^\top    \bZin       \big(       \bx_i      \bx_i^\top \! \! - \! \tfrac{1}{V} \bs{I}_V \big)    \bZin^\top      \bZin  \big(  \bN_{i} ^\top \!\! - \!  \tfrac{1}{V} \mathbbm{1}_{V}   \mathbbm{1}_{L-1}^\top \big)   \mathbbm{1}_{L-1}       \Big \rVert_2.
}
We have
\eq{
&  \E \Bigg[  \Big(  \sum_{i,  j = 1}^N  ( \indic{ \bx_i = \bx_j } \! - \!    \tfrac{1}{V}  )   \mathsf{z}_{\nu, \delta}^\top    \bZin       \Big(     \bx_i^\top      \bx_i   - \tfrac{1}{V} \bs{I}_V \Big)    \bZin^\top      \bZin   \big(  \bN_{i} ^\top - \tfrac{1}{V} \mathbbm{1}_{V}   \mathbbm{1}_{L-1}^\top \big)   \mathbbm{1}_{L-1}      \Big)   \Big)^2 ~  \big\vert \sZin    \Bigg] \\ 
& =    \sum_{i = 1}^N \E \Bigg[  \Bigg(  \sum_{j = 1}^N ( \indic{ \bx_i = \bx_j } \! - \!  \tfrac{1}{V}  )     \mathsf{z}_{k}^\top   \bZin       \Big(     \bx_i   \bx_i^\top   \!   - \! \tfrac{1}{V} \bs{I}_V \Big)    \bZin^\top      \bZin     \big(  \bN_{i} ^\top \! - \! \tfrac{1}{V} \mathbbm{1}_{V}   \mathbbm{1}_{L-1}^\top \big)   \mathbbm{1}_{L-1}      \Big)    \Bigg)^2    \big\vert \sZin   \Bigg] \\
& \lesssim \frac{N^2}{V}      \E \Bigg[  \Big(    \mathsf{z}_{k}^\top   \bZin      \Big(     \bx_i  \bx_i^\top   - \tfrac{1}{V} \bs{I}_V \Big)    \bZin^\top      \bZin   \big(  \bN_{i} ^\top - \tfrac{1}{V} \mathbbm{1}_{V}   \mathbbm{1}_{L-1}^\top \big)   \mathbbm{1}_{L-1}      \Big)    \Big)^2  ~  \big\vert \sZin   \Bigg]. \label{eq:s2141secondmoment}
}
We  have
\eq{
 &    \E \Bigg[  \Big(    \mathsf{z}_{k}^\top    \bZin   \Big(     \bx_i  \bx_i^\top   - \tfrac{1}{V} \bs{I}_V \Big)    \bZin^\top      \bZin   \big(  \bN_{i} ^\top - \tfrac{1}{V} \mathbbm{1}_{V}   \mathbbm{1}_{L-1}^\top \big)   \mathbbm{1}_{L-1}      \Big)    \Big)^2    \big\vert \sZin    \Bigg]  \\
& \leq  \frac{C L }{d}   \mathsf{z}_{k}^\top  \Zin   \E  \Big[    \Big(     \bx_i       \bx_i^\top   - \tfrac{1}{V} \bs{I}_V \Big)    \bZin^\top      \bZin    \Big(     \bx_i \bx_i^\top   - \tfrac{1}{V} \bs{I}_V \Big)  ~  \big\vert \sZin   \Big] \bZin^\top    \mathsf{z}_{k}  \leq \frac{C L}{d^2}.
}
where we used Proposition \ref{prop:multimatrixexpectation}.
Therefore,    $\eqref{eq:s2141secondmoment} \lesssim \frac{N^2 L}{Vd^2}$.  Also,
\eq{
&   \E \Bigg[    \Big \lVert   \sum_{i = 1}^N     ( \bx_i -   \tfrac{1}{V} \mathbbm{1}_{V} )  \mathsf{z}_{k} ^\top    \bZin       \Big(     \bx_i  \bx_i^\top   - \tfrac{L - 1}{V} \bs{I}_V \Big)    \bZin^\top      \bZin   \big(  \bN_{i} ^\top - \tfrac{1}{V} \mathbbm{1}_{V}   \mathbbm{1}_{L-1}^\top \big)   \mathbbm{1}_{L-1}        \Big \rVert_2 ^2 ~  \big\vert \sZin   \Bigg] \\
&  = \frac{C L}{d} \sum_{i = 1}^N   \mathsf{z}_{k} ^\top    \bZin      \E \Big[    \Big(    \bx_i  \bx_i^\top   - \tfrac{1}{V} \bs{I}_V \Big)   \bZin^\top    \bZin    \Big(  \bx_i  \bx_i^\top   - \tfrac{1}{V} \bs{I}_V  \Big)  ~  \big\vert \sZin   \Big]     \bZin ^\top  \mathsf{z}_{k}  \leq \frac{CNL}{d^2},
}
where we used Proposition \ref{prop:multimatrixexpectation}.
Therefore,  
\eq{
\lvert   \kappa_{4} \rvert \lesssim\Big(  \frac{1}{N \sqrt{V} L^{3/2} d} + \frac{1}{N L^{3/2} d^{3/2}}   \Big).
}
Moreover,   let  
\eq{
\gamma_i  & \coloneqq     \mathsf{z}_{k}^\top     \bZin      \Big(     \bN_i^\top        \bN_i   - \tfrac{L - 1}{V} \bs{I}_V \Big)    \bZin^\top      \bZin  \big(  \bN_{i} ^\top - \tfrac{1}{V} \mathbbm{1}_{V}   \mathbbm{1}_{L-1}^\top \big)   \mathbbm{1}_{L-1}   \\
&  -   \E \Big[    \mathsf{z}_{k}^\top   \bZin         \Big(     \bN_i^\top      \bN_i   - \tfrac{L - 1}{V} \bs{I}_V \Big)    \bZin^\top      \bZin  \big(  \bN_{i} ^\top - \tfrac{1}{V} \mathbbm{1}_{V}   \mathbbm{1}_{L-1}^\top \big)   \mathbbm{1}_{L-1}  \vert \sZin   \Big] .
}
By Proposition \ref{prop:gaussquadraticformtrace}, we have
\eq{
& \kappa_5  =   \  
 \frac{1 }{N^2 L^2}     \Big(    \sum_{j= 1}^N ( \bx_j -   \tfrac{1}{V} \mathbbm{1}_{V} )  \Big)^\top  \Big(  \sum_{i = 1}^N   ( \bx_i -   \tfrac{1}{V} \mathbbm{1}_{V} ) \gamma_i \Big)  \frac{\log^2 V}{N^2 L^2 \sqrt{d}}  \Big  \lVert  \sum_{j= 1}^N ( \bx_j -   \tfrac{1}{V} \mathbbm{1}_{V} )  \Big \rVert_2   \Big  \lVert  \sum_{i = 1}^N   ( \bx_i -   \tfrac{1}{V} \mathbbm{1}_{V} ) \gamma_i   \Big \rVert_2 \\
&  + \frac{1 }{N^2 L^2}     \Big(    \sum_{j= 1}^N ( \bx_j -   \tfrac{1}{V} \mathbbm{1}_{V} )  \Big)^\top \\
& \times \Big(  \sum_{i = 1}^N   ( \bx_i -   \tfrac{1}{V} \mathbbm{1}_{V} ) \mathsf{z}_{k}^\top    \bZin       \E \Big[     \Big(     \bN_i^\top      \bN_i   - \tfrac{L - 1}{V} \bs{I}_V \Big)    \bZin^\top      \bZin  \big(  \bN_{i} ^\top - \tfrac{1}{V} \mathbbm{1}_{V}   \mathbbm{1}_{L-1}^\top \big)   \mathbbm{1}_{L-1}  \vert \sZin   \Big]   \Big)      \\
& \pm  \frac{\log^2 V}{N^2 L^2 \sqrt{d}}  \Big  \lVert  \sum_{j= 1}^N ( \bx_j -   \tfrac{1}{V} \mathbbm{1}_{V} )  \Big \rVert_2  \\
& \times  \Big  \lVert  \sum_{i = 1}^N   ( \bx_i -   \tfrac{1}{V} \mathbbm{1}_{V} ) \mathsf{z}_{k}^\top    \bZin      \E \Big[     \Big(     \bN_i^\top      \bN_i   - \tfrac{L - 1}{V} \bs{I}_V \Big)    \bZin^\top      \bZin  \big(  \bN_{i} ^\top - \tfrac{1}{V} \mathbbm{1}_{V}   \mathbbm{1}_{L-1}^\top \big)   \mathbbm{1}_{L-1}  \vert \sZin   \Big]      \Big \rVert_2. 
}
By Proposition \ref{prop:intermediateresults1}
\eq{
\E \Bigg[  \Big(  \sum_{i,  j = 1}^N ( \indic{ \bx_i = \bx_j }-   \tfrac{1}{V}  )     \gamma_i   \Big)^2  \Bigg]  
 & =    \sum_{i = 1}^N \E \Bigg[  \Big(  \sum_{j = 1}^N ( \indic{ \bx_i = \bx_j }-   \tfrac{1}{V}  )  \gamma_{i}  \Big)^2  \Bigg]  \\  
& \leq  2  (1 - \frac{1}{V})^2   \sum_{i = 1}^N \E  [   \gamma_i^2 ]  +   \frac{2  (1 - \frac{1}{V})}{V}    \sum_{i = 1}^N \sum_{j \neq i}^N   \E  [  \gamma_i^2   ] \\
& \lesssim  \frac{N^2}{V} \Big( \frac{L}{d} + \frac{L^2}{d^2} \Big). \label{eq:s2151secondmoment}
}
Then,
\eq{
\E \Big[    \Big  \lVert  \sum_{i = 1}^N   ( \bx_i -   \frac{1}{V} \mathbbm{1}_{V} ) \gamma_i   \Big \rVert_2^2 \Big] \leq \sum_{i = 1}^N  \E  [  \gamma_i^2   ] \lesssim N    \Big( \frac{L}{d} + \frac{L^2}{d^2} \Big) .
}
Moreover,  by  Proposition \ref{prop:multimatrixexpectation}, \ref{event:discrete3} and \ref{event:discrete7}, we have
\eq{
 \Big  \lVert  \sum_{i = 1}^N   ( \bx_i \! - \!  \tfrac{1}{V} \mathbbm{1}_{V} ) \mathsf{z}_{k}^\top    \bZin     \!  \E \! \Big[     \Big(     \bN_i^\top      \bN_i  \! - \! \tfrac{L - 1}{V} \bs{I}_V \Big)    \bZin^\top      \bZin  \big(  \bN_{i} ^\top \! - \! \tfrac{1}{V} \mathbbm{1}_{V}   \mathbbm{1}_{L-1}^\top \big)   \mathbbm{1}_{L-1} & \vert \sZin   \Big]      \Big \rVert_2  \lesssim    \frac{L \sqrt{N}}{\sqrt{Vd}}.
} 
Therefore, by Chebyshev's inequality, we have
\eq{
\lvert  \kappa_5 \rvert \lesssim \Big(  \frac{1}{N L  \sqrt{V d} (L \wedge d)^{1/2}} + \frac{1}{N L d (L \wedge d)^{1/2}} + \frac{1}{N L \sqrt{Vd} }  \Big).
}
Lastly, by Proposition \ref{prop:gaussquadraticformtrace},
\eq{
  & \kappa_6   = 
\frac{\bbt }{N^2 L V}   \sum_{i, j = 1}^N      ( \indic{\bx_i = \bx_j} -   \tfrac{1}{V}  )   \mathsf{z}_{k}^\top    \bZin      \bZin^\top      \bZin    \bX_i^\top \mathbbm{1}_L   \\
& \qquad \pm \frac{\bbt }{N^2 L V \sqrt{d}} \Big \lVert  \sum_{i, j = 1}^N         ( \bx_j -   \tfrac{1}{V} \mathbbm{1}_{V} )    \mathsf{z}_{k}^\top    \bZin      \bZin^\top      \bZin    \bX_i^\top \mathbbm{1}_L     ( \bx_i -   \tfrac{1}{V} \mathbbm{1}_{V} )^\top      \Big \rVert_F \\
&  = \frac{\bbt  }{N^2 L V}   \sum_{i, j = 1}^N      ( \indic{\bx_i = \bx_j} -   \tfrac{1}{V}  )   \mathsf{z}_{k}^\top    \bZin      \bZin^\top      \bZin    \bX_i^\top \mathbbm{1}_L   \\
&  \qquad \pm \frac{\bbt}{  V \sqrt{d}} \big \lVert \frac{1}{N} \sum_{j = 1}^N         ( \bx_j -   \tfrac{1}{V} \mathbbm{1}_{V} )        \big \rVert_2  \Big \lVert  \frac{1}{N L} \sum_{i = 1}^N   ( \bx_i -   \tfrac{1}{V} \mathbbm{1}_{V} )    \mathsf{z}_{k}^\top    \bZin      \bZin^\top      \bZin    \bX_i^\top \mathbbm{1}_L    \Big \rVert _2
}
We have
\eq{
 \Big \lVert  \frac{1}{N L}  \sum_{i = 1}^N      \mathsf{z}_{k}^\top    \bZin      \bZin^\top &       \bZin    \bX_i^\top \mathbbm{1}_L  ( \bx_i -   \tfrac{1}{V} \mathbbm{1}_{V} )^\top    \Big \rVert _2 \\
 & \leq   \Big \lVert     \mathsf{z}_{k}^\top    \bZin      \bZin^\top      \bZin   \frac{1}{N L}  \sum_{i = 1}^N    (  \bX_i^\top - \tfrac{1}{V} \mathbbm{1}_V \mathbbm{1}_L^\top) \mathbbm{1}_L  ( \bx_i -   \tfrac{1}{V} \mathbbm{1}_{V} )^\top    \Big \rVert _2 \\
 & +  \frac{1}{  V} \lvert   \mathsf{z}_{k}^\top    \bZin      \bZin^\top      \bZin    \mathbbm{1}_V  \rvert   \Big \lVert \frac{1}{N}  \sum_{i = 1}^N       ( \bx_i -   \tfrac{1}{V} \mathbbm{1}_{V} )^\top    \Big \rVert _2  \\
 & \lesssim    \frac{V }{d}   \Big \lVert      \bZin    \frac{1}{N L}   \sum_{i = 1}^N    (  \bX_i^\top - \frac{1}{V} \mathbbm{1}_V \mathbbm{1}_L^\top) \mathbbm{1}_L  ( \bx_i -   \frac{1}{V} \mathbbm{1}_{V} )^\top    \Big \rVert _2  + \frac{ \sqrt{V}}{ d^{3/2} \sqrt{N}}   \\
 &\lesssim   \frac{C V }{\sqrt{NL} d^{3/2}} . \label{eq:s216normbound}
} 
Moreover,
\eq{
\E \Big[   \Big( \frac{1}{N^2 L V}   \sum_{i, j = 1}^N  &     ( \indic{\bx_i = \bx_j} -   \tfrac{1}{V}  )    \mathsf{z}_{k}^\top    \bZin      \bZin^\top      \bZin    \bX_i^\top \mathbbm{1}_L \Big)^2 \vert \sZin   \Big]\\
&  =   \frac{1}{N^4 L^2 V^2}   \sum_{j = 1}^N \E \Big[   \Big( \sum_{i = 1}^N      ( \indic{\bx_i = \bx_j} -   \tfrac{1}{V}  )   \mathsf{z}_{k}^\top    \bZin      \bZin^\top      \bZin    \bX_i^\top \mathbbm{1}_L \Big)^2  \vert \sZin   \Big] \\
& \leq   \frac{2}{N^4 L^2 V^2}   \sum_{j = 1}^N \E \Big[   \Big(       \mathsf{z}_{k}^\top    \bZin      \bZin^\top      \bZin    \bX_i^\top \mathbbm{1}_L \Big)^2 \vert \sZin   \Big]  \\
& +\frac{2}{N^4 L^2 V^2}   \sum_{j = 1}^N \E \Big[   \Big(    \sum_{\substack{ i = 1 \\ i \neq j }}^N    ( \indic{\bx_i = \bx_j} -   \tfrac{1}{V}  )     \mathsf{z}_{k}^\top    \bZin      \bZin^\top      \bZin    \bX_i^\top \mathbbm{1}_L \Big)^2  \vert \sZin  \Big] \\
& \leq   \frac{2}{N^4 L^2 V^2}   \sum_{j = 1}^N \E \Big[   \Big(       \mathsf{z}_{\nu, \delta}^\top    \bZin      \bZin^\top      \bZin    \bX_i^\top \mathbbm{1}_L \Big)^2 \vert \sZin   \Big]  \\
& +\frac{2}{N^2 V^3}   \sum_{j = 1}^N \E \Big[   \Big \lVert \frac{1}{NL}   \sum_{\substack{ i = 1 \\ i \neq j }}^N  \mathsf{z}_{k}^\top    \bZin      \bZin^\top      \bZin    \bX_i^\top \mathbbm{1}_L  ( \bx_i   -   \tfrac{1}{V} \mathbbm{1}_V )^\top   \Big \rVert_2^2  \vert \sZin  \Big] \\
& \leq   \frac{2}{N^4 L^2 V^2}   \sum_{j = 1}^N \E \Big[   \Big(       \mathsf{z}_{k}^\top    \bZin      \bZin^\top      \bZin    \bX_i^\top \mathbbm{1}_L \Big)^2 \vert \sZin   \Big]  +\frac{C}{N^2 V L d^3},
}
where we used   \ref{eq:s216normbound} in the last step. We have
\eq{
 &  \frac{1}{N^4 L^2 V^2}   \sum_{j = 1}^N \E \Big[   \Big(       \mathsf{z}_{k}^\top    \bZin      \bZin^\top      \bZin    \bX_i^\top \mathbbm{1}_L \Big)^2 \vert \sZin   \Big]  \\
&  \leq    \frac{1}{N^3 L^2 V^2}       \mathsf{z}_{k}^\top    \bZin      \bZin^\top      \bZin    \Big( \tfrac{L^2}{V^2} \mathbbm{1}_V    \mathbbm{1}_V ^\top + \tfrac{L}{V} \bs{I}_V \Big)        \bZin^\top      \bZin      \bZin    \mathsf{z}_{k} 
\leq    \frac{C \log^2 V}{N^3  L  d^{3}}   
}
Therefore, by Chebyshev's inequality, we have
\eq{
\lvert  \kappa_6 \rvert \lesssim  \frac{ 1}{N  \sqrt{L} d^{2}}.
}
Overall, by using $N \ll VL$, 
\eq{
 \lvert  \kappa \rvert &  \leq  \Big(  \frac{1}{N  \sqrt{L} d  (L \wedge d)} +   \frac{1}{N L d (L \wedge d)^{1/2}} + \frac{1}{N L \sqrt{Vd} }     +   \frac{1}{ \sqrt{N}  V  L d} +   \frac{1}{ \sqrt{NV}   L^2 \sqrt{d}}    \Big)   +     \Big(  \frac{1}{V L^2\sqrt{ d} } \frac{1}{V^2  \sqrt{L} d^{3/2}} \Big).
}

\subsection{Concentration bound for $\bf{s}_3$  }
\label{sec:concentrationmlp}

We have 
\eq{
 \be_{l}^\top \bs{s}_3  &  =  
 \frac{1}{N^2 L}    \sum_{i, j = 1}^N    \mathsf{z}_{k}^\top    \bZin       \bX_i^\top   \bX_i  \bZin^\top \\
 &   \times   \Big( \frac{1}{m} \sum_{k = 1}^m \bw_k   \phi^\prime  \big(  \tfrac{1}{L} \bw_k^\top  \bZin    \bX_i^\top \mathbbm{1}_L    \big)     \phi \Big( \tfrac{1}{L}   \bw_k^\top  \bZin    \bX_j^\top \mathbbm{1}_L  \Big)  \\
 & \quad    - \E \Big[  \bw_k    \phi^\prime  \big(  \tfrac{1}{L} \bw_k^\top  \bZin    \bX_i^\top \mathbbm{1}_L    \big)     \phi \Big( \tfrac{1}{L}  \bw_k^\top  \bZin    \bX_j^\top \mathbbm{1}_L  \Big)   \Big] \Big)       ( \bx_j -   \tfrac{1}{V} \mathbbm{1}_{V} )^\top           \Zout^\top     \Zout   ( \bx_i -   \tfrac{1}{V} \mathbbm{1}_{V} )   \Big) 
\\[0.75em]
& - \frac{1}{N^2 L^2}    \sum_{i, j = 1}^N   \mathsf{z}_{k}^\top    \bZin       \bX_i^\top   \mathbbm{1}_L \mathbbm{1}_L^\top    \bX_i  \bZin^\top  \\
 &  \times \Big( \frac{1}{m} \sum_{k = 1}^m \bw_k   \phi^\prime  \big(  \tfrac{1}{L} \bw_k^\top  \bZin    \bX_i^\top \mathbbm{1}_L    \big)     \phi \Big( \tfrac{1}{L}   \bw_k^\top  \bZin    \bX_j^\top \mathbbm{1}_L  \Big)  \\
 & \quad      - \E \Big[  \bw_k    \phi^\prime  \big(  \tfrac{1}{L} \bw_k^\top  \bZin    \bX_i^\top \mathbbm{1}_L    \big)     \phi \Big( \tfrac{1}{L}  \bw_k^\top  \bZin    \bX_j^\top \mathbbm{1}_L  \Big)   \Big] \Big)    ( \bx_j -   \tfrac{1}{V} \mathbbm{1}_{V} )^\top           \Zout^\top     \Zout   ( \bx_i -   \tfrac{1}{V} \mathbbm{1}_{V} )   \Big)  \\[0.75em]
& +          \frac{ \mu_{kl} }{N^2 L}   \sum_{i, j = 1}^N   \big(  \be_1 - \frac{1}{L}  \mathbbm{1}_L   \big)^\top  \bX_i  \bZin^\top \\
 & \quad  \times \Big( \frac{1}{m} \sum_{k = 1}^m \bw_k   \phi^\prime  \big(  \tfrac{1}{L} \bw_k^\top  \bZin    \bX_i^\top \mathbbm{1}_L    \big)     \phi \Big( \tfrac{1}{L}   \bw_k^\top  \bZin    \bX_j^\top \mathbbm{1}_L  \Big)  \\
 & \quad      - \E \Big[  \bw_k    \phi^\prime  \big(  \tfrac{1}{L} \bw_k^\top  \bZin    \bX_i^\top \mathbbm{1}_L    \big)     \phi \Big( \tfrac{1}{L}  \bw_k^\top  \bZin    \bX_j^\top \mathbbm{1}_L  \Big)   \Big] \Big)    ( \bx_j -   \tfrac{1}{V} \mathbbm{1}_{V} )^\top           \Zout^\top     \Zout   ( \bx_i -   \tfrac{1}{V} \mathbbm{1}_{V} )  \Big) \\[0.75em]
& \eqqcolon  \nu + \text{negligible terms}. 
} 
\subsubsection{Concentration bound for $\nu$}  

We define
\eq{
\tilde \nu
  & \coloneqq
    \tr \Big( \frac{1}{N  L}    \sum_{ i = 1}^N  ( \bx_i -   \tfrac{1}{V} \mathbbm{1}_{V} )    \mathsf{z}_{k}^\top    \bZin       \bX_i^\top   \bX_i  \bZin^\top   \bw_k   \phi^\prime  \big(  \tfrac{1}{L} \bw_k^\top  \bZin    \bX_i^\top \mathbbm{1}_L    \big)  \\
 &  \hspace{13em}\times   \frac{1}{N} \sum_{j = 1}^N      \phi \Big( \tfrac{1}{L}   \bw_k^\top  \bZin    \bX_j^\top \mathbbm{1}_L  \Big)       ( \bx_j -   \tfrac{1}{V} \mathbbm{1}_{V} )^\top           \Zout^\top     \Zout   \Big) \\
& =  \tr \Big( \frac{1}{N  L}    \sum_{ i = 1}^N  ( \bx_i -   \tfrac{1}{V} \mathbbm{1}_{V} )    \mathsf{z}_{k}^\top    \bZin       \bX_i^\top   \bX_i  \bZin^\top   \bw_k   \phi^\prime  \big(  \tfrac{1}{L} \bw_k^\top  \bZin    \bX_i^\top \mathbbm{1}_L    \big)  \\
 &  \hspace{13em}\times   \frac{1}{N} \sum_{j = 1}^N      \phi \Big( \tfrac{1}{L}   \bw_k^\top  \bZin    \bX_j^\top \mathbbm{1}_L  \Big)       ( \bx_j -   \tfrac{1}{V} \mathbbm{1}_{V} )^\top              \Big) \\
& \pm  \frac{\log^2 V}{\sqrt{d}} \frac{1}{N  L}  \Big \lVert     \sum_{ i = 1}^N  ( \bx_i -   \tfrac{1}{V} \mathbbm{1}_{V} )   \mathsf{z}_{k}^\top    \bZin       \bX_i^\top   \bX_i  \bZin^\top   \bw_k   \phi^\prime  \big(  \tfrac{1}{L} \bw_k^\top  \bZin    \bX_i^\top \mathbbm{1}_L    \big) \Big \rVert_2 \\
& \hspace{13em} \times  \Big \lVert   \frac{1}{N} \sum_{j = 1}^N      \phi \Big( \tfrac{1}{L}   \bw_k^\top  \bZin    \bX_j^\top \mathbbm{1}_L  \Big)       ( \bx_j -   \tfrac{1}{V} \mathbbm{1}_{V} )   \Big \rVert_2  \\
& \eqqcolon \tilde \nu_1 + \tilde \nu_2,  
}
where we used Proposition \ref{prop:gaussquadraticformtrace} for the second step.
We define
\eq{
\phi(t) \eqqcolon \phi(0) + t \psi(t)  ~~ \text{and} ~~ \phi^\prime(t) \eqqcolon \phi(0) + t \psi_1(t)  ~~ \text{and} ~~ \psi(t) \eqqcolon \psi(0) + t \psi_2(t).
}
and write   
\eq{
 &   \tilde \nu_1 =    \phi  (0  )      \phi^\prime  \big( 0   \big)    ~    \tr \Big( \frac{1}{N  L}    \sum_{ i = 1}^N   \bx_i  \mathsf{z}_{k}^\top    \bZin       \bX_i^\top   \bX_i  \bZin^\top   \bw_k    \frac{1}{N} \sum_{j = 1}^N     ( \bx_j -   \tfrac{1}{V} \mathbbm{1}_{V} )^\top              \Big)  \\
 & +   \phi  (0  )      \tr \Big( \frac{1}{N  L^2}    \sum_{ i = 1}^N      \bx_i   \mathsf{z}_{k}^\top    \bZin     \bx_i \bx_i ^\top     \bZin^\top   \bw_k \bw_k^\top   \bZin    \bX_i^\top \mathbbm{1}_L     \psi_1  \big(  \tfrac{1}{L} \bw_k^\top  \bZin    \bX_i^\top \mathbbm{1}_L    \big)    \frac{1}{N} \sum_{j = 1}^N     ( \bx_j -   \tfrac{1}{V} \mathbbm{1}_{V} )^\top              \Big)  \\
  & +   \phi  (0  )     \tr \Big( \frac{1}{N  L^2} \!   \sum_{ i = 1}^N      \bx_i  \mathsf{z}_{k}^\top    \bZin      \bN_i^\top   \bN_i    \bZin^\top   \bw_k \bw_k^\top  \bZin    \bX_i^\top \mathbbm{1}_L     \psi_1  \big(  \tfrac{1}{L} \bw_k^\top  \bZin    \bX_i^\top \mathbbm{1}_L    \big)    \frac{1}{N} \sum_{j = 1}^N     ( \bx_j -   \tfrac{1}{V} \mathbbm{1}_{V} )^\top              \Big)  \\
 & +   \tr \Big( \frac{1}{N  L}    \sum_{ i = 1}^N  ( \bx_i  -   \tfrac{1}{V} \mathbbm{1}_{V} )   \mathsf{z}_{k}^\top    \bZin       \bX_i^\top   \bX_i  \bZin^\top   \bw_k   \phi^\prime  \big(  \tfrac{1}{L} \bw_k^\top  \bZin    \bX_i^\top \mathbbm{1}_L    \big)  \\
 &  \hspace{10em}\times   \frac{1}{N} \sum_{j = 1}^N          \psi \Big( \tfrac{1}{L}   \bw_k^\top  \bZin    \bX_j^\top \mathbbm{1}_L  \Big)   \frac{1}{L}   \bw_k^\top  \bZin    \bX_j^\top \mathbbm{1}_L       ( \bx_j -   \tfrac{1}{V} \mathbbm{1}_{V} )^\top              \Big) \\
& \eqqcolon  \tilde \nu_{11} +   \tilde \nu_{12}  +   \tilde \nu_{13}  +  \tilde \nu_{14}.
}
In the following, we bound each term separately. Let $n_w \coloneqq \lvert \{ i \in [N] : \bx_i = \be_w \} \rvert$.
\begin{itemize}[leftmargin=*]
\item  We have
\eq{
  \tilde \nu_{11}  & = \frac{1}{L} \sum_{w = 1}^V ( \frac{n_w }{N}- \frac{1}{V})   \frac{n_w}{N}  \mathsf{z}_{k}^\top    \bZin   \Big( \be_w \be_w^\top  + \tfrac{L - 1}{V} \bs{I}_V \Big) \bZin^\top   \bw_k         \\
 & +  \mathsf{z}_{k}^\top    \bZin    \frac{1}{NL}    \sum_{w = 1}^V   ( \frac{n_w}{N} - \frac{1}{V})        \sum_{i \in \{ i_1, \cdots, i_{n_w} \}} \Big( \bN_i^\top   \bN_i  - \tfrac{L - 1}{V} \bs{I}_V \Big)  \bZin^\top   \bw_k   
}
We have by using Lemma \ref{lem:multinomialmgf} and Proposition \ref{prop:multimatrixexpectation},
\eq{
& \E \Big[  \Big(  \mathsf{z}_{k}^\top    \bZin    \frac{1}{NL}    \sum_{w = 1}^V   ( \frac{n_w}{N} - \frac{1}{V})        \sum_{i \in \{ i_1, \cdots, i_{n_w} \}} \Big( \bN_i^\top   \bN_i  - \tfrac{L - 1}{V} \bs{I}_V \Big)  \bZin^\top   \bw_k \Big)^2 \vert \sZin  \Big]  \\
& =  \E \Big[   \Big \lVert   \mathsf{z}_{k}^\top    \bZin    \frac{1}{NL}    \sum_{w = 1}^V   ( \frac{n_w}{N} - \frac{1}{V})        \sum_{i \in \{ i_1, \cdots, i_{n_w} \}} \Big( \bN_i^\top   \bN_i  - \tfrac{L - 1}{V} \bs{I}_V \Big)  \bZin^\top  \Big \rVert_2^2   \vert \sZin  \Big]  \\
& =    \mathsf{z}_{k}^\top    \bZin    \frac{1}{N^2L^2}    \sum_{w = 1}^V  \E \Big[   ( \frac{n_w}{N} - \frac{1}{V})^2      n_w  \Big]  \E \Big[  \Big( \bN_1^\top   \bN_1  - \tfrac{L - 1}{V} \bs{I}_V \Big)  \bZin^\top     \bZin  \Big( \bN_1^\top   \bN_1  - \tfrac{L - 1}{V} \bs{I}_V \Big)    \vert \sZin  \Big]       \bZin^\top    \mathsf{z}_{k}    \\
&  \leq \frac{C}{N^2L^2 V}     \mathsf{z}_{k}^\top    \bZin       \E \Big[  \Big( \bN_1^\top   \bN_1  - \tfrac{L - 1}{V} \bs{I}_V \Big)  \bZin^\top     \bZin  \Big( \bN_1^\top   \bN_1  - \tfrac{L - 1}{V} \bs{I}_V \Big)    \vert \sZin  \Big]       \bZin^\top      \mathsf{z}_{k}    \\
& =  \frac{C}{N^2 V d (L \wedge d)}. 
}
Moreover, by using  \ref{event:boundZin1} and Proposition \ref{prop:multimatrixexpectation},
\eq{
 & \E \Big[ \Big(  \frac{1}{L} \sum_{w = 1}^V ( \frac{n_w }{N}- \frac{1}{V})   \frac{n_w}{N}  \mathsf{z}_{k}^\top    \bZin   \Big( \be_w \be_w^\top  + \tfrac{L - 1}{V} \bs{I}_V \Big) \bZin^\top   \bw_k  \Big)^2  \vert \sZin  \Big] \\
& = \frac{1}{L^2}  \E \Big[ \Big \lVert  \mathsf{z}_{\nu,\delta}^\top    \bZin    \Big(   \sum_{w = 1}^V ( \frac{n_w }{N}- \frac{1}{V})   \frac{n_w}{N}  \be_w \be_w^\top   +   ( \frac{n_w }{N}- \frac{1}{V}) ^2  \tfrac{L - 1}{V} \bs{I}_V \Big)   \bZin^\top   \Big \rVert_2^2  \vert \sZin  \Big] \\
& \leq  \frac{V^2}{L^2 d^2}  \E \Big[ \Big \lVert     \sum_{w = 1}^V ( \frac{n_w }{N}- \frac{1}{V})   \frac{n_w}{N}  \be_w \be_w^\top   +   ( \frac{n_w }{N}- \frac{1}{V}) ^2  \tfrac{L - 1}{V} \bs{I}_V   \Big \rVert_2^2  \vert \sZin  \Big] \\
& \leq   \frac{C V^2}{L^2 d^2}  \E \Big[   \sup_{w \in [N]}  \Big \lvert   ( \frac{n_w }{N}- \frac{1}{V})   \frac{n_w}{N}  \Big \rvert^2  \Big] +     \frac{C }{d^2}   \E \Big[   \Big(  \sum_{w = 1}^V   ( \frac{n_w }{N}- \frac{1}{V}) ^2      \Big )^2   \Big]  \leq    \frac{C }{d^2 N^2} .
}
Therefore,   
\eq{
 \E \Big[  \tilde \nu_{11}^2   \vert \sZin  \Big] \lesssim \frac{1}{d^2N^2}. 
}
\item  Moreover,
\eq{
 \tilde \nu_{12}^2 & \leq   \frac{C}{N} \Big \lVert   \frac{1}{N  L^2}    \sum_{ i = 1}^N      \bx_i   \mathsf{z}_{k}^\top    \bZin     \bx_i \bx_i ^\top     \bZin^\top   \bw_k \bw_k^\top  \bZin    \bX_i^\top \mathbbm{1}_L     \psi_1  \big(  \tfrac{1}{L} \bw_k^\top  \bZin    \bX_i^\top \mathbbm{1}_L    \big) \Big \rVert_2^2 .
}
We have for any $i \in [N]$,  
\eq{
\Big \lvert  \mathsf{z}_{k}^\top    \bZin     \bx_i \bx_i ^\top     \bZin^\top   \bw_k \bw_k^\top  \bZin    \bX_i^\top \mathbbm{1}_L     \psi_1  \big(  \tfrac{1}{L} \bw_k^\top  \bZin    \bX_i^\top \mathbbm{1}_L    \big) \Big \rvert \lesssim \sqrt{L} \Big(\indic{\bx_i = \be_{k}} + \frac{1}{\sqrt{d}} \Big)
}
Then,
\eq{
&\Big \lVert   \frac{1}{N  L^2}    \sum_{ i = 1}^N      \bx_i  \mathsf{z}_{k}^\top    \bZin     \bx_i \bx_i ^\top     \bZin^\top   \bw_k \bw_k^\top  \bZin    \bX_i^\top \mathbbm{1}_L     \psi_1  \big(  \tfrac{1}{L} \bw_k^\top  \bZin    \bX_i^\top \mathbbm{1}_L    \big) \Big  \rVert_2^2  \\
& \lesssim \frac{1}{   L^3}  \Big \lVert \frac{1}{N}      \sum_{ i = 1}^N      \bx_i  \Big(\indic{\bx_i = \be_{k}} + \frac{1}{\sqrt{d}} \Big) \Big \rVert^2  \lesssim  \frac{1}{ V d   L^3} 
}
Then,  
\eq{
\E [  \tilde \nu_{12}^2  \vert \sZin ]  \lesssim   \frac{1}{ N V d   L^3}.  
}
\item  Moreover,
\eq{
  \tilde \nu_{13}^2 
 & \leq \frac{C}{N} \Big \lVert   \frac{1}{N  L^2}    \sum_{ i = 1}^N      \bx_i  \mathsf{z}_{k}^\top    \bZin     \bN_i^\top   \bN_i    \bZin^\top   \bw_k \bw_k^\top  \bZin    \bX_i^\top \mathbbm{1}_L     \psi_1  \big(  \tfrac{1}{L} \bw_k^\top  \bZin    \bX_i^\top \mathbbm{1}_L    \big) \Big \rVert_2^2 .
}
We have for any $i \in [N]$,   
\eq{
\Big \lvert  & \mathsf{z}_{k}^\top    \bZin      \bN_i^\top   \bN_i       \bZin^\top   \bw_k \bw_k^\top  \bZin    \bX_i^\top \mathbbm{1}_L     \psi_1  \big(  \tfrac{1}{L} \bw_k^\top  \bZin    \bX_i^\top \mathbbm{1}_L    \big) \Big \rvert \\
& \lesssim \sqrt{L}  \lVert   \bZin      \bN_i^\top   \bN_i       \bZin^\top   \mathsf{z}_{k}  \rVert_2 
  \lesssim \sqrt{L}  \big(  \be_{k}^\top  \bN_i^\top  \mathbbm{1}_{L - 1} + 1 + \frac{L}{d}   \big)
}
Then, 
\eq{
 &  \Big \lVert   \frac{1}{N  L^2}    \sum_{ i = 1}^N      \bx_i   \mathsf{z}_{k}^\top    \bZin     \bN_i^\top   \bN_i    \bZin^\top   \bw_k \bw_k^\top  \bZin  \bX_i^\top \mathbbm{1}_L     \psi_1  \big(  \tfrac{1}{L} \bw_k^\top  \bZin    \bX_i^\top \mathbbm{1}_L    \big) \Big \rVert_2^2 \\
 & \lesssim    \frac{1}{N^2  L^3}    \Big \lVert  \sum_{ i = 1}^N      \bx_i    \Big(    \mathbbm{1}_{L - 1} ^\top \bN_i \be_{k}  + 1 + \frac{L}{d}  \Big)     \Big \rVert_2^2  \lesssim    \frac{1}{V  L d (L \wedge d)} +   \frac{1}{N^2  L^3}      \Big \lVert  \sum_{ i = 1}^N    \bx_i        \mathbbm{1}_{L - 1} ^\top \bN_i \be_{k}    \Big \rVert_2^2
}
We have
\eq{
 \frac{1}{N^2  L^3}   \E \Big[    \Big \lVert  \sum_{ i = 1}^N    \bx_i        \mathbbm{1}_{L - 1} ^\top \bN_i \be_{k}    \Big \rVert_2^2   \Big] & \lesssim  \frac{1}{V^3  L}  +  \frac{1}{N  V  L^2}.
}
Then,  
\eq{
\E [  \tilde \nu_{13}^2  \vert \sZin ]  \lesssim   \frac{1}{N V  L d (L \wedge d)}.   
}
\item Lastly, we have
\eq{
\lvert  \tilde \nu_{14} \rvert & \leq \Big \lVert    \frac{1}{N  L}    \sum_{ i = 1}^N     ( \bx_i -   \tfrac{1}{V} \mathbbm{1}_{V} )  \mathsf{z}_{k}^\top    \bZin       \bX_i^\top   \bX_i  \bZin^\top   \bw_k   \phi^\prime  \big(  \tfrac{1}{L} \bw_k^\top  \bZin    \bX_i^\top \mathbbm{1}_L    \big)  \Big \rVert_2 \\
& \qquad \times 
\Big \lVert     \frac{1}{N} \sum_{j = 1}^N          \psi \Big( \tfrac{1}{L}   \bw_k^\top  \bZin    \bX_j^\top \mathbbm{1}_L  \Big)   \frac{1}{L}   \bw_k^\top  \bZin    \bX_j^\top \mathbbm{1}_L       ( \bx_j -   \tfrac{1}{V} \mathbbm{1}_{V} )^\top         \Big \rVert_2. \label{eq:scorek14}
}
By using the derivations in the two previous items, we have
\eq{
 &   \Big \lVert    \frac{1}{N  L}    \sum_{ i = 1}^N  ( \bx_i  -   \tfrac{1}{V} \mathbbm{1}_{V} )^\top   \mathsf{z}_{k}^\top    \bZin     \bX_i^\top   \bX_i  \bZin^\top   \bw_k   \phi^\prime  \big(  \tfrac{1}{L} \bw_k^\top  \bZin    \bX_i^\top \mathbbm{1}_L    \big)  \Big \rVert_2 \\
&  \leq  \Big \lVert    \frac{1}{N  L}    \sum_{ i = 1}^N   \bx_i   \mathsf{z}_{k}^\top    \bZin       \bx_i  \bx_i^\top   \bZin^\top   \bw_k   \phi^\prime  \big(  \tfrac{1}{L} \bw_k^\top  \bZin    \bX_i^\top \mathbbm{1}_L    \big)  \Big \rVert_2 \\
&  +  \Big \lVert    \frac{1}{N  L^2}    \sum_{ i = 1}^N   \bx_i   \mathsf{z}_{k}^\top    \bZin       \bN_i^\top   \bN_i  \bZin^\top   \bw_k  \bw_k^\top  \bZin    \bX_i^\top \mathbbm{1}_L     \psi_1  \big(  \tfrac{1}{L} \bw_k^\top  \bZin    \bX_i^\top \mathbbm{1}_L    \big)  \Big \rVert_2 \\
&  + \lvert \phi^\prime(0)  \rvert \Big \lVert    \frac{1}{N  L}    \sum_{ i = 1}^N  ( \bx_i - \tfrac{1}{V} \mathbbm{1}_V)   \mathsf{z}_{k}^\top    \bZin       \bN_i^\top   \bN_i  \bZin^\top   \bw_k     \Big \rVert_2 \\
& \lesssim \frac{1}{ \sqrt{ V  L d (L \wedge d) } }  +   \phi^\prime(0)  \Big \lVert    \frac{1}{N  L}    \sum_{ i = 1}^N  ( \bx_i - \tfrac{1}{V} \mathbbm{1}_V)   \mathsf{z}_{k}^\top    \bZin       \bN_i^\top   \bN_i  \bZin^\top   \bw_k     \Big \rVert_2
}
We have
\eq{
& \E \Big[  \Big \lVert    \frac{1}{N  L}    \sum_{ i = 1}^N  ( \bx_i - \tfrac{1}{V} \mathbbm{1}_V)   \mathsf{z}_{k}^\top    \bZin       \bN_i^\top   \bN_i  \bZin^\top   \bw_k     \Big \rVert_2^2 \Big \vert \sZin  \Big]   \\
& \leq  \frac{1}{N^2 L^2}  \E \Big[  \sum_{ i, j = 1}^N  (\indic{\bx_i = \bx_j} - \tfrac{1}{V})  \mathsf{z}_{k}^\top    \bZin       \bN_i^\top   \bN_i  \bZin^\top   \bZin       \bN_i^\top   \bN_i  \bZin^\top   \mathsf{z}_{k} \Big \vert \sZin  \Big] \\
& \leq   \frac{1}{N  L^2}   \frac{L}{V}    \mathsf{z}_{k}^\top    \bZin      \mathrm{diag}(  \bZin^\top \bZin )  \bZin^\top   \mathsf{z}_{k}  +    \frac{1}{N  L^2}   \frac{L^2}{V^2}    \mathsf{z}_{k}^\top    \bZin       \bZin^\top \bZin  \bZin^\top   \mathsf{z}_{k}   =   \frac{1}{N d (L \wedge d)}.  ~~~ \label{eq:sk14normbound1}
}
Moreover,
\eq{
 \Big \lVert   \frac{1}{N L } \sum_{j = 1}^N     &     \psi \Big( \tfrac{1}{L}   \bw_k^\top  \bZin    \bX_j^\top \mathbbm{1}_L  \Big)    \bw_k^\top  \bZin    \bX_j^\top \mathbbm{1}_L       ( \bx_j -   \tfrac{1}{V} \mathbbm{1}_{V} )^\top         \Big \rVert_2 \\
& = \lvert \psi(0) \rvert \Big \lVert     \frac{1}{N L } \sum_{j = 1}^N         \bw_k^\top  \bZin    \bX_j^\top \mathbbm{1}_L       ( \bx_j -   \tfrac{1}{V} \mathbbm{1}_{V} )^\top         \Big \rVert_2 \\
& + \Big \lVert     \frac{1}{N L^2 } \sum_{j = 1}^N       \bx_j         \psi_2 \Big( \tfrac{1}{L}   \bw_k^\top  \bZin    \bX_j^\top \mathbbm{1}_L  \Big)   ( \bw_k^\top  \bZin    \bX_j^\top \mathbbm{1}_L   )^2         \Big \rVert_2 . 
}
We have  
\eq{
\Big \lvert \psi_2 \Big( \tfrac{1}{L}   \bw_k^\top  \bZin    \bX_j^\top \mathbbm{1}_L  \Big)   ( \bw_k^\top  \bZin    \bX_j^\top \mathbbm{1}_L   )^2  \Big   \rvert \lesssim L. 
}
Therefore, 
\eq{
 \Big \lVert     \frac{1}{N L^2 } \sum_{j = 1}^N       \bx_j         \psi_2 \Big( \tfrac{1}{L}   \bw_k^\top  \bZin    \bX_j^\top \mathbbm{1}_L  \Big)   ( \bw_k^\top  \bZin    \bX_j^\top \mathbbm{1}_L   )^2         \Big \rVert_2   \lesssim \frac{1}{\sqrt{V} L}.
}
Moreover,
\eq{
\Big \lVert     \frac{1}{N L } \sum_{j = 1}^N    &      \bw_k^\top  \bZin    \bX_j^\top \mathbbm{1}_L       ( \bx_j -   \tfrac{1}{V} \mathbbm{1}_{V} )^\top         \Big \rVert_2 \\
 & \leq   \Big \lVert     \frac{1}{N L } \sum_{j = 1}^N         \bw_k^\top  \bZin  (  \bX_j^\top - \tfrac{1}{V} \mathbbm{1}_V \mathbbm{1}_L^\top) \mathbbm{1}_L       ( \bx_j -   \tfrac{1}{V} \mathbbm{1}_{V} )^\top         \Big \rVert_2  \\
 &+ \frac{1}{V} \lvert  \bw_k^\top  \bZin  \mathbbm{1}_V  \rvert  \Big \lVert     \frac{1}{N } \sum_{j = 1}^N           ( \bx_j -   \tfrac{1}{V} \mathbbm{1}_{V} )^\top         \Big \rVert_2. 
}
We have 
\begin{itemize}
\item  $\frac{1}{V} \lvert  \bw_k^\top  \bZin  \mathbbm{1}_V  \rvert  \Big \lVert     \frac{1}{N } \sum_{j = 1}^N           ( \bx_j -   \frac{1}{V} \mathbbm{1}_{V} )^\top         \Big \rVert_2 \leq \frac{C \log^2 V}{\sqrt{VN}} $ 
\item Moreover,
\eq{
& \Big \lVert     \frac{1}{N L } \sum_{j = 1}^N         \bw_k^\top  \bZin  (  \bX_j^\top - \frac{1}{V} \mathbbm{1}_V \mathbbm{1}_L^\top) \mathbbm{1}_L       ( \bx_j -   \frac{1}{V} \mathbbm{1}_{V} )^\top         \Big \rVert_2 ^2  \\
&  =  \bw_k^\top  \bZin   \Big(      \frac{1}{N L } \sum_{j = 1}^N   (  \bX_j^\top - \tfrac{1}{V} \mathbbm{1}_V \mathbbm{1}_L^\top) \mathbbm{1}_L       ( \bx_j -   \tfrac{1}{V} \mathbbm{1}_{V} )^\top      \Big) \\
& \hspace{7em} \times \Big(      \frac{1}{N L } \sum_{j = 1}^N   (  \bX_j^\top - \tfrac{1}{V} \mathbbm{1}_V \mathbbm{1}_L^\top) \mathbbm{1}_L       ( \bx_j -   \tfrac{1}{V} \mathbbm{1}_{V} )^\top      \Big)^\top   \bZin^\top  \bw_k \lesssim  \frac{1}{N L}.  ~~~~~  \label{eq:sk14normbound2}
}
\end{itemize}
Then,  for $N \ll VL$
\eq{
\E [  \bar \nu_{14}^2  \vert \sZin ]  \lesssim   \frac{1}{N^2  L d (L \wedge d)}     
}
\item On the other hand,  we have
\eq{
 \lvert \nu_2 \rvert & \leq 
  \frac{1}{\sqrt{d}} \Big \lVert  \frac{1}{N  L}    \sum_{ i = 1}^N  ( \bx_i -   \tfrac{1}{V} \mathbbm{1}_{V} )   \mathsf{z}_{k}^\top    \bZin       \bX_i^\top   \bX_i  \bZin^\top   \bw_k   \phi^\prime  \big(  \tfrac{1}{L} \bw_k^\top  \bZin    \bX_i^\top \mathbbm{1}_L    \big) \Big \rVert_2 \\
&  \hspace{10em} \times  \Big \lVert   \frac{1}{N} \sum_{j = 1}^N      \phi \Big( \tfrac{1}{L}   \bw_k^\top  \bZin    \bX_j^\top \mathbbm{1}_L  \Big)       ( \bx_j -   \tfrac{1}{V} \mathbbm{1}_{V} )   \Big \rVert_2  
 } 
Note that   
\eq{
\Big \lVert   \frac{1}{N} \sum_{j = 1}^N      \phi \Big( \tfrac{1}{L}   & \bw_k^\top  \bZin     \bX_j^\top \mathbbm{1}_L  \Big)       ( \bx_j -   \tfrac{1}{V} \mathbbm{1}_{V} )   \Big \rVert_2    =
\lvert  \phi(0) \rvert  \Big \lVert   \frac{1}{N} \sum_{j = 1}^N     ( \bx_j -   \frac{1}{V} \mathbbm{1}_{V} )   \Big \rVert_2  \\
&  +  \Big \lVert   \frac{1}{N} \sum_{j = 1}^N      \psi \Big( \tfrac{1}{L}   \bw_k^\top  \bZin    \bX_j^\top \mathbbm{1}_L  \Big)     \frac{1}{L}   \bw_k^\top  \bZin    \bX_j^\top \mathbbm{1}_L     ( \bx_j -   \tfrac{1}{V} \mathbbm{1}_{V} )   \Big \rVert_2    \lesssim  \frac{1}{\sqrt{N}}. 
}
Therefore by \eqref{eq:sk14normbound1},  we have
\eq{
\E \Big[  \nu_2^2 \vert \sZin \Big] \lesssim   \frac{1}{N^2 d^2 (L \wedge d) }   
}
\end{itemize}

Therefore,  we have
\eq{
\E[  \nu \vert  \sZin  ] = 0  ~~ \text{and} ~~  \mathrm{Variance}(   \nu \vert  \sZin )  \lesssim \frac{1 }{N^2 d^2 m}.
}

\section{Lower Bound}
\label{sec:lowerbound}
To prove a lower bound,  we construct  a Bayesian setting  with the same likelihood distribution in our setting. In particular, the ground truth permutation is chosen from the set of permutation matrices:
\eq{
\mathcal{H} \coloneqq \{ \bs{P} \in \{0, 1\}^{V \times V} ~ \vert ~  \bs{\Pi} ~ \text{is a permutation matrix} \}.
}
We describe our Bayesian setting as a game between  \texttt{Environment} and \texttt{Learner} as follows:
\begin{itemize}[leftmargin=*]
\item  At the beginning,  \texttt{Environment} samples   $\bs{P}_*  \sim \mathrm{Unif}(\mathcal{H})$, probability vectors without revealing them to the learner.
\item  \texttt{Learner} observes $L + 1$ channel that generates words from the set $\mathcal{V} = \{ \bs{e}_1,  \bs{e}_2, \cdots,  \bs{e}_V \}$ sequentially for $t = 1, 2, \cdots, N$ with distributions:
\begin{itemize}
\item At every round,   \texttt{Environment} randomly picks a channel $\ell_t$ 
\item   \emph{Label:}  Channel  $0$  generates   $\bs{p}_{t} \sim_{iid} \mathrm{Unif}(\mathcal{V})$     
\item   \emph{Input:}  Given $\ell_t$ and $\bs{p}_{t}, $ Channel $\ell_t$ generates  $\bX_{\ell_t, t} =    \bs{P}_* \bs{p}_{t}$
\item  \emph{Noise distribution:} Channel $j \in [L] \setminus \{ \ell_t \}$ generate $\bX_{j, t} \sim\mathrm{Unif}(\mathcal{V})$ independent of  Channel  $0$. 
\end{itemize}
\item Let $\mathcal{D} \coloneqq \{ (\bX_{t}, \bs{p}_t) \}_{t \leq N}$ be the dataset.  We study the Bayes estimator with $0-1$ loss given the representation of the past: $S = f(\mathcal{D},  \ell_{1:N})$:
\eq{
\hat{\bs{P}} = \argmax_{\bs{P} \in \mathcal{H}} \mpr[  \bs{P} =  \bs{P}_*  \vert S,  \sZin ]. \label{eq:decisionrule}
}
In the following we consider the empirical mean and covariance of embedded words as the given data, i.e.,   $S \coloneqq \{ (  \bs{\mu}_t,   \bs{\Sigma}_t, \bs{p}_t) \}_{t \leq N}$, where
\eq{
\bs{\mu}_t \coloneqq  \frac{1}{L} \bZin \bX_{t}^\top \mathbbm{1}_L + \frac{\sigma_{\bs{\mu}}}{\sqrt{L}} \bs{g}_t   ~~ \text{and} ~~  \bs{\Sigma}_t \coloneqq    \frac{1}{L}     \bZin \bX_{t}^\top \bX_{t}    \bZin ^\top + \frac{\sigma_{\bs{\Sigma}}}{\sqrt{d L}} \bs{G}_t    .
}
where  $\{ (\bs{g}_t, \bs{G}_t) \}_{t \leq N}$ are i.i.d.  measurement noise with distributions $\bs{g}_t \sim \cN(0, \frac{1}{d} \bs{I}_d)$ and  $\bs{G}_{t, ij} = \bs{G}_{t, ji}$ with $\bs{G}_{t, ij} \sim  \cN \big(0,  \frac{(1+\delta_{ij})}{d}  \big)$  i.i.d. for $i < j$. 
\end{itemize}

\begin{theorem}
\label{thm:lowerbound}
The following lower bound holds:
\eq{
\mpr[\hat{\bs{P}} \neq \bs{P}_*  \vert \sZin ] \geq 1 - o_V(1) - \frac{\Omega(N)}{V} \Bigg( 1 \wedge     \Big(  \frac{1}{ \sigma_{\bs{\mu}}^2 } \frac{ d}{L \log V} +   \frac{C}{ \sigma_{\bs{\Sigma}}^2 } \frac{  d^2}{L \log V} \Big) \Bigg)
}
\end{theorem}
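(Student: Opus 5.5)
We reduce the lower bound to a per-token identification problem and then apply a Fano-type (mutual information) argument combined with a ``coverage'' counting argument. The key structural fact is that $\bs{P}_*$ is uniform over permutations. The Bayes estimator in \eqref{eq:decisionrule} can only recover a coordinate $\bs{P}_*\bs{e}_a$ if the summaries $S$ carry enough information about which input token co-occurs with label $\bs{e}_a$. Conditionally on $\sZin$, we will show that for all but a vanishing fraction of labels $a\in[V]$, the posterior of $\bs{P}_*\bs{e}_a$ given $S$ is close to uniform over the remaining tokens. This forces $\mpr[\hat{\bs{P}}\neq\bs{P}_*\mid\sZin]\to1$ unless the stated information quantity is large.

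\textbf{Step 1: reduction to informative labels.} Let $n_a$ be the number of rounds with $\bs{p}_t=\bs{e}_a$. By a standard Poisson/multinomial estimate, the number of labels $a$ with $n_a\ge1$ is at most $\min\{V,N\}$ w.h.p.; in general each label is seen about $N/V$ times. Since $\hat{\bs{P}}=\bs{P}_*$ requires every column to be correct, it suffices to lower bound the error on a single ``typical'' column, or more robustly the expected number of correctly identified columns. By the chain rule for the permutation, conditional on the other columns, column $a$ is uniform over the $\Theta(V)$ unused tokens for most $a$. Hence $\mpr[\hat{\bs{P}}=\bs{P}_*\mid\sZin]\le \prod$ of per-column success probabilities, or at least $\le \min_a$ of them. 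For a column with posterior uniform over $V/2$ options, Fano gives success probability at most $(I_a+\log 2)/\log(V/2)$, where $I_a$ is the mutual information between $\bs{P}_*\bs{e}_a$ and $S$ given the other columns and $\sZin$.

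\textbf{Step 2: bounding the information per observation.} Since rounds are independent given $\bs{P}_*$, $I_a\le\sum_{t:\bs{p}_t=\bs{e}_a} I(\bs{P}_*\bs{e}_a;\bs{\mu}_t,\bs{\Sigma}_t)$. Each summand is at most the expected KL divergence between the Gaussian-perturbed observations under two hypotheses $\bs{P}_*\bs{e}_a=\bs{e}_b$ versus $\bs{e}_{b'}$, averaged against the mixture over the noise channels. The means differ by $\frac1L(\bs{z}_b-\bs{z}_{b'})$ in $\bs{\mu}_t$, of squared norm $\asymp 1/L^2$. With noise variance $\sigma_{\bs{\mu}}^2/(Ld)$ per coordinate, the KL is $\lesssim d/(\sigma_{\bs{\mu}}^2 L)$. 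Similarly $\bs{\Sigma}_t$ shifts by $\frac1L(\bs{z}_b\bs{z}_b^\top-\bs{z}_{b'}\bs{z}_{b'}^\top)$ with Frobenius norm$^2\asymp1/L^2$, against noise variance $\sigma_{\bs{\Sigma}}^2/(d^2L)$, giving KL $\lesssim d^2/(\sigma_{\bs{\Sigma}}^2L)$. Using convexity of KL in the mixture over the noisy tokens $\bX_{j,t}$, we can condition on them and bound the KL as above on the event \ref{event:boundZin} (norms $\le2$, near orthogonality). Summing over the $\approx N/V$ occurrences gives
\begin{align}
I_a\lesssim \frac{N}{V}\Big(\frac{d}{\sigma_{\bs{\mu}}^2L}+\frac{d^2}{\sigma_{\bs{\Sigma}}^2L}\Big),
\end{align}
and $I_a=0$ when $n_a=0$. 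Dividing by $\log V$ yields the claimed form, with the truncation $1\wedge(\cdot)$ coming from the trivial bound on success probability.

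\textbf{Step 3: assembling.} Combining the steps, $\mpr[\hat{\bs{P}}=\bs{P}_*\mid\sZin]\le o_V(1)+\frac{\Omega(N)}{V}\big(1\wedge\frac{\cdots}{\log V}\big)$. Here the $N/V$ factor reflects either the fraction of labels observed (when $N<V$) or the per-label repetition count. The complement gives the statement.

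The main obstacle is Step 2. The observation is a mixture over unknown noisy tokens and the unknown position $\ell_t$ (when $\ell_t$ is not revealed). Rigorously bounding its KL divergence by the Gaussian shift requires a data-processing/convexity argument, and uniformity over $b,b'$ needs the high-probability embedding event from Lemma~\ref{lem:niceevent}. I would handle this by revealing $\ell_t$ and all noise tokens to the learner, which only increases the information, so the observation becomes an exact Gaussian.
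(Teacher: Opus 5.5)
\textbf{The gap is in Step 2.} Your per-column bound $I_a\le\sum_{t:\bs p_t=\bs e_a}I(\bs P_*\bs e_a;\bs\mu_t,\bs\Sigma_t)$ is false, and so is the claim $I_a=0$ when $n_a=0$. The rounds are conditionally independent given the whole permutation $\bs P_*$, not given the single column $\bs P_*\bs e_a$. A round with label $\bs e_b$, $b\neq a$, is informative about $\bs P_*\bs e_b$, and since $\bs P_*\bs e_b\neq\bs P_*\bs e_a$, it leaks information about column $a$ by elimination.

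For instance, take $\sigma_{\bs\mu},\sigma_{\bs\Sigma}$ small enough that one round identifies its token, and $N\approx\frac12 V\log V$. Then about $\sqrt V$ labels are never observed. Yet each unobserved column is confined to the roughly $\sqrt V$ tokens not yet assigned, so its information is about $\frac12\log V$ rather than $0$.

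Step 1 does not rescue this. Conditioning on all other columns makes column $a$ deterministic, so no $\log(V/2)$ Fano denominator is available. Conditioning on only some of them leaves the rest coupled to column $a$. The product bound over columns is unjustified; only the trivial $\mpr[\hat{\bs P}=\bs P_*]\le\min_a\mpr[\hat{\bs P}\bs e_a=\bs P_*\bs e_a]$ holds.

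\textbf{What is missing is a bound on the total information $I(\bs P_*;S)$.} Once you have it, the argument becomes the paper's: apply Fano to the whole permutation, with $\log\lvert\mathcal H\rvert=\log V!\asymp V\log V$, and bound
\begin{align*}
I(\bs P_*;S)&=I\big(\bs P_*;\{(\bs\mu_t,\bs\Sigma_t)\}_{t\le N}\,\big|\,\{\bs p_t\}_{t\le N}\big)\le\sum_{t=1}^N I\big(\bX_{\ell_t,t};(\bs\mu_t,\bs\Sigma_t)\,\big|\,\bs p_t\big)\\
&\le N\Big(\log V\wedge C\Big(\frac{d}{\sigma_{\bs\mu}^2L}+\frac{d^2}{\sigma_{\bs\Sigma}^2L}\Big)\Big).
\end{align*}
This uses three facts: the labels are independent of $\bs P_*$; the rounds are conditionally independent given $\bs P_*$ and the labels; and data processing through $\bX_{\ell_t,t}=\bs P_*\bs p_t$. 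The factor $N/V$ then comes from dividing $N$ per-round contributions by $V\log V$, not from label counts.

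Your per-round KL computation is fine and plugs in unchanged. That includes revealing the noise tokens and positions to make each observation exactly Gaussian, where the paper uses convexity of KL instead.

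If you prefer a column-wise Fano, you can keep it. The chain rule $\sum_a I(\bs P_*\bs e_a;S\mid\bs P_*\bs e_1,\dots,\bs P_*\bs e_{a-1})=I(\bs P_*;S)$ gives some $a\le V/2$ whose conditional information is at most $\frac{2}{V}I(\bs P_*;S)$. A genie-aided Fano on that column then yields the same bound, but it still rests on the global estimate above.
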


We use an information-theoretic argument to prove Theorem~\ref{thm:lowerbound}. For the proof, let $H(A)$ and $H(A \vert C)$ denote the entropy and conditional entropy of $A$ given $C$; let $I(A;B) =  H(A) - H(A \vert B) $ and $I(A;B \vert C) = H(A \vert C)  -  H(A \vert B, C) $ denote the mutual information  between random variables $A$ and $B$ and the conditional mutual  given $C$, respectively.  We let $D_{\mathrm{KL}}$ denote the Kullback-Leibler (KL) divergence. We start with an auxiliary statement  for the proof.   

\begin{lemma}
\label{lem:entropycond}
Let  $A, B, C, D$ be  discrete random variables defined on the same probability space. The following statements hold:
\begin{itemize}
\item  In general, $H(A \vert B, C) \leq H(A \vert B)$. The equality is satisfied if and only if $A \independent C \vert B$.  
\item  If  $B \independent D ~ \vert ~ (A,C)$,  we have $I(A,B \vert C, D)  \leq  I(A,B \vert C)$.
\item  Let $S = g(A,C)$  be a measurable function of $(A,C)$.  If $B \independent A \vert (S,C,D)$,  then $I(A; B \vert C,D) = I(S ; B \vert C, D)$.
\item  Given,  $\bs{\mu}, \bs{\mu}^\prime \in \R^d$, positive definite $\bs{\Sigma} \in \R^{d \times d}$ and $\mathrm{supp}(A) \subseteq \R^d$, we have   
\eq{
D_{\mathrm{KL}}( \cN(  \bs{\mu} + A,  \bs{\Sigma})\lvert \rvert   \cN(  \bs{\mu}^\prime + A,  \bs{\Sigma}) ) \leq  \frac{1}{2}  (\bs{\mu} - \bs{\mu}^\prime )^\top \bs{\Sigma}^{-1}  (\bs{\mu} - \bs{\mu}^\prime ).  
}
\end{itemize} 
\end{lemma}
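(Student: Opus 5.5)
Each of the four items of Lemma~\ref{lem:entropycond} follows from a standard identity, so I would verify them one at a time.

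\emph{First item.} I will write
\[
H(A\vert B)-H(A\vert B,C)=I(A;C\vert B)=\sum_b \mpr[B=b]\, D_{\mathrm{KL}}\big(P_{A,C\vert B=b}\,\Vert\, P_{A\vert B=b}\otimes P_{C\vert B=b}\big).
\]
Nonnegativity of KL gives the inequality. Equality holds if and only if every conditional KL term with $\mpr[B=b]>0$ vanishes, which is exactly $A\independent C\vert B$.

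\emph{Second item.} Here $I(A,B\vert\cdot)$ should be read as the conditional mutual information $I(A;B\vert\cdot)$. I would expand $I(A;B,D\vert C)$ by the chain rule in two orders:
\[
I(A;B\vert C)+I(A;D\vert B,C)=I(A;D\vert C)+I(A;B\vert C,D).
\]
To conclude $I(A;B\vert C,D)\le I(A;B\vert C)$, I need the hypothesis $B\independent D\vert(A,C)$ to kill a term. It gives $I(B;D\vert A,C)=0$, which is a statement about $I(B;D\vert\cdot)$, not about the terms above. So I would instead expand $I(B;A,D\vert C)$ in two orders:
\[
I(B;A\vert C)+I(B;D\vert A,C)=I(B;D\vert C)+I(B;A\vert C,D).
\]
The hypothesis removes $I(B;D\vert A,C)$, leaving $I(A;B\vert C)=I(B;D\vert C)+I(A;B\vert C,D)\ge I(A;B\vert C,D)$. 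This is the only step needing care: the wrong expansion does not close, so the right choice is the one whose extra term is controlled by the independence hypothesis.

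\emph{Third item.} Since $S=g(A,C)$ is a function of $(A,C)$, conditioning on $(C,D)$ gives $I(S;B\vert A,C,D)=0$. The chain rule for $I(A,S;B\vert C,D)$ then yields $I(A;B\vert C,D)=I(A,S;B\vert C,D)$. Expanding the same quantity in the other order gives $I(S;B\vert C,D)+I(A;B\vert S,C,D)$. The last term is $0$ by the hypothesis $B\independent A\vert(S,C,D)$, which gives the claimed equality.

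\emph{Fourth item.} I read $\cN(\bs\mu+A,\bs\Sigma)$ as the mixture law of $\bs\mu+A+Z$, with $Z\sim\cN(0,\bs\Sigma)$ independent of $A$. Both mixtures are pushforwards through the same Markov kernel: draw $A$, then add $A$ and $Z$. By the data-processing inequality, or equivalently joint convexity of KL over the common mixing law of $A$, the KL between the mixtures is at most the KL between the pairs $(A,\bs\mu+A+Z)$ and $(A,\bs\mu'+A+Z)$. By the chain rule this equals $\E_A\, D_{\mathrm{KL}}\big(\cN(\bs\mu+a,\bs\Sigma)\,\Vert\,\cN(\bs\mu'+a,\bs\Sigma)\big)$. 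The Gaussian KL with equal covariances is $\frac12(\bs\mu-\bs\mu')^\top\bs\Sigma^{-1}(\bs\mu-\bs\mu')$, independent of $a$, which proves the bound.
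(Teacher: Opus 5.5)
Your proposal is correct and matches the paper's proof in substance. Item 1 is the same identity $H(A\vert B)-H(A\vert B,C)=I(A;C\vert B)\ge 0$, and items 2 and 3 are the same entropy bookkeeping: the paper writes item 2 as $H(B\vert C,D)-H(B\vert A,C,D)\le H(B\vert C)-H(B\vert A,C)$, which is your chain-rule identity with the nonnegative gap $I(B;D\vert C)$ dropped. For item 4 the paper applies the log-sum inequality pointwise to the mixture densities, which is the same joint-convexity fact you invoke through data processing, and then uses the same equal-covariance Gaussian KL formula.
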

 
\begin{proof}
We have
\eq{
H(A \vert B)  -  H(A \vert B, C)  =  \E \Big[ \log \frac{\mpr(A \vert B, C)}{\mpr(A \vert B)} \Big] =   \E \Big[ \log \frac{\mpr(A, C \vert B)}{\mpr(A \vert B) \mpr(C \vert B)} \Big] = I(A, C \vert B). 
}
Since the mutual information is non-negative, the first item follows.  Moreover,  since   $I(A, C \vert B) = 0$ if and only if  $A \independent C \vert B$.  For the  second item,  by using the first item, 
\eq{
I(A,B \vert C, D)  = H(B \vert C, D) - H(B \vert A, C, D) \leq   H(B \vert C) - H(B \vert A, C ) =   I(A,B \vert C).
}
For the third item,  since $S$ is a function of $(A,C)$, we have
\eq{
I(A; B \vert C,D) = I( (A, S); B \vert C,D) &  = H( B \vert C, D) - H( B \vert A, S, C, D) \\
& =  H( B \vert C, D) - H( B \vert S, C, D) 
 = I(S ; B \vert C, D).
}
Let $f$ denotes the  Gaussian pdf with 0 and covariance $\bs{\Sigma}$. For any $\bs{x} \in \R^d$,  since $t \to t \log t$ is convex
\eq{
 \Big( \sum_{\bs{a} \in \mathrm{supp}(A) } p(\bs{a}) f( \bs{x} - \bs{\mu} - \bs{a}  ) \Big)  & \log \frac{ \Big( \sum_{\bs{a} \in \mathrm{supp}(A) } p(\bs{a}) f( \bs{x} -  \bs{\mu} -  \bs{a}  ) \Big) }{ \Big( \sum_{\bs{a} \in \mathrm{supp}(A) } p(a) f( \bs{x} - \bs{\mu}^\prime - \bs{a}  ) \Big) } \\
 &   \leq   \sum_{\bs{a} \in \mathrm{supp}(A) } p(\bs{a})    f( \bs{x} - \bs{\mu} - \bs{a}  )   \log \frac{ f( \bs{x} - \bs{\mu} - \bs{a}  )  }{ f( \bs{x} - \bs{\mu}^\prime - \bs{a}  ) } .
}
Therefore, we have
\eq{
D_{\mathrm{KL}}( \cN(  \bs{\mu} + A,  \bs{\Sigma})\lvert \rvert    \cN(  \bs{\mu}^\prime + A,  \bs{\Sigma}) ) 
& \leq \sum_{\bs{a} \in \mathrm{supp}(A) } p(\bs{a})   D_{\mathrm{KL}}( \cN(  \bs{\mu} + \bs{a},  \bs{\Sigma})\lvert \rvert   \cN(  \bs{\mu}^\prime + \bs{a},  \bs{\Sigma}) )  \\
& =  D_{\mathrm{KL}}( \cN(  \bs{\mu},  \bs{\Sigma})\lvert \rvert   \cN(  \bs{\mu}^\prime,  \bs{\Sigma}) ),
}
where the last inequality follows the invariance of KL divergence in the second line  to constant shifts. The final bound follows the known formula for the KL divergence between Gaussian distributions.
\end{proof}
 
The proof of Theorem \ref{thm:lowerbound} is given in the following:

\begin{proof}[Proof of Theorem \ref{thm:lowerbound}]
Since we assume $\sZin$ is known by the learner, we will fix it in the following without explicitly conditioning thte terms on it.
Note that we consider the Bayes decision rule in \eqref{eq:decisionrule} and use Fano's inequality \citep{scarlett2019introductoryguidefanosinequality} to lower bound its error probability:
\eq{
\mpr[ \hat{\bs{P}} \neq \bs{P}_*  \vert \sZin ] \geq 1 - \frac{I(\bs{P}_*; S   ) + \log 2}{\log  \lvert \mathcal{H} \rvert} . \label{eq:fano}
}
We have
\eq{
I(\bs{P}_*; S  )   = I(\bs{P}_*;  \{ (  \bs{\mu}_t,   \bs{\Sigma}_t, \bs{p}_t) \}_{t \leq N}  )  
&  = I(\bs{P}_*;    \{ \bs{p}_t \}_{t \leq N}    )  +  I(\bs{P}_*;  \{ ( \bs{\mu}_t,   \bs{\Sigma}_t) \}_{t \leq N}   \vert \{ \bs{p}_t \}_{t \leq N} ,  ) \\
& \labelrel={lowerbound:eqq0}      I(\bs{P}_*;  \{ ( \bs{\mu}_t,   \bs{\Sigma}_t) \}_{t \leq N}   \vert  \{ \bs{p}_t \}_{t \leq N}   )  \\
& = \sum_{t = 1}^N I(\bs{P}_*;  ( \bs{\mu}_t,   \bs{\Sigma}_t)  \vert   \{ ( \bs{\mu}_u,   \bs{\Sigma}_u) \}_{u < t} ,    \{ \bs{p}_t \}_{t \leq N}  ) 
}
Given fixed  $\sZin $, we observe that    $( \bs{\mu}_t ,   \bs{\Sigma}_t ) \independent   \{ ( \bs{\mu}_u,   \bs{\Sigma}_u) \}_{u < t}  ~\big \vert ~  \bs{P}_*,  \{ \bs{p}_t \}_{t \leq N} $  and     $( \bs{\mu}_t,   \bs{\Sigma}_t)  \independent      \{ \bs{p}_u \}_{u \neq t} \vert  \bs{P}_*,  $.Therefore, by Lemma \ref{lem:entropycond},
\eq{
I(\bs{P}_*; S   )   \leq   \sum_{t = 1}^N I(\bs{P}_*;   ( \bs{\mu}_t,   \bs{\Sigma}_t)   \vert  \{ \bs{p}_t  \}_{t \leq N} ) \leq  \sum_{t = 1}^N I(\bs{P}_*;   ( \bs{\mu}_t,   \bs{\Sigma}_t)   \vert  \bs{p}_t ) .
}
Moreover,  we have  $\bs{P}_*  \independent  ( \bs{\mu}_t,   \bs{\Sigma}_t) ~ \vert ~ \bX_{\ell_t, t} , \bs{p}_t $, where  $\bX_{\ell_t, t}$ is a function of  $(\bs{P}_*,\bs{p}_t).$  Therefore,  by Lemma \ref{lem:entropycond},
\eq{
I(\bs{P}_*; S )   \leq    \sum_{t = 1}^N I( \bX_{\ell_t, t};   ( \bs{\mu}_t,   \bs{\Sigma}_t)   \vert \bs{p}_t ) .
}
We have
\eq{
 I( \bX_{\ell_t, t};   ( \bs{\mu}_t,   \bs{\Sigma}_t)   \vert \bs{p}_t , \sZin)  = \frac{1}{V} \sum_{k = 1}^V   D_{\mathrm{KL}} ( \mpr^k_{( \bs{\mu}_t,   \bs{\Sigma}_t)  }  \vert\vert  \mpr_0 )    \labelrel \leq{lowerbound:ineqq1}  \frac{1}{V^2} \sum_{j, k = 1}^V  D_{\mathrm{KL}} ( \mpr^k_{( \bs{\mu}_t,   \bs{\Sigma}_t)  }   \vert\vert  \mpr^j_{( \bs{\mu}_t,   \bs{\Sigma}_t)  }  )  
}
where  $\mpr^k_{( \bs{\mu}_t,   \bs{\Sigma}_t)  } $ denotes the distribution of $( \bs{s}_t,   \bs{\Sigma}_t)   \vert \bX_{\ell_t, t} \! = \! \bs{e}_k$,  $\mpr_0$ denotes  $\mpr_0 = \frac{1}{V} \sum_{k = 1}^V \mpr^k_{( \bs{\mu}_t,   \bs{\Sigma}_t)  } $, and \eqref{lowerbound:ineqq1} follows the convexity of KL divergence in its second argument.  For $k \neq j$,  by the last item of Lemma \ref{lem:entropycond}, we have
\eq{
 D_{\mathrm{KL}} ( \mpr^k_{( \bs{\mu}_t,   \bs{\Sigma}_t)  }   \lvert\rvert  \mpr^j_{( \bs{\mu}_t,   \bs{\Sigma}_t)  }  )  \leq  \frac{C}{ \sigma_{\bs{\mu}}^2 } \frac{ d}{L} \lVert \mathsf{z}_k - \mathsf{z}_j \rVert_2^2 +  \frac{C}{ \sigma_{\bs{\Sigma}}^2 } \frac{  d^2}{L} \lVert \mathsf{z}_k  \mathsf{z}_k^\top - \mathsf{z}_j  \mathsf{z}_j^\top  \rVert_F^2 \leq   \frac{C}{ \sigma_{\bs{\mu}}^2 } \frac{ d}{L} +   \frac{C}{ \sigma_{\bs{\Sigma}}^2 } \frac{  d^2}{L}.
}
Therefore, we have
\eq{
 I(\bs{P}_*; S  )   \leq   N  \Big(  \frac{C}{ \sigma_{\bs{\mu}}^2 } \frac{ d}{L} +   \frac{C}{ \sigma_{\bs{\Sigma}}^2 } \frac{  d^2}{L} \Big).
}
Moreover, we can write  
\eq{
I(\bs{P}_*; S )  \leq I(\bs{P}_*; \mathcal{D}, \ell_{1:N} )  & =   I(\bs{P}_*;  \{ \bX_{t} \}_{t \leq N} \vert \{  \bs{p}_t \}_{t \leq N}, \ell_{1:N} )  \\
& \leq  \sum_{t = 1}^N  I(\bs{P}_*;   \bX_{\ell_t, t}  \vert   \{  \bs{p}_t, \ell_t \}_{t \leq N}  )  \\
& \leq  \sum_{t = 1}^N  I(\bs{P}_*;   \bX_{\ell_t, t}  \vert     \bs{p}_t, \ell_t    ) 
}
where the first inequality follows data processing inequality,  third and fourth inequalities follow  the first  and second items in Lemma \ref{lem:entropycond}. We have
\eq{
 I(\bs{P}_*;   \bX_{\ell_t, t}  \vert     \bs{p}_t, \ell_t    ) =  \underbrace{ H(  \bX_{\ell_t, t}  \vert     \bs{p}_t, \ell_t   ) }_{  \log V} - \underbrace{ H(  \bX_{\ell_t, t}  \vert     \bs{p}_t, \ell_t  , \bs{P}_*) }_{= 0} = \log V.
}
Therefore, we have  $I(\bs{P}_*; S )  \leq  N \log V$.  Finally, we have
\eq{
I(\bs{P}_*; S )  \leq  N \Bigg( \log V \wedge     \Big(  \frac{C}{ \sigma_{\bs{\mu}}^2 } \frac{ d}{L} +   \frac{C}{ \sigma_{\bs{\Sigma}}^2 } \frac{  d^2}{L} \Big) \Bigg).
}
The result follows  from \eqref{eq:fano}.
\end{proof}

\section{Auxiliary Statements}

\subsection{Gaussian matrices and related statements}
\begin{lemma}
\label{lem:gaussianvectormoment}
Let $\bs{z} \sim \cN(0, \bs{I}_d).$ We have $\E[\norm{\bs{z}}_2^{2k}] = d (d+ 2) \cdots (d + 2k - 2)$.
\end{lemma}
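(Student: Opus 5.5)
The plan is to prove Lemma~\ref{lem:gaussianvectormoment} by identifying $\norm{\bs{z}}_2^2$ as a chi-squared random variable with $d$ degrees of freedom and computing its moments directly. Writing $\bs{z} = (z_1,\dots,z_d)$ with i.i.d.\ $z_i \sim \cN(0,1)$, we have $\norm{\bs{z}}_2^2 = \sum_{i=1}^d z_i^2 \sim \chi^2_d$. Its density is proportional to $t^{d/2-1}e^{-t/2}$ on $(0,\infty)$, with normalizing constant $1/(2^{d/2}\Gamma(d/2))$.

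The first step is the moment integral:
\eq{
\E[\norm{\bs{z}}_2^{2k}] = \frac{1}{2^{d/2}\Gamma(d/2)}\int_0^\infty t^{k+d/2-1}e^{-t/2}\,dt = \frac{2^{k+d/2}\Gamma(k+d/2)}{2^{d/2}\Gamma(d/2)} = 2^k\frac{\Gamma(d/2+k)}{\Gamma(d/2)}.
}
The second step applies $\Gamma(x+1)=x\Gamma(x)$ repeatedly, which gives $\Gamma(d/2+k)/\Gamma(d/2) = \prod_{j=0}^{k-1}(d/2+j)$. Multiplying by $2^k$ distributes a factor of $2$ into each term of the product, so the result is $\prod_{j=0}^{k-1}(d+2j) = d(d+2)\cdots(d+2k-2)$, which is the claimed identity.

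An alternative route avoids Gamma functions. Since the moment generating function of $\chi^2_d$ is $(1-2s)^{-d/2}$, differentiating it $k$ times at $s=0$ produces the same product $\prod_{j=0}^{k-1}(d+2j)$.

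There is no real obstacle; the only care needed is to track the $2^k$ factor when absorbing it into the Gamma-ratio product. When $k=0$ the product is empty and both sides equal $1$, consistent with the convention.
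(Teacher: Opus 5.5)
Your proof is correct and follows the same route as the paper: the paper identifies the squared norm as a $\chi^2_d$ variable and cites the chi-squared moment formula, and your Gamma-integral computation simply derives that formula explicitly. You also correctly work with $\norm{\bs{z}}_2^2 \sim \chi^2_d$, whereas the paper's one-line proof has a typo and writes $\norm{\bs{z}}_2 \sim \chi^2_d$.
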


\begin{proof}
We observe that  $\norm{\bs{z}}_2 \sim \chi^2_d$.  By using the moment formula for chi-squared distribution,  we have the result.
\end{proof}

\begin{lemma}
\label{lem:hansonwright}
Let  $\bs{z} \sim \cN(0,\bs{I}_{d})$ and $\bs{S} \in \R^{d \times d}$ be a symmetric matrix.  For $u > 0$,
\eq{
\mpr \left[  \abs{ \bs{z}^\top  \bs{S}  \bs{z} - \tr(\bs{S})  } \geq 2 \norm{\bs{S}}_F u + 2 \norm{\bs{S}}_2 u^2 \right] \leq 2 e^{- u^2}.
}
\end{lemma}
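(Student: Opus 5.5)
The plan is to reduce to a weighted sum of independent centered $\chi^2_1$ variables and then apply the sub-gamma (Laurent--Massart type) Chernoff bound.

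\emph{Diagonalization.} Since $\bs{S}$ is symmetric, write $\bs{S} = \bs{U}\bs{\Lambda}\bs{U}^\top$ with $\bs{U}$ orthogonal and $\bs{\Lambda} = \diag(\lambda_1,\dots,\lambda_d)$. By rotation invariance, $\bs{g} \coloneqq \bs{U}^\top \bs{z} \sim \cN(0,\bs{I}_d)$. Hence
\[
\bs{z}^\top \bs{S}\bs{z} - \tr(\bs{S}) = \sum_{i=1}^d \lambda_i (g_i^2 - 1),
\]
with $\sum_i \lambda_i^2 = \norm{\bs{S}}_F^2$ and $\max_i \abs{\lambda_i} = \norm{\bs{S}}_2$. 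It therefore suffices to bound this sum and its negative.

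\emph{MGF bound.} For a single term, the exact formula $\E[e^{\theta\lambda(g^2-1)}] = e^{-\theta\lambda}(1-2\theta\lambda)^{-1/2}$ holds for $2\theta\lambda<1$. Combining it with the elementary inequality $-\tfrac12\log(1-t) - \tfrac t2 \le \tfrac{t^2}{4(1-t)}$, valid for $t<1$ of either sign after bounding $1-t \ge 1-\abs{t}$, gives
\[
\log \E\big[e^{\theta \lambda(g^2-1)}\big] \le \frac{\theta^2\lambda^2}{1-2\abs{\theta}\abs{\lambda}}, \qquad 0 \le \theta < \frac{1}{2\abs{\lambda}} .
\]
By independence, summing over $i$ gives
\[
\log \E\big[e^{\theta X}\big] \le \frac{\theta^2 v}{2(1-c\theta)}, \qquad X = \sum_{i=1}^d \lambda_i (g_i^2-1),
\]
with $v = 2\norm{\bs{S}}_F^2$ and $c = 2\norm{\bs{S}}_2$, for $0 \le \theta < 1/c$. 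So $X$ is sub-gamma with variance factor $v$ and scale $c$.

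\emph{Chernoff step.} The standard sub-gamma tail bound then yields
\[
\mpr\big[X \ge \sqrt{2vx} + cx\big] \le e^{-x}.
\]
Here $\sqrt{2vx} = 2\norm{\bs{S}}_F\sqrt{x}$. Setting $x = u^2$ gives
\[
\mpr\big[X \ge 2\norm{\bs{S}}_F u + 2\norm{\bs{S}}_2 u^2\big] \le e^{-u^2}.
\]
The same argument applied to $-\bs{S}$, which has the same Frobenius and operator norms, controls the lower tail. A union bound over the two tails gives the factor $2$.

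\emph{Main obstacle.} The only delicate point is the scalar log-MGF inequality for negative eigenvalues $\lambda_i<0$, where $t = 2\theta\lambda_i$ is negative. I would verify it by checking that the function $-\tfrac12\log(1-t) - \tfrac t2 - \tfrac{t^2}{4(1-\abs{t})}$ is nonpositive on $(-1,1)$, via its Taylor series: the left-hand side has series $\sum_{k\ge2} t^k/(2k)$, which is dominated term-by-term in absolute value by $\tfrac14\sum_{k\ge 2}\abs{t}^k$. Once this bound is in place, everything else is the routine Chernoff optimization.
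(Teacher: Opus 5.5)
Your proof is correct and follows the same route as the paper: diagonalize to $\sum_i \lambda_i (g_i^2-1)$ and apply the Laurent--Massart (sub-gamma) Chernoff bound. The only difference is that you derive the log-MGF bound $\theta^2\lambda^2/(1-2\theta\abs{\lambda})$ yourself, which usefully covers the negative eigenvalues that the original Laurent--Massart lemma (stated for nonnegative weights) does not literally handle. One small fix: state the intermediate scalar inequality with $1-\abs{t}$ rather than $1-t$ in the denominator, since with $1-t$ it fails for small $t<0$; the $1-\abs{t}$ version is exactly what your Taylor-series check establishes and what you actually use.
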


\begin{proof}
We note that   $\bs{z}^\top  \bs{S}  \bs{z}  - \tr(\bs{S})$ has the same distribution with $\sum_{i = 1}^d \lambda_i(\bs{S}) (Z_i^2 - 1)$, where   $Z_i \sim_{iid} \cN(0,1)$.  By using the Laurent-Massart lemma, we have the result.
\end{proof}

\begin{proposition}
\label{prop:gaussquadraticform}
Let $\bs{S} \in \R^{V \times V}$ be a symmetric positive semidefinite matrix.  Let
\eq{
\bs{M} =  \bZin \bs{S} \bZin^\top.
}
For $\mathrm{poly}(d) \gg V \gg d$, We have
\eq{
\mpr \left[  \Big \lVert \bs{M}  - \frac{  \tr(\bs{S})}{d}   \bs{I}_d \Big \rVert_2  \geq   \max \Big\{   \frac{\norm{\bs{S}}_F}{\sqrt{d}} \log V,    \norm{\bs{S}}_2  \log^2 V \Big\}   \right] \leq    \exp  (  - c \log^2 V ) .
}
\end{proposition}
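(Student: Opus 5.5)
Writing $\bZin=[\bs{z}_1,\dots,\bs{z}_V]$ with $\bs{z}_k\sim\cN(0,\tfrac1d\bs{I}_d)$ i.i.d., we have $\bs{M}=\bZin\bs{S}\bZin^\top$. Set $\bs{G}\coloneqq\sqrt{d}\,\bZin^\top\in\R^{V\times d}$, a standard Gaussian matrix, so that $d\bs{M}=\bs{G}^\top\bs{S}\bs{G}$. For a fixed unit vector $\bs{u}\in\R^d$, $\bs{g}\coloneqq\bs{G}\bs{u}\sim\cN(0,\bs{I}_V)$ and $d\,\bs{u}^\top\bs{M}\bs{u}=\bs{g}^\top\bs{S}\bs{g}$. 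Lemma~\ref{lem:hansonwright}, applied in dimension $V$, then gives
\[
\Big|\bs{u}^\top\bs{M}\bs{u}-\tfrac{\tr(\bs{S})}{d}\Big|\le \tfrac{2}{d}\big(\norm{\bs{S}}_F\,t+\norm{\bs{S}}_2\,t^2\big)
\]
with probability at least $1-2e^{-t^2}$. The plan is to upgrade this pointwise bound to an operator-norm bound with an $\epsilon$-net on the sphere $\mathbb{S}^{d-1}$. Since $\bs{M}-\tfrac{\tr(\bs{S})}{d}\bs{I}_d$ is symmetric, a $1/4$-net $\mathcal{N}$ of size at most $9^d$ satisfies $\norm{\bs{A}}_2\le 2\max_{\bs{u}\in\mathcal{N}}|\bs{u}^\top\bs{A}\bs{u}|$.

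Union bounding over $\mathcal{N}$ requires $t^2\gtrsim d+\log^2V$. I would take $t=C(\sqrt{d}+\log V)$. This yields a deviation of order
\[
\frac{\norm{\bs{S}}_F}{\sqrt d}+\frac{\norm{\bs{S}}_F\log V}{d}+\norm{\bs{S}}_2\Big(1+\frac{\log^2V}{d}\Big),
\]
with failure probability at most $9^d\cdot 2e^{-C^2(d+\log^2V)}\le e^{-c\log^2V}$. The terms $\norm{\bs{S}}_F\log V/d$ and $\norm{\bs{S}}_2\log^2V/d$ are dominated by the claimed $\max\{\norm{\bs{S}}_F\log V/\sqrt d,\ \norm{\bs{S}}_2\log^2V\}$. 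The term $\norm{\bs{S}}_F/\sqrt d$ also fits, because the statement carries an extra $\log V$. The remaining term $\norm{\bs{S}}_2$ (from $t^2\approx d$) is dominated by $\norm{\bs{S}}_2\log^2V$.

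The main obstacle is checking that the net's entropy cost $d$ does not beat the target. It does not: the $\sqrt d$ in $t$ multiplies $\norm{\bs{S}}_F/d$, giving only $\norm{\bs{S}}_F/\sqrt d$, and $d$ in $t^2$ multiplies $\norm{\bs{S}}_2/d$, giving only $\norm{\bs{S}}_2$. Both are absorbed by the stated logarithmic factors. The hypothesis $\mathrm{poly}(d)\gg V$ ensures $\log V\lesssim\log d$, so constants are uniform. It also covers the degenerate regime $\log^2 V<d$, where the net cost dominates; there the bound still holds because $\norm{\bs{S}}_2\le\norm{\bs{S}}_2\log^2V$ trivially for $V\ge 3$.
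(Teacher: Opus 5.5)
Your proof is correct, but it takes a genuinely different route from the paper.

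\textbf{The paper's route.} It uses rotation invariance of $\bZin$ to reduce to diagonal $\bs{S}=\mathrm{diag}(s_1,\dots,s_V)$. It then writes $\bs{M}-\frac{\tr(\bs{S})}{d}\bs{I}_d=\sum_i s_i(\bs{z}_i\bs{z}_i^\top-\frac1d\bs{I}_d)$ as a sum of independent centered random matrices with matrix variance $\frac1d(1+\frac1d)\norm{\bs{S}}_F^2\bs{I}_d$. Next it applies the matrix Rosenthal--Burkholder inequality (Proposition~\ref{prop:rosenthal}) at moment order $p\asymp\log^2V$, and finishes with Markov's inequality. The factors $\log V$ and $\log^2V$ in the threshold are then simply $\sqrt{p}$ and $p$ (note $V^{1/p}=O(1)$). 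This reuses the moment machinery of the rest of the appendix. It needs $p\le d/2$ for the moment bound $\E\norm{\bs{z}_i}_2^{2p}\le 2^p$, and $p\ge\log d$ to absorb the dimensional factor of the matrix inequality. That is exactly where the hypotheses $\mathrm{poly}(d)\gg V\gg d$ are used.

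\textbf{Your route and what it buys.} You combine scalar Hanson--Wright along fixed directions with a $1/4$-net and a union bound at $t^2\asymp d+\log^2V$. This is more elementary and does not use those range hypotheses at all; $V\ge 3$ suffices. It also gives the sharper bound
$$\frac{\norm{\bs{S}}_F}{\sqrt d}+\norm{\bs{S}}_2+\frac{\norm{\bs{S}}_F\log V}{d}+\frac{\norm{\bs{S}}_2\log^2V}{d},$$
which is of the optimal order $\frac{\norm{\bs{S}}_F}{\sqrt d}+\norm{\bs{S}}_2$ once $\log^2V\lesssim d$. You correctly identify why the net's entropy cost is harmless: it only produces these lower-order terms. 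One small point in your favor: your direct use of Lemma~\ref{lem:hansonwright}, a Laurent--Massart bound, is cleanest because $\bs{S}$ is positive semidefinite.

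\textbf{Minor corrections.} Your closing remarks are slightly off. Under $\mathrm{poly}(d)\gg V$ the regime $\log^2V<d$ is the generic case, not a degenerate one, and your argument does not actually need that hypothesis for uniform constants. Also, like the paper's own proof, you obtain the stated threshold only up to an absolute constant factor. That is the sense in which the proposition is used elsewhere in the paper, so this is not a gap.
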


\begin{proof}
Without loss of generality, we can assume that $\bs{S}$ is diagonal, i.e.,  $\bs{S} = \mathrm{diag}(s_1, \cdots, s_V)$.   We have
\eq{
\bs{M}  - \frac{  \tr(\bs{S})}{d} \bs{I}_d   = \sum_{i = 1}^V s_i \big( \bs{z}_i  \bs{z}_i^\top - \frac{1}{d} \bs{I}_d  \big).
}
We have  
\eq{
\E \Big[ \Big(  \sum_{i = 1}^V s_i \big( \bs{z}_i  \bs{z}_i^\top - \frac{1}{d} \bs{I}_d  \big) \Big)^2   \Big] = \frac{1}{d}  (1 + \frac{1}{d} )  \lVert \bs{S} \rVert_F^2 \bs{I}_d
}
Moreover, for $p \leq \frac{d}{2}$
\eq{
\E \Big[  \lVert    \bs{z}_i  \bs{z}_i^\top - \frac{1}{d} \bs{I}_d    \rVert_2^p \Big]  \leq \E[ \lVert \bs{z}_i \rVert_2^{2p} ] \leq 2^p.
}
By Proposition \ref{prop:rosenthal}, we have  $2 \leq p \leq \frac{d}{2}$
\eq{
\E \Big[  \lVert   \bs{M}  - \frac{  \tr(\bs{S})}{d}  \rVert_2^p \Big]  \leq C  \Big( \sqrt{ p \vee \log d } \frac{ \lVert \bs{S} \rVert_F }{\sqrt{d}} +  ( p \vee \log d) V^{\frac{1}{p}} \lVert \bs{S} \rVert_2 \Big).
}
For $p = \frac{1}{e^2 C^2} \log^2 V$, we have the result.
\end{proof}

\begin{proposition}
\label{prop:gaussquadraticformtrace}
Let $\bs{S} \in \R^{V \times V}$ be a square matrix and let $\bs{M} =  \bZin \bs{S} \bZin^\top$.
For $\mathrm{poly}(d) \gg V \gg d$, We have
\eq{
\mpr \big[    \lvert \tr(\bs{M})  -   \tr(\bs{S})   \rvert  \geq    \log^2 V \frac{\lVert \bs{S} \rVert_F}{\sqrt{d}} \big] \leq    \exp  (  - c \log^2 V ) .
}
\end{proposition}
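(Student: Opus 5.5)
The claim is that $\lvert \tr(\bs{M}) - \tr(\bs{S})\rvert \lesssim \log^2 V\,\lVert\bs{S}\rVert_F/\sqrt{d}$ for $\bs{M}=\bZin\bs{S}\bZin^\top$. I would reduce it to a Gaussian chaos (Hanson--Wright) bound.

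\textbf{Reduction to a quadratic form.} Write
\[
\tr(\bs{M})=\tr(\bs{S}\bZin^\top\bZin)=\sum_{a,b}S_{ab}\inner{\bs{z}_a}{\bs{z}_b}.
\]
Only the symmetric part $\bs{S}_{\sym}=\tfrac12(\bs{S}+\bs{S}^\top)$ contributes, so we may take $\bs{S}$ symmetric; this does not increase $\lVert\bs{S}\rVert_F$.

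Stack the rows of $\bZin$: let $\bs{g}\in\R^{dV}$ collect the entries $\sqrt{d}\,(\bZin)_{r a}$, so $\bs{g}\sim\cN(0,\bs{I}_{dV})$. Then
\[
\tr(\bs{M})=\frac1d\,\bs{g}^\top(\bs{S}\otimes\bs{I}_d)\,\bs{g},
\]
since row $r$ of $\bZin$ contributes $\bs{g}_r^\top\bs{S}\bs{g}_r/d$. Its mean is $\tfrac1d\tr(\bs{S}\otimes\bs{I}_d)=\tr(\bs{S})$, as required.

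\textbf{Applying Lemma~\ref{lem:hansonwright}.} Apply the lemma to the symmetric matrix $\bs{S}\otimes\bs{I}_d$. Its norms are $\lVert\bs{S}\otimes\bs{I}_d\rVert_F=\sqrt{d}\,\lVert\bs{S}\rVert_F$ and $\lVert\bs{S}\otimes\bs{I}_d\rVert_2=\lVert\bs{S}\rVert_2$. Taking $u=c'\log V$ gives, with probability at least $1-2e^{-c'^2\log^2V}$,
\[
\lvert\tr(\bs{M})-\tr(\bs{S})\rvert\le \frac{2u\lVert\bs{S}\rVert_F}{\sqrt{d}}+\frac{2u^2\lVert\bs{S}\rVert_2}{d}.
\]

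\textbf{Absorbing the second term.} Since $\lVert\bs{S}\rVert_2\le\lVert\bs{S}\rVert_F$, the second term is at most $2u^2\lVert\bs{S}\rVert_F/d\le 2u^2\lVert\bs{S}\rVert_F/\sqrt{d}$. With $u\asymp\log V$ and $V$ large, both terms are at most $\log^2V\,\lVert\bs{S}\rVert_F/\sqrt{d}$ for an appropriate constant. This yields the stated bound with failure probability $\exp(-c\log^2V)$.

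The only point needing care is the identification $\tr(\bs{M})=\tfrac1d\bs{g}^\top(\bs{S}\otimes\bs{I}_d)\bs{g}$ together with the Frobenius-norm computation; everything else is direct from Lemma~\ref{lem:hansonwright}. The hypothesis $\mathrm{poly}(d)\gg V\gg d$ is not really needed here; it only guarantees $\log V$ is large enough to absorb constants.
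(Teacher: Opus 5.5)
Your argument is correct and is essentially the paper's proof. The paper reduces to diagonal $\bs{S}$ by rotation invariance of $\bZin$ and bounds the moment generating function of $\sum_i s_i(\lVert \bs{z}_i\rVert_2^2-1)$ directly; this is the same chi-square concentration that Lemma~\ref{lem:hansonwright} carries out internally on $\bs{I}_d\otimes\bs{S}$ after your vectorization. Your explicit symmetrization step also closes a small gap: the paper's ``without loss of generality diagonal'' silently assumes $\bs{S}$ is symmetric, although the statement allows any square $\bs{S}$.
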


\begin{proof}
    Without loss of generality, we can assume that $\bs{S}$ is diagonal, i.e.,  $\bs{S} = \mathrm{diag}(s_1, \cdots, s_V)$. We have
\eq{
\tr(\bs{M})  -   \tr(\bs{S})   = \sum_{i = 1}^V s_i \big( \lVert \bs{z}_i  \rVert_2^2 - 1 \big).
}
We have
\eq{
\E \big[ \exp(\lambda  s_i \big( \lVert \bs{z}_i  \rVert_2^2 - 1 \big) ) \big] \leq \exp \Big( \frac{4 \lambda^2 s_i^2}{d} \Big), ~~ \lvert \lambda \rvert \leq \frac{d}{4 \abs{s_i}}.
}
Then,
\eq{
\E \big[ \exp(\lambda    \big(  \tr(\bs{M})  -   \tr(\bs{S}) \big) )   \big] \leq  \exp \Big( \frac{4 \lambda^2 \lVert \bs{S} \rVert_F^2}{d} \Big),  ~~  \lvert \lambda \rvert \leq \frac{d}{4 \lVert \bs{S} \rVert_2}
}
We have
\eq{
\mpr \Big[ \lvert \tr(\bs{M})  -   \tr(\bs{S}) \rvert \geq    \log^2 V \frac{\lVert \bs{S} \rVert_F}{\sqrt{d}}  \Big] \leq \exp( - c \log^2 V ).
}
\end{proof}

\begin{proposition}
\label{prop:gausssquare}
Let $\bs{S} \in \R^{V \times V}$ be a square matrix.  For $\bs{u}, \bs{v} \in S^{d - 1}$ and $\bs{M} =  \bZin \bs{S} \bZin^\top$, we have
\eq{
\mpr \Big[ \Big \lvert    \big( \bs{v}^\top  \bs{M}   \bs{u}    -  \frac{\tr(\bs{S})}{d}   \bs{v}^\top   \bs{u}  \big) \Big \rvert \geq \frac{\norm{\bs{u}}_2   \norm{\bs{v}}_2 }{d}  \max \Big \{    \norm{ \sym(\bs{S}) }_F  t,  & \norm{ \sym(\bs{S}) }_2  t^2 \Big \}  \Big]   \leq 2 \exp  ( - c t^2 ).
}
\end{proposition}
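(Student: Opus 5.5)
The plan is to reduce the bilinear form to a Gaussian quadratic form and apply the Laurent--Massart bound of Lemma~\ref{lem:hansonwright}. Set $\bs{a} \coloneqq \bZin^\top \bs{v} \in \R^V$ and $\bs{b} \coloneqq \bZin^\top \bs{u} \in \R^V$, so that $\bs{v}^\top \bs{M} \bs{u} = \bs{a}^\top \bs{S} \bs{b}$. Since the columns of $\bZin$ are i.i.d.\ $\cN(0,\tfrac1d \bs{I}_d)$, the pairs $(a_i,b_i)_{i\le V}$ are i.i.d.\ centered Gaussian vectors with covariance
\[
\bs{\Sigma} = \tfrac1d\begin{bmatrix} \norm{\bs{v}}_2^2 & \bs{v}^\top\bs{u}\\ \bs{v}^\top\bs{u} & \norm{\bs{u}}_2^2\end{bmatrix}.
\]
In particular $\E[\bs{a}^\top\bs{S}\bs{b}] = \sum_i S_{ii}\E[a_ib_i] = \tfrac{\tr(\bs{S})}{d}\bs{v}^\top\bs{u}$, which is exactly the centering in the statement.

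Next I would stack $\bs{w} = (\bs{a};\bs{b}) \in \R^{2V}$ and write $\bs{a}^\top\bs{S}\bs{b} = \bs{w}^\top\bs{Q}\bs{w}$, where
\[
\bs{Q} = \tfrac12\begin{bmatrix} 0 & \bs{S}\\ \bs{S}^\top & 0\end{bmatrix}
\]
is symmetric. I would then represent $\bs{w} = (\bs{\Sigma}^{1/2}\otimes \bs{I}_V)\bs{g}$ with $\bs{g}\sim\cN(0,\bs{I}_{2V})$. This turns the form into $\bs{g}^\top\tilde{\bs{Q}}\bs{g}$ with $\tilde{\bs{Q}} = (\bs{\Sigma}^{1/2}\otimes\bs{I})\bs{Q}(\bs{\Sigma}^{1/2}\otimes\bs{I})$ and $\tr(\tilde{\bs{Q}})$ equal to the mean computed above. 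Lemma~\ref{lem:hansonwright} with $u = t$ then gives deviation at most $2\norm{\tilde{\bs{Q}}}_F t + 2\norm{\tilde{\bs{Q}}}_2 t^2$ with probability $1-2e^{-t^2}$. Using $\norm{\bs{\Sigma}}_2 \le \tfrac{2}{d}\norm{\bs{u}}_2\norm{\bs{v}}_2$ (after rescaling $\bs{u},\bs{v}$ to equal norm by homogeneity) together with submultiplicativity, $\norm{\tilde{\bs{Q}}}_\ast \le \tfrac{C}{d}\norm{\bs{u}}_2\norm{\bs{v}}_2\norm{\bs{Q}}_\ast$ for $\ast\in\{F,2\}$. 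The constant $C$ is absorbed into $c$.

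The main obstacle is replacing $\norm{\bs{Q}}$, which is of order $\norm{\bs{S}}$, by $\norm{\sym(\bs{S})}$. Splitting $\bs{S} = \sym(\bs{S}) + \bs{A}$ with $\bs{A}$ antisymmetric, the identity $(\bs{a}+\bs{b})^\top\bs{S}(\bs{a}+\bs{b}) - (\bs{a}-\bs{b})^\top\bs{S}(\bs{a}-\bs{b}) = 4\bs{a}^\top\sym(\bs{S})\bs{b}$ shows that the symmetric part contributes exactly a bound of the claimed form. Applying the same argument to the two quadratic forms $(\bs{a}\pm\bs{b})^\top\sym(\bs{S})(\bs{a}\pm\bs{b})$ via Lemma~\ref{lem:hansonwright} gives the stated constants.

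The antisymmetric remainder $\bs{a}^\top\bs{A}\bs{b}$ has mean zero and vanishes identically when $\bs{u}\parallel\bs{v}$, since then $\bs{a}\parallel\bs{b}$. I would therefore prove the statement as written in that case, or equivalently for symmetric $\bs{S}$; these are the only cases in which it is invoked in Lemma~\ref{lem:niceevent}. For general $\bs{u}\perp\bs{v}$ and antisymmetric $\bs{S}$, the quantity $\bs{a}^\top\bs{A}\bs{b}$ has fluctuations of order $\norm{\bs{A}}_F/d$ that $\sym(\bs{S})=0$ cannot control. So the plan is to add the hypothesis $\bs{u}=\bs{v}$ or $\bs{S}=\bs{S}^\top$. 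Otherwise, $\norm{\sym(\bs{S})}$ must be replaced by $\norm{\bs{S}}$ for the antisymmetric part, which is harmless in every application, since there $\norm{\bs{A}}\le\norm{\bs{S}}$.
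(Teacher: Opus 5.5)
Your proposal is correct, and your counterexample pinpoints exactly where the paper's own proof breaks.

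\textbf{Where the paper's proof fails.} The paper sets $\bs g=\sqrt d\,\mathrm{vec}(\bZin)$ and writes $\bs v^\top\bs M\bs u=\frac1d\,\bs g^\top\big((\bs u\bs v^\top)\otimes\bs S\big)\bs g$. It then replaces the Kronecker matrix by $\sym(\bs u\bs v^\top)\otimes\sym(\bs S)$, reads off its norms from Proposition~\ref{prop:kroneckerproduct}, and applies sub-exponential concentration. That replacement is not an identity. Writing $\mathrm{skew}(\bs X)\coloneqq\frac12(\bs X-\bs X^\top)$, one has
\[
\sym(\bs X\otimes\bs Y)=\sym(\bs X)\otimes\sym(\bs Y)+\mathrm{skew}(\bs X)\otimes\mathrm{skew}(\bs Y).
\]
The dropped term $\mathrm{skew}(\bs u\bs v^\top)\otimes\mathrm{skew}(\bs S)$ is precisely your antisymmetric remainder $\bs a^\top\bs A\bs b$.

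This term is traceless, so the centering $\frac{\tr(\bs S)}{d}\bs v^\top\bs u$ survives. However, it vanishes only when $\bs u\parallel\bs v$ or $\bs S=\bs S^\top$. For $\bs u\perp\bs v$ and nonzero antisymmetric $\bs S$, the threshold in the statement is $0$, so the probability on the left equals $1$ for every $t$. The statement is therefore false as written.

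Either of your repairs is appropriate. Keeping the extra term in the paper's computation reproduces your second repair, since $\norm{\mathrm{skew}(\bs u\bs v^\top)}_F\le\norm{\bs u}_2\norm{\bs v}_2$ and $\norm{\mathrm{skew}(\bs S)}\le\norm{\bs S}$.

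Your remark about the applications is right in substance, with one adjustment. The uses in Lemma~\ref{lem:niceevent} and in Proposition~\ref{prop:wishartsquare} are of the transposed form $\bs x^\top\bZin^\top\bZin\bs y$ with $\bs x,\bs y\in\R^V$. That is, the roles of $d$ and $V$ are swapped and the middle matrix is $\bs I_d$, which is symmetric. So the corrected proposition covers all of them, and nothing downstream is affected.

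\textbf{Comparison of routes on the symmetric part.} The paper works with one $dV$-dimensional Gaussian and a Kronecker-structured matrix. You instead use the joint law of the pairs $(a_i,b_i)$ and polarization. This reduces everything to $(\bs a\pm\bs b)^\top\sym(\bs S)(\bs a\pm\bs b)=\frac{\norm{\bs u\pm\bs v}_2^2}{d}\,\bs g_\pm^\top\sym(\bs S)\bs g_\pm$ with $\bs g_\pm\sim\cN(0,\bs I_V)$, where Lemma~\ref{lem:hansonwright} applies verbatim.

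Your route is more elementary and makes the role of $\sym(\bs S)$ visible, which is how you caught the problem. The price is a union bound over two events. This gives $4e^{-t^2/C}$ instead of $2e^{-ct^2}$, which is fixed by shrinking $c$ (the claim is trivial when $2e^{-ct^2}\ge 1$).

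The vectorized form, once corrected, handles arbitrary $\bs u,\bs v,\bs S$ in a single application of the lemma and computes the relevant norms exactly. Your stacked bound $\norm{\tilde{\bs Q}}\le\norm{\bs\Sigma}_2\norm{\bs Q}$ is a relaxation by comparison. For the paper's purposes the two are equivalent.
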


\begin{proof}
Consider $\bs{g} = \sqrt{d} \mathrm{vec}(\bs{Z})$, where $\bs{g} \sim \cN(0, \bs{I}_{dV})$. We have
\eq{
 \bs{v}^\top  \bs{M}   \bs{u} = \frac{1}{d}   \bs{g}^\top (\bs{u} \bs{v}^\top) \otimes \bs{S} \bs{g} =  \frac{1}{d} \bs{g}^\top \sym( \bs{u} \bs{v}^\top) \otimes \sym( \bs{S}) \bs{g}
}
By using Proposition \ref{prop:kroneckerproduct}, we have
\eq{
\E[ \bs{g}^\top \sym( \bs{u} \bs{v}^\top) \otimes \sym( \bs{S}) \bs{g} ] = \tr(\bs{S})  \bs{u}^\top \bs{v}.
}
Moreover,
\eq{
  \Big( \bs{g}^\top \sym( \bs{u} \bs{v}^\top) \otimes \sym( \bs{S}) \bs{g}  -  \tr(\bs{S})  \bs{u}^\top \bs{v} \Big) =_d  \sum_{i = 1}^{dV} \lambda_i (g_i^2 - 1) 
}
where $g_i \sim N(0,1)$.  By using the subexponential concentration,  we have the result.
\end{proof}

\begin{proposition}
\label{prop:wishartsquare}
For $\bs{u}, \bs{v} \in \R^{V}$, we have
\eq{
 \mpr \Big[ \Big \lvert     \bs{v}^\top  \bZin^\top   \bZin  \bZin^\top   \bZin     \bs{u}  -    \bs{u}^\top \bs{v}  \Big( 1 +   \frac{V - 1}{d} \Big) \Big \rvert \geq   C  \lVert   \bs{u}& \rVert_2  \lVert   \bs{v} \rVert_2  \log V \Big( \frac{\sqrt{V}}{d}  +  \frac{V}{d^{3/2}} \Big)  \Big]  \leq  10 \exp  ( - c \log^2 V ).
}
\end{proposition}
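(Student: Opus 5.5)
The plan is to expand the fourth-order Gaussian chaos around the identity and reduce everything to quadratic forms and one leave-one-out sum. Write $\bs{z}_k=\bZin\be_k$ and $\bs{G}\coloneqq\bZin^\top\bZin$, so that $G_{jk}=\inner{\bs{z}_j}{\bs{z}_k}$. Set $\bs{H}\coloneqq\bs{G}-\bs{I}_V$. Then
\eq{
\bs{v}^\top\bZin^\top\bZin\bZin^\top\bZin\bs{u}=\bs{v}^\top\bs{u}+2\,\bs{v}^\top\bs{H}\bs{u}+(\bs{H}\bs{v})^\top(\bs{H}\bs{u}).
}
By homogeneity we may assume $\norm{\bs{u}}_2=\norm{\bs{v}}_2=1$. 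By polarization, $(\bs{H}\bs{v})^\top\bs{H}\bs{u}=\tfrac14\big(\norm{\bs{H}(\bs{u}+\bs{v})}_2^2-\norm{\bs{H}(\bs{u}-\bs{v})}_2^2\big)$, and the centering $\tfrac{V-1}{d}\bs{u}^\top\bs{v}$ polarizes the same way. So it suffices to treat the linear term for general $\bs{u},\bs{v}$ and the quadratic term $\norm{\bs{H}\bs{u}}_2^2$ for a single unit $\bs{u}$, with target centering $\tfrac{V-1}{d}$. The expectation of $\norm{\bs{H}\bs{u}}_2^2$ is $\tfrac{V-1}{d}+\tfrac{2}{d}$. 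The discrepancy $2/d$ is at most $\sqrt{V}/d$ since $V\geq 1$, so it is absorbed in the stated error and the stated centering $1+\frac{V-1}{d}$ is the correct one to prove.

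\emph{Step 1 (linear term).} $\bs{v}^\top\bs{H}\bs{u}=\bs{v}^\top\bZin^\top\bZin\bs{u}-\bs{v}^\top\bs{u}$ is a centered Gaussian quadratic form in $\sqrt{d}\,\mathrm{vec}(\bZin)$. As in the proof of Proposition~\ref{prop:gausssquare}, its kernel is $\tfrac1d\,\sym(\bs{u}\bs{v}^\top)\otimes\bs{I}_d$, with Frobenius norm $O(1/\sqrt d)$ and operator norm $O(1/d)$. Lemma~\ref{lem:hansonwright} with $t=\log V$ then gives $|\bs{v}^\top\bs{H}\bs{u}|\lesssim \log V/\sqrt d$ except with probability $e^{-c\log^2V}$. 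Since $V\ge d$, we have $\tfrac{1}{\sqrt d}\le\tfrac{\sqrt V}{d}$, so this term fits the claimed bound.

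\emph{Step 2 (quadratic term, leave-one-out).} Set $\bs{w}\coloneqq\bZin\bs{u}$ and $\bs{w}_{-k}\coloneqq\bs{w}-u_k\bs{z}_k$. Then
\eq{
(\bs{H}\bs{u})_k=\inner{\bs{z}_k}{\bs{w}_{-k}}+u_k(\norm{\bs{z}_k}_2^2-1).
}
The diagonal piece contributes $\sum_k u_k^2(\norm{\bs{z}_k}_2^2-1)^2\lesssim \log^2 V/d$, and its cross term with the main piece is $\lesssim \log V/\sqrt d$ by Cauchy--Schwarz together with Lemma~\ref{lem:gaussianvectormoment}. For the main piece $\sum_k\inner{\bs{z}_k}{\bs{w}_{-k}}^2$, conditionally on $\{\bs{z}_j\}_{j\neq k}$ the summand is $\tfrac{\norm{\bs{w}_{-k}}_2^2}{d}\chi^2_1$. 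I will split it as
\eq{
\sum_k\Big(\inner{\bs{z}_k}{\bs{w}_{-k}}^2-\tfrac{\norm{\bs{w}_{-k}}_2^2}{d}\Big)+\tfrac1d\sum_k\norm{\bs{w}_{-k}}_2^2 .
}

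\emph{Step 3 (the two pieces).} In the second sum, $\norm{\bs{w}_{-k}}_2^2=\norm{\bs{w}}_2^2-2u_k\inner{\bs{w}}{\bs{z}_k}+u_k^2\norm{\bs{z}_k}_2^2$. Summing over $k$ gives $V\norm{\bs{w}}_2^2+O(1)$, where the $O(1)$ includes $\bs{u}^\top\bs{G}\bs{u}=\norm{\bs{w}}_2^2$. By Step 1, $\norm{\bs{w}}_2^2=1\pm\log V/\sqrt d$. The second sum is therefore $\tfrac{V-1}{d}\pm C\tfrac{V\log V}{d^{3/2}}$, which produces exactly the $V/d^{3/2}$ part of the error. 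The first sum is a martingale-difference-type sum over $k$, with conditional variance of order $\norm{\bs{w}_{-k}}_2^4/d^2\lesssim 1/d^2$ per term. I would bound its $2p$-th moment with Proposition~\ref{prop:rosenthal}, exactly as was done for $\mathsf{z}_k^\top\bZin\diag(\bZin^\top\bZin)$ in the proof of Lemma~\ref{lem:niceevent}, taking $p=\log V$. This yields a bound $\lesssim \log V\,\sqrt V/d$, the remaining part of the error.

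\emph{Main obstacle.} The main obstacle is that the summands in Step 3 are not independent across $k$: each $\bs{w}_{-k}$ contains all of the other $\bs{z}_j$. A plain Rosenthal inequality for independent sums therefore does not apply directly. I would first try a decoupling argument, replacing $\inner{\bs{z}_k}{\bs{w}_{-k}}^2$ by the symmetric second-order chaos $\sum_{j\neq j'}u_ju_{j'}\inner{\bs{z}_k}{\bs{z}_j}\inner{\bs{z}_k}{\bs{z}_{j'}}$. I would then apply Proposition~\ref{prop:HCpositivepolynomial} (hypercontractivity for Gaussian polynomials of bounded degree) to control the $2p$-th moment by $(2p)^{2}$ times the variance. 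This sidesteps the dependence and only requires a direct variance computation, which is a routine count of Wick pairings giving order $V/d^2+V^2/d^3$. Finally, a union bound over the $O(1)$ events from Steps 1--3 gives the stated probability $10\exp(-c\log^2V)$.
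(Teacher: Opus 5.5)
Your reduction is sound up to the tail bound: the expansion around $\bs{I}_V$, the polarization, Hanson--Wright for $\bs{v}^\top\bs{H}\bs{u}$, and the computation that $\tfrac1d\sum_k\norm{\bs{w}_{-k}}_2^2$ yields the centering $\tfrac{V-1}{d}$ plus the $\tfrac{V\log V}{d^{3/2}}$ error are all correct. The centered sum $T\coloneqq\sum_k\big(\inner{\bs{z}_k}{\bs{w}_{-k}}^2-\norm{\bs{w}_{-k}}_2^2/d\big)$ does have variance of order $V/d^2$. (Minor: Cauchy--Schwarz bounds your cross term by $\tfrac{\log V}{\sqrt d}\big(\sum_k\inner{\bs{z}_k}{\bs{w}_{-k}}^2\big)^{1/2}\approx\tfrac{\sqrt V\log V}{d}$, not $\tfrac{\log V}{\sqrt d}$, but this still fits the target.)

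The gap is in converting $\E[T^2]$ into the stated tail. $T$ is a degree-four Gaussian polynomial, and hypercontractivity (e.g.\ Proposition~\ref{prop:HCpositivepolynomial} applied to $T^2$) gives only $\E[T^{2p}]^{1/(2p)}\lesssim p^2\,\E[T^2]^{1/2}$. Markov's inequality then yields one of two bounds. With $p\asymp\log V$ you get $|T|\lesssim\log^2 V\cdot\tfrac{\sqrt V}{d}$ with probability $1-V^{-c}$. With $p\asymp\log^2 V$ you get $|T|\lesssim\log^4 V\cdot\tfrac{\sqrt V}{d}$ with probability $1-e^{-c\log^2 V}$. Neither is $C\log V(\cdots)$ at level $10e^{-c\log^2V}$, so your closing union-bound sentence does not follow. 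The weaker polylog version would suffice for how the paper uses this result, but it is not the proposition as stated. Also, the decoupling step buys nothing, since hypercontractivity never needed independence across $k$.

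The missing idea, which the paper uses, is right-rotational invariance: $\bZin\bs{O}\overset{d}{=}\bZin$ for every orthogonal $\bs{O}\in\R^{V\times V}$. Combined with your polarization $\bs{v}^\top\bs{G}^2\bs{u}=\tfrac14\big(\norm{\bs{G}(\bs{u}+\bs{v})}_2^2-\norm{\bs{G}(\bs{u}-\bs{v})}_2^2\big)$, it reduces everything to $\norm{\bs{G}\be_1}_2^2$. There your main obstacle disappears, since $\bs{w}_{-k}=\bs{z}_1$ for all $k\neq 1$:
\[
\norm{\bs{G}\be_1}_2^2=\norm{\bs{z}_1}_2^4+\sum_{k\geq 2}\inner{\bs{z}_1}{\bs{z}_k}^2 .
\]
Conditionally on $\bs{z}_1$, the sum is distributed as $\tfrac{\norm{\bs{z}_1}_2^2}{d}\chi^2_{V-1}$. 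Lemma~\ref{lem:hansonwright} with $u=\log V$ then gives fluctuations $\lesssim\tfrac{\sqrt V\log V+\log^2 V}{d}$ with probability $1-2e^{-\log^2 V}$. The term $\tfrac{V-1}{d}(\norm{\bs{z}_1}_2^2-1)$ gives the $\tfrac{V\log V}{d^{3/2}}$ error. Because this is a degree-two conditional chaos, you get the sharp $\log V$ factor.

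Done this way, your polarization route is slightly shorter than the paper's. The paper instead splits $\bs{u}$ along $\bs{v}$ and $\bs{v}_\perp$, and separately handles $\be_1^\top\bs{G}^2\be_2$ by a conditional Hanson--Wright bound in $\{\bs{z}_i\}_{i\geq 3}$.
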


\begin{proof}
Without loss of generality,  we assume that $\bs{u}$  and $\bs{v}$ have a unit norm.  Let 
\eq{
\bs{v}_{\perp} \coloneqq \frac{1}{\sqrt{1 - ( \bs{u}^\top \bs{v})^2 }} (\bs{I}_V -   \bs{v} \bs{v}^\top )  \bs{u}. 
}
We have 
\eq{
  \bs{v}^\top  \bZin^\top   \bZin  \bZin^\top   \bZin     \bs{u}   =  (\bs{u}^\top \bs{v}) \bs{v}^\top    \bZin^\top   \bZin  \bZin^\top   \bZin     \bs{v}   +   \sqrt{1 -  (\bs{u}^\top \bs{v})^2 }  \bs{v}^\top    \bZin^\top   \bZin  \bZin^\top   \bZin    \bs{v}_{\perp} .
}
Without loss of generality,  we consider $\bs{v} = \bs{e}_1$ and     $\bs{v}_{\perp} =  \bs{e}_2$.  For the second term,  we write $\bs{z}_i   \coloneqq \bZin \bs{e}_i$ and let $\tilde{\bs{Z}} \coloneqq \{ \bs{z}_i \}_{i = 3}^V$ and $\bs{g}  = \sqrt{d} \mathrm{vec}(\tilde{\bs{Z}} )$.
\eq{
 \bs{e}_1 ^\top    \bZin^\top   \bZin  \bZin^\top   \bZin    \bs{e}_2  & = ( \lVert \bs{z}_1  \rVert_2^2 +  \lVert \bs{z}_2  \rVert_2^2 )  \bs{z}_1^\top  \bs{z}_2  +  \bs{z}_1^\top \tilde{\bs{Z}}  \tilde{\bs{Z}}^\top  \bs{z}_2 \\
&  =  ( \lVert \bs{z}_1  \rVert_2^2 +  \lVert \bs{z}_2  \rVert_2^2 )  \bs{z}_1^\top  \bs{z}_2   + \frac{1}{d}  \bs{g}^\top  \sym( \bs{z}_1 \bs{z}_2^\top) \otimes \bs{I}_{V-2}  \bs{g}.
 } 
 We have
\begin{itemize}[leftmargin=*]
\item  By  Lemma \ref{lem:hansonwright}, and Proposition \ref{prop:gausssquare}
\eq{
\mpr  \Big[  \big \lvert   \lVert \bs{z}_1  \rVert^2_2 - 1 \big \rvert \leq  \frac{5 \log V}{\sqrt{d}}  ~ \text{and} ~  \big \lvert   \lVert \bs{z}_2  \rVert^2_2 - 1 \big \rvert  \leq    \frac{5 \log V}{\sqrt{d}} & ~ \text{and} ~ \lvert  \bs{z}_1^\top  \bs{z}_2  \rvert \leq \frac{\log V}{\sqrt{d}}   \Big]    \leq  1 - 6 \exp(- c \log^2 V).
}
\item  By Proposition \ref{prop:kroneckerproduct}, we have
\begin{itemize}
\item $\lVert  \sym( \bs{z}_1 \bs{z}_2^\top) \otimes \bs{I}_{V-2} \rVert_2 \leq  \lVert  \bs{z}_1 \rVert_2  \lVert  \bs{z}_2 \rVert_2 $
\item $\lVert  \sym( \bs{z}_1 \bs{z}_2^\top) \otimes \bs{I}_{V-2} \rVert_F \leq \sqrt{V} \lVert  \bs{z}_1 \rVert_2  \lVert  \bs{z}_2 \rVert_2 $
\item $\tr \big( \sym( \bs{z}_1 \bs{z}_2^\top) \otimes \bs{I}_{V-2}  \big) = (V-2)   \bs{z}_1^\top  \bs{z}_2. $
\end{itemize}
Therefore, by Lemma \ref{lem:hansonwright}, we have
\eq{
 \mpr \Big[ \Big \lvert   \frac{1}{d}  \bs{g}^\top  \sym( \bs{z}_1 \bs{z}_2^\top) \otimes \bs{I}_{V-2}  \bs{g} \! - \!  \frac{(V \! - \! 2)}{d}   \bs{z}_1^\top  \bs{z}_2 \Big \rvert \! \leq \! 2  \lVert  \bs{z}_1 \rVert_2  \lVert \bs{z}_2 \rVert_2\Big( \frac{\log V}{d}  \sqrt{V} \!  + \!    \frac{\log^2 V}{d}    \Big)  \Big] \! \leq \! 1  \! - \! 2 \exp(- c \log^2 V).
}
\end{itemize}
By union bound of the precious two items, we have
\eq{
 \mpr \Big[  \Big \lvert  \bs{e}_1 ^\top    \bZin^\top   \bZin  \bZin^\top   \bZin    \bs{e}_2 \Big \rvert \leq  2    \log V \Big(    \frac{V}{d^{3/2}} +\frac{\sqrt{V}}{d}  \Big)   \Big]  \geq 1 - 8 \exp(- c \log^2 V). \label{eq:orthogonalpart}
}
Next, we redefine the notation:  $\tilde{\bs{Z}} \coloneqq \{ \bs{z}_i \}_{i = 2}^V$.   We write
\eq{
\bs{z}_1^\top     \bZin  \bZin^\top    \bs{z}_1  -  1 - \frac{V - 1}{d} =    \lVert \bs{z}_1  \rVert^4_2 - 1 +  \bs{z}_1^\top  \Big( \tilde{\bs{Z}}  \tilde{\bs{Z}}^\top    - \frac{V - 1}{d} \bs{I}_d \Big) \bs{z}_1   - \frac{V - 1}{d}   (  \lVert \bs{z}_1  \rVert^2_2 - 1 )
}
By Proposition bla, we have
\eq{
\mpr \Big[     \bs{z}_1^\top  \Big( \tilde{\bs{Z}}  \tilde{\bs{Z}}^\top    - \frac{V - 1}{d} \bs{I}_d \Big) \bs{z}_1 \leq \log V \norm{ \bs{z}_1}_2^2  \frac{\sqrt{V}}{d} \Big] \leq  1 - 2 \exp(- c \log^2 V)
}
By using the first item above, we have
\eq{
\mpr \Big[  \Big \lvert  \bs{z}_1^\top     \bZin  \bZin^\top    \bs{z}_1  -  1 - \frac{V - 1}{d} \Big \rvert \geq 6  \log V \Big( \frac{\sqrt{V}}{d}  +  \frac{V}{d^{3/2}} \Big) \Big]  \leq   1 - 2 \exp(- c \log^2 V).  \label{eq:alignedpart}
}
The result follows \eqref{eq:orthogonalpart} and \eqref{eq:alignedpart}.
\end{proof}

\subsection{Multinomial distribution and related statements}

\begin{lemma}
\label{lem:multinomialmgf}
Let $(n_1, \cdots, n_V) \in  \mathrm{Mult}\big(N; (p_1, \cdots, p_V) \big)$.  For $\bs{t} \in \R^V$,
\eq{
\E \Big[  \exp \Big( \sum_{i = 1}^V t_i n_i \Big) \Big] = \Big(  \sum_{i = 1}^V p_i e^{t_i}  \Big)^N.
}
Then, if  $p_i  = \frac{1}{V}$, $i \in [V]$,
\begin{itemize}[leftmargin = *]
\item We have 
\begin{itemize}[leftmargin=*]
\item[-] $\E \left[ \prod_{i = 1}^V  n_i (n_i - 1) \cdots (n_i - k_i + 1)  \right] =  \frac{ N (N-1) \cdots (N - K + 1)}{V^K}$,  where  $K \coloneqq \sum_{i = 1}^V k_i$.
\end{itemize}
\item By the previous item, we can write
\begin{itemize}[leftmargin=*]
\item $ \E \left[  n_i^2  \right] =  \frac{N}{V}  + \frac{N (N - 1)}{V^2}$
\item  $ \E \left[  \big( \tfrac{n_i}{N} - \frac{1}{V} \big)^2 n_i  \right] = \frac{(V - 1) (N + V - 2)}{N V^3}$.
\item  $ \E \left[  n_i^3  \right] = \frac{N}{V} + \frac{3 N (N - 1)}{V^2} + \frac{N (N - 1) (N - 2)}{V^3}$
\item   $ \E \left[  n_i^4  \right] = \frac{N}{V} + \frac{7 N (N - 1)}{V^2} + \frac{ 6N (N - 1) (N - 2)}{V^3} + \frac{N (N - 1) (N - 2) (N - 3)}{V^4}  $
\item  For $i \neq i^\prime$, $\E \left[  n_i^2  n_{i^\prime}^2  \right] = \frac{N (N - 1)}{V^2} + \frac{2 N(N - 1) (N - 2)}{V^3} + \frac{N (N - 1) (N - 2) (N - 3)}{V ^4}. $
\item  $\E \left[   \big(  \sum_{i = 1}^V n_i^2 \big) ^2  \right]  = N^2 + \frac{ 2  (N + 1) N (N - 1)}{V} +  \frac{  (N + 1) N (N - 1)(N  - 2) }{V^2}$
\end{itemize}
\end{itemize}
\end{lemma}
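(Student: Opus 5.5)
The plan is to derive everything from the multinomial moment generating function $M(\bs t)=\E[\exp(\sum_i t_i n_i)]$. For the closed form, I would write $n_i=\sum_{s=1}^N \indic{\bx_s=\be_i}$, where the draws $\bx_s$ are i.i.d.\ with law $(p_1,\dots,p_V)$. Then $\exp(\sum_i t_i n_i)=\prod_{s=1}^N \exp(t_{\mathrm{idx}(\bx_s)})$, and independence gives $M(\bs t)=\big(\sum_i p_i e^{t_i}\big)^N$.

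For the factorial moments, I would switch to the probability generating function $G(\bs u)=\E[\prod_i u_i^{n_i}]=(\sum_i p_i u_i)^N$, obtained by substituting $u_i=e^{t_i}$. Applying $\prod_i \partial_{u_i}^{k_i}$ to $\prod_i u_i^{n_i}$ and evaluating at $\bs u=\mathbbm{1}_V$ produces $\prod_i n_i(n_i-1)\cdots(n_i-k_i+1)$. Applying the same operator to $(\sum_i p_iu_i)^N$ gives $N(N-1)\cdots(N-K+1)\prod_i p_i^{k_i}(\sum_i p_iu_i)^{N-K}$. At $\bs u=\mathbbm{1}$ with $p_i=1/V$, this is $N^{\underline K}/V^K$. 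Differentiation under the expectation is legitimate because $n_i\le N$, so the sum is finite.

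Each listed identity then follows by expanding powers into falling factorials via Stirling numbers of the second kind:
\[
n^2=n^{\underline1}+n^{\underline2},\qquad n^3=n^{\underline1}+3n^{\underline2}+n^{\underline3},\qquad n^4=n^{\underline1}+7n^{\underline2}+6n^{\underline3}+n^{\underline4}.
\]
Substituting $\E[n_i^{\underline k}]=N^{\underline k}/V^k$ gives $\E[n_i^2]$, $\E[n_i^3]$ and $\E[n_i^4]$ directly. For $i\ne i'$, I would write
\[
n_i^2n_{i'}^2=(n_i^{\underline1}+n_i^{\underline2})(n_{i'}^{\underline1}+n_{i'}^{\underline2})
\]
and apply the mixed formula with $K=2,3,3,4$. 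For $\E[(\tfrac{n_i}{N}-\tfrac1V)^2n_i]$, I would expand the product as $\tfrac{1}{N^2}\E n_i^3-\tfrac{2}{NV}\E n_i^2+\tfrac1{V^2}\E n_i$ and simplify. For $\E[(\sum_i n_i^2)^2]$, I would split the sum into $V\,\E n_1^4+V(V-1)\E[n_1^2n_2^2]$ and collect terms.

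The main obstacle is purely bookkeeping: keeping the algebraic simplifications consistent with the stated closed forms. This is especially true for the last two items, where several polynomial terms in $N$ and $V$ must combine. My first move there is to check each claimed expression at small cases such as $N=1$ and $V=1$ before simplifying. If a stated constant disagrees, I would record the correct expression; only its order of magnitude is used downstream.
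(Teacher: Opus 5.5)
Your proposal is correct and follows essentially the same route as the paper: writing $n_i=\sum_{s}\indic{\bx_s=\be_i}$ over i.i.d.\ draws gives the MGF, and substituting $u_i=e^{t_i}$ and differentiating gives the factorial moments, with your Stirling-number expansions supplying the bookkeeping the paper leaves implicit. All the listed closed forms hold as stated, so no constant needs correcting; for instance, in the last one the $V^{-3}$ contributions from $V\,\E[n_1^4]$ and $V(V-1)\,\E[n_1^2n_2^2]$ cancel exactly.
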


\begin{proof}
Let $\bs{x}_j$ sampled from $\{ \bs{e}_1, \cdots, \bs{e}_V \}$ with  $(p_1, \cdots, p_V).$  We have $n_i =   \sum_{j = 1}^N \bs{e}_i^\top \bs{x}_i$. We have
\eq{
\E \Big[  \exp \Big( \sum_{i = 1}^V t_i n_i \Big) \Big] =\E \Big[  \exp \Big( \sum_{j = 1}^N \inner{\bs{t}}{\bs{x}_j} \Big) \Big]   & = \left(  \E \Big[  \exp \Big(   \inner{\bs{t}}{\bs{x}_1} \Big) \Big] \right)^N  
 = \Big(  \sum_{i = 1}^V p_i e^{t_i}  \Big)^N.
}
The later statements can be derived by using $z_i = e^{t_i}$ and taking derivatives of both sides with respect $(z_1, \cdots, z_V)$.  
\end{proof}

\begin{proposition}
\label{prop:multimatrixexpectation}
Let  $\bs{n} \coloneqq (n_1, \cdots, n_V) \sim  \mathrm{Mult}\big(L, \frac{1}{V}  \mathbbm{1}_V\big)$ and  $\bs{S} \in \R^{V \times V}$ be a symmetric matrix. The following statements hold: 
\begin{itemize}[leftmargin = *]
\item  We have
\begin{itemize}[leftmargin=*]
    \item[-] $\E[ \diag(\bs{n}) \bs{S}  \diag(\bs{n})  ] = L \E[   \bx_1^\top \bs{S}   \bx_1 \bx_1  \bx_1^\top ] + \frac{L(L-1)}{V^2} \bs{S}$
    \item[-] $\E[ \diag(\bs{n} - \tfrac{L}{V} \mathbbm{1}_V ) \bs{S}  \diag(\bs{n} - \tfrac{L}{V} \mathbbm{1}_V)  ] = L \E[   \bx_1^\top \bs{S}   \bx_1 \bx_1  \bx_1^\top ] - \frac{L}{V^2} \bs{S}.$
\end{itemize}
\item We have
\begin{itemize}[leftmargin=*]
    \item[-]  $\E[ \bs{n}  \bs{n}^\top \bs{S}  \bs{n}   ]  = \frac{2 L (L-1)}{V^2}   \bs{S}  \mathbbm{1}_V +  L  \E \big[ \bx_1   \bx_1^\top \bs{S}   \bx_1 \big]  + \Big(  \frac{L(L-1)}{V^2} \tr(\bs{S}) +    \frac{L(L-1) (L-2)}{V^3} \mathbbm{1}_V^\top \bs{S}  \mathbbm{1}_V   \Big)  \mathbbm{1}_V.$
\end{itemize}
\item  We have
\eq{
 \E \Big[ \Big(   \big(\bs{n} - \tfrac{L}{V} \mathbbm{1}_V \big)^\top    \bs{S} \big(\bs{n} - \tfrac{L}{V} \mathbbm{1}_V \big)   \Big)^2 \Big]&  = 
 \frac{L}{V} \Big \lVert \diag(\bs{S}) - \frac{2}{V} \bs{S} \mathbbm{1}_V + \frac{1}{V^2}   \big( \mathbbm{1}_V^\top  \bs{S}  \mathbbm{1}_V \big)  \mathbbm{1}_V \Big \rVert_2^2   \\
 & + \frac{L (L-1)}{V^2}  \tr \Big( \big(\bs{I}_ V - \tfrac{1}{V} \mathbbm{1}_V \mathbbm{1}_V^\top \big)     \bs{S} \Big)^2 \\
&  +  \frac{ L (L-1)}{V^2}     \tr \Big( \big(\bs{I}_ V - \tfrac{1}{V} \mathbbm{1}_V \mathbbm{1}_V^\top \big)     \bs{S} \big(\bs{I}_ V - \tfrac{1}{V} \mathbbm{1}_V \mathbbm{1}_V^\top \big)     \bs{S} \Big) 
}
\end{itemize}
\end{proposition}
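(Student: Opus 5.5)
The plan is to reduce everything to sums over independent one-hot draws. Write $\bs{n}=\sum_{l=1}^L \bx_l$ with $\bx_l$ i.i.d.\ uniform on $\{\be_1,\dots,\be_V\}$, and set $\bs{y}_l\coloneqq \bx_l-\tfrac1V\mathbbm{1}_V$. We will use three elementary facts repeatedly:
\eq{
\diag(\bx_l)=\bx_l\bx_l^\top,\qquad \E[\bx_l]=\tfrac1V\mathbbm{1}_V,\qquad \E[\bx_l\bx_l^\top]=\tfrac1V\bs{I}_V .
}
Each identity is then obtained by expanding a multilinear expression over index tuples $(l_1,l_2,\dots)$. We classify the tuples by which indices coincide, use independence across distinct indices, and count the tuples in each class; they come with multiplicities $L$, $L(L-1)$, $L(L-1)(L-2)$.

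For the first item, write $\diag(\bs{n})\bs{S}\diag(\bs{n})=\sum_{l,l'}\bx_l\bx_l^\top\bs{S}\bx_{l'}\bx_{l'}^\top$. The $L$ diagonal terms give $L\,\E[\bx_1^\top\bs{S}\bx_1\,\bx_1\bx_1^\top]$. The $L(L-1)$ off-diagonal terms give $\tfrac1V\bs{I}\,\bs{S}\,\tfrac1V\bs{I}=\tfrac1{V^2}\bs{S}$ each. For the centered version, write $\diag(\bs{n}-\tfrac LV\mathbbm{1}_V)=\sum_l(\bx_l\bx_l^\top-\tfrac1V\bs{I})$. Each summand has mean zero, so the cross terms vanish. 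Each diagonal term equals $\E[\bx\bx^\top\bs{S}\bx\bx^\top]-\tfrac1{V^2}\bs{S}$, and multiplying by $L$ gives the claim.

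For the second item, expand $\bs{n}\bs{n}^\top\bs{S}\bs{n}=\sum_{l_1,l_2,l_3}\bx_{l_1}\bx_{l_2}^\top\bs{S}\bx_{l_3}$ into five coincidence patterns:
\begin{itemize}
\item all indices equal: contributes $L\,\E[\bx\bx^\top\bs{S}\bx]$;
\item $l_1=l_2\neq l_3$: each term gives $\tfrac1{V^2}\bs{S}\mathbbm{1}_V$;
\item $l_1=l_3\neq l_2$: each term gives $\tfrac1V\E[\bx\bx^\top]\bs{S}\mathbbm{1}_V=\tfrac1{V^2}\bs{S}\mathbbm{1}_V$, using the symmetry of $\bs{S}$;
\item $l_2=l_3\neq l_1$: each term gives $\tfrac1V\mathbbm{1}_V\,\tfrac{\tr\bs{S}}{V}$;
\item all indices distinct: each term gives $\tfrac1{V^3}(\mathbbm{1}_V^\top\bs{S}\mathbbm{1}_V)\mathbbm{1}_V$.
\end{itemize}
The three pair patterns each have multiplicity $L(L-1)$ and the distinct pattern has $L(L-1)(L-2)$. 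Collecting terms gives the stated formula.

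For the third item, write $Q=\sum_{l,l'}\bs{y}_l^\top\bs{S}\bs{y}_{l'}$ and expand $Q^2$ over quadruples. Since $\E\bs{y}=0$, only quadruples in which every index appears at least twice survive.
\begin{itemize}
\item All four indices equal. For $\bx=\be_k$ we have $\bs{y}^\top\bs{S}\bs{y}=u_k$, where $\bs{u}\coloneqq\diag(\bs{S})-\tfrac2V\bs{S}\mathbbm{1}_V+\tfrac1{V^2}(\mathbbm{1}_V^\top\bs{S}\mathbbm{1}_V)\mathbbm{1}_V$. This pattern therefore contributes $\tfrac LV\|\bs{u}\|_2^2$.
\item Two distinct pairs. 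Here $\E[\bs{y}\bs{y}^\top]=\tfrac1V\bs{P}$ with $\bs{P}=\bs{I}_V-\tfrac1V\mathbbm{1}_V\mathbbm{1}_V^\top$. The pairing $\{l_1=l_2\},\{l_3=l_4\}$ gives $\tr(\bs{P}\bs{S})^2/V^2$. The pairings $\{l_1=l_3\},\{l_2=l_4\}$ and $\{l_1=l_4\},\{l_2=l_3\}$ each give $\tr(\bs{P}\bs{S}\bs{P}\bs{S})/V^2$.
\end{itemize}

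The main obstacle is purely combinatorial: getting the multiplicities of the pairings right. The two "crossed" pairings naively contribute $2L(L-1)\tr(\bs{P}\bs{S}\bs{P}\bs{S})/V^2$, whereas the statement shows the coefficient $L(L-1)/V^2$. I would first recount carefully, in particular checking whether the ordered-pair convention for distinct $(l,l')$ absorbs a factor. If the discrepancy persists, the stated coefficient should be corrected to $2L(L-1)/V^2$. This would be harmless for its use in Proposition~\ref{prop:intermediateresults1}, which only needs the bound up to constants.
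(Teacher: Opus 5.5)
Your approach is the paper's. For the first two items you expand over individual draws, while the paper uses the multinomial moments $\E[n_in_j]$ and $\E[n_in_jn_k]$ entrywise; these are the same computation. For the third item the paper uses exactly your decomposition $\bs{n}-\tfrac{L}{V}\mathbbm{1}_V=\sum_l \bs{y}_l$ and classifies index patterns the same way. Your computations for the first two items are correct.

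On the third item you should not hedge: your count is right and the statement is wrong. No ordering convention absorbs the factor of two.
\begin{itemize}
\item For $a\neq b$, the quadruples $(l_1,l_2,l_3,l_4)=(a,b,a,b)$ and $(a,b,b,a)$ are different terms of the sum over quadruples, and each pattern has $L(L-1)$ members.
\item Since $\bs{S}$ is symmetric, $\bs{y}_b^\top\bs{S}\bs{y}_a=\bs{y}_a^\top\bs{S}\bs{y}_b$. So both patterns have expectation $\E[(\bs{y}_a^\top\bs{S}\bs{y}_b)^2]=\tr(\bs{P}\bs{S}\bs{P}\bs{S})/V^2$, in your notation.
\item The paper's proof lists only the pairing $i=k$, $j=l$ and omits $i=l$, $j=k$.
\end{itemize}
A direct check confirms this. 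Take $V=L=2$ and $\bs{S}=\bs{I}_2$. Then $\bs{n}-\mathbbm{1}_2$ equals $(1,-1)$, $(0,0)$, $(-1,1)$ with probabilities $\tfrac14,\tfrac12,\tfrac14$, so the left-hand side is $2$. The stated right-hand side is $\tfrac12+\tfrac12+\tfrac12=\tfrac32$. With coefficient $2L(L-1)/V^2$ on the last term one gets $2$, as required.

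So the third item is false as written, and your argument proves the corrected identity. As you note, its only use is the second-moment bound in item \ref{event:prop8result5} of Proposition~\ref{prop:intermediateresults1}, which holds up to constants, so nothing downstream changes.
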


\begin{proof}
For the first item, we observe that
\eq{
\bs{e}_j^\top  \E[ \diag(\bs{n}) \bs{S}  \diag(\bs{n})  ]  \bs{e}_i = \E[n_j n_i]  \bs{S}_{ij} = \Big( \frac{L}{V} \delta_{ij} + \frac{L(L-1)}{V^2} \Big)    \bs{S}_{ij},
}
from which the first equation follows.   For the second equation, 
\eq{
 \bs{e}_j^\top \E[ \diag(\bs{n} - \tfrac{L}{V} \mathbbm{1}_V ) \bs{S}  \diag(\bs{n} - \tfrac{L}{V} \mathbbm{1}_V)  ]  \bs{e}_i
= \E[ (n_j - \frac{L}{V}) (n_i -   \frac{L}{V} )]  \bs{S}_{ij} 
= \Big( \frac{L}{V} \delta_{ij} - \frac{L}{V^2} \Big)    \bs{S}_{ij}.
}
For the second item, we have 
\eq{
& ( \E[ \bs{n}  \bs{n}^\top \bs{S}  \bs{n}   ] )_{i}  = \sum_{jk}   \bs{S}_{jk}  \E[ n_i n_j n_k ] \\
& =  \frac{L (L-1) (L-2)}{V^3} ( \sum_{i \neq j \neq k}   \bs{S}_{jk}) +  \Big( \frac{L (L-1) (L-2)}{V^3} + \frac{L (L-1)}{V^2} \Big)   ( 2  \sum_{i  \neq   k}   \bs{S}_{ik} +     \sum_{i  \neq   k}   \bs{S}_{kk}     )  \\
& +   \Big( \frac{L}{V} + \frac{3 L (L - 1)}{V^2} + \frac{L (L - 1) (L - 2)}{V^3}  \Big)   \bs{S}_{ii}     \\
& = \frac{L}{V}   \bs{S}_{ii}  +  \frac{L (L-1)}{V} \tr(\bs{S}) +     \frac{2 L (L-1)}{V} \sum_{k}  \bs{S}_{ik}  +\frac{L (L-1) (L-2)}{V^3} ( \sum_{jk}   \bs{S}_{jk}).
}
For the third item,  we have     $\big(\bs{n} - \frac{L}{V} \mathbbm{1}_V \big) = \sum_{i = 1}^L (\bs{x}_{i} - \frac{1}{V} \mathbbm{1}_V)$ in distribution.  For notational convenience, let 
\eq{
 s_{ij} \coloneqq (\bs{x}_{i} - \frac{1}{V} \mathbbm{1}_V)^\top \bs{S}    (\bs{x}_{j} - \frac{1}{V} \mathbbm{1}_V).
}
Then,
\eq{ 
\E \Big[  \Big(  \big(\bs{n} - \frac{L}{V} \mathbbm{1}_V \big)^\top    \bs{S} \big(\bs{n} - \frac{L}{V} \mathbbm{1}_V \big) \Big)^2 \Big] =  \sum_{i,j,k,l = 1}^L  \E [  s_{ij}  s_{kl} ]
}
By independence,  only $(i,j,k,l)$ where each index occur even times contribute.  The possible cases are:
\begin{itemize}[leftmargin = *]
\item All four indices equal $(i = j = k =  l)$: There are $L$ many terms here with contribution
\eq{
\E [ s_{ii}^2 ] &= \frac{1}{V}	 \Big \lVert \mathrm{diag} \Big( (\bs{I}_V - \frac{1}{V} \mathbbm{1}_V \mathbbm{1}_V^\top )  \bs{S}  (\bs{I}_V - \frac{1}{V} \mathbbm{1}_V \mathbbm{1}_V^\top )   \Big) \Big \rVert_2^2  \\
& =  \frac{1}{V}	 \Big \lVert \mathrm{diag}(\bs{S}) - \frac{2}{V}   \bs{S} \mathbbm{1}_V + \frac{1}{V^2}  (\mathbbm{1}_V^\top    \bs{S} \mathbbm{1}_V)   \mathbbm{1}_V  \Big \rVert_2^2 .
}
\item Two distinct indices, both pairs diagonal ($i = j$ and  $k =   l$ and $i \neq k$):  There are $L (L-1)$ many terms here with contribution
\eq{
\E [  s_{ii} s_{ kk} ]=    \E [   s_{ii}  ]^2
= \frac{1}{V^2} \tr \Big(   (\bs{I}_V - \frac{1}{V} \mathbbm{1}_V \mathbbm{1}_V^\top ) \bs{S}   \Big)^2
}
\item Two distinct indices, paired off-diagonal: ($i = k$ and  $j = l$ and $i \neq j$):  There are $L (L-1)$ many terms here with contribution
\eq{
\E [  s_{ij}^2 ] & =   \tr \Big( \E [   (\bs{x}_{1} - \frac{1}{V} \mathbbm{1}_V) (\bs{x}_{1} - \frac{1}{V} \mathbbm{1}_V)^\top \bs{S}   (\bs{x}_{2} - \frac{1}{V} \mathbbm{1}_V) (\bs{x}_{2} - \frac{1}{V} \mathbbm{1}_V)^\top  ] \bs{S} \Big) \\
& = \frac{1}{V^2} \tr \Big(    (\bs{I}_V - \frac{1}{V} \mathbbm{1}_V \mathbbm{1}_V^\top ) \bs{S}     (\bs{I}_V - \frac{1}{V} \mathbbm{1}_V \mathbbm{1}_V^\top ) \bs{S}   \Big).
}
\end{itemize}
\end{proof}

\begin{proposition}
\label{prop:multrandommatrix}
Let $V^3 \gg L$.  There exists a universal $C > 0$ such that the following holds:
\begin{itemize}[leftmargin = *]
\item    Let $m_{ij} \coloneqq (1 - \frac{1}{V}) \indic{i = j} + \frac{L}{V}$.  For   $K > 0$  and $p \geq \log V$,
\begin{itemize}[leftmargin=*]
    \item[-]  $\E \Big[ \Big \lvert  \frac{1}{L}  \mathbbm{1}_L^\top \bX_i \bX_j^\top \mathbbm{1}_L -  m_{ij} \Big \rvert^p \Big]^{\frac{1}{p}} \leq   C  \Big (      \frac{p^{\frac{3}{2}}}{\sqrt{V}}  + \frac{p^2}{L}      \Big)$
    \item[-] $\mpr \Big[  \Big \lvert  \frac{1}{L}  \mathbbm{1}_L^\top \bX_i \bX_j^\top \mathbbm{1}_L -  m_{ij} \Big \rvert \geq   C  K^2 \frac{\log^2 V}{\sqrt{V} \wedge L}    \Big] \leq  \frac{1}{V^K}$
\end{itemize}
\item For $K > 0$  and $p \geq \log V$,
\begin{itemize}[leftmargin=*]
    \item[-]  $\E \Big[  \Big \lVert   \frac{1}{N L} \! \sum_{i = 1}^N \!\! \big( \bX_i^\top \! - \! \frac{1}{V} \mathbbm{1}_V   \mathbbm{1}_L^\top \big) \mathbbm{1}_L    \mathbbm{1}_L  ^\top  \big( \bX_i^\top - \frac{1}{V} \mathbbm{1}_V   \mathbbm{1}_L^\top \big)^\top  \!\!  - \!\frac{1}{V} (\bs{I} - \frac{1}{V} \mathbbm{1}_V \mathbbm{1}_V^\top )  \Big \rVert_2^p \Big]^{\frac{1}{p}}  \leq C \Bigg(  \sqrt{\frac{p}{ N V}}  +  \frac{p}{N} \Big(   1 +     \frac{p^2}{\sqrt{V} \wedge L}   \Big)  \Bigg)$
    \item[-] $\mpr \Big[  \Big \lVert   \frac{1}{N L} \! \sum_{i = 1}^N  \!\! \big( \bX_i^\top \! - \! \frac{1}{V} \mathbbm{1}_V   \mathbbm{1}_L^\top \big)\! \mathbbm{1}_L \mathbbm{1}_L^\top   \!  \big( \bX_i^\top \! - \!  \frac{1}{V} \mathbbm{1}_V   \mathbbm{1}_L^\top \big)^\top \!\!\!\!  -  \frac{1}{V} (\bs{I} \! -  \!\frac{1}{V} \mathbbm{1}_V \mathbbm{1}_V^\top )  \Big \rVert_2 \!\! > \! C K \!  \Big(   \frac{ \log^2 \! V}{  \sqrt{N V}} \! + \! \frac{  \log^2 \! V}{N} ( 1  \! +  \!       \frac{\log^2 \! V}{\sqrt{V} \wedge L}     )  \Big) \Big]  \! \leq  \! \tfrac{1}{V^K}.$
\end{itemize}
\end{itemize}

\end{proposition}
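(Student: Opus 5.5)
The plan is to reduce both items to moment bounds for sums of independent, centered objects built from the one-hot tokens. Then Markov's inequality with $p \asymp K\log V$ turns each $p$-th moment bound into the stated tail bound. Throughout, write $\bs{u} \coloneqq \tfrac1V\mathbbm{1}_V$ and $\bs{n}_i \coloneqq \bX_i^\top\mathbbm{1}_L = \sum_{a=1}^L \bx_{i,a}$. The key algebraic identity is that for one-hot $\bx,\bs{y}$,
\eq{
\indic{\bx=\bs{y}} - \tfrac1V = (\bx-\bs{u})^\top(\bs{y}-\bs{u}),
}
which holds because $\bs{u}^\top(\bx-\bs{u}) = 0$.

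\textbf{First item.} Write $\tfrac1L \mathbbm{1}_L^\top\bX_i\bX_j^\top\mathbbm{1}_L = \tfrac1L\sum_{a,b}\indic{\bx_{i,a}=\bx_{j,b}}$.
\begin{itemize}
\item For $i\neq j$, the mean is $L/V = m_{ij}$. The fluctuation is $\tfrac1L\sum_{a,b}(\bx_{i,a}-\bs{u})^\top(\bx_{j,b}-\bs{u})$, a decoupled, completely degenerate bilinear chaos in independent bounded vectors.
\item For $i=j$, the diagonal pairs $a=b$ contribute exactly $1-\tfrac1V$. The off-diagonal pairs give $L-1$ times $1/V$ in mean (so $m_{ii}$ up to the harmless $-1/V$ correction) plus a degenerate $U$-statistic of order two.
\end{itemize}
I will bound the $p$-th moment of this chaos by a Rosenthal/decoupling inequality for degenerate $U$-statistics, using Proposition~\ref{prop:rosenthal} twice: condition on one group of tokens, then on the other. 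The variance term is of order $L^2\cdot \E[((\bx-\bs{u})^\top(\bs{y}-\bs{u}))^2] \asymp L^2/V$, which after the $1/L$ normalization gives $\sqrt{1/V}$, with a factor $p^{3/2}$ from the two nested Rosenthal steps. The maximal terms use the kernel bound $|(\bx-\bs{u})^\top(\bs{y}-\bs{u})|\le 2$. A row-sum bound $\sum_b \indic{\bx_{i,a}=\bx_{j,b}} \lesssim p$ in $L^p$, in the spirit of Corollary~\ref{cor:rosenthalresults}, then produces the $p^2/L$ term. Taking $p = K\log V$ and applying Markov gives the probability bound.

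\textbf{Second item.} Set $\bs{w}_i \coloneqq L^{-1/2}(\bs{n}_i - L\bs{u})$. The matrix becomes $\tfrac1N\sum_i \bs{w}_i\bs{w}_i^\top$, an i.i.d.\ sum with $\E[\bs{w}_i\bs{w}_i^\top] = \tfrac1V(\bs{I}-\tfrac1V\mathbbm{1}_V\mathbbm{1}_V^\top)$, since the multinomial covariance is $L(\tfrac1V\bs{I}-\tfrac1{V^2}\mathbbm{1}\mathbbm{1}^\top)$. I will apply the matrix Rosenthal inequality (Proposition~\ref{prop:rosenthal}) to the centered summands $\bs{w}_i\bs{w}_i^\top - \E[\bs{w}_i\bs{w}_i^\top]$.
\begin{itemize}
\item The variance proxy is $\|\E[\|\bs{w}\|_2^2\,\bs{w}\bs{w}^\top]\|_2$. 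Here $\|\bs{w}\|_2^2 = \tfrac1L\|\bs{n}\|_2^2 - \tfrac LV$, which by the $i=j$ case of the first item is $1 + O(p^2/(\sqrt V\wedge L))$ in $L^p$. Hence this proxy is $\lesssim \tfrac1V$ (plus lower order), giving $\sqrt{p/(NV)}$.
\item The maximal term is $\tfrac1N\E[\max_i\|\bs{w}_i\|_2^{2p}]^{1/p}$, which is $\lesssim \tfrac pN\bigl(1+\tfrac{p^2}{\sqrt V\wedge L}\bigr)$ again by the first item. The dimensional factor $V^{1/p}$ is $O(1)$ for $p\ge\log V$.
\end{itemize}
Markov's inequality with $p\asymp K\log V$ then yields the tail statement.

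\textbf{Main obstacle.} I expect the main difficulty to be the variance proxy in the second item. $\|\bs{w}\|_2^2$ and $\bs{w}\bs{w}^\top$ are not independent, so bounding $\E[\|\bs{w}\|_2^2\bs{w}\bs{w}^\top]$ by $\tfrac1V$ times a constant needs care. My first attempt is to split on the event where $\|\bs{w}\|_2^2 \le 1 + Cp^2/(\sqrt V\wedge L)$. On that event I bound by $(1+\cdot)\,\E[\bs{w}\bs{w}^\top]$. On the complement I use Cauchy--Schwarz together with the first item's tail and the crude bound $\|\bs{w}\|_2^2\le L$. A secondary point is tracking the exact $p$-powers in the nested Rosenthal steps of the first item, so that they match $p^{3/2}$ and $p^2$.
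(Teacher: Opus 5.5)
Your architecture is the paper's: scalar Rosenthal (Proposition~\ref{prop:rosenthal}) for the first item, and matrix Rosenthal for the second with the first item supplying the maximal term. The sub-steps differ only mildly. For $i=j$ the paper uses the Doob martingale $Y_l=\sum_{r<l}(\indic{\bx_{ir}=\bx_{il}}-\frac1V)$ instead of decoupling. In both versions the conditional variance must be bounded in $L^{p/2}$, not just in mean; this is $\frac{L}{V}\|\bs{n}_j-L\bs{u}\|_2^2$ after conditioning on $\bX_j$, or the paper's $Q_L$, and the paper bounds it crudely via H\"older and Corollary~\ref{cor:rosenthalresults}. Two small corrections. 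First, with your centering identity the $i=j$ mean is exactly $m_{ii}$; there is no $-\frac1V$ correction. Second, the factor in the matrix maximal term is $N^{1/p}$, coming from the number of summands, not $V^{1/p}$. It is $O(1)$ only when $N\le V^{O(1)}$, which the paper also assumes implicitly.

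The step that is genuinely weaker than the paper's is your variance proxy. Splitting on $\{\|\bs{w}\|_2^2\le 1+\delta\}$ and using the tail of the first item forces $\delta\asymp\log^2V/(\sqrt V\wedge L)$. You therefore only obtain $\|\E[\|\bs{w}\|_2^2\bs{w}\bs{w}^\top]\|_2\lesssim\frac1V\bigl(1+\frac{\log^2V}{\sqrt V\wedge L}\bigr)$.

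This suffices when $L\gg\log^2V$, e.g.\ in the paper's regime $L=V^c$. For shorter sequences it inflates the $\sqrt{p/(NV)}$ term by an unbounded factor. Even combining with the trivial bound $\|\bs{w}\|_2^2\le L$ leaves a $\sqrt{\log V}$ loss near $L\asymp\log V$. When $N\gg V\,\mathrm{polylog}(V)$, this excess is not absorbed by $\frac{p}{N}(1+\frac{p^2}{\sqrt V\wedge L})$. So the statement, which is claimed for general $L$, does not follow from your route uniformly.

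The paper avoids tails altogether:
\begin{itemize}
\item Let $\bs{\Sigma}\coloneqq\E[\bs{w}\bs{w}^\top]$, $\bs{Y}\coloneqq\bs{w}\bs{w}^\top-\bs{\Sigma}$ and $Z\coloneqq\|\bs{w}\|_2^2-(1-\frac1V)$. Then $\E Z=0$ because $\tr\bs{\Sigma}=1-\frac1V$.
\item This gives $\E[\bs{Y}^2]\preceq\E[\|\bs{w}\|_2^2\bs{w}\bs{w}^\top]=(1-\frac1V)\bs{\Sigma}+\E[Z\bs{Y}]$.
\item Proposition~\ref{prop:matrixCS} plus AM--GM yields $\E[Z\bs{Y}]\preceq\E[Z^2]\bs{I}+\frac14\E[\bs{Y}^2]$, and the last term is absorbed on the left.
\item One only needs $\E[Z^2]=\frac{2(L-1)}{LV}(1-\frac1V)\le\frac2V$, which follows from pairwise independence of the indicators $\indic{\bx_a=\bx_b}$.
\end{itemize}
The result is $\|\bs{Q}_N\|_2\le CN/V$ for every $L$. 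This resolves exactly the dependence between $\|\bs{w}\|_2^2$ and $\bs{w}\bs{w}^\top$ that you flagged as the main obstacle.
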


\begin{proof}
Let $\bs{x}_{il}$ be i.i.d. copies of $\bs{x}_1$. We note that $\bs{X}_i  \mathbbm{1}_L = \sum_{l = 1}^L \bs{x}_{il}$ in distribution.  For $i = j$, we have 
\eq{
 \frac{1}{L}  \mathbbm{1}_L^\top \bX_i \bX_i^\top \mathbbm{1}_L = 1 +  \frac{2}{L} \sum_{1 \leq l < r \leq L} \indic{\bs{x}_{ir} = \bs{x}_{il}} 
=  1  + \frac{ (L-1)}{V} +  \frac{2}{L}  \sum_{l = 2}^L  \sum_{r = 1}^{l -1}  \big( \indic{\bs{x}_{ir} = \bs{x}_{il}}   - \tfrac{1}{V} \big)
}
Define
\eq{
Y_l  \coloneqq     \sum_{r = 1}^{l -1}  \big( \indic{\bs{x}_{ir} = \bs{x}_{il}}   - \tfrac{1}{V} \big)   ~~ \text{and} ~~ \mathcal{F}_l \coloneqq \sigma(Y_1, \cdots, Y_l).
}
Given that  
\eq{
\sum_{r = 1}^{l -1}   \indic{\bs{x}_{ir} = \bs{x}_{il}}    \vert   \bs{x}_{il}  \sim \mathrm{Binomial}(l - 1, \frac{1}{V}) \Rightarrow \E[ \abs{Y_k}^p]^{\frac{1}{p}} \leq C  (\sqrt{p} \sqrt{\frac{L}{V}}  + p), ~~ p \geq \log V. \label{eq:Ypnormb}
} 
where we used Corollary \ref{cor:rosenthalresults}.
As for the quadratic variation
\eq{
Q_L   \coloneqq \sum_{l = 1}^L \E[ Y_l^2 \vert \mathcal{F}_{l-1} ] & =  \sum_{l = 1}^L  \frac{1}{V} \Big(   \big \lVert  \sum_{r = 1}^{l - 1} \bs{x}_{ir} \big \rVert_2^2 - \frac{(l - 1)^2}{V} \big) \Big)    =   \frac{1}{V}  \sum_{l = 1}^L     \big \lVert  \sum_{r = 1}^{l - 1} \bs{x}_{ir}  - \tfrac{l -1}{V}  \mathbbm{1}_{V} \big    \rVert_2^2. 
 }
For   $p \geq \log V$,   by using triangle inequality,
\eq{
\E[ \abs{Q_L}^{\frac{p}{2}} ]^{\frac{2}{p}} & \leq \frac{1}{V} \sum_{l = 1}^L \E \Big[   \Big \lVert \sum_{r = 1}^{l - 1} \bs{x}_{ir}  - \tfrac{l -1}{V}  \mathbbm{1}_{V} \Big  \rVert_2^p   \Big]^{\frac{2}{p}}\\
&  \labelrel\leq{mult:ineqq0}    \frac{1}{V} \sum_{l = 1}^V (l-1) \E \Big[ \Big  \lVert    \sum_{r = 1}^{l - 1} \bs{x}_{ir} \Big  \rVert _p^p   \Big]^{\frac{2}{p}}  +   \sum_{l = V + 1}^L \E \Big[  \Big  \lVert  \sum_{r = 1}^{l - 1} \bs{x}_{ir}  - \tfrac{l -1}{V}  \mathbbm{1}_{V}   \Big \rVert_p^p   \Big]^{\frac{2}{p}} \\
&   \labelrel\leq{mult:ineqq1}     C p^2 \frac{1}{V} \sum_{l = 1}^L l  \\
& =  C p^2 \frac{L^2}{V},
}
where  we used H\"{o}lder's inequality in \eqref{mult:ineqq0} and   Corollary \ref{cor:rosenthalresults} in \eqref{mult:ineqq1}.
By using \eqref{eq:Ypnormb} and Proposition \ref{prop:rosenthal},   for $p \geq \log V$,  we have
\eq{
 \E\Big[ \Big \lvert \sum_{l = 1}^L Y_k \Big \rvert^p \Big]^{\frac{1}{p}} \leq  C  \Big (  p \sqrt{p}   \frac{L}{\sqrt{V}}  + p^2      \Big).
}
By using $p = \log V$, we have
\eq{
\mpr \left[  \Big \lvert \frac{1}{L} \sum_{l = 1}^L Y_k  \Big \rvert   > C e K^2 \frac{\log^2 V}{\sqrt{V} \wedge L} \right] \leq  \frac{1}{V^K}.
}
Hence, we have the $i = j$ case.  For $i \neq j$, we have  
\eq{
\frac{1}{L}  \mathbbm{1}_L^\top \bX_j \bX_i^\top \mathbbm{1}_L   = \frac{L}{V} +  \frac{1}{L}  \sum_{l = 1}^L    \sum_{r = 1}^L\indic{\bs{x}_{il} = \bs{x}_{jr} } - \frac{1}{V}
}
We redefine the martingale difference sequence as
\eq{
Y_{l} \coloneqq    \sum_{r = 1}^L \indic{\bs{x}_{il} = \bs{x}_{jr} } - \frac{1}{V}.
}
Conditioned on $\bX_{j},$  we have $\{ Y_1, \cdots, Y_L \}$ are i.i.d. and
\eq{
\E[  Y_{k} \vert  \bX_{j}  ] = 0 ~~ \text{and} ~~  \E[  Y^p_{k} \vert  \bX_{j}  ]  = \frac{1}{V}   \norm{  (\bX_{j}^\top - \frac{1}{V} \mathbbm{1}_V \mathbbm{1}_L^\top) \mathbbm{1}_L }_p^p  
}
By Proposition \ref{prop:rosenthal},  for $p \geq \log V$, we have
\eq{
\E \Big[ \Big \lvert \frac{1}{L} \sum_{l = 1}^L Y_l  \Big \rvert^p \Big]^{\frac{1}{p}} \leq C \Big( \frac{\sqrt{p}}{\sqrt{V}} +  \frac{p^{\frac{3}{2}}}{\sqrt{LV}} + \frac{  p^2 }{L}   \Big).
}
By using $p = \log V$, we have
\eq{
\mpr \Big[  \Big \lvert   \frac{1}{L}  \mathbbm{1}_L^\top \bX_j \bX_i^\top \mathbbm{1}_L   -  \frac{L}{V}  \Big \rvert \geq    \frac{ C  K^2 \log^2 V}{\sqrt{V} \vee L}      \Big]  \leq   \frac{1}{V^K}.
}
For the second item,  we define 
\eq{
\bs{Y}_i \coloneqq \frac{1}{L} \big( \bX_i^\top - \frac{1}{V} \mathbbm{1}_V   \mathbbm{1}_L^\top \big) \mathbbm{1}_L    \mathbbm{1}_L  ^\top  \big( \bX_i^\top - \frac{1}{V} \mathbbm{1}_V   \mathbbm{1}_L^\top \big)^\top - \frac{1}{V} (\bs{I}_V - \frac{1}{V} \mathbbm{1}_V \mathbbm{1}_V^\top)   
}
and  $\bs{Q}_N\coloneqq N \E[ \bs{Y}_1^2 ]$.
We have
\eq{
\bs{Q}_N & \preceq  N \E \Big[  \Big \lVert   \frac{1}{\sqrt{L}} \big( \bX_1^\top - \tfrac{1}{V} \mathbbm{1}_V   \mathbbm{1}_L^\top \big) \mathbbm{1}_L   \Big \rVert_2^2  \frac{1}{L} \big( \bX_1^\top - \tfrac{1}{V} \mathbbm{1}_V   \mathbbm{1}_L^\top \big) \mathbbm{1}_L    \mathbbm{1}_L  ^\top  \big( \bX_1^\top - \tfrac{1}{V} \mathbbm{1}_V   \mathbbm{1}_L^\top \big)^\top  \Big ] \\
& = N \E \Big[ ( 1 - \frac{1}{V} ) \frac{1}{L} \big( \bX_1^\top - \tfrac{1}{V} \mathbbm{1}_V   \mathbbm{1}_L^\top \big) \mathbbm{1}_L    \mathbbm{1}_L  ^\top  \big( \bX_1^\top - \tfrac{1}{V} \mathbbm{1}_V   \mathbbm{1}_L^\top \big)^\top  \Big ] \\
& + N \E \Big[ \Big(  \Big \lVert   \frac{1}{\sqrt{L}} \big( \bX_1^\top - \tfrac{1}{V} \mathbbm{1}_V   \mathbbm{1}_L^\top \big) \mathbbm{1}_L   \Big \rVert_2^2 -  (1 - \frac{1}{V}) \Big) \\
& \hspace{2em}  \times \Big(    \frac{1}{L}   \big( \bX_1^\top - \tfrac{1}{V} \mathbbm{1}_V   \mathbbm{1}_L^\top \big) \mathbbm{1}_L    \mathbbm{1}_L  ^\top  \big( \bX_1^\top - \tfrac{1}{V} \mathbbm{1}_V   \mathbbm{1}_L^\top \big)^\top    -   \frac{1}{V} (\bs{I}_V - \frac{1}{V} \mathbbm{1}_V \mathbbm{1}_V^\top \Big)\Big ] \\
& \labelrel\preceq{mult:ineqq2}  \frac{C N}{V}  \bs{I}_V  + \frac{1}{2} \bs{Q}_N, 
}
where we use Proposition \ref{prop:matrixCS} in \eqref{mult:ineqq2}. 
Therefore, we have $\norm{ \bs{Q}_N  }_2 \leq \frac{CN}{V}$.  Moreover, we observe that
\eq{
\norm{ \bs{Y}_i }_2 & \leq \frac{1}{V} +    \Big \lVert  \frac{1}{\sqrt{L}}  \big( \bX_i^\top - \frac{1}{V} \mathbbm{1}_V   \mathbbm{1}_L^\top \big) \mathbbm{1}_L \Big \rVert_2^{2}.
}
By using the first item,
\eq{
\E[ \norm{ \bs{Y}_i }_2^p]^{\frac{1}{p}} \leq  \frac{1}{V} +  \E \Big[ \Big \lVert  \frac{1}{\sqrt{L}}  \big( \bX_i^\top - \frac{1}{V} \mathbbm{1}_V   \mathbbm{1}_L^\top \big) \mathbbm{1}_L \Big \rVert_2^{2p}   \Big]^{\frac{1}{p}}  \leq  1+ C  \Big (      \frac{p^{\frac{3}{2}}}{\sqrt{V}}  + \frac{p^2}{L}      \Big).
}
Therefore, by using Proposition \ref{prop:rosenthal}, we have
\eq{
 \E \Big[  &   \Big \lVert \frac{1}{N} \sum_{i = 1}^N  \bs{Y}_i    \Big \rVert_2^p \Big]  \leq  C \Bigg( \sqrt{p \vee \log V} ~ \sqrt{\frac{1}{ N V}}  + (p \vee \log V) N^{\frac{1}{p} - 1} \Big(  1+  \frac{p^{\frac{3}{2}}}{\sqrt{V}}  + \frac{p^2}{L}        \Big)  \Bigg).
}
By using $p = \log V$, we have
\eq{
\mpr \Big[   \Big \lVert \frac{1}{N} \sum_{i = 1}^N  \bs{Y}_i  \Big \rVert_2 >  C K \log^2 V \Big(   \frac{1}{  \sqrt{N V}}  + \frac{1}{N}  \Big( 1 +        \frac{\log^2 V}{\sqrt{V} \wedge L}     \Big) \Big)   \Big] \leq \frac{1}{V^K}.
}
\end{proof}

\begin{proposition}
\label{prop:sbound}
We consider $\bs{S}_1$,  $\bs{S}_2$ and $\bs{S}_3$ defined in  \eqref{def:sone}, \eqref{def:stwo} and \eqref{def:sthree} in the regime $V^3 \gg N \gg V $ and $L \asymp V^{\varepsilon}$, $\varepsilon \in (0,1)$.  For any $K > 0$ and $V \geq \Omega_{K, \varepsilon}(1)$,   the following holds:
\begin{enumerate}[leftmargin = *]
\item We have
\eq{
\mpr \left[ \Big \lvert \tr(\bs{S}_1)  -  \frac{1 - 1/V}{L^2} \big(   \frac{1}{V} + (1- \frac{2}{V})  \frac{1}{N} \big)  \Big  \rvert >    C  K^2 \frac{\log^2 V}{L^2 N \sqrt{V}}   ~~ \text{or} ~~    \norm{\bs{S}_1}_2 > \frac{e^2}{L^2 V^2}  \right] \leq \frac{2}{V^K}.
}
\item We have
\eq{
\mpr \left[ \Big \lvert \tr(\bs{S}_2)  -  (1 - \frac{1}{V})^2 \frac{L - 1}{L^2 N}  \Big  \rvert >   C \frac{K^{\frac{3}{2}} \log^3 V}{L N V}   ~~ \text{or} ~~    \norm{\bs{S}_2 }_2 > C  \frac{K^{\frac{3}{2}} \log^2 V}{NLV} \right] \leq \frac{4}{V^K}.
}
\item We have
\eq{
\mpr \Big[ \tfrac{-  C  K^2 \log^2 V}{N \sqrt{V}}   \tfrac{1}{V^2 L^2} \mathbbm{1}_V \mathbbm{1}_V^\top  \preceq   \bs{S}_3 - \tfrac{1}{N} \tfrac{1}{V^2 L^2} \mathbbm{1}_V \mathbbm{1}_V^\top  \preceq  \tfrac{ C  K^2  \log^2 V}{N \sqrt{V}}    \tfrac{1}{V^2 L^2} \mathbbm{1}_V \mathbbm{1}_V^\top  \Big] \leq \frac{1}{V^K}.
}
\end{enumerate}
\end{proposition}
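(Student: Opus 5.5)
The plan is to handle each of the three items of Proposition~\ref{prop:sbound} separately, writing every quantity in terms of multinomial counts and reducing it to Lemma~\ref{lem:multinomialmgf}, Proposition~\ref{prop:multrandommatrix} and Proposition~\ref{prop:rosenthal}.

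\textbf{Items 1 and 3 via counts.} Let $n_w = \#\{j : \bx_j = \be_w\}$ and $\bs{D} \coloneqq \frac{1}{N}\sum_j (\bx_j - \tfrac1V\mathbbm{1}_V)(\bx_j-\tfrac1V\mathbbm{1}_V)^\top$. Expanding the outer products gives
\eq{
\bs{D} = \diag(\bs{n}/N) - \tfrac1V\big(\bs{n}/N\,\mathbbm{1}_V^\top + \mathbbm{1}_V\,\bs{n}^\top/N\big) + \tfrac{1}{V^2}\mathbbm{1}_V\mathbbm{1}_V^\top,
}
so that $\bs{S}_1 = \bs{D}^2/L^2$.
\begin{itemize}
\item \emph{Operator norm of $\bs{S}_1$.} By Corollary~\ref{cor:rosenthalresults}, $\max_w n_w \le eN/V$ with probability $1-V^{-K}$, since $N \gg V\log V$. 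Moreover $\bs{D} \preceq \diag(\bs{n}/N)$ on $\mathbbm{1}_V^\perp$, and $\bs{D}\mathbbm{1}_V = 0$. Together these give $\norm{\bs{S}_1}_2 \le e^2/(L^2V^2)$.
\item \emph{Trace of $\bs{S}_1$.} Write $\tr(\bs{D}^2)$ as a polynomial in $\sum_w (n_w/N)^2$ and $\sum_w (n_w/N)^3$. The mean comes from the moment formulas in Lemma~\ref{lem:multinomialmgf}. The fluctuation of order $\log^2 V/(N\sqrt V)$ comes from the $p$-th moment bound of $\norm{\bs{n}/N-\tfrac1V\mathbbm{1}_V}_2^2$, which is a U-statistic handled exactly as in the $i\neq j$ martingale argument of Proposition~\ref{prop:multrandommatrix} with $p=\log V$.
\item \emph{Item 3.} Here $\bs{S}_3 = \tfrac{1}{L^2V^2}\norm{\bs{n}/N-\tfrac1V\mathbbm{1}_V}_2^2\,\mathbbm{1}_V\mathbbm{1}_V^\top$. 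The same scalar concentration, around $(1-\tfrac1V)\tfrac1N$, gives the PSD sandwich directly.
\end{itemize}

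\textbf{Item 2.} Set $\bs{a}_j \coloneqq (\bN_j^\top - \tfrac1V\mathbbm{1}_V\mathbbm{1}_{L-1}^\top)\mathbbm{1}_{L-1}$ and $\bs{b}_j \coloneqq \bx_j - \tfrac1V\mathbbm{1}_V$, so that $\bs{S}_2 = \bs{M}\bs{M}^\top$ with $\bs{M} = \tfrac{1}{NL}\sum_j \bs{a}_j\bs{b}_j^\top$. The $\bs{a}_j$ are independent of the $\bs{b}_j$ and have mean zero.
\begin{itemize}
\item \emph{Trace.} Write $\tr(\bs{S}_2) = \tfrac{1}{N^2L^2}\sum_{j,r}(\bs{a}_j^\top\bs{a}_r)(\bs{b}_j^\top\bs{b}_r)$. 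The diagonal terms $j=r$ give the mean $(1-\tfrac1V)^2\tfrac{L-1}{L^2N}$ and concentrate by Proposition~\ref{prop:multrandommatrix}, using $\norm{\bs{a}_j}_2^2/(L-1)\approx 1$. The off-diagonal terms are a degenerate U-statistic with mean zero, and I would control them by a decoupled moment bound at $p=\log V$.
\item \emph{Operator norm.} Condition on the $\bx_j$. Then $\bs{M} = \tfrac{1}{NL}\sum_w \big(\sum_{j:\bx_j=\be_w}\bs{a}_j\big)(\be_w - \tfrac1V\mathbbm{1}_V)^\top$. Each column sum $\bs{c}_w \coloneqq \sum_{j:\bx_j=\be_w}\bs{a}_j$ has covariance of order $n_wL\cdot\tfrac1V(\bs{I}_V-\tfrac1V\mathbbm{1}_V\mathbbm{1}_V^\top)$. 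Then $\bs{M}^\top\bs{M}$ is essentially $\tfrac{1}{N^2L^2}\mathrm{Gram}(\bs{c}_w)$, whose diagonal entries are about $n_wL$ and whose off-diagonal entries are small. This yields $\norm{\bs{S}_2}_2 \lesssim \log^2 V\cdot\tfrac{N}{V}\cdot\tfrac{L}{N^2L^2} = \log^2 V/(NLV)$.
\end{itemize}

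The main obstacle is the operator norm of $\bs{S}_2$, specifically bounding the off-diagonal Gram entries $\bs{c}_w^\top\bs{c}_{w'}$ uniformly over $w\neq w'$. I would handle this with Bernstein and a union bound over the $V^2$ pairs, using $\norm{\bs{a}_j}_\infty\le\log L$ from \ref{event:discrete2}. If the Gram approach turns out too lossy, the fallback is matrix Rosenthal (Proposition~\ref{prop:rosenthal}) applied to $\sum_j \bs{a}_j\bs{b}_j^\top$ as a sum of independent mean-zero matrices.
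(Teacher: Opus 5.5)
Items 1 and 3 match the paper. You have $\tr(\bs{S}_1)=\frac{1-2/V}{L^2N^2}\sum_w n_w^2+\frac{1}{V^2L^2}$, so no cubic term actually appears. The sum $\sum_w n_w^2$ is controlled by the self-collision case $i=j$ of Proposition~\ref{prop:multrandommatrix} with $N$ in place of $L$; that case, not $i\neq j$, is the martingale argument there. The bound $\max_w n_w\le eN/V$ comes from Corollary~\ref{cor:rosenthalresults}.

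For $\tr(\bs{S}_2)$, your diagonal/off-diagonal split is the paper's $\bs{S}_{21}+\bs{S}_{22}$, and decoupling in place of the paper's scalar martingale for the off-diagonal part is fine. The diagonal part, however, must be concentrated using independence over $j$, as the paper does with Proposition~\ref{prop:rosenthal}. The per-sample bound of Proposition~\ref{prop:multrandommatrix} alone gives relative error only $\log^2 V/(\sqrt{V}\wedge L)$, far above the allowed $\log^3 V/V$.

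\textbf{The step that fails is the Gram argument you lead with for $\norm{\bs{S}_2}_2$.} Write $\bs{C}=[\bs{c}_1,\dots,\bs{c}_V]$. Bernstein with a union bound gives $\max_{w\neq w'}\lvert \bs{c}_w^\top\bs{c}_{w'}\rvert\lesssim NL\log V/V^{3/2}$. This is small relative to the diagonal $\norm{\bs{c}_w}_2^2\approx NL/V$ entry by entry, but each row has $V-1$ such entries. Gershgorin or Frobenius therefore only yields $\norm{\bs{C}^\top\bs{C}}_2\lesssim NL\log V/\sqrt{V}$, i.e.\ $\norm{\bs{S}_2}_2\lesssim \log V/(NL\sqrt{V})$, which is a factor of about $\sqrt{V}$ too large.

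This is not just a weak tool. Every $\bs{a}_j\perp\mathbbm{1}_V$, so the $V$ vectors $\bs{c}_w$ lie in a $(V-1)$-dimensional space, and $\bs{C}^\top\bs{C}$ is singular even though all its diagonal entries are $\approx NL/V$. Evaluating the quadratic form at a unit null vector shows that the off-diagonal part has operator norm at least $\min_w\norm{\bs{c}_w}_2^2\approx NL/V$, the same order as the diagonal. (Its spectrum spreads over roughly $[0,4NL/V]$, as for a square Wishart matrix.) So "diagonal plus entrywise-small off-diagonal" cannot give the bound; you need genuine matrix concentration.

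\textbf{Your fallback supplies exactly this and works, so make it the main argument.} Apply Proposition~\ref{prop:rosenthal} to the Hermitian dilation of $\bs{Z}=\sum_j \bs{a}_j\bs{b}_j^\top$.
\begin{itemize}
\item The summands are independent and centered, since $\bs{a}_j\independent\bs{b}_j$ and $\E[\bs{a}_j]=0$.
\item Because $\norm{\bs{b}_j}_2^2=1-\frac{1}{V}$ and $\E[\bs{a}_j\bs{a}_j^\top]=\frac{L-1}{V}(\bs{I}_V-\frac{1}{V}\mathbbm{1}_V\mathbbm{1}_V^\top)$, the quadratic variation is deterministic. It is block diagonal, with both blocks equal to $N(L-1)(1-\frac{1}{V})\frac{1}{V}(\bs{I}_V-\frac{1}{V}\mathbbm{1}_V\mathbbm{1}_V^\top)$.
\item By the first item of Proposition~\ref{prop:multrandommatrix}, $\E[\norm{\bs{a}_j}_2^p]^{1/p}\lesssim \sqrt{L}+p$.
\end{itemize}
Take $p\asymp K\log V$, so that $N^{1/p}=O(1)$ because $N\le V^3$. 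With probability $1-V^{-K}$ this gives $\norm{\bs{Z}}_2\lesssim \sqrt{K\log V\cdot NL/V}+K\log V(\sqrt{L}+K\log V)$. Hence $\norm{\bs{S}_2}_2=\norm{\bs{Z}}_2^2/(NL)^2\lesssim \mathrm{poly}(K)\log^2 V/(NLV)$ for $N\ge V$, which is the claimed order.

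The paper takes a different route. It bounds $\norm{\bs{S}_{21}}_2$ by the second item of Proposition~\ref{prop:multrandommatrix}. It then runs a matrix martingale over $k$ for the U-statistic $\bs{S}_{22}$, whose random quadratic variation again requires Proposition~\ref{prop:multrandommatrix}. Your fallback instead works with $\bs{M}$ before squaring, where the summands are independent. It is shorter and gives a slightly better logarithmic factor.
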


\begin{proof}
We define $n_i \coloneqq \lvert \{ j \leq N ~ \vert ~  \bx_j  = \be_i \} \rvert$. We observe that
\eq{
\tr(\bs{S}_1) = (1 - \frac{2}{V}) \frac{1}{L^2 N^2} \sum_{i = 1}^V n_i^2 + \frac{1}{V^2 L^2} ~~ \text{and} ~~ \norm{\bs{S}_1}_2 \leq  \sup_{i \leq N} \frac{ n_i^2}{L^2 N^2}.
}
By using Proposition  \ref{prop:multrandommatrix} and Corollary \ref{cor:rosenthalresults}, we have the first item.  For the second item, we write
\eq{
\bs{S}_2 & =  \frac{(1 - \frac{1}{V})}{L^2 N^2} \sum_{j = 1}^N    (\bN_j^\top  - \frac{1}{V} \mathbbm{1}_V  \mathbbm{1}_{L -1}^\top  )   \mathbbm{1}_{L -1}  \mathbbm{1}_{L -1}^\top (\bN_j^\top  - \frac{1}{V} \mathbbm{1}_V  \mathbbm{1}_{L -1}^\top  ) ^\top \\
& +   \frac{2}{L^2 N^2} \sum_{ j < k } \big( \indic{\bx_j = \bx_k} \! - \!\tfrac{1}{V} \big) \sym \Big( (\bN_j^\top  - \tfrac{1}{V} \mathbbm{1}_V  \mathbbm{1}_{L -1}^\top  )   \mathbbm{1}_{L -1}  \mathbbm{1}_{L -1}^\top (\bN_k^\top  - \tfrac{1}{V} \mathbbm{1}_V  \mathbbm{1}_{L -1}^\top  ) ^\top \Big) \\
 & \eqqcolon \bs{S}_{21} +   \bs{S}_{22}
}
We will analyze  $\bs{S}_{21}$ and $\bs{S}_{22}$ separately.   

\paragraph{Bounds for $\bs{S}_{21}$:}   We have
\eq{
& \tr(  \bs{S}_{21} ) -   (1 - \frac{1}{V})^2 \frac{L - 1}{L^2 N} \\
& =  (1 - \frac{1}{V}) \frac{L - 1}{L^2  N^2} \sum_{j = 1}^N \underbrace{ \lVert  \tfrac{1}{\sqrt{L - 1}} (\bN_j^\top  - \frac{1}{V} \mathbbm{1}_V  \mathbbm{1}_{L -1}^\top  )   \mathbbm{1}_{L -1} \rVert_2^2 - (1 - \frac{1}{V}) }_{\coloneqq Y_{1,j}}.
}
We have $\E[Y_{1,j}^2] \leq \frac{2}{V}$ and by  Proposition \ref{prop:multrandommatrix}, 
\eq{
\E[ \lvert Y_{1,j} \rvert^p]^{\frac{1}{p}} \leq \frac{C p^2}{\sqrt{V} \wedge L}.
}
Therefore, by Proposition \ref{prop:rosenthal},
\eq{
\E \Big[ \Big \lvert \tr(  \bs{S}_{21} ) -  (1 - \frac{1}{V})^2 \frac{L - 1}{L^2 N}  \Big \rvert^p \Big]^{\frac{1}{p}} \leq \frac{C}{L N^2} \Big(  \sqrt{ \frac{p N}{V} } + p  N^{\frac{1}{p}}   \frac{p^2}{\sqrt{V} \wedge L} \Big)
}
By using $p = \log V$, we have
\eq{
\mpr \Big[  \Big \lvert \tr(  \bs{S}_{21} ) -  (1 - \frac{1}{V})^2 \frac{L - 1}{L^2 N}  \Big \rvert  > C   \frac{K \log^3 V}{L  N \sqrt{NV}}    \Big]  \leq \frac{1}{V^K}.  \label{eq:s21trace}
}
Moreover, by Proposition \ref{prop:multrandommatrix}, we have  
\eq{  
& \mpr \Big[  \Big \lVert \bs{S}_{21}  -  \big(1 - \frac{1}{V} \big) \frac{L - 1}{L^2 N} \frac{1}{V} ( \bs{I}_V - \mathbbm{1}_V \mathbbm{1}_V^\top ) \Big \rVert_2 >        C \frac{K \log^2 V}{LN} \Big ( \frac{1}{\sqrt{NV}} +  \frac{1}{N} \Big(1 + \frac{\log^2 V}{\sqrt{V} \wedge L} \Big) \Big) \Big]   \leq \frac{1}{V^K}.     \label{eq:s21norm}
}
\paragraph{Bounds for $\bs{S}_{22}$:} We write
\eq{
 \tr(  \bs{S}_{22} )  
& =  \frac{2 }{L^2 N^2} \sum_{k = 2}^N \sum_{j = 1}^{k - 1} \big( \indic{\bx_j = \bx_k} - \tfrac{1}{V} \big)     \mathbbm{1}_{L -1}^\top (\bN_j^\top  - \tfrac{1}{V} \mathbbm{1}_V  \mathbbm{1}_{L -1}^\top  ) ^\top  (\bN_k^\top  - \tfrac{1}{V} \mathbbm{1}_V  \mathbbm{1}_{L -1}^\top  )   \mathbbm{1}_{L -1} \\
& \eqqcolon \sum_{k = 2}^N Y_{2,k}.
}
Let   $\mathcal{F}_k \coloneqq \sigma(  \bN_{1:k} )$ and $Y_{2,1} = 0$.
We have 
\eq{
\E[ Y_{2,k}^2 \vert \mathcal{F}_{k-1} ] 
& = \frac{4 (L - 1)}{L^4 N^4} \frac{1}{V} \E \Big[ \Big \lVert  \sum_{j = 1}^{k - 1} \big( \indic{\bx_j = \bx_k} - \tfrac{1}{V} \big)      (\bN_j^\top  - \tfrac{1}{V} \mathbbm{1}_V  \mathbbm{1}_{L -1}^\top  )  \mathbbm{1}_{L -1}  \Big \rVert_2^2 \Big \vert \mathcal{F}_{k-1} \Big ] \\
& =    (1 - \frac{1}{V}) \frac{4 (L - 1)}{L^4 N^4}   \frac{1}{V^2}  \sum_{j = 1}^{k - 1}   \lVert   (\bN_j^\top  - \frac{1}{V} \mathbbm{1}_V  \mathbbm{1}_{L -1}^\top  )    \mathbbm{1}_{L -1}  \rVert_2^2
}
Then,
\eq{
Q_N   =    \sum_{k = 1}^N \E[ Y_{2,k}^2 \vert \mathcal{F}_{k-1} ]   
& =     (1 - \frac{1}{V}) \frac{4 (L - 1)}{L^4 N^4 V^2}     \sum_{k = 2}^N  \sum_{j = 1}^{k - 1}   \lVert   (\bN_j^\top  - \frac{1}{V} \mathbbm{1}_V  \mathbbm{1}_{L -1}^\top  )    \mathbbm{1}_{L -1}  \rVert_2^2 \\
& =     (1 - \frac{1}{V}) \frac{4 (L - 1)}{L^4 N^4 V^2}     \sum_{k = 1}^{N - 1} (N - k)     \lVert  (\bN_k^\top  - \frac{1}{V} \mathbbm{1}_V  \mathbbm{1}_{L -1}^\top  )    \mathbbm{1}_{L -1}  \rVert_2^2.
}
Then,  for $p \geq \log V$,
\eq{
\E  \big[ \lvert Q_N \rvert^{\frac{p}{2}} \big]^{\frac{2}{p}}  \leq \frac{5}{L^3 N^3 V^2}    \sum_{k = 1}^{N}  \E \Big[    \lVert   \bN_k^\top   \mathbbm{1}_{L -1}  \rVert_2^p \Big]^{\frac{2}{p}}   & \labelrel\leq{sbound:ineqq0} \frac{5 L^{1 - \frac{2}{p}}}{L^3  N^3 V^2}    \sum_{k = 1}^{N}   \E \Big[    \lVert  \bN_k^\top   \mathbbm{1}_{L -1}  \rVert_p^p \Big]^{\frac{2}{p}}  \\
& \labelrel\leq{sbound:ineqq1} \frac{5 p^2}{L^2  N^2 V^2}, \label{eq:qnbound}
}
where we used H\"{o}lder's inequality in \eqref{sbound:ineqq0}
and Corollary \ref{cor:rosenthalresults} in  \eqref{sbound:ineqq1}.
By using Proposition \ref{prop:rosenthal},  we show the following:
\begin{itemize}[leftmargin=*]
\item To bound $\E[ \lvert Y_{2,k} \rvert^p  ]^{\frac{1}{p}}$ for $p \geq \log V$, by using the conditional independence of $\{ \bs{x}_j \}_{j = 1}^{k - 1}$,  we write
\eq{
 & \E[ \lvert Y_{2,k} \rvert^p \vert \bN_{1:k} ,  \bx_k ]^{\frac{1}{p}} \\
 &  \leq \frac{C}{L  N^2}   \frac{\sqrt{p}}{  \sqrt{V}} \Big( \sum_{j = 1}^{k - 1} \Big \lvert \frac{1}{L-1} \Big \langle  (\bN_k^\top  - \frac{1}{V} \mathbbm{1}_V  \mathbbm{1}_{L -1}^\top  )  \mathbbm{1}_{L -1} ,  (\bN_j^\top  - \frac{1}{V} \mathbbm{1}_V  \mathbbm{1}_{L -1}^\top  ) \mathbbm{1}_{L -1} \Big \rangle \Big \rvert^2    \Big)^{\frac{1}{2}} \\
& +   \frac{C p k^{\frac{1}{p}}}{L N^2} \Big(  \sum_{j = 1}^{k - 1} \Big \lvert  \frac{1}{L- 1}   \Big \langle  (\bN_k^\top  - \frac{1}{V} \mathbbm{1}_V  \mathbbm{1}_{L -1}^\top  ) \mathbbm{1}_{L -1}  ,  (\bN_j^\top  - \frac{1}{V} \mathbbm{1}_V  \mathbbm{1}_{L -1}^\top  ) \mathbbm{1}_{L -1}  \Big \rangle \Big \rvert^p \Big)^\frac{1}{p}
}
Therefore,
\eq{
\E[ \lvert Y_{2,k} \rvert^p  ]^{\frac{1}{p}}  & \leq \frac{C}{L N^2} \Big(   \frac{\sqrt{p} \sqrt{k} }{  \sqrt{V}}   +    p k^{\frac{2}{p}}  \Big)   \E \Big[ \Big \lvert  \frac{1}{L-1}   \Big \langle  (\bN_k^\top  - \frac{1}{V} \mathbbm{1}_V  \mathbbm{1}_{L -1}^\top  ) \mathbbm{1}_{L -1}   ,  (\bN_1^\top  - \frac{1}{V} \mathbbm{1}_V  \mathbbm{1}_{L -1}^\top  ) \mathbbm{1}_{L -1}  \Big \rangle \Big \rvert^p  \Big]^{\frac{1}{p}}   \\
  & \leq    \frac{C  }{L N^2}   \Big(  \frac{\sqrt{p} \sqrt{k} }{  \sqrt{V}}   +   p k^{\frac{2}{p}}    \Big) \E \Big[ \Big \lvert  \frac{1}{L- 1}   \Big \langle \bN_k^\top    \mathbbm{1}_{L -1}   ,   \bN_1^\top   \mathbbm{1}_{L -1}  \Big \rangle - \tfrac{L - 1}{V} \Big \rvert^p  \Big]^{\frac{1}{p}} \\
    & \labelrel\leq{sbound:ineqq3}    \frac{C p^2 }{L  N^2}   \Big(  \frac{\sqrt{p} \sqrt{k} }{  \sqrt{V}}   +   p k^{\frac{2}{p}}    \Big)    \frac{1}{\sqrt{V} \wedge L} , \label{eq:ypbound}
}
where we used Proposition \ref{prop:multrandommatrix} in \eqref{sbound:ineqq3}.
\item Then by using \eqref{eq:qnbound} and \eqref{eq:ypbound}, we have for $p = \log V$
\eq{
& \E[ \lvert \tr(  \bs{S}_{22} ) \rvert^p ]^{\frac{1}{p}}  \leq C  \Big(     \frac{p^{\frac{3}{2}}}{N L V} +  \frac{p^4 N^{\frac{1}{p}}}{L N^{\frac{3}{2}} \sqrt{V}}   \frac{1}{\sqrt{V} \wedge L}  \Big)   \leq \frac{C p^{\frac{3}{2}}}{N L V} . 
}
\end{itemize}
Then, we have
\eq{
\mpr \left[  \lvert \tr(  \bs{S}_{22} ) \rvert > \frac{C K^\frac{3}{2} \log^\frac{3}{2} V}{N L V} \right] \leq \frac{1}{V^K}.   \label{eq:s22trace}
}
To bound  $\lVert \bs{S}_{22} \rVert_2,$  we define
\eq{
\bs{Y}_{k} \coloneqq     \sum_{j = 1}^{k - 1} \big( \indic{\bx_j = \bx_k} - \frac{1}{V} \big) \sym \Big( (\bN_j^\top  - \frac{1}{V} \mathbbm{1}_V  \mathbbm{1}_{L -1}^\top  )   \mathbbm{1}_{L -1}  \mathbbm{1}_{L -1}^\top (\bN_k^\top  - \frac{1}{V} \mathbbm{1}_V  \mathbbm{1}_{L -1}^\top  ) ^\top \Big).
}
We have
\eq{
 \E[ \bs{Y}_{k}^2 \vert  \mathcal{F}_{k-1}] 
& \! \preceq \! \frac{2}{V}   \sum_{j = 1}^{k - 1}  \E \Big [      (\bN_k^\top \! - \!  \tfrac{1}{V} \mathbbm{1}_V  \mathbbm{1}_{L -1}^\top  )     \mathbbm{1}_{L -1}  \mathbbm{1}_{L -1}^\top (\bN_k^\top  \! - \! \tfrac{1}{V} \mathbbm{1}_V  \mathbbm{1}_{L -1}^\top  ) ^\top  \Big \lVert   (\bN_j^\top \! - \!  \tfrac{1}{V} \mathbbm{1}_V  \mathbbm{1}_{L -1}^\top  )   \mathbbm{1}_{L -1} \Big \rVert_2^2  \Big \vert  \mathcal{F}_{k-1}  \Big]   \\
& +  \frac{2}{V}   \sum_{j = 1}^{k - 1}  \E \! \Big[     \Big \lVert    (\bN_k^\top \! - \! \tfrac{1}{V} \mathbbm{1}_V  \mathbbm{1}_{L -1}^\top  )     \mathbbm{1}_{L -1}    \Big \rVert_2^2     (\bN_j^\top  - \tfrac{1}{V} \mathbbm{1}_V  \mathbbm{1}_{L -1}^\top  )    \mathbbm{1}_{L -1} \! \mathbbm{1}_{L -1}^\top (\bN_j^\top   \! - \! \tfrac{1}{V} \mathbbm{1}_V  \mathbbm{1}_{L -1}^\top  )^{\top}   \Big  \vert  \mathcal{F}_{k-1}  \Big]   \\
& \preceq   \frac{2L}{V^2}   \sum_{j = 1}^{k - 1}     \Big \lVert   (\bN_j^\top  \! - \!  \tfrac{1}{V} \mathbbm{1}_V  \mathbbm{1}_{L -1}^\top  )   \mathbbm{1}_{L -1} \Big \rVert_2^2  \bs{I}_V  \\
& + \frac{2L}{V}  \sum_{j = 1}^{k - 1}  (\bN_j^\top   \! - \!  \tfrac{1}{V} \mathbbm{1}_V  \mathbbm{1}_{L -1}^\top  )    \mathbbm{1}_{L -1}  \mathbbm{1}_{L -1}^\top (\bN_j^\top   \! - \!  \tfrac{1}{V} \mathbbm{1}_V  \mathbbm{1}_{L -1}^\top  )^{\top}.
}
Therefore, we have
\eq{
\bs{Q}_N   \coloneqq \sum_{k = 1}^N  \E[ \bs{Y}_{k}^2 \vert  \mathcal{F}_{k-1}] 
& \preceq  \frac{2   L}{V^2}   \sum_{k = 1}^{N - 1} (N - k)  \Big \lVert    (\bN_k^\top  - \frac{1}{V} \mathbbm{1}_V  \mathbbm{1}_{L -1}^\top  )   \mathbbm{1}_{L -1}   \Big \rVert_2^2  \bs{I}_V   \\
& +   \frac{2 L^2 N}{V}  \frac{1}{L N}  \sum_{k = 1}^{N - 1} (N - k)  (\bN_k^\top  - \frac{1}{V} \mathbbm{1}_V  \mathbbm{1}_{L -1}^\top  )    \mathbbm{1}_{L -1}  \mathbbm{1}_{L -1}^\top (\bN_k^\top  - \frac{1}{V} \mathbbm{1}_V  \mathbbm{1}_{L -1}^\top  )^{\top} \\
& \preceq  \frac{2 N  L}{V^2}   \sum_{k = 1}^{N - 1}  \Big \lVert    (\bN_k^\top  - \frac{1}{V} \mathbbm{1}_V  \mathbbm{1}_{L -1}^\top  )   \mathbbm{1}_{L -1}   \Big \rVert_2^2  \bs{I}_V   \\
& +   \frac{2 L^2 N^2}{V}  \frac{1}{L N}  \sum_{k = 1}^{N - 1}   (\bN_k^\top  - \frac{1}{V} \mathbbm{1}_V  \mathbbm{1}_{L -1}^\top  )    \mathbbm{1}_{L -1}  \mathbbm{1}_{L -1}^\top (\bN_k^\top  - \frac{1}{V} \mathbbm{1}_V  \mathbbm{1}_{L -1}^\top  )^{\top}. 
}
Then,
\eq{
\E[ \norm{ \bs{Q}_N}_2^{\frac{p}{2}} ]^{\frac{2}{p}}  &   \leq    \frac{2 N L}{V^2}  \E \Big[ \Big(    \sum_{k = 1}^{N - 1}   \Big \lVert    (\bN_k^\top  - \frac{1}{V} \mathbbm{1}_V  \mathbbm{1}_{L -1}^\top  )   \mathbbm{1}_{L -1}   \Big \rVert_2^2 \Big)^{\frac{p}{2}} \Big]^{\frac{2}{p}} \\
& + \frac{2 L^2 N^2}{V}  \E \Big[ \Big \lVert \frac{1}{N (L-1)} \sum_{k = 1}^{N - 1}   (\bN_k^\top  - \frac{1}{V} \mathbbm{1}_V  \mathbbm{1}_{L -1}^\top  )    \mathbbm{1}_{L -1}  \mathbbm{1}_{L -1}^\top (\bN_k^\top  - \frac{1}{V} \mathbbm{1}_V  \mathbbm{1}_{L -1}^\top  )^{\top} \Big \rVert_2^{\frac{p}{2}}  \Big ]^{\frac{2}{p}}  \\
 & \leq    \frac{2 N^2 L^2}{V^2}  \E \Big[    \Big \lVert  \frac{1}{\sqrt{L - 1}}  (\bN_1^\top  - \frac{1}{V} \mathbbm{1}_V  \mathbbm{1}_{L -1}^\top  )   \mathbbm{1}_{L -1}   \Big \rVert_2^p \Big]^{\frac{2}{p}} \\
& + \frac{2 L^2 N^2}{V}  \E \Big[ \Big \lVert \frac{1}{N (L-1)}  \sum_{k = 1}^{N - 1}   (\bN_k^\top  - \frac{1}{V} \mathbbm{1}_V  \mathbbm{1}_{L -1}^\top  )    \mathbbm{1}_{L -1}  \mathbbm{1}_{L -1}^\top (\bN_k^\top  - \frac{1}{V} \mathbbm{1}_V  \mathbbm{1}_{L -1}^\top  )^{\top} \Big \rVert_2^{\frac{p}{2}}  \Big ]^{\frac{2}{p}}  \\
& \leq  \frac{C N^2 L^2}{V^2} \Big(1   + \frac{p^2}{\sqrt{V} \vee L}   \Big) 
 + \frac{C L^2 N^2}{V} \Big(  \frac{1}{V} + \sqrt{\frac{p}{ N V}}  +  \frac{p}{N} \Big(   1 +     \frac{p^2}{\sqrt{V} \wedge L}   \Big)    \Big)  \\
 & \leq  \frac{C N^2 L^2}{V^2} \Big(1   + \frac{p}{N/V} + \frac{p^2}{\sqrt{V} \vee L}   \Big) .
}
To bound $\E[ \norm{ \bs{Y}_{k} }_2^p]$ , we observe that
\begin{itemize}[leftmargin=*]
\item We have
\eq{
\E \Big[   \Big(  & \big( \indic{\bx_j = \bx_k}  - \tfrac{1}{V} \big)  \sym \Big( (\bN_j^\top  - \tfrac{1}{V} \mathbbm{1}_V  \mathbbm{1}_{L -1}^\top  )   \mathbbm{1}_{L -1}  \mathbbm{1}_{L -1}^\top (\bN_k^\top  - \tfrac{1}{V} \mathbbm{1}_V  \mathbbm{1}_{L -1}^\top  ) ^\top \Big) \Big)^2  \Big \vert  \bx_k, \bN_k \Big] \\
& \preceq  \frac{L}{V^2} \lVert  (\bN_k^\top  - \frac{1}{V} \mathbbm{1}_V  \mathbbm{1}_{L -1}^\top  )   \mathbbm{1}_{L -1}  \rVert_2^2  \bs{I}_V 
 + \frac{L}{V} (\bN_k^\top  - \tfrac{1}{V} \mathbbm{1}_V  \mathbbm{1}_{L -1}^\top  )   \mathbbm{1}_{L -1}  \mathbbm{1}_{L -1}^\top (\bN_k^\top  - \tfrac{1}{V} \mathbbm{1}_V  \mathbbm{1}_{L -1}^\top  ) ^\top. 
}
Moreover,  
\eq{
&  \E \Big[ \Big   \lVert   \big( \indic{\bx_j = \bx_k} \! - \! \tfrac{1}{V} \big) \sym \Big( (\bN_j^\top \! - \! \tfrac{1}{V} \mathbbm{1}_V  \mathbbm{1}_{L -1}^\top  )   \mathbbm{1}_{L -1}  \mathbbm{1}_{L -1}^\top (\bN_k^\top  \! - \! \tfrac{1}{V} \mathbbm{1}_V  \mathbbm{1}_{L -1}^\top  ) ^\top \Big)  \Big \rVert_2^p  \Big \vert  \bx_k, \bN_k \Big]^{\frac{1}{p}} \\
&  \leq \! 2   \E \! \Big[ \Big \lvert  \big( \indic{\bx_j = \bx_k} \! - \!  \tfrac{1}{V} \big) \Big \rvert^p  \Big \vert \bx_k \Big]^{\frac{1}{p}} \!  \big \lVert ( \bN_k^\top \! - \!  \tfrac{1}{V} \mathbbm{1}_V  \mathbbm{1}_{L -1}^\top  )  \mathbbm{1}_{L -1}    \big \rVert_2    \E \! \Big[  \big \lVert ( \bN_j^\top  \! - \!  \tfrac{1}{V} \mathbbm{1}_V  \mathbbm{1}_{L -1}^\top  )  \mathbbm{1}_{L -1}    \big \rVert_2^p   \Big]^{\frac{1}{p}}  \\
& \leq  C \sqrt{L} \big \lVert ( \bN_k^\top  - \tfrac{1}{V} \mathbbm{1}_V  \mathbbm{1}_{L -1}^\top ) \mathbbm{1}_{L -1}  \big \rVert_2   \Big( \sqrt{\frac{p}{V}} + p \Big) \Big( 1 + \frac{p^2}{\sqrt{V} \wedge L}\Big).
}
\item By Proposition \ref{prop:rosenthal}, we have for $p = \log V$,
\eq{
\E[  \norm{ \bs{Y}_{k} }_2^p \vert \bx_k, \bN_k  ]^{\frac{1}{p}} &  \leq C \lVert  (\bN_k^\top  - \frac{1}{V} \mathbbm{1}_V  \mathbbm{1}_{L -1}^\top  )   \mathbbm{1}_{L -1}  \rVert_2 \Big(   \sqrt{p} \sqrt{ \frac{ L k }{V}  }    +   p^2 \sqrt{L} k^{\frac{1}{p}}    \Big), 
}
which implies
\eq{
\E[  \norm{ \bs{Y}_{k} }_2^p ]^{\frac{1}{p}} \leq   C L p \Big(   \sqrt{p} \sqrt{ \frac{ k }{V}  }    +   p^2  k^{\frac{1}{p}}    \Big).
}
\end{itemize}
Therefore,  for $p = \log V$, we have
\eq{
\E [ \lVert \bs{S}_{22} \rVert_2^p ] \leq C \Big(   \frac{\sqrt{p} }{N LV}  +  \frac{p^{5/2}}{L N \sqrt{NV}}   + \frac{p^4}{L N^2}   \Big)
}
Therefore, we have
\eq{
\mpr \Big[  \lVert \bs{S}_{22} \rVert_2 > C  \frac{K^{3/2} \log^{3/2} V}{N LV}    \Big] \leq \frac{1}{V^K}.  \label{eq:s22norm}
}
By  \eqref{eq:s21trace},  \eqref{eq:s21norm},   \eqref{eq:s22trace}, and   \eqref{eq:s22norm}, we have the second item.  For the last item, we have
\eq{
 \bs{S}_3 - \frac{1}{N} \frac{1}{V^2 L^2} \mathbbm{1}_V \mathbbm{1}_V^\top
 =  \frac{1}{V^2 L^2} \mathbbm{1}_V \mathbbm{1}_V^\top \Bigg(    \Big \lVert \frac{1}{N} \sum_{j = 1}^N    ( \bx_j -   \frac{1}{V} \mathbbm{1}_{V}    )    \Big \rVert_2^2       - \frac{1}{N}   \Bigg)
}
By Proposition \ref{prop:multrandommatrix}, 
\eq{
\mpr \Big[ ~ \Big \lvert    \Big \lVert \frac{1}{N} \sum_{j = 1}^N    ( \bx_j -   \frac{1}{V} \mathbbm{1}_{V}    )    \Big \rVert_2^2       - \frac{1}{N}  \Big \rvert > \frac{ C  K^2 \log^2 V}{N \sqrt{V}}  ~  \Big] \leq \frac{1}{V^K}.
}
The displayed equation implies the third item.
\end{proof}

\section{Miscellaneous}

\begin{proposition}
\label{prop:kroneckerproduct}
Let $\bs{A} \in \R^{d \times d}$ and $\bs{B} \in \R^{V \times V}$. Let $ \bs{M} \coloneqq \bs{A} \otimes \bs{B}$.  We have
\eq{
\norm{\bs{M}}_2 = \norm{\bs{A}}_2 \norm{\bs{B}}_2 ~~ \text{and} ~~  \norm{\bs{M}}_F = \norm{ \bs{A} }_F \norm{ \bs{B}}_F ~~ \text{and} ~~  \tr(\bs{M}) = \tr( \bs{A} ) \tr( \bs{B}) .
}
\end{proposition}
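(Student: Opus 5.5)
The three identities follow from the mixed-product rule $(\bs{A}\otimes\bs{B})(\bs{C}\otimes\bs{D}) = (\bs{A}\bs{C})\otimes(\bs{B}\bs{D})$ and the block structure of $\bs{M}$. Recall that $\bs{M}$ is the $dV\times dV$ block matrix whose $(i,j)$ block is $A_{ij}\bs{B}$.

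\textbf{Trace.} The diagonal blocks of $\bs{M}$ are $A_{ii}\bs{B}$. Hence
\[
\tr(\bs{M}) = \sum_i A_{ii}\tr(\bs{B}) = \tr(\bs{A})\tr(\bs{B}).
\]

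\textbf{Frobenius norm.} The entries of $\bs{M}$ are exactly the products $A_{ij}B_{kl}$ over all index quadruples. Therefore
\[
\norm{\bs{M}}_F^2 = \sum_{i,j,k,l} A_{ij}^2 B_{kl}^2 = \norm{\bs{A}}_F^2\,\norm{\bs{B}}_F^2.
\]

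\textbf{Operator norm.} I would take singular value decompositions $\bs{A}=\bs{U}_A\bs{\Sigma}_A\bs{V}_A^\top$ and $\bs{B}=\bs{U}_B\bs{\Sigma}_B\bs{V}_B^\top$. By the mixed-product rule,
\[
\bs{M} = (\bs{U}_A\otimes\bs{U}_B)(\bs{\Sigma}_A\otimes\bs{\Sigma}_B)(\bs{V}_A\otimes\bs{V}_B)^\top.
\]
Kronecker products of orthogonal matrices are orthogonal, since $(\bs{U}_A\otimes\bs{U}_B)^\top(\bs{U}_A\otimes\bs{U}_B)=\bs{I}\otimes\bs{I}$. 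The middle factor is diagonal and nonnegative, with entries $\sigma_i(\bs{A})\sigma_k(\bs{B})$. So this is an SVD of $\bs{M}$, and its largest singular value is $\sigma_1(\bs{A})\sigma_1(\bs{B})=\norm{\bs{A}}_2\norm{\bs{B}}_2$.

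None of these steps is a real obstacle. The only point that needs care is checking that the Kronecker factors really form a valid SVD: the outer factors must be orthogonal, which the mixed-product rule gives. The same decomposition also yields the Frobenius identity a second time, as $\sum_{i,k}\sigma_i(\bs{A})^2\sigma_k(\bs{B})^2$.
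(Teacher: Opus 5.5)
Your proposal is correct and takes essentially the same route as the paper. Both arguments take SVDs of $\bs{A}$ and $\bs{B}$ and observe that Kronecker products of their singular vectors are orthonormal, so $\bs{M}$ has singular values $\sigma_i(\bs{A})\sigma_k(\bs{B})$; you phrase this in matrix form via the mixed-product rule, while the paper writes it as a sum of rank-one terms $(\bs{u}_i\otimes\tilde{\bs{u}}_j)(\bs{v}_i\otimes\tilde{\bs{v}}_j)^\top$, and your block-structure arguments for the trace and Frobenius identities simply spell out what the paper calls straightforward.
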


\begin{proof}
The Frobenius norm and trace are straightforward.  For the $\ell_2$ norm,  let   $\bs{A} \eqqcolon \sum_{i = 1}^d \sigma_i \bs{u}_i \bs{v}_i^\top$ and   $\bs{B} \eqqcolon \sum_{j = 1}^V \tilde{\sigma}_j \tilde{\bs{u}}_j \tilde{\bs{v}}_j^\top$.  We have
\eq{
\bs{M} & =   \sum_{i = 1}^d \sum_{j = 1}^V \sigma_i  \tilde{\sigma}_j (\bs{u}_i \bs{v}_i^\top) \otimes ( \tilde{\bs{u}}_j \tilde{\bs{v}}_j^\top )  =  \sum_{i = 1}^d \sum_{j = 1}^V \sigma_i  \tilde{\sigma}_j     (\bs{u}_i  \otimes  \tilde{\bs{u}}_j)  ( \bs{v}_i \otimes  \tilde{\bs{v}}_j)^\top.
}
For any $(i,j) \neq (i^\prime, j^\prime)$, we have
\eq{
 (\bs{u}_i  \otimes  \tilde{\bs{u}}_j)^\top       (\bs{u}_{i^\prime}  \otimes  \tilde{\bs{u}}_{j^\prime})   =   (\bs{v}_i  \otimes  \tilde{\bs{v}}_j)^\top       (\bs{v}_{i^\prime}  \otimes  \tilde{\bs{v}}_{j^\prime})    = 0 .
}
Therefore,
\eq{
\norm{\bs{M}}_2  = \max_{i,j }  \sigma_i  \tilde{\sigma}_j = \max_{i}  \sigma_i \max_{j} \tilde{\sigma}_j.
}
\end{proof}

\begin{proposition}
\label{prop:HCpositivepolynomial}
Let $\bs{z} \sim \cN(0, I_d)$ and $P_k: \R^d \to [0,\infty)$ denotes a degree $k$ polynomial which takes nonnegative values.  For $p \geq 1$, we have
\eq{
\E[  \lvert  P_k(\bs{z} ) \rvert^p ]^{\frac{1}{p}} \leq  \big( 8 (p-1) \big)^{\frac{k}{2}} \E[   P_k(\bs{z})   ].
}
\end{proposition}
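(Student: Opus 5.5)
The plan is to combine Gaussian hypercontractivity for polynomials with a reverse, $L^1$-to-$L^2$ comparison that uses interpolation. Nonnegativity of $P_k$ is used only to identify $\E[P_k(\bs z)]$ with $\norm{P_k}_{L^1}$. Throughout, write $\norm{\cdot}_q \coloneqq \E[\lvert\cdot\rvert^q]^{1/q}$ under $\bs z\sim\cN(0,\bs I_d)$.

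\emph{Step 1 (upward hypercontractivity).} By the Bonami--Nelson inequality, any polynomial $f$ of degree at most $k$ in Gaussian variables satisfies
\begin{align}
\norm{f}_q \le (q-1)^{k/2}\norm{f}_2 \qquad \text{for all } q\ge 2 .
\end{align}
I take this as standard. The proof expands $f$ in Hermite chaoses of order at most $k$ and applies the Ornstein--Uhlenbeck semigroup $T_t$ with $e^{-2t}=1/(q-1)$.

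\emph{Step 2 (reverse bound $\norm{P_k}_2\le e^{k}\norm{P_k}_1$).} For $q>2$, H\"older interpolation with $\tfrac12=\theta+\tfrac{1-\theta}{q}$, that is $\theta=\tfrac{q-2}{2(q-1)}$, gives $\norm{P_k}_2\le \norm{P_k}_1^{\theta}\norm{P_k}_q^{1-\theta}$. Bounding $\norm{P_k}_q$ by Step 1 and dividing through by $\norm{P_k}_2^{1-\theta}$ yields
\begin{align}
\norm{P_k}_2 \le (q-1)^{\frac{k}{2}\cdot\frac{1-\theta}{\theta}}\norm{P_k}_1=\exp\Big(\frac{k}{2}\cdot\frac{q\log(q-1)}{q-2}\Big)\norm{P_k}_1 .
\end{align}
Letting $q\downarrow 2$, the exponent tends to $\tfrac{k}{2}\cdot 2=k$. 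Hence $\norm{P_k}_2\le e^{k}\norm{P_k}_1=(e^2)^{k/2}\E[P_k]$, using $P_k\ge0$.

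\emph{Step 3 (combine).} For $p\ge2$, Steps 1 and 2 give
\begin{align}
\norm{P_k}_p\le (p-1)^{k/2}e^{k}\E[P_k]=(e^2(p-1))^{k/2}\E[P_k]\le (8(p-1))^{k/2}\E[P_k],
\end{align}
since $e^2<8$. This is the range in which the proposition is invoked in Lemma~\ref{lem:niceevent}, where $p\asymp\log V$. For $p\in[1+e^2/8,\,2)$, monotonicity of $L^p$ norms gives $\norm{P_k}_p\le\norm{P_k}_2\le (e^2)^{k/2}\E[P_k]\le(8(p-1))^{k/2}\E[P_k]$.

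The main obstacle is the range $1\le p<1+e^2/8$ with $k\ge1$. There the stated inequality cannot hold as written. For example, at $p=1$ with $P_k(\bs z)=z_1^2$ the right-hand side is $0$ while the left-hand side equals $1$. So I would prove the statement for $p\ge 1+e^2/8$ (in particular for all $p\ge2$), and note that for smaller $p$ the factor $(8(p-1))^{k/2}$ must be replaced by $(8\max\{p-1,1\})^{k/2}$. The step needing the most care is Step 2: the interpolation exponent must be tracked exactly and the limit $q\downarrow2$ taken to obtain the constant $e^2$ rather than $9$. The choice $q=4$ would give only $(9(p-1))^{k/2}$, which falls short of the constant $8$.
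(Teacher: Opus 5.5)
Your proof is correct and is essentially the paper's argument. The paper likewise reduces to an $L^2$--$L^1$ comparison via hypercontractivity plus interpolation, but it interpolates through $L^3$ by Cauchy--Schwarz, $\E[P_k^2]^2\le \E[P_k]\,\E[P_k^3]\le 2^{3k/2}\,\E[P_k]\,\E[P_k^2]^{3/2}$. This yields exactly $\norm{P_k}_2\le 8^{k/2}\E[P_k]$ with no limiting argument, so $q\downarrow 2$ is not needed to reach the constant $8$ (only $q=4$ falls short); it is needed only for your sharper $e^{k}$.

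Your caveat about small $p$ is also right: the paper's step ``by hypercontractivity it suffices'' silently requires $p\ge 2$, and the statement genuinely fails near $p=1$. Your $z_1^2$ example rules out every $p<9/8$, though you have not shown failure on all of $[1,1+e^2/8)$; that interval is only where your argument stops. None of this matters downstream, since the proposition is only invoked with $p\asymp\log V$.
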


\begin{proof}
By hypercontractivity,  it is sufficient to prove   that  $\frac{\E[  \lvert  P_k(\bs{z} )^2 ]^{\frac{1}{2}} }{\E[   P_k(\bs{z})   ]} \leq 8^{\frac{k}{2}}$. We have
\eq{
 \E[  \lvert  P_k(\bs{z} )^2 ]^2 \leq \E[  \lvert  P_k(\bs{z} ) ]  \E[  \lvert  P_k(\bs{z} )^3 ] 
 \leq  2^{\frac{3 k}{2}}   \E[  \lvert  P_k(\bs{z} ) ] \E[  \lvert  P_k(\bs{z} )^2 ]^{\frac{3}{2}} 
} 
which proves the result.
\end{proof}

\begin{proposition}
\label{prop:scaledhermiteexpansion}
Let $k \in \N$ and $\bw \sim N(0, \bs{I}_d)$.  For $L > 0$ and $\bs{u}, \bs{v} \in S^{d-1}$, we have
\eq{
\E \Big[ H_{e_k} \Big(  \frac{1}{\sqrt{L}} \bw^\top \bs{u} \Big)  H_{e_k} \Big(   \frac{1}{\sqrt{L}} \bw^\top \bs{v} \Big) \Big ] =  \frac{k!}{L^k} \sum_{i = 0}^{\floor{k/2}}  \frac{(2i - 1)!!}{2i!!} \binom{k}{2i}  (L - 1)^{2i}  \inner{\bs{u}}{\bs{v}}^{k - 2i} 
}
\end{proposition}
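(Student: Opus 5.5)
The statement is Proposition~\ref{prop:scaledhermiteexpansion}. I would reduce it to a two-dimensional Gaussian computation. Set $X \coloneqq \bw^\top \bs{u}$ and $Y \coloneqq \bw^\top \bs{v}$. Since $\bs{u},\bs{v}\in S^{d-1}$, the pair $(X,Y)$ is jointly Gaussian with unit variances and correlation $\rho \coloneqq \inner{\bs{u}}{\bs{v}}$. The quantity to compute is therefore
\[
\E\big[H_{e_k}(X/\sqrt{L})\, H_{e_k}(Y/\sqrt{L})\big].
\]
This is a function of $\rho$ and $L$ only. The difficulty is that the argument is rescaled by $1/\sqrt{L}$, so Mehler's formula $\E[H_{e_k}(X)H_{e_k}(Y)] = k!\rho^k$ does not apply directly.

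The first step re-expands the scaled Hermite polynomial in the unscaled basis, using the scaling identity
\[
H_{e_k}(a x) = \sum_{i=0}^{\lfloor k/2\rfloor} \frac{k!}{i!\,(k-2i)!\,2^i}\, a^{k-2i}(a^2-1)^i\, H_{e_{k-2i}}(x).
\]
It can be obtained by matching generating functions: $\exp(tax - t^2/2) = \exp\big((ta)x - (ta)^2/2\big)\exp\big(t^2(a^2-1)/2\big)$. I take $a = L^{-1/2}$, so $a^2-1 = -(L-1)/L$.

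The second step substitutes this expansion for both factors and applies Mehler orthogonality, $\E[H_{e_m}(X)H_{e_n}(Y)] = \delta_{mn}\, m!\,\rho^m$. Only diagonal terms $i=j$ survive, which gives
\[
\sum_{i} \Big(\frac{k!}{i!(k-2i)!2^i}\Big)^2 L^{-(k-2i)} \Big(\frac{L-1}{L}\Big)^{2i} (k-2i)!\, \rho^{k-2i}.
\]
Note that the sign $(-1)^i$ appears squared and so drops out.

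The final step is bookkeeping. I factor out $k!/L^k$ and rewrite the remaining coefficient $k!/\big((i!)^2 2^{2i}(k-2i)!\big)$ as $\binom{k}{2i}\frac{(2i)!}{(i!)^2 4^i} = \binom{k}{2i}\frac{(2i-1)!!}{(2i)!!}$, using $(2i)! = (2i)!!\,(2i-1)!!$ and $(2i)!! = 2^i i!$. This reproduces the stated formula, with the power of $L$ combining as $L^{-k+2i}L^{-2i} = L^{-k}$ times $(L-1)^{2i}$.

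The main obstacle is establishing the scaling identity with the correct constants. The rest is routine bookkeeping. The double-factorial notation in the statement, written as $2i!!$, should be read as $(2i)!!$.
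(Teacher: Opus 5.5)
Your proposal is correct and follows the paper's proof essentially step for step. The paper also expands $H_{e_k}(x/\sqrt{L})$ in the unscaled Hermite basis using the same scaling identity, keeps only the diagonal terms by orthogonality, and simplifies the coefficients to $\binom{k}{2i}(2i-1)!!/(2i)!!$. Your generating-function derivation of the scaling identity, which the paper states without proof, and your reading of $2i!!$ as $(2i)!!$ are both correct.
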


\begin{proof}
For $a \in \R$, we have
\eq{
H_{e_k} ( a x) = \sum_{i = 0}^{\floor{k/2}} \frac{k!}{2^i i! (k - 2i)!} (a^2 - 1)^i a^{k - 2i}  H_{e_{k - 2i}}( x) 
}
Therefore, for $a = 1/\sqrt{L}$, we have
\eq{
 & \E \Big[ H_{e_k} \Big(  \frac{1}{\sqrt{L}} \bw^\top \bs{u} \Big)  H_{e_k} \Big(   \frac{1}{\sqrt{L}} \bw^\top \bs{v} \Big) \Big ] \\
&= 
\E \Bigg[ \Big(  \sum_{i = 0}^{\floor{k/2}} \frac{k!}{2^i i! (k - 2i)!} (a^2 - 1)^i a^{k - 2i}  H_{e_{k - 2i}}( \bw^\top \bs{u} )  \Big)  \Big(  \sum_{i = 0}^{\floor{k/2}} \frac{k!}{2^i i! (k - 2i)!} (a^2 - 1)^i a^{k - 2i}  H_{e_{k - 2i}}(  \bw^\top \bs{v} )  \Big) \Bigg] \\
& =   \sum_{i = 0}^{\floor{k/2}} \Big( \frac{k!}{2^i i! (k - 2i)!}  \Big)^2  (a^2 - 1)^{2i}  a^{2(k - 2i)} (k - 2i)! \inner{\bs{u}}{\bs{v}}^{k - 2i} \\
& =  \frac{k!}{L^k} \sum_{i = 0}^{\floor{k/2}}  \frac{(2i - 1)!!}{2i!!} \binom{k}{2i}  (L - 1)^{2i}  \inner{\bs{u}}{\bs{v}}^{k - 2i}. 
}
\end{proof}

\begin{proposition}
\label{prop:matrixCS}
    Let  $Z$ be a random variable and $\bs{X}$ be a $d \times d$ symmetric matrix valued random matrix. We have
    \eq{
    - \E[Z^2]^{\frac{1}{2}} \E[\bs{X}^2 ]^{\frac{1}{2}} \preceq \E[Z \bs{X}] \preceq \E[Z^2]^{\frac{1}{2}} \E[\bs{X}^2 ]^{\frac{1}{2}}.
    }
\end{proposition}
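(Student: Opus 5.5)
The idea is to first prove the squared inequality $\E[Z\bs{X}]^2\preceq \E[Z^2]\,\E[\bs{X}^2]$ by a vector-level Cauchy--Schwarz argument, and then take square roots using operator monotonicity. Write $\bs{C}\coloneqq \E[Z\bs{X}]$, which is symmetric, $a\coloneqq \E[Z^2]$ and $\bs{B}\coloneqq \E[\bs{X}^2]\succeq 0$. We assume the relevant second moments are finite, so that all expectations exist.

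\textbf{Step 1: $\bs{C}^2\preceq a\bs{B}$.} Fix $\bs{v}\in\R^d$ and set $\bs{w}\coloneqq \bs{C}\bs{v}$. By linearity of expectation,
\eq{
\norm{\bs{w}}_2^2=\inner{\bs{w}}{\E[Z\bs{X}]\bs{v}}=\E\big[Z\inner{\bs{w}}{\bs{X}\bs{v}}\big].
}
Applying Cauchy--Schwarz first pointwise, $|\inner{\bs{w}}{\bs{X}\bs{v}}|\le \norm{\bs{w}}_2\norm{\bs{X}\bs{v}}_2$, and then in $L^2$ of the underlying probability space, we obtain
\eq{
\norm{\bs{w}}_2^2\le \norm{\bs{w}}_2\,\E[Z^2]^{1/2}\,\E\big[\norm{\bs{X}\bs{v}}_2^2\big]^{1/2}.
}
Since $\bs{X}$ is symmetric, $\norm{\bs{X}\bs{v}}_2^2=\bs{v}^\top\bs{X}^2\bs{v}$, so $\E[\norm{\bs{X}\bs{v}}_2^2]=\bs{v}^\top\bs{B}\bs{v}$. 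If $\bs{w}\neq 0$, dividing by $\norm{\bs{w}}_2$ and squaring gives $\bs{v}^\top\bs{C}^2\bs{v}=\norm{\bs{w}}_2^2\le a\,\bs{v}^\top\bs{B}\bs{v}$; if $\bs{w}=0$ the same inequality holds trivially. As $\bs{v}$ was arbitrary, $\bs{C}^2\preceq a\bs{B}$.

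\textbf{Step 2: taking square roots.} The map $t\mapsto t^{1/2}$ is operator monotone on positive semidefinite matrices (L\"owner--Heinz). Applied to Step 1, it gives
\eq{
\abs{\bs{C}}\coloneqq(\bs{C}^2)^{1/2}\preceq (a\bs{B})^{1/2}=a^{1/2}\bs{B}^{1/2}.
}

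\textbf{Step 3: removing the absolute value.} Diagonalize $\bs{C}=\sum_i\lambda_i\bs{u}_i\bs{u}_i^\top$. Then $\abs{\bs{C}}=\sum_i|\lambda_i|\bs{u}_i\bs{u}_i^\top$, and since $-|\lambda_i|\le\lambda_i\le|\lambda_i|$ for each $i$, we get $-\abs{\bs{C}}\preceq\bs{C}\preceq\abs{\bs{C}}$. Combining with Step 2,
\eq{
-\E[Z^2]^{1/2}\E[\bs{X}^2]^{1/2}\preceq \E[Z\bs{X}]\preceq \E[Z^2]^{1/2}\E[\bs{X}^2]^{1/2},
}
which is the claim.

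The main obstacle is Step 2. A pointwise argument, using $\pm Z\bs{X}\preceq \tfrac12(tZ^2\bs{I}_d+t^{-1}\bs{X}^2)$ and optimizing over a single scalar $t$, cannot produce $\bs{B}^{1/2}$ when $\bs{B}$ is not a multiple of the identity. The bound therefore has to pass through the squared inequality of Step 1, and then relies on operator monotonicity of the square root. This step fails for the square map, so the order of the steps matters.
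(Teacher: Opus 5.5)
Your proof is correct and follows the same route as the paper: establish $\E[Z\bs{X}]^2\preceq \E[Z^2]\,\E[\bs{X}^2]$, then apply operator monotonicity of the square root. The paper instead obtains the squared inequality by taking expectations of the positive semidefinite block matrix built from $Z\bs{I}_d$ and $\bs{X}$ and invoking a cited Schur-complement criterion, where you use a direct vector Cauchy--Schwarz argument; you also make explicit the final step $-(\bs{C}^2)^{1/2}\preceq \bs{C}\preceq (\bs{C}^2)^{1/2}$, which the paper leaves implicit.
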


\begin{proof}
    We observe that
    \eq{
    \begin{bmatrix}
        Z \bs{I}_d \\
         \bs{X} 
    \end{bmatrix}  \begin{bmatrix}
        Z \bs{I}_d & \bs{X}
    \end{bmatrix} = \begin{bmatrix}
        Z^2 \bs{I}_d &  Z \bs{X} \\
        Z \bs{X} & \bs{X}^2
    \end{bmatrix} \succeq 0 ~ \Rightarrow ~  \begin{bmatrix}
        \E[ Z^2] \bs{I}_d &  \E[ Z \bs{X}] \\
        \E[ Z \bs{X}] &   \E [\bs{X}^2 ] 
    \end{bmatrix}  \succeq 0. \label{eq:matrixCSineq}
    }
    By \cite[Proposition 24]{Benarous2025learningquadraticneuralnetworks}, we know that \eqref{eq:matrixCSineq} is equivalent to 
    $\E[Z \bs{X}]^2 \preceq \E[Z^2] \E[\bs{X}^2 ]$. Since $\bs{X} \to \sqrt{\bs{X}}$ is monotone in matrix order, we have the result.
\end{proof}

\subsection{Rosenthal-Burkholder inequality and corollaries}
We will rely on the following inequality:
\begin{proposition}[{\citep[Theorem 2.1]{peng2025matrixrosenthalconcentrationinequalities}}]
\label{prop:rosenthal}
Let  $\{ \bs{M}_{k} \}_{k = 1}^N$ be a d-dimensional symmetric matrix valued martingale adapted to the filtration  $\{ \cF_k \}_{k = 0}^N$.  Let    $\bs{Y}_k \coloneqq  \bs{M}_{k} -  \bs{M}_{k - 1}$   be its corresponding difference sequence and the quadratic variation is defined as
\eq{
\bs{Q}_N \coloneqq \sum_{k = 1}^N \E[ \bs{Y}_k^2 \vert \cF_{k - 1} ].
}
For any $p \geq 2$, suppose
\eq{
\E \Big[ \lVert  \bs{Q}_N \rVert_2^{\frac{p}{2}} \Big]^{\frac{1}{p}} < \infty ~~ \text{and} ~~  \sup_{k \in [N]}   \E \Big[\lVert  \bs{Y}_k \rVert_2^p   \Big]^{\frac{1}{p}} < \infty.
}
Then it holds that
\eq{
\E \Big[ \lVert  \bs{M}_N \rVert_2^{p} \Big]^{\frac{1}{p}}  \leq C \Big( \sqrt{p \vee \log d}  ~ \E \Big[ \lVert  \bs{Q}_N \rVert_2^{\frac{p}{2}} \Big]^{\frac{1}{p}} + (p \vee \log d) N^{\frac{1}{p}}  \sup_{k \in [N]}   \E \Big[\lVert  \bs{Y}_k \rVert_2^p   \Big]^{\frac{1}{p}}    \Big).
}
\end{proposition}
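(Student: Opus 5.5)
Since this proposition is quoted from \citep[Theorem 2.1]{peng2025matrixrosenthalconcentrationinequalities}, I would not rebuild the noncommutative martingale theory. The plan is to derive the stated operator-norm bound from a Schatten-class Burkholder--Rosenthal inequality with explicitly tracked constants, using a standard dimension-absorption trick.

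\textbf{Step 1: move to a Schatten norm.} Set $q \coloneqq p \vee \log d$ and work with the Schatten norm $\lVert \cdot \rVert_{S_q}$ on $d\times d$ symmetric matrices. For such matrices,
\eq{
\lVert \bs{A}\rVert_2 \le \lVert \bs{A}\rVert_{S_q} \le d^{1/q}\lVert \bs{A}\rVert_2 \le e\lVert \bs{A}\rVert_2 .
}
Hence it suffices to bound $\E[\lVert \bs{M}_N\rVert_{S_q}^p]^{1/p}$. Since $q\ge p$, monotonicity of moments lets me first control the $q$-th moment and then read off the $p$-th one; I would do this carefully so that the final constants depend on $p$ and not only on $q$.

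\textbf{Step 2: the Schatten-class inequality.} I would invoke the noncommutative Burkholder--Rosenthal inequality of Junge--Xu, in the form with constants linear in $q$ (Junge--Zeng). For symmetric matrix martingale differences it reads
\eq{
\E\big[\lVert \bs{M}_N\rVert_{S_q}^q\big]^{1/q} \le C\Big(\sqrt{q}\,\Big\lVert \bs{Q}_N^{1/2}\Big\rVert_{L_q(S_q)} + q\Big(\textstyle\sum_{k}\E\lVert \bs{Y}_k\rVert_{S_q}^q\Big)^{1/q}\Big).
\label{eq:plan-burkholder}
}
The factor $\sqrt{q}$ on the square-function term, rather than $q$, is what produces the $\sqrt{p\vee\log d}$ in the statement. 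This is the hardest part. If only a version with constant $q$ is available, I would instead use a symmetrization/decoupling argument: apply noncommutative Khintchine conditionally on the filtration to recover the $\sqrt{q}$ on the conditional square function, and let Lenglart-type domination pass from the conditional to the predictable quadratic variation.

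\textbf{Step 3: bound the two terms in operator norm.} For the square-function term, $\lVert \bs{Q}_N^{1/2}\rVert_{S_q}\le d^{1/(2q)}\lVert \bs{Q}_N\rVert_2^{1/2}\le \sqrt{e}\,\lVert \bs{Q}_N\rVert_2^{1/2}$. For the jump term,
\eq{
\Big(\textstyle\sum_k \E\lVert\bs{Y}_k\rVert_{S_q}^q\Big)^{1/q} \le d^{1/q}\,N^{1/q}\sup_k \E\big[\lVert \bs{Y}_k\rVert_2^q\big]^{1/q}.
}

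\textbf{Step 4: return to $p$-th moments.} The remaining point is to end with exactly $\E\big[\lVert \bs{Q}_N\rVert_2^{p/2}\big]^{1/p}$, $N^{1/p}$ and $\E\big[\lVert\bs{Y}_k\rVert_2^p\big]^{1/p}$ rather than their $q$-analogues. When $p\ge\log d$ we have $q=p$ and nothing is needed. When $p<\log d$, I would apply the Burkholder--Rosenthal estimate in the mixed form $L_p(S_q)$, which Junge--Xu also provide, instead of $L_q(S_q)$. That keeps the outer moment at $p$ and the inner norm at $q$. The dimensional factor $d^{1/q}\le e$ is absorbed exactly as in Steps 1 and 3. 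Combining Steps 1--4 yields the claimed bound, with a universal constant $C$.
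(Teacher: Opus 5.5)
The paper gives no proof of this proposition; it imports it verbatim from Theorem~2.1 of Peng et al., so your proposal has to stand on its own.

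\textbf{What works.} For $p\ge\log d$ we have $q=p$ and Step~4 is vacuous. In that regime Steps~1--3 are fine, provided you supply a precise reference for the noncommutative Burkholder--Rosenthal inequality in $L_p(\Omega;S_p^d)$ with constants of order $\sqrt{p}$ on the predictable square function and $p$ on the jump term (Junge--Zeng). There is one small slip: $\lVert\bs{Q}_N^{1/2}\rVert_{S_q}\le d^{1/q}\lVert\bs{Q}_N\rVert_2^{1/2}$, not $d^{1/(2q)}$. This is harmless since $d^{1/q}\le e$. This regime already covers every use of the proposition in the paper, which involves only scalar martingales or $V\times V$ matrices with $p\ge\log V$.

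\textbf{The gap: Step~4.} The statement as written includes the regime $p<\log d$, and your argument does not cover it. Applied to matrix martingales, the Junge--Xu and Junge--Zeng inequalities are statements in the noncommutative $L_q$ space of $L_\infty(\Omega;M_d)$ with trace $\E\,\mathrm{tr}$. That space is $L_q(\Omega;S_q^d)$, with the same exponent inside and outside. These papers do not supply a Burkholder--Rosenthal inequality in the mixed Bochner space $L_p(\Omega;S_q^d)$ with the matrix predictable square function and constants $\sqrt{q}$, $q$ uniform in $p$. Up to constants, such a mixed inequality is equivalent to the proposition itself in this regime, so invoking it assumes what has to be proved.

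Monotonicity of moments cannot replace it either. The $L_q(S_q)$ bound is expressed through $\E[\lVert\bs{Q}_N\rVert_2^{q/2}]^{1/q}$ and $\sup_k\E[\lVert\bs{Y}_k\rVert_2^{q}]^{1/q}$. Under the stated hypotheses, which assume only $p$-th moments, these quantities can be infinite.

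\textbf{How to close it.} What is missing is an extrapolation argument in which the $L_q(S_q)$ inequality, with $q\asymp p\vee\log d$, is only ever applied to martingales with bounded quadratic variation and bounded increments. One way to do this:
\begin{enumerate}
\item Davis-decompose $\bs{Y}_k$ according to whether $\lVert\bs{Y}_k\rVert_2\le 2\max_{j<k}\lVert\bs{Y}_j\rVert_2$, compensating both parts.
\item Control the large-jump part using $\sum_k\lVert\bs{Y}_k\rVert_2\indic{\{\lVert\bs{Y}_k\rVert_2>2\max_{j<k}\lVert\bs{Y}_j\rVert_2\}}\le 2\max_k\lVert\bs{Y}_k\rVert_2$, together with a Garsia-type (dual Doob) bound for its compensator.
\item The small-jump martingale $\bs{G}$ has increments bounded by the predictable quantity $4\max_{j<k}\lVert\bs{Y}_j\rVert_2$, and its quadratic variation satisfies $\bs{Q}^{\bs{G}}_N\preceq\bs{Q}_N$.
\item For $\bs{G}$, prove a good-$\lambda$ inequality. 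Apply the $L_q(S_q)$ bound and Markov's inequality to $\bs{G}$ between its first passages above $\lambda$ and $2\lambda$. Stop at the predictable time when $\lVert\bs{Q}^{\bs{G}}_{k+1}\rVert_2^{1/2}>\delta_1\lambda$ or $\max_{j\le k}\lVert\bs{Y}_j\rVert_2>\delta_2\lambda$, with $\delta_1\asymp q^{-1/2}$ and $\delta_2\asymp q^{-1}$. Take $q\ge\max(p,2\log d)$ so that the dimensional factors are absorbed.
\item Integrate against $p\lambda^{p-1}\,d\lambda$. This gives the bound $\sqrt{q}\,\E[\lVert\bs{Q}_N\rVert_2^{p/2}]^{1/p}+q\,\E[\max_k\lVert\bs{Y}_k\rVert_2^{p}]^{1/p}$.
\item Finally, $\E[\max_k\lVert\bs{Y}_k\rVert_2^{p}]\le N\sup_k\E[\lVert\bs{Y}_k\rVert_2^{p}]$, which yields the stated form.
\end{enumerate}
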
  

We have the following corollaries:

\begin{corollary}
\label{cor:rosenthalresults}
The following statements holds for general $L, V > 0$:
\begin{enumerate}
\item  For  $X \sim \mathrm{Binomial}(L, \frac{1}{V})$,  we have
\eq{
\E[ \abs{X - kq}^p]^{\frac{1}{p}} \leq C \Big( \sqrt{p}  \sqrt{\frac{L}{V}} +   p \Big( \frac{L}{V} \Big)^{\frac{1}{p}}  \Big).
}
\item Let $\bs{c} = (c_1, \cdots, c_V) \sim   \mathrm{Multinomial}(L,\frac{1}{V} \mathbbm{1}_V)$. For $p \geq 1$, we have
\eq{
\E[ \norm{\bs{c}}_p^p ] \leq   C^p V   \Bigg(    \Big( \frac{L}{V}  \Big)^{p}   +  \Big( \frac{p L}{V}  \Big)^{\frac{p}{2}}   + p^p \frac{L}{V}  \Bigg).
}
\item By following the notation in the second item, 
\begin{itemize}
\item  If  $V \gg L,$  we have for $L \geq e^{2e} + 1$,
\eq{
\mpr \left[  \norm{\bs{c}}_\infty \geq \log L \right] \leq \Big( \frac{2e}{\log L - 1} \Big)^{\frac{\log L - 1}{2}}	  \Big( \frac{L}{V} \Big)^{\log L - 2}	 
}
\item  If  $L \gg V$,  we have  
\eq{
\mpr \left[  \norm{\bs{c}}_\infty \geq \frac{e L}{V} \right] \leq 2 V e^{-  L/V } .
}
\end{itemize}
\end{enumerate}
\end{corollary}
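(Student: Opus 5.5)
The plan is to reduce everything to scalar concentration for a single binomial coordinate and then use union bounds. Throughout, write $X=\sum_{k=1}^L B_k$ with $B_k\sim\mathrm{Bernoulli}(1/V)$ i.i.d., so that $X\sim\mathrm{Binomial}(L,\tfrac1V)$. I read the centering $kq$ in the first item as the mean $L/V$.

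\emph{Item 1.} Set $Y_k\coloneqq B_k-\tfrac1V$. The partial sums $M_n=\sum_{k\le n}Y_k$ form a scalar martingale with independent increments, so I apply Proposition~\ref{prop:rosenthal} with $d=1$. The quadratic variation is deterministic: $Q_L=L\cdot\tfrac1V(1-\tfrac1V)\le L/V$. Since $|Y_k|\le 1$, we have $\E|Y_k|^p\le \E Y_k^2\le 1/V$, hence $\sup_k\E[|Y_k|^p]^{1/p}\le V^{-1/p}$. Plugging these in gives
\[
\E\big[|X-\tfrac LV|^p\big]^{1/p}\le C\Big(\sqrt p\,\sqrt{L/V}+p\,L^{1/p}V^{-1/p}\Big),
\]
which is the claimed bound.

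\emph{Item 2.} Each coordinate $c_i$ is marginally $\mathrm{Binomial}(L,\tfrac1V)$, so $\E\norm{\bs c}_p^p=V\,\E|c_1|^p$. Use $|c_1|^p\le 2^{p-1}\big((L/V)^p+|c_1-L/V|^p\big)$ and raise the item-1 bound to the $p$-th power. Since $(a+b)^p\le 2^{p-1}(a^p+b^p)$, this gives
\[
\E|c_1|^p\le C^p\Big((L/V)^p+(pL/V)^{p/2}+p^p\,L/V\Big).
\]
All constants are absorbed into $C^p$.

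\emph{Item 3.} Both tails follow from a union bound over the $V$ coordinates: $\mpr[\norm{\bs c}_\infty\ge t]\le V\,\mpr[c_1\ge t]$.
\begin{itemize}
\item In the regime $V\gg L$ with $t=\log L$, I would use the elementary binomial tail $\mpr[c_1\ge t]\le\binom{L}{t}V^{-t}\le\big(eL/(tV)\big)^{t}$. Alternatively, one can apply Markov's inequality to the item-2 moment with $p\approx(\log L-1)/2$, which reproduces the stated shape $\big(2e/(\log L-1)\big)^{(\log L-1)/2}(L/V)^{\log L-2}$. The factor $V$ from the union bound is absorbed by lowering the exponent of $L/V$ by one or two.
\item In the regime $L\gg V$, the multiplicative Chernoff bound with $\mu=L/V$ and $1+\delta=e$ gives $\mpr[c_1\ge e\mu]\le\big(e^{\delta}/(1+\delta)^{1+\delta}\big)^{\mu}=e^{-\mu}$. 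Multiplying by $V$ yields the claim with room to spare.
\end{itemize}

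The only delicate step is matching the exact constants and exponents in the $V\gg L$ tail. The requirement $L\ge e^{2e}+1$ is used precisely there, to ensure $\log L-1\ge 2e$ so that the base $2e/(\log L-1)$ is at most $1$. The rest is routine.
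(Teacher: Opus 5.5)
Your proposal is correct in its main line. For items 1--2 and the $V\gg L$ tail it follows the paper's route: Proposition~\ref{prop:rosenthal} with $d=1$, marginals for item 2, and for item 3 a union bound with $\mpr[c_w\ge t]\le\binom{L}{t}V^{-t}=\E\binom{c_w}{t}$, which is exactly the paper's factorial-moment step. The genuine difference is the $L\gg V$ tail. There the paper reuses the factorial moment with $k=\lfloor eL/V\rfloor$, while you use the multiplicative Chernoff bound with $1+\delta=e$. Your route is the one that actually delivers $2Ve^{-L/V}$. With $\mu=L/V$, the factorial moment $\binom{L}{k}V^{-k}$ exceeds $\mpr[c_w=k]$ by the factor $(1-1/V)^{-(L-k)}\approx e^{\mu}$. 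So at $k\approx e\mu$ it is only polynomially small (about $(2\pi e\mu)^{-1/2}$ when $L\ll V^2$). Chernoff instead gives $\mpr[c_1\ge e\mu]\le e^{-\mu}$ directly, for all $L,V$.

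Three side remarks need correcting.

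\emph{The Markov alternative.} Applying Markov to the item-2 moment cannot work. Since $\bs c$ is integer valued, $\norm{\bs c}_p^p\ge\norm{\bs c}_1=L$ pointwise, so no such bound produces any power of $L/V$. Drop that sentence.

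\emph{The role of $L\ge e^{2e}+1$.} This hypothesis only puts $\log L$ marginally above $2e$; it does not give $\log L-1\ge 2e$. At the threshold the base $2e/(\log L-1)$ is about $1.22$, and the base being at most $1$ is not what the argument needs anyway. The actual bookkeeping uses $t=\lceil\log L\rceil$, $t!\ge(\log L/e)^{\log L}$ and $L/V\le1$:
\[
\mpr\big[\norm{\bs c}_\infty\ge\log L\big]\le V\binom{L}{t}V^{-t}\le\frac{L}{t!}\Big(\frac LV\Big)^{t-1}\le\frac LV\Big(\frac{e^2}{\log L}\Big)^{\log L}\Big(\frac LV\Big)^{\log L-2}.
\]
The ratio $(e^2/x)^x\big/\big(2e/(x-1)\big)^{(x-1)/2}$ is decreasing on $x\ge 2e$ and below $3.4$ there, so the claim holds once $L/V\le 1/4$. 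Thus the leftover factor $L$ is absorbed by $1/t!$, not by lowering the exponent, and $V\gg L$ is what absorbs the remaining constant.

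\emph{The range of $p$.} Proposition~\ref{prop:rosenthal} requires $p\ge2$, so item 2 for $1\le p<2$ needs a separate line, e.g.\ $\E c_1^p\le(\E c_1^2)^{p/2}\le(L/V)^{p/2}+(L/V)^p$.
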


\begin{proof}
The first two items are direct consequence of  Proposition \ref{prop:rosenthal}.
For the third item, using $\indic{c_w \geq k} \leq \frac{c_w (c_w - 1) \cdots  (c_w - k + 1)}{k!}$ and linearity of expectation
\eq{
\mpr[  \norm{\bs{c}}_\infty \geq k ] \leq \sum_{w = 1}^V \mpr[ c_w  \geq k ]& \leq \sum_{w = 1}^V \frac{\E[ c_w (c_w - 1) \cdots  (c_w - k + 1) ]}{k!}  = \frac{L (L-1) \cdots  (L - k + 1)}{k! V^{k - 1}}.
}
For $V \gg L$, by choosing $k =\floor{ \log L }$, the result follows. For $L \gg V$, by choosing $k = \floor{ \frac{eL}{V} }$, the result follows.
\end{proof}

\end{document}